\documentclass[11pt]{article}

\RequirePackage{amsthm,amsmath,amsfonts,amssymb}
\RequirePackage[numbers]{natbib}
\usepackage{mathtools}
\usepackage{bm}
\usepackage{bbm}
\usepackage{booktabs,multirow}
\usepackage{algorithm}
\usepackage{algpseudocode}
\usepackage{graphicx}
\usepackage{xcolor}
\usepackage{enumerate}
\usepackage{cases}
\usepackage{tikz}
\usepackage{tkz-berge}
\usepackage{array}
\usepackage{enumitem}
\usepackage{placeins}
\usepackage[T1]{fontenc}
\usepackage[margin=1in]{geometry}
\usepackage{authblk}
\usepackage[colorlinks,citecolor=blue,urlcolor=blue]{hyperref}

\mathtoolsset{showonlyrefs}

\newcommand{\norm}[1]{\left\lVert #1 \right\rVert}

\newcommand{\R}{\mathbb{R}}

\newcommand{\E}{\mathbb{E}}

\newcommand{\rank}{\mathrm{rank}}

\newcommand{\col}{\operatorname{col}}

\renewcommand{\P}{\mathbb{P}}

\newcommand{\1}{\mathbf{1}}

\newcommand{\diag}{\operatorname{diag}}

\DeclareMathOperator{\unfold}{unfold}
\DeclareMathOperator{\sgn}{sgn}
\DeclareMathOperator*{\argmin}{\arg\,\min}

\newcommand{\iid}{\stackrel{\mathrm{i.i.d.}}{\sim}}
\newcommand{\Bern}{\operatorname{Bernoulli}}

\newcommand{\wt}{\widetilde}

\newcommand{\cY}{\mathcal{Y}}

\newcommand{\edit}[1]{#1}

\theoremstyle{plain}
\newtheorem{theorem}{Theorem}
\newtheorem{lemma}[theorem]{Lemma}
\newtheorem{proposition}[theorem]{Proposition}
\newtheorem{corollary}[theorem]{Corollary}
\newtheorem{claim}[theorem]{Claim}
\theoremstyle{definition}
\newtheorem{definition}[theorem]{Definition}
\newtheorem{remark}[theorem]{Remark}

\begin{document}

\title{Wedge Sampling: Efficient Tensor Completion with Nearly-Linear Sample Complexity}
\author[1]{Hengrui Luo}
\author[2]{Anna Ma}
\author[3]{Ludovic Stephan}
\author[4]{Yizhe Zhu}
\affil[1]{Department of Statistics, Rice University}
\affil[2]{Department of Mathematics, University of California, Irvine}
\affil[3]{Univ Rennes, Ensai, CNRS, CREST-UMR 9194, F-35000 Rennes, France}
\affil[4]{Department of Mathematics, University of Southern California}
\date{}
\maketitle

\begin{abstract}
We introduce \emph{Wedge Sampling}, a new non-adaptive sampling scheme for low-rank tensor completion. We study recovery of an order-$k$ low-rank tensor of dimension $n\times\cdots\times n$ from \edit{structured observations of its entries}. Unlike the standard uniform entry model (i.e., i.i.d.\ samples from $[n]^k$), wedge sampling allocates observations to structured length-two patterns (wedges) in an associated bipartite sampling graph. By directly promoting these length-two connections, the sampling design strengthens the spectral signal that underlies efficient initialization, in regimes where uniform sampling is too sparse to generate enough informative correlations.

Our main result shows that this change in sampling paradigm enables polynomial-time algorithms to achieve both weak and exact recovery with nearly linear sample complexity in $n$. The approach is also plug-and-play: wedge-sampling-based spectral initialization can be combined with existing refinement procedures (e.g., spectral or gradient-based methods) using only an additional $\tilde O(n)$ uniformly sampled entries, substantially improving over the $\tilde O(n^{k/2})$ sample complexity typically required under uniform entry sampling for efficient methods. We also formulate a noisy wedge-sampling extension for additive Gaussian observations and \edit{analyze both the spectral and gradient-descent procedures under suitable signal-to-noise conditions}.  \edit{Thus, the computational barrier in tensor completion is sensitive to the observation model: while it persists under uniform entry sampling, it can be bypassed by non-adaptive structured designs that provide a stronger initialization.}
\end{abstract}

\noindent\textbf{Keywords:} Tensor completion; statistical-to-computational gap; wedge sampling; random tensor.

\setcounter{secnumdepth}{3}

\section{Introduction}

Matrix completion studies the problem of reconstructing a matrix from a (typically random) subset of its entries by exploiting prior structural assumptions such as low rank and incoherence. Roughly speaking, when the underlying $n \times n$ matrix has \edit{fixed rank} and its eigenvectors are sufficiently incoherent, observing $\Omega(n \log n)$ entries sampled uniformly at random is sufficient for exact recovery via efficient optimization methods \citep{keshavan.montanari.ea_2009_matrix,keshavan2010matrix,candes.tao_2010_power,candes.plan_2010_matrix,recht2011simpler,candes2012exact,jain2013low}. This sample complexity is nearly optimal \edit{for fixed $r$}, since specifying a rank-$r$ matrix requires only \edit{$O(nr)$} degrees of freedom.

Tensor completion generalizes this problem to higher-order arrays, aiming to recover a low-rank tensor from a limited set of observed entries, for example, under uniform random sampling. As a natural higher-order analogue of matrix completion, tensor completion has found broad applications in areas such as recommendation systems \citep{frolov2017tensor}, signal and image processing \citep{govindu2005tensor,liu2012tensor}, and data science \citep{song2019tensor}.

Despite this close analogy, tensor completion behaves fundamentally differently from its matrix counterpart. In contrast to the classical matrix setting, tensor completion exhibits a pronounced trade-off between \emph{computational} and \emph{statistical} complexity: while information-theoretic considerations suggest that relatively few samples suffice for recovery, all currently known polynomial-time algorithms require substantially more observations than this optimal limit.

\paragraph{Polynomial-time methods}
A widely used polynomial-time approach to tensor completion is to reduce the problem to matrix completion via matricization. Concretely, one flattens an order-$k$ tensor of rank $r$ into an $n^{\lfloor k/2\rfloor} \times n^{\lceil k/2\rceil}$ matrix (square up to the parity of $k$), and then applies standard matrix-completion techniques. Since the unfolded tensor has rank \edit{at most} $r$, this procedure yields an efficient algorithm with sample complexity $\tilde O(n^{\lceil k/2\rceil})$.
For example, when $k=3$, the unfolded tensor becomes an $n \times n^2$ matrix with a highly unbalanced aspect ratio. In this case, exact recovery of the original tensor only requires one-sided matrix recovery, specifically, estimation of the left singular subspace, rather than full matrix completion. By further exploiting this connection to one-sided recovery, the sample complexity can be improved to $\tilde O(n^{k/2})$ for tensors of arbitrary order $k$. This sample complexity can be achieved by spectral methods or iterative algorithms with spectral initialization \citep{montanari.sun_2018_spectral,jain2014provable,xia2019polynomial,liu2020tensor,xia2021statistically,cai2021subspace,cai2022nonconvex,tong2022scaling,wang2023implicit,stephan2024non}.
An alternative perspective formulates tensor completion as a polynomial optimization problem, for which the sum-of-squares (SoS) hierarchy in semidefinite programming is particularly well suited. Polynomial-time algorithms based on the SoS framework were studied in \citep{barak.moitra_2016_noisy,potechin2017exact}, and also achieve a sample complexity of $\tilde O(n^{k/2})$.

\paragraph{Non-polynomial-time methods}
In the matrix completion setting, many successful algorithms are based on convex optimization, including nuclear norm minimization \citep{candes2012exact,candes.tao_2010_power} and max-norm minimization \citep{srebro2005rank,linial2007complexity}. These ideas admit natural extensions to tensors through generalized nuclear and max norms. Indeed, several tensor norm minimization approaches \citep{yuan2016tensor,yuan2017incoherent,ghadermarzy2019near,harris2021,harris2023spectral} achieve near-optimal sample complexity $\tilde O(n)$ for tensor completion.
However, these methods are not computationally efficient: computing the tensor nuclear norm is NP-hard \citep{hillar2013most}, and no efficient algorithm is known for tensor max-norm minimization. More recently, \citep{hamaguchi2024sample} established a tight $\Theta(n \log n)$ sample complexity bound for exact tensor completion.

\paragraph{Statistical-to-computational gap in \edit{tensor estimation}}
Despite substantial progress, a significant gap remains between the sample complexity achievable by known polynomial-time algorithms and the statistical lower bound for tensor completion. In particular, \citep{barak.moitra_2016_noisy} conjectured that for noisy tensor completion of an order-$k$ tensor, any polynomial-time algorithm requires at least $\Omega(n^{k/2})$ samples, drawing connections to the hardness of refuting random $k$-SAT instances. This conjecture is consistent with a broader body of work on tensor estimation problems, where a pronounced statistical-to-computational gap is widely believed to exist \citep{montanari.richard_2014_statistical,wein2019kikuchi,ding2020estimating,jagannath2020statistical,arous2020algorithmic,auddy2021estimating,ding2022fast,dudeja2022statistical,arous2021long,luo2024tensor,wein2025kikuchi}.
It is worth noting that the lower bound for noisy tensor completion in \citep{barak.moitra_2016_noisy} does not apply to the noiseless rank-one case. In this regime, $\tilde O(n)$ samples are sufficient for partial or exact recovery under uniform sampling, using simple algorithms based on solving linear systems \citep{stephan2024non,gomez2024simple}. However, these techniques do not extend beyond the noiseless rank-one setting. Consequently, for tensors of rank greater than one, all existing results under uniform sampling support the conjectured statistical-to-computational gap.

\paragraph{Other sampling schemes and related settings}
Beyond tensor completion under uniform sampling, relatively few works achieve improved or nearly linear sample complexity through alternative sampling schemes. For instance, \citep{krishnamurthy2013low,zhang2019cross} study adaptive sampling strategies that attain near-optimal $\tilde O(n)$ sample complexity. While powerful, these approaches require active control over the sampling process, which may be unavailable in many practical settings. Very recently, \citep{haselby2024tensor} proposed a non-adaptive sampling algorithm for tensor completion that achieves $\tilde O(n)$ sample complexity by using nonuniform sampling across tensor entries, together with nuclear norm minimization, the Jennrich algorithm, and censored least-squares procedures. Their sampling scheme is tailored specifically to tensor completion, whereas our approach applies more broadly and also yields improved sample complexity for one-sided matrix completion. Moreover, their algorithmic pipeline is comparatively involved, while our refinement step is simple and can be readily combined with existing spectral and optimization-based methods \citep{montanari.sun_2018_spectral,cai2022nonconvex} under uniform entry sampling. \edit{In addition, while \citep{haselby2024tensor} focuses on the noiseless setting, we show that wedge sampling is robust to additive Gaussian noise under a suitable signal-to-noise condition; see Section~\ref{sec:noisy} for details.}

A separate line of work achieves nearly linear sample complexity by leveraging additional side information. In particular, \citep{yu2022tensor} studies tensor completion when a matrix obtained by tensor contraction is informative, and shows that nearly linear sample complexity is possible. This perspective is closely connected to community detection in hypergraphs, where the contracted adjacency matrix already contains sufficient information about the underlying low-rank tensor structure. Consequently, no diverging statistical-to-computational gap arises \citep{cole2020exact,Pal_2021,stephan2022sparse,gu2023weak,dumitriu2021partial,bresler2024thresholds}.

\paragraph{One-sided matrix completion}
One-sided matrix completion refers to the problem of recovering a low-rank matrix when the row and column dimensions of an $n \times m$ matrix are highly unbalanced ($m \gg n$), and the goal is only to estimate the left singular subspace. This setting naturally arises in tensor completion after matricization, as well as in bipartite graph community detection \citep{florescu2016spectral,ndaoud2021improved,braun2022minimax}.
Under uniform sampling, near-optimal sample complexity $\tilde O(\sqrt{mn})$ for left singular subspace estimation has been achieved in \citep{cai2021subspace,stephan2024non}.   In contrast, our spectral method based on a new sampling scheme achieves \edit{$\tilde O(n)$} sample complexity  (Theorem~\ref{thm:wedge_spectral_guarantees}). \edit{This improves the uniform-sampling scaling in the highly unbalanced regime \(m\gg n\), and shows that one-sided matrix completion can be carried out at the nearly linear scale under a non-uniform, non-adaptive design.}

\subsection{Our approach}
We introduce a new sampling scheme, termed \emph{Wedge Sampling}, which we argue provides a more natural way to sample low-rank tensors than uniform entry sampling and enables polynomial-time algorithms for tensor completion. \edit{The model is relevant when the data-collection mechanism can query or aggregate paired entries sharing a common mode index, for instance by collecting matched measurements along tensor fibers or two observations from a common context.}

\paragraph{Failure of uniform entry sampling below $n^{k/2}$}
Consider the case $k=3$. We give a random-graph perspective explaining why polynomial-time methods based on tensor unfolding cannot succeed below the $n^{3/2}$ sample scale for recovering a low-rank order-$3$ tensor $T$. After unfolding, we obtain a long matrix $A \in \mathbb{R}^{n \times n^2}$ of low rank. Many efficient approaches reduce tensor completion to estimating the left singular subspace of $A$, equivalently, the leading eigenspace of $AA^\top$
from a subsampled version $\tilde A$ in which each entry is observed independently with probability $p$. Define the hollowed matrix
$B=\tilde A \tilde A^\top-\mathrm{Diag}(\tilde A \tilde A^\top)$, then for $i\not=j$, \[B_{ij}=\sum_{\ell \in [n^2]} \tilde A_{i\ell}\,\tilde A_{j\ell}.\] 
When $p$ is small, the diagonal entries of $\tilde A\tilde A^\top$ can dominate its spectrum, and must therefore be shrunk or removed \citep{lounici2014high, montanari.sun_2018_spectral,cai2022nonconvex}.

The entry $B_{ij}$ becomes nonzero when there exists at least one index $\ell$ for which both $\tilde A_{i\ell}$ and $\tilde A_{j\ell}$ are observed.
This has a simple interpretation in terms of the bipartite sampling graph $G$: the left vertices are rows $i\in[n]$, the right vertices are columns $\ell\in[n^2]$, and an observed entry corresponds to an edge $(i,\ell)$. Then $B_{ij}\neq 0$ if and only if there exists a length-two path $i\to \ell \to j$ in $G$. 

Such a length-two pattern is exactly a \emph{wedge}. Under uniform entry sampling, each potential wedge occurs with probability $p^2$, and since there are $n^2$ possible intermediates $\ell$, the probability that a fixed pair $(i,j)$ is connected by at least one wedge is on the order of $n^2 p^2$. Consequently, $B$ is extremely sparse when $p \ll n^{-3/2}$, with only $o(n)$ nonzero entries, and the resulting graph on the left vertices fails to become well connected. Related wedge-walk ideas for tensor completion first appear in \citep{stephan2024non}, which achieves $O(n^{k/2})$ sample complexity without any $\log n$ factors for weak recovery via a non-backtracking wedge operator.

A standard connectivity obstruction in this bipartite graph model \citep{johansson2012giant} then implies that one needs
$p\gtrsim n^{-3/2}$ \citep[Theorem 3]{cai2021subspace}
in order to reliably recover the left singular structure of $A$ from $B$. Equivalently, this corresponds to an $n^{3/2}$ sample-size barrier, matching the lower-bound threshold for efficient algorithms conjectured in \citep{barak.hopkins.ea_2016_nearly}. 

\paragraph{A remedy via wedge sampling}

\begin{figure}
    \centering
   \begin{tikzpicture}[scale=0.6, every node/.style={font=\small}]
  % Two ellipses (left: rows, right: columns)
  \draw (0,0) ellipse (1.2 and 2.0);
  \draw (6,0) ellipse (1.2 and 2.0);

  % Group labels
  \node[align=center] at (-3,1.5) {$V_1=[n]$};
  \node[align=center] at (9,1.5) {$V_2=[n^2]$};

  % Nodes (points)
  \node[circle, fill=blue!60, inner sep=2pt, label=above left:$i$] (i) at (-0.05, 0.85) {};
  \node[circle, fill=blue!60, inner sep=2pt, label=below left:$j$] (j) at (-0.05,-0.85) {};
  \node[circle, fill=blue!60, inner sep=2pt, label=above right:$\ell$] (ell) at (6.25, 0.00) {};

  % Edges
  \draw (i) -- (ell);
  \draw (j) -- (ell);
\end{tikzpicture}
\caption{Illustration of uniform entry sampling versus wedge sampling on the bipartite sampling graph for an order-3 tensor. Under uniform sampling, the edges $(i,\ell)$ and $(j,\ell)$ are observed independently; under wedge sampling, we sample the length-two path (wedge) $(i,\ell,j)$ uniformly from the wedge space $\{(i,\ell,j):  1\leq i\leq j\leq n, \ell\in [n^2] \}$, and $\{A_{i\ell}, A_{j\ell}\}$ are both observed for each wedge $(i,\ell,j)$.} 
    \label{fig:wedge_fig}
\end{figure}

The discussion above highlights the core issue: the statistic $B_{ij}$ is informative only when we observe \emph{enough} length-two paths $i\to \ell \to j$, that is, enough wedges linking pairs of left vertices. Uniform entry sampling spends samples on individual edges, but it creates wedges only indirectly, and far too sparsely in the regime $p \ll n^{-3/2}$.
Wedge sampling addresses this mismatch directly. Rather than sampling edges of the bipartite graph $G$ uniformly, it samples \emph{wedge walks} (length-two paths) uniformly. And $\{A_{i\ell}, A_{j\ell}\}$ are both observed for each sampled wedge $(i,\ell,j)$. See Figure~\ref{fig:wedge_fig} for an illustration. 

By allocating sampling budget in the wedge space, we directly promote the wedge connectivity that makes $B$ informative, thereby strengthening the graph structure relevant for spectral recovery. As a result, wedge sampling achieves accurate initialization with only \edit{$\tilde O(n)$ wedge samples}, compared to the $\tilde O(n^{3/2})$ samples under uniform entry sampling.

A precise description of our sampling scheme is provided in Section~\ref{sec:sampling}. Building on this idea, our wedge-sampling procedures produce reliable spectral initializations with \edit{$O(n\log n)$ wedge samples}. Moreover, once we obtain a sufficiently accurate estimate of the left singular subspace of the unfolded tensor $T$, the subsequent refinement stage requires only $\tilde O(n)$ additional uniformly random samples for several state-of-the-art tensor completion algorithms \citep{montanari.sun_2018_spectral,cai2022nonconvex}.
Taken together, these results make our approach effectively \emph{plug-and-play}: a simple change to the sampling strategy in the initialization phase can be seamlessly combined with existing tensor completion methods to substantially reduce the overall sample complexity.

The central message of this work is: The conjectured statistical-to-computational gap in tensor completion appears to be \edit{model-dependent and closely}
tied to the uniform sampling paradigm; under alternative structured sampling schemes that enable
effective initialization (such as wedge sampling), \edit{the initialization barrier can be reduced}.  Similar measurement-design ideas have also been explored in sparse phase retrieval \citep{iwen2017robust}.

\edit{We summarize the main results informally as follows:}

    \begin{theorem}[Tensor completion with wedge sampling, informal]
The following two tensor completion algorithms succeed with high probability:
\begin{enumerate}
    \item \edit{{Spectral method (weak recovery)}: Wedge-sampling spectral initialization with $O(n\log n)$ sampled entries, followed by a spectral denoising step using an additional $O(\log n)$ uniformly sampled entries.}
    \item {Gradient descent (exact recovery)}: Wedge-sampling  spectral initialization with \edit{$\tilde O(n)$ sampled wedges}, followed by a gradient-descent refinement  with an additional $\tilde O(n)$ uniformly sampled entries.
\end{enumerate}
\end{theorem}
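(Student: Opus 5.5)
The plan has two stages: a wedge-sampling spectral estimate of the left singular subspace, then a standard refinement that only needs this subspace to be accurate.

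\textbf{Spectral initialization.} Unfold $T$ into $A\in\R^{n\times n^{k-1}}$ and write $U$ for its left singular subspace, of rank at most $r$. Under wedge sampling, each sampled wedge $(i,\ell,j)$ reveals $A_{i\ell}$ and $A_{j\ell}$. From these we form the hollowed estimator
\[
\widehat B_{ij}=\frac{N_{\mathrm{wedge}}}{m}\sum_{t=1}^m A_{i\ell_t}A_{j\ell_t}\mathbf 1\{(i_t,j_t)=(i,j)\},
\]
where $N_{\mathrm{wedge}}$ is the size of the wedge space and $m$ is the number of sampled wedges. We symmetrize it and handle the diagonal separately, either by sampling it or by dropping it. Each sampled wedge then contributes an unbiased, rescaled rank-one term to $AA^\top-\mathrm{Diag}(AA^\top)$.

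So $\widehat B$ is a sum of $m$ independent bounded random matrices. Incoherence of the tensor factors bounds each entry $|A_{i\ell}A_{j\ell}|$ by roughly $\mu^k r\|T\|^2/n^{k}$, and after rescaling each summand has operator norm $O(n\cdot \mathrm{poly}(\mu,r)\,\|A\|^2/n)$. The variance proxy is similarly $O(n\,\|A\|^4/(m\,\cdot))$.

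Matrix Bernstein then gives
\[
\|\widehat B-(AA^\top-\mathrm{Diag})\|\lesssim \sigma_r(A)^2\sqrt{n\log n/m}\cdot\mathrm{poly}(\mu,r,\kappa).
\]
Incoherence also bounds $\|\mathrm{Diag}(AA^\top)\|\le \mu r\,\sigma_1^2/n$, which is negligible. With $m=Cn\log n$, Davis--Kahan yields $\|\sin\Theta(\widehat U,U)\|\le\varepsilon$ for any fixed $\varepsilon$. To get the incoherence of $\widehat U$ needed for gradient descent, we would add a leave-one-out or $\ell_{2,\infty}$ perturbation bound, or simply a trimming/regularization step.

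\textbf{Refinement.} For weak recovery, we project the tensor onto $\widehat U^{\otimes k}$ (with $\widehat U$ of dimension $r$). What remains is a core of $r^k$ unknowns, which we estimate by least squares on the additional uniform samples. Concentration of the sampled Gram matrix of the incoherent basis $\widehat U^{\otimes k}$ needs only $O(\mathrm{poly}(r,\mu)\log n)$ entries, and gives an error of order $\varepsilon\|T\|_F$.

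For exact recovery, we feed $\widehat U$ (after obtaining an initial core estimate as above) into the gradient-descent analysis of \citep{cai2022nonconvex}. That analysis requires an incoherent initialization within a constant relative distance, and then $\tilde O(n)$ uniform samples for the local contraction. Its guarantees are stated with only the initialization needing $n^{k/2}$ samples, so we invoke them directly.

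\textbf{Main obstacle.} The hardest step is the Bernstein bound with the right scaling, that is, controlling the uniform norm of each summand via incoherence so that $m=\tilde O(n)$ suffices, together with the $\ell_{2,\infty}$ control needed to certify the incoherence of $\widehat U$. I would attempt the latter first by a leave-one-row-out argument on $\widehat B$, exploiting that row $i$ of $\widehat B$ depends only on wedges incident to $i$.
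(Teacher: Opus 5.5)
Parts of your plan are sound. Your spectral initialization is essentially the paper's Theorem~\ref{thm:wedge_concentration} plus Davis--Kahan; sampling a fixed number of wedges and dropping the diagonal are harmless variants. Your least-squares fit of the $r^k$ core is a legitimate alternative denoiser. The paper's $\hat T=QY(Q\otimes\cdots\otimes Q)$ is exactly your estimator with the sampled Gram matrix replaced by its mean $I_{r^k}$. Both need $\|\widehat U\|_{2,\infty}\lesssim\sqrt{\mu r/n}$. The paper gets this from its leave-one-out bound (Theorem~\ref{thm:wedge_spectral_guarantees}), and for weak recovery your trimming option would also work.

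\textbf{The gap is in part~2.} You assert that the guarantees of \citep{cai2022nonconvex} spend $n^{3/2}$ samples only on initialization, so they can be invoked directly with $\tilde O(n)$ uniform entries. They cannot. Their result is stated for $q\gtrsim \mu^4r^4\log^4 n/n^{3/2}$. Parts of their analysis control $q^{-1}\mathcal P_\Omega(\cdot)-(\cdot)$ in tensor spectral norm, which is useless at $q\asymp\mathrm{polylog}(n)/n^2$: for a flat tensor, a single observed entry already contributes about $n^{-3/2}\|T\|_F/q\gg\|T\|_F$. Concretely, this breaks two steps.
\begin{itemize}
\item The $\ell_\infty$ and leave-one-out control of the CP-factor extraction from $\widehat U$, and of the scalar estimates $\langle q^{-1}\tilde T,\hat u^{\otimes 3}\rangle$. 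Here data-dependent vectors are contracted against the sampled tensor.
\item The Hessian cross term $\frac2q\langle\mathcal P_\Omega(\sum_i\hat x_i^{\otimes3}-T),\sum_i v_i\otimes v_i\otimes\hat x_i\rangle$ in the local strong convexity argument.
\end{itemize}
The ingredient you are missing is the incoherent-norm concentration of Theorem~\ref{thm:symmetric_tensor_concentration}, valid down to $q\approx\mathrm{polylog}(n)\,n^{-(k-1)}$. For the cross term, the paper adds an entrywise-domination and Schur-test bound on the $\mathrm{Bin}(n,q)$ matrix $\mathcal P_\Omega(\mathbf 1^{\otimes3})\times_3\mathbf 1$. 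With these it re-derives the extraction and gradient-descent estimates at the sparse rate. Without some such replacement, your part~2 has no argument in exactly the regime the theorem claims.

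Two further points in part~2 are underspecified.
\begin{itemize}
\item The gradient descent of \citep{cai2022nonconvex} runs on CP factors $X\in\R^{n\times r}$. A Tucker core estimate must still be turned into factors, either by their random-projection extraction or by, e.g., a Jennrich-type decomposition of the $r\times r\times r$ core. That step needs its own error analysis at the sparse rate.
\item Their contraction argument works only in the basin $\|X^0-X\|_F\le\frac{c}{\mu^2r}\|X\|_F$ and $\|X^0-X\|_{2,\infty}\le\frac{c}{\mu^2r}\|X\|_{2,\infty}$. If $X^0$ depends on $\Omega$, it also needs leave-one-out initializers. This requires genuine $\ell_{2,\infty}$ \emph{closeness} $\|\widehat UR-U\|_{2,\infty}\ll\|U\|_{2,\infty}$, which Theorem~\ref{thm:wedge_spectral_guarantees} provides, not just incoherence of $\widehat U$.
\end{itemize}
So ``within a constant relative distance'' understates what is needed, and your trimming option is not enough for this step.
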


\subsection{Technical overview}

Although wedge sampling is intuitive from a random graph perspective, its analysis introduces new technical obstacles. Sampling in the wedge space induces dependence among entries of the resulting subsampled tensor, which complicates both matrix and tensor concentration arguments and the associated eigenvector perturbation analysis. Moreover, in the ultra-sparse regime with only $\tilde O(n)$ samples, existing random tensor concentration results are too coarse to control the gradient updates needed for a nonconvex analysis. To address these issues, we develop several new tools in random matrix/tensor concentration and matrix perturbation theory:
\begin{itemize} 
\item \textbf{Concentration for long matrices under wedge sampling.} We prove a concentration bound for an unfolded (highly rectangular) matrix formed under wedge sampling that remains valid with only $\tilde O(n)$ observations by exploiting the incoherence of the tensor unfolding (Theorem~\ref{thm:wedge_concentration}). This sparsity level lies below the regimes covered by standard matrix completion inequalities \citep{chen2015incoherence} and one-sided matrix completion results \citep{cai2021subspace}.
\item \textbf{Leave-one-out analysis for $\ell_{2,\infty}$ singular subspace recovery.} We develop a leave-one-out analysis tailored to wedge sampling to obtain fine-grained recovery of the left singular subspace in the $\ell_{2,\infty}$ norm (Theorem~\ref{thm:wedge_spectral_guarantees}). \edit{Compared with existing uniform-sampling guarantees, this gives substantially sharper scaling in the highly unbalanced regime,} and the resulting $\ell_{2,\infty}$ control is crucial for our spectral tensor completion algorithm introduced in Section~\ref{sec:spectral}.
\item \textbf{Improved concentration for sparse random tensors under an incoherent norm.} Under uniform sampling, standard spectral-norm bounds for order-3 random tensors encounter a barrier around \edit{$q = n^{-3/2}$} \citep{jain2014provable,yuan2016tensor,cai2022nonconvex}. To go beyond this regime, we work with the incoherent tensor norm of \citep{yuan2017incoherent} and prove a sharp concentration inequality that remains valid down to \edit{$q = n^{-(k-1)}$} (Theorem~\ref{thm:symmetric_tensor_concentration}). \edit{This ultra-sparse incoherent-norm regime is not covered by the usual operator-norm concentration tools.} This estimate is a key ingredient in establishing local convexity of the gradient-descent landscape for tensor completion with $\tilde O(n)$ samples.
\end{itemize}

\paragraph{Organization of the paper}
The rest of the paper is organized as follows. Section~\ref{sec:prelim} introduces notation and background for tensor completion. In Section~\ref{sec:sampling}, we present the wedge-sampling scheme and establish its theoretical guarantees. Section~\ref{sec:spectral} develops a spectral method based on wedge sampling, including a two-stage algorithm adapted from \citep{montanari.sun_2018_spectral}. In Section~\ref{sec:GD}, we analyze gradient descent (GD) for tensor completion with a wedge-sampling spectral initialization, building on the framework of \citep{cai2022nonconvex}. \edit{In Section~\ref{sec:noisy}, we extend our analysis to tensor completion with noisy measurements.} Section~\ref{sec:numerical} reports numerical experiments. Additional proofs are deferred to the appendix.

\section{Preliminaries}\label{sec:prelim}

\paragraph{Tensor notation}
An order-$k$ tensor of size $n_1\times\cdots\times n_k$ is an element
$T\in\mathbb{R}^{n_1\times\cdots\times n_k}$.
We say that $T$ has canonical polyadic (CP) rank $r$ if there exist vectors
$\{x_i^{(j)}\in\mathbb{R}^{n_j}\}_{i\in[r],\,j\in[k]}$ such that
\begin{equation}\label{eq:low_rank_tensor}
    T \;=\; \sum_{i=1}^r x_i^{(1)}\otimes\cdots\otimes x_i^{(k)},
    \qquad
    (x^{(1)}\otimes\cdots\otimes x^{(k)})_{i_1,\dots,i_k}
    \;=\; x^{(1)}_{i_1}\cdots x^{(k)}_{i_k}.
\end{equation}
A tensor $T$ is \emph{symmetric} if $n_1=\cdots=n_k$ and
$T_{i_1,\dots,i_k}=T_{i_{\sigma(1)},\dots,i_{\sigma(k)}}$ for every permutation
$\sigma$ of $[k]$.
Given $u\in\mathbb{R}^{n_j}$, the mode-$j$ tensor-vector product is defined by
\[
(T\times_j u)_{i_1,\dots,i_{j-1},\,i_{j+1},\dots,i_k}
\;=\;
\sum_{i_j=1}^{n_j} T_{i_1,\dots,i_k}\,u_{i_j}.
\]
This operation is associative, and in particular $T\times_1 u_1 \times_2 \cdots \times_k u_k
\;=\;
\langle T,\, u_1\otimes\cdots\otimes u_k\rangle$.

Given $T\in\mathbb{R}^{n_1\times\cdots\times n_k}$, its mode-$j$ unfolding is
the matrix $\operatorname{unfold}_j(T)\in\mathbb{R}^{n_j\times m_j}$ with
$m_j=\prod_{\ell\neq j} n_\ell$, defined entrywise as
\[
[\operatorname{unfold}_j(T)]_{\,i_j,\,(i_1,\dots,i_{j-1},\,i_{j+1},\dots,i_k)}
\;=\;
T_{i_1,\dots,i_k}.
\]
Equivalently, the rows of $\operatorname{unfold}_j(T)$ correspond to the $j$th
mode of $T$, while the columns index the remaining modes.
If $T$ has CP rank $r$, then each unfolding
$\operatorname{unfold}_j(T)$ has matrix rank at most $r$.

%\paragraph{Tensor incoherence}
% \hl{This paragraph reads odd, I rewrote this, but feel free to revert. }
The low-rank property alone does not guarantee identifiability of each factor from a sparsely observed tensor: for
instance, the ``spiky'' tensor $T=\delta_{i_1\cdots i_k}$ can only be recovered
if $(i_1,\dots,i_k)\in\Omega$, which requires extremely dense sampling.
Accordingly, we restrict attention to \emph{incoherent} tensors, extending the
classical notion for matrices (see, e.g., \citep{chen2015incoherence}).

\begin{definition}[Incoherence]\label{def:mu12-incoh}
Let $M\in\mathbb{R}^{n_1\times n_2}$ have rank $r$ and singular value
decomposition $M=U\Sigma V^\top$ with
$U\in\mathbb{R}^{n_1\times r}$ and $V\in\mathbb{R}^{n_2\times r}$. Let $U_{i,\cdot}$ denote the $i$-th row of $U$.
We say that $M$ is \emph{$(\mu_1,\mu_2)$-incoherent} if
\[
\|U\|_{2,\infty}^2
:=\max_{i\in[n_1]}\|U_{i,\cdot}\|_2^2 \;\le\; \frac{\mu_1 r}{n_1},
\qquad
\|V\|_{2,\infty}^2
:=\max_{j\in[n_2]}\|V_{j,\cdot}\|_2^2 \;\le\; \frac{\mu_2 r}{n_2}.
\]
\end{definition}
We extend this definition to tensors by requiring incoherence of every mode
unfolding.
\begin{definition}[Tensor incoherence]\label{def:tensor_incoherence}
A tensor $T\in\mathbb{R}^{n_1\times\cdots\times n_k}$ is
\emph{$(\mu_1,\mu_2)$-incoherent} if for every $j\in[k]$, writing the SVD of the
mode-$j$ unfolding as
$\operatorname{unfold}_j(T)=U^{(j)}\Sigma^{(j)}(V^{(j)})^\top$ with
$U^{(j)}\in\mathbb{R}^{n_j\times r_j}$ and $V^{(j)}\in\mathbb{R}^{m_j\times r_j}$,
we have
\[
\|U^{(j)}\|_{2,\infty}^2
:=\max_{i\in[n_j]}\|U^{(j)}_{i,\cdot}\|_2^2 \;\le\; \frac{\mu_1 r_j}{n_j},
\qquad
\|V^{(j)}\|_{2,\infty}^2
:=\max_{i\in[m_j]}\|V^{(j)}_{i,\cdot}\|_2^2 \;\le\; \frac{\mu_2 r_j}{m_j},
\]
where $r_j=\rank(\operatorname{unfold}_j(T))$ and $m_j=\prod_{\ell\neq j} n_\ell$.
\end{definition}
The following lemma shows that tensor incoherence can be verified  from
a CP decomposition via entrywise delocalization and limited cross-correlation.
 Its proof is deferred to Appendix A. 
\begin{lemma}\label{lem:unfold_incoherence}
Let $T$ be the tensor that admits a decomposition as  in~\eqref{eq:low_rank_tensor}. Suppose that there
exists a parameter $\mu$ with $ r\mu \leq  \frac{1}{2}\min_j n_j$  and for every $j\in[k]$,
\begin{equation}\label{eq:cp_incoherent}
\sup_{i\in[r]}
\frac{\|x_i^{(j)}\|_\infty^2}{\|x_i^{(j)}\|_2^2}\le \frac{\mu}{n_j},
\qquad
\sup_{i\neq i'\in[r]}
\frac{|\langle x_i^{(j)},x_{i'}^{(j)}\rangle|}
{ \|x_i^{(j)}\|_2\,\|x_{i'}^{(j)}\|_2}\le \frac{\mu}{n_j}.
\end{equation}
Then $T$ is $(\mu_1,\mu_2)$-incoherent with $\mu_1\leq 2\mu,
\mu_2 \leq 2\mu^{\,k-1}$.  
\end{lemma}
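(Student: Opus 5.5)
Fix a mode $j$ and write $X=[\hat x_1^{(j)},\dots,\hat x_r^{(j)}]\in\R^{n_j\times r}$ for the normalized mode-$j$ factors, $\hat x_i^{(j)}=x_i^{(j)}/\|x_i^{(j)}\|_2$. Similarly, let $Y=[y_1,\dots,y_r]\in\R^{m_j\times r}$ with $y_i=\bigotimes_{\ell\ne j}\hat x_i^{(\ell)}$, viewed as vectors indexed by the column multi-indices. Then
\[
\unfold_j(T)=X D Y^\top,\qquad D=\diag\Bigl(\textstyle\prod_\ell \|x_i^{(\ell)}\|_2\Bigr).
\]
The plan is to show that $X$ and $Y$ are well-conditioned. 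It then follows that $\col(U^{(j)})\subseteq\col(X)$ and $\col(V^{(j)})\subseteq \col(Y)$. We convert row norms of orthonormal bases of these column spaces into row norms of $X,Y$ through the smallest singular values. Because the orthogonal projector onto $\col(X)$ dominates that onto $\col(U^{(j)})$, each row of $U^{(j)}$ satisfies $\|e_i^\top U^{(j)}\|_2\le \|e_i^\top X(X^\top X)^{-1/2}\|_2 \le \|e_i^\top X\|_2/\sigma_{\min}(X)$, and likewise for $V^{(j)}$ with $Y$.

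\textbf{Step 1 (Gram matrices).} The Gram matrix $X^\top X$ has unit diagonal and off-diagonal entries bounded by $\mu/n_j$ in absolute value. By Gershgorin, $\lambda_{\min}(X^\top X)\ge 1-(r-1)\mu/n_j\ge 1/2$, using $r\mu\le \tfrac12 n_j$. For $Y$, the entries factor as $(Y^\top Y)_{ii'}=\prod_{\ell\ne j}\langle \hat x_i^{(\ell)},\hat x_{i'}^{(\ell)}\rangle$. Their absolute values are at most $\prod_{\ell\neq j}\mu/n_\ell \le \mu/n_{\ell_0}$ for any fixed $\ell_0\neq j$, since each other factor is at most $1$. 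So again $\lambda_{\min}(Y^\top Y)\ge 1/2$. In particular $X$ and $Y$ have full column rank $r$, so $r_j=r$ (assuming all factors are nonzero), and $\col(U^{(j)})=\col(X)$, $\col(V^{(j)})=\col(Y)$.

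\textbf{Step 2 (row norms).} For row $a$ of $X$, the delocalization hypothesis gives $\|e_a^\top X\|_2^2=\sum_i (\hat x_i^{(j)})_a^2\le r\mu/n_j$. For a multi-index $b=(b_\ell)_{\ell\ne j}$, we get $\|e_b^\top Y\|_2^2=\sum_i\prod_{\ell\neq j}(\hat x_i^{(\ell)})_{b_\ell}^2\le r\prod_{\ell\ne j}\mu/n_\ell=r\mu^{k-1}/m_j$. Combining with $\sigma_{\min}^2\ge 1/2$ from Step 1 yields $\|U^{(j)}\|_{2,\infty}^2\le 2\mu r/n_j$ and $\|V^{(j)}\|_{2,\infty}^2\le 2\mu^{k-1}r/m_j$. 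Since $r_j=r$, these are exactly the claimed bounds $\mu_1\le 2\mu$ and $\mu_2\le 2\mu^{k-1}$.

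\textbf{Main obstacle.} The step needing the most care is the projector comparison and the rank bookkeeping. The first inequality, $\|e_i^\top U^{(j)}\|_2\le\|e_i^\top X(X^\top X)^{-1/2}\|_2$, is the row-$i$ reading of $P_{\col U^{(j)}}\preceq P_{\col X}$. The equality $r_j=r$ requires the nondegeneracy from Step 1, which is exactly where the condition $r\mu\le\frac12\min_j n_j$ is used.
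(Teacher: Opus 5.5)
Your proposal is correct and follows essentially the same argument as the paper. Both factor $\unfold_j(T)$ through the normalized factor matrices, apply Gershgorin to the two Gram matrices to get smallest eigenvalue at least $1/2$, bound the row norms of the factor matrices by delocalization, and transfer these to $U^{(j)},V^{(j)}$ through $(X^\top X)^{-1/2}$ and $(Y^\top Y)^{-1/2}$. The only differences are minor: you use a projector comparison where the paper writes out $U^{(j)}=X(X^\top X)^{-1/2}R$, and you bound the off-diagonals of $Y^\top Y$ by a single factor $\mu/n_{\ell_0}$ rather than by $\mu^{k-1}/m_j$.
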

\begin{definition}[$\mu$-CP incoherent tensor]
   We say  $T$ is \emph{$\mu$-CP incoherent} if \eqref{eq:cp_incoherent} holds. 
\end{definition}

\section{Wedge sampling}\label{sec:sampling}

A recurring bottleneck in tensor completion is not the refinement stage: once a reasonably accurate estimate of the left singular subspace of the unfolded tensor is available, existing nonconvex or spectral refinement methods can often achieve weak or exact recovery with only a modest number of additional samples \citep{montanari.sun_2018_spectral,cai2022nonconvex}. Rather, the main difficulty lies in obtaining a good \emph{initialization} in the first place.

In this section, we introduce \emph{Wedge Sampling}, a simple non-adaptive sampling scheme that produces informative initializations in the limited-sample regime. The key idea is to sample \emph{wedges} (pairs of observations sharing a common index) in order to construct an estimator whose expectation matches the desired second-moment matrix and whose spectrum remains stable with only \edit{$\tilde O(n)$ wedge samples}. We first present the sampling procedure in the long-matrix setting, which will serve as a basic primitive throughout the paper; extensions to the tensor setting then follow by applying the same scheme to appropriate unfoldings.

\begin{algorithm}
\caption{Wedge sampling\label{alg:wedge}}
\begin{algorithmic}[1]
\State \textbf{Input:} Unknown  matrix  $A \in \mathbb{R}^{n \times m}$, wedge sampling rate $p$
\State \textbf{Wedge sampling indices:} Form the wedge index set as a subset in 
\[ \mathcal{W}=\{ (i,\ell,j): 1\leq i\leq j\leq n, \ell\in [m]\}.\] 
Sample each triple $(i,\ell,j)$ independently with probability $p$. Let 
$$\tilde{\mathcal W}=\{(i,\ell,j)\in \mathcal{W}: (i,\ell,j) \text{~is~sampled}. \}$$

\State \textbf{Wedge matrix:} Let $Z_{i\ell j}=p^{-1}A_{i\ell }A_{j\ell} E_{ij}\1_{(i,\ell,j)\in \tilde{\mathcal W}}$, where $E_{ij}=e_i e_j^\top$. Form 
\begin{align} \label{eq:defZ}
Z=\sum_{i<j, \ell\in [m]} [Z_{i\ell j} +Z_{i\ell j}^\top]+\sum_{i,\ell} Z_{i\ell i}.
\end{align}

\State \textbf{Spectral estimation:} Let $Z = \tilde U \Sigma\tilde U^\top$ be the eigendecomposition of $Z$. Return the submatrix $\hat U$ of the $r$ eigenvectors of $Z$ with the highest associated eigenvalues. 
\end{algorithmic}
\end{algorithm}
% \paragraph{The sampling scheme}
Consider a matrix  $A \in \mathbb{R}^{n \times m}$.
Define the \emph{wedge set}
$\mathcal{W}=\{ (i,\ell,j): 1\leq i\leq j\leq n, \ell\in [m]\}$.
We sample each entry of $\mathcal{W}$ independently with probability $p$, to obtain the sampled set $\tilde{\mathcal{W}}$. For each entry $(i, \ell, j) \in \tilde{\mathcal{W}}$, we reveal the entries $A_{i\ell}$ and $A_{j\ell}$ simultaneously, then  form an estimator $Z$ defined in \eqref{eq:defZ}.
 The left singular vectors $U$ are then estimated by the top-$r$ eigenvectors of $Z$. Such a procedure is summarized in Algorithm~\ref{alg:wedge}.
\edit{Throughout the paper, the sample complexity of a wedge-sampling step refers to the number of sampled wedges, namely \(|\widetilde{\mathcal W}|\).  Since each off-diagonal wedge reveals two entries and each diagonal wedge reveals one entry, this is equivalent up to a factor of two to counting entry observations with multiplicity.  We do not merge repeated appearances of the same entry unless explicitly stated.  For an order-\(k\) tensor unfolding with \(m=n^{k-1}\), the expected number of sampled wedges is \(p|\mathcal W|\asymp p n^{k+1}\), while an independent uniform refinement step at rate \(q\) uses \(q n^k\) expected tensor-entry observations.}
 
We next provide theoretical guarantees for the estimator $\hat U$ obtained by Algorithm~\ref{alg:wedge}.  Write $A = U\Sigma V^\top$ its singular value decomposition, with $\Sigma = \operatorname{diag}(\sigma_1, \dots, \sigma_r)$, and denote $\kappa = \sigma_1/\sigma_r$ its condition number.
We begin with a concentration bound of $Z$ around $AA^\top$:
\begin{theorem}[Concentration of random matrices under wedge sampling]\label{thm:wedge_concentration} Assume  $A\in \R^{n \times m}$  has rank $r$ and is $(\mu_1, \mu_2)$-incoherent.
For any $a>2$, with probability at least $1-O(n^{-a})$,
\begin{align}
    \|Z-AA^\top \| \lesssim\left(\sqrt{\frac{a\mu_1\mu_2 r^2 \log n}{pmn}}+\frac{a\mu_1\mu_2 r^2 \log n}{pmn}\right) \|A\|^2.
\end{align}
\end{theorem}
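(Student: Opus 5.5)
We would write $Z - AA^\top$ as a sum of independent, mean-zero, self-adjoint random matrices and apply the matrix Bernstein inequality. Indexing by wedges $(i,\ell,j)\in\mathcal W$, set $X_{i\ell j} = (p^{-1}\1_{(i,\ell,j)\in\tilde{\mathcal W}} - 1)\,A_{i\ell}A_{j\ell}\,(E_{ij}+E_{ji})$ for $i<j$, and $X_{i\ell i} = (p^{-1}\1_{(i,\ell,i)\in\tilde{\mathcal W}}-1)A_{i\ell}^2E_{ii}$. Since $(AA^\top)_{ij} = \sum_\ell A_{i\ell}A_{j\ell}$, we have $\E Z = AA^\top$ and $Z - AA^\top = \sum_{(i,\ell,j)\in\mathcal W} X_{i\ell j}$. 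Independence across wedges is built into Algorithm~\ref{alg:wedge}, so, unlike the uniform-entry model, no decoupling is needed here. This is the main structural advantage of the design.

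The two Bernstein parameters come from incoherence. First, $|A_{i\ell}| = |U_{i,\cdot}\Sigma V_{\ell,\cdot}^\top| \le \|A\|\,\|U\|_{2,\infty}\|V\|_{2,\infty} \le \|A\|\sqrt{\mu_1\mu_2 r^2/(nm)}$. Hence $\|X_{i\ell j}\| \le p^{-1}\max_{i,\ell}A_{i\ell}^2 \le L := \mu_1\mu_2 r^2\|A\|^2/(pnm)$.

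For the variance, note that $\E X_{i\ell j}^2 \preceq p^{-1}A_{i\ell}^2A_{j\ell}^2(E_{ii}+E_{jj})$. Summing over $\ell$ and $j$, the diagonal entry at $i$ is at most
\[
p^{-1}\sum_{\ell,j} A_{i\ell}^2A_{j\ell}^2 \le p^{-1}\max_{j,\ell}A_{j\ell}^2\sum_{\ell,j}A_{i\ell}^2\cdot\frac{1}{?}.
\]
This crude route loses a factor of $n$, so the sum should be organized more carefully:
\[
\sum_{\ell}A_{i\ell}^2\sum_j A_{j\ell}^2 = \sum_\ell A_{i\ell}^2\|A_{\cdot\ell}\|_2^2 .
\]
Here $\|A_{\cdot\ell}\|_2 \le \|A\|\,\|V_{\ell,\cdot}\|_2 \le \|A\|\sqrt{\mu_2 r/m}$, and $\sum_\ell A_{i\ell}^2 = \|A_{i,\cdot}\|_2^2 \le \|A\|^2\mu_1 r/n$. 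The variance proxy is therefore $\nu \le p^{-1}\|A\|^4\mu_1\mu_2r^2/(nm)$, a diagonal matrix bound with exactly the scaling in the statement.

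Matrix Bernstein in dimension $n$ with $t$ chosen so that the failure probability is $n^{-a}$ then gives
\[
\|Z-AA^\top\| \lesssim \sqrt{a\,\nu\log n} + a\,L\log n,
\]
which is the stated bound after factoring out $\|A\|^2$.

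The main obstacle, mild here, is the variance computation: the naive estimate that bounds every product $A_{i\ell}^2A_{j\ell}^2$ by the maximal entry loses a factor of $n$. The incoherence of both column norms $\|A_{\cdot\ell}\|_2$ and row norms $\|A_{i,\cdot}\|_2$ must be used, summing over $j$ first and then over $\ell$. Some care is also needed with the $i=j$ diagonal wedges and with symmetrization, but these only change constants.
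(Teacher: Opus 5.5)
Your proposal is correct and follows the paper's proof essentially line for line. Both use the same decomposition into independent centered wedge terms, the same uniform bound $p^{-1}\|A\|_{\max}^2$, the same variance bound $\max_i\sum_\ell A_{i\ell}^2\|A_{\cdot\ell}\|_2^2\le\|A\|_{2,\infty}^2\|A\|_{\infty,2}^2$, and then matrix Bernstein. Delete the aborted first variance display with the placeholder factor; the route you adopt afterwards is the right one, and no symmetrization is needed.
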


The proof of Theorem~\ref{thm:wedge_concentration} is provided in Appendix B.
Theorem~\ref{thm:wedge_concentration} shows that, under incoherence assumptions on $A$, $Z$ concentrates around $AA^\top$ with \edit{$O(n\log n)$ wedge samples}. Consequently, the eigendecomposition of $Z$ provides an accurate approximation to the left singular subspace of $A$.

To provide reconstruction bounds on $\hat U$ obtained from Algorithm~\ref{alg:wedge}, we need to align the matrices $U$ and $\hat U$.  We define the optimal rotation matrix $R$ as
\begin{equation}\label{eq:def_procrustes}
    R =\argmin_{O\in \R^{r\times r}, O^\top O =I_r}\|\hat{U} O-U\|_F.
\end{equation} 
Then the following holds:
\begin{theorem}[Left singular subspace recovery]\label{thm:wedge_spectral_guarantees}
Assume  $A$ is a matrix of size $n \times m$ which has rank $r$ and is $(\mu_1, \mu_2)$-incoherent.
    \edit{Let \(a>2\) be a constant}, and assume that there exists an absolute constant $c_0 > 0$ such that
    $p \geq \frac{c_0 \kappa^4 a \mu_1 \mu_2 r^2 \log(n)}{mn}$.
    Then, with probability at least $1 - O(n^{-a})$,
    \begin{align}
        \|\hat UR - U\| &\lesssim  \sqrt{\frac{\kappa^4 a \mu_1 \mu_2 r^2 \log(n)}{mnp}}, \label{eq:wedge_spectral_op} 
        \end{align}
        \edit{Moreover, for a sufficiently large universal constant \(C_0>0\), when
        $p\geq \frac{C_0\kappa^4 a \mu_1^2\mu_2 r^3\log n}{mn}$,}
        \begin{align}
        \| \hat UR - U \|_{2, \infty} & \edit{\lesssim \sqrt{\frac{ \kappa^8 a \mu_1^2 \mu_2 r^3 \log(n)}{mnp}} \|U\|_{2, \infty}.} \label{eq:wedge_spectral_loo}
    \end{align}
\end{theorem}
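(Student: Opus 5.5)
The operator-norm bound \eqref{eq:wedge_spectral_op} will follow from Theorem~\ref{thm:wedge_concentration} and a Davis--Kahan argument. Write $E = Z - AA^\top$; the target matrix $AA^\top = U\Sigma^2U^\top$ has eigengap $\sigma_r^2$. Under $p \ge c_0\kappa^4 a\mu_1\mu_2 r^2\log n/(mn)$ with $c_0$ large, Theorem~\ref{thm:wedge_concentration} gives, with probability $1-O(n^{-a})$,
\[
\|E\|\lesssim \sqrt{\tfrac{a\mu_1\mu_2 r^2\log n}{pmn}}\,\sigma_1^2 \le \sigma_r^2/4,
\]
because the linear term is dominated by the square-root term in this regime. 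Weyl's inequality then separates the top $r$ eigenvalues of $Z$ from the rest by at least $\sigma_r^2/2$. Davis--Kahan gives $\|\hat U\hat U^\top - UU^\top\|\lesssim \|E\|/\sigma_r^2$, and the standard comparison between the Procrustes distance and the $\sin\Theta$ distance gives $\|\hat UR-U\|\le\sqrt2\,\|\sin\Theta\|$. Together these yield the bound $\sqrt{\kappa^4 a\mu_1\mu_2r^2\log n/(mnp)}$.

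For the $\ell_{2,\infty}$ bound \eqref{eq:wedge_spectral_loo}, I would use a leave-one-out argument in the style of Abbe--Fan--Wang--Zhong and Chen et al. Start from the eigen-equation $\hat U\hat\Lambda = Z\hat U$ and decompose row $i$ as
\[
e_i^\top \hat U = e_i^\top (AA^\top)\hat U\hat\Lambda^{-1} + e_i^\top E\hat U\hat\Lambda^{-1}.
\]
The first term is close to $e_i^\top U\,(U^\top\hat U)$ up to an error of order $\kappa^2\|U\|_{2,\infty}\,\|E\|/\sigma_r^2$. The difficulty is the second term, because $E_{i,\cdot}$ and $\hat U$ are dependent.

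For each $i$, define the leave-one-out matrix $Z^{(i)}$ by replacing every wedge involving row $i$, that is every triple $(i,\ell,j)$ or $(j,\ell,i)$, by its expectation. Equivalently, row and column $i$ of $Z^{(i)}$ are set equal to those of $AA^\top$. Then $Z^{(i)}$ is independent of the sampling indicators that determine row $i$ of $E$, and its top-$r$ eigenvectors $\hat U^{(i)}$ satisfy the same Davis--Kahan bound as $\hat U$. The proof then has three steps.

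\begin{enumerate}
\item \textbf{Row term.} Conditionally on $\hat U^{(i)}$, bound $\|e_i^\top E\hat U^{(i)}\|_2$ by the matrix Bernstein inequality over the independent indicators $\{(i,\ell,j)\}_{\ell,j}$. Each summand is $p^{-1}A_{i\ell}A_{j\ell}\hat U^{(i)}_{j,\cdot}$, whose size is controlled by $\|A\|_{\max}^2\|\hat U^{(i)}\|_{2,\infty}/p$, using incoherence in the form $|A_{i\ell}|\le \sigma_1\sqrt{\mu_1\mu_2}\,r/\sqrt{nm}$. The variance is controlled by $\sum_{\ell,j}A_{i\ell}^2A_{j\ell}^2\|\hat U^{(i)}_j\|^2/p$. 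This should give
\[
\|e_i^\top E\hat U^{(i)}\|\lesssim \sigma_1^2\sqrt{\tfrac{a\mu_1\mu_2 r^2\log n}{mnp}}\,\bigl(\|U\|_{2,\infty}+\|\hat U^{(i)}\|_{2,\infty}\bigr)
\]
up to the extra $\sqrt{\mu_1 r}$ factors that explain the stronger sampling requirement.
\item \textbf{Proximity of $\hat U^{(i)}$ to $\hat U$.} Apply Davis--Kahan to $Z - Z^{(i)}$, which is supported on row and column $i$. This gives
\[
\|\hat U\hat U^\top-\hat U^{(i)}\hat U^{(i)\top}\|\lesssim \|(Z-Z^{(i)})\hat U^{(i)}\|/\sigma_r^2,
\]
and that quantity is again bounded by the row estimate from step~1.
\item \textbf{Closing the loop.} Combine the two steps and take a union bound over $i$. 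This produces a self-bounding inequality of the form $\|\hat UR\|_{2,\infty}\le C\|U\|_{2,\infty}+\tfrac12\|\hat U R\|_{2,\infty}$ once $p\ge C_0\kappa^4 a\mu_1^2\mu_2r^3\log n/(mn)$. Solving it and substituting back gives the bound $\sqrt{\kappa^8 a\mu_1^2\mu_2 r^3\log n/(mnp)}\,\|U\|_{2,\infty}$. The extra $\kappa^4$ arises from $\hat\Lambda^{-1}$ together with the $\sigma_1^2$ scaling of the noise.
\end{enumerate}

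The main obstacle I expect is step~1 and the construction of the leave-one-out matrix. Each wedge couples rows $i$ and $j$, so row $i$ of $E$ contains terms that also appear in column $i$ and touch every other row $j$. Independence holds only if $Z^{(i)}$ removes exactly the triples containing $i$ as an endpoint, and one must check that the diagonal wedges $(i,\ell,i)$ are handled consistently. Obtaining Bernstein bounds at $\tilde O(n)$ samples with $m=n^{k-1}$ requires the $\ell_\infty$ bound on $A$ from both $\mu_1$ and $\mu_2$. I would first verify that the uniform summand bound $\max|A_{i\ell}A_{j\ell}|/p$ times $\log n$ stays below the variance term under the stated condition on $p$.
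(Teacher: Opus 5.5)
Your proposal is correct and follows essentially the paper's proof. For \eqref{eq:wedge_spectral_op} it uses the same Theorem~\ref{thm:wedge_concentration} plus Davis--Kahan argument. For \eqref{eq:wedge_spectral_loo} it uses the same leave-one-out matrices (row and column $i$ of $E=Z-AA^\top$ zeroed), Bernstein on the independent row, Davis--Kahan for $\|\hat U\hat U^\top-\hat U^{(i)}\hat U^{(i)\top}\|$, and a self-bounding step; the paper only organizes the algebra differently, through $\hat UH\Sigma^2-ZU$ and three terms $\gamma_1,\gamma_2,\gamma_3$, rather than through your eigen-equation $\hat U=Z\hat U\hat\Lambda^{-1}$.

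Two small bookkeeping points:
\begin{itemize}
\item $(Z-Z^{(i)})\hat U^{(i)}$ also has a column part besides the row estimate of step~1. It is bounded trivially by $\|E\|\,\|\hat U^{(i)}\|_{2,\infty}$.
\item The union bound over rows costs a factor $n$, so your argument as written gives probability $1-O(n^{1-a})$. The paper recovers $1-O(n^{-a})$ by running all Bernstein bounds at level $a+1\le 3a/2$, which only changes the constant $C_0$.
\end{itemize}
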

Theorem~\ref{thm:wedge_spectral_guarantees} may be of independent interest for one-sided matrix completion when 
$m\gg n$. Under wedge sampling, we recover the left singular subspace with near-optimal \edit{$\tilde O(n)$ wedge samples}, whereas existing approaches typically require $\tilde O(\sqrt{mn})$  samples or more \citep{stephan2024non,cai2021subspace,cao2023one,zhang2025one}. 

The proof of  Theorem~\ref{thm:wedge_spectral_guarantees} can be found in Appendix B.
While~\eqref{eq:wedge_spectral_op} is a simple corollary of Theorem~\ref{thm:wedge_concentration} using the Davis-Kahan inequality, the proof of~\eqref{eq:wedge_spectral_loo} is significantly more involved and requires a careful leave-one-out analysis. And  \eqref{eq:wedge_spectral_loo} is crucial for the analysis of our spectral method for tensor completion in Section~\ref{sec:spectral}.

\section{Spectral method with wedge sampling}\label{sec:spectral}

In this section, we adapt the spectral method of \citep[Algorithm~1]{montanari.sun_2018_spectral}, with three modifications: 
(i) whereas \citep{montanari.sun_2018_spectral} use different unfoldings for odd and even \(k\), we employ a single unbalanced unfolding for all \(k\); 
(ii) We replace their spectral initialization with the wedge-sampling initialization in Algorithm~\ref{alg:wedge}; (iii) We do not use sample splitting in the denoising step; instead, we form \(Y\) using additional i.i.d.\ uniform samples of \(T\).
These changes simplify the procedure and yield a substantially improved sample complexity. 
The resulting algorithm is summarized in Algorithm~\ref{alg:montanari}.
Its recovery guarantee is provided in Theorem~\ref{thm:final_bound_montanari}.

\begin{algorithm}
\caption{Spectral Tensor Completion with Wedge Sampling}
\label{alg:montanari}
\begin{algorithmic}[1]
\State \textbf{Input:} Unknown order-$k$ symmetric tensor $T \in \mathbb{R}^{n \times \cdots \times n}$, \edit{wedge sampling rate} $p$, uniform sampling rate $q$.  Let \edit{$A=\unfold_1(T)\in \R^{n\times n^{k-1}}$}. Assume $\rank(A)= r$  is known.
\State \textbf{Wedge sampling:} 
\State  Let $\tilde{\mathcal W}$ be the index set of wedges sampled with probability $p$ from
$\mathcal W_k:=\{(i,\ell,j):1\leq i\leq j\leq n,\ \ell\in[n^{k-1}]\}$.
\State \textbf{Spectral method:}
    \State \quad \edit{Form the matrix $Z$ as in Algorithm~\ref{alg:wedge} using $\tilde{\mathcal W}$.}
    \State \quad Compute the 
    leading $r$ unit eigenvectors of $Z$ denoted by $\hat U\in \R^{n\times r}$.  Let $Q=\hat U\hat U^\top$.
 \State \textbf{Denoising:} Let \edit{$Y=\unfold_1\left(\frac{1}{q} \tilde T\right) \in \R^{n\times n^{k-1}}$}, where $\tilde T$ is the subsampled tensor of $T$ with uniform sampling probability $q$. Let
 $\hat T= Q Y (Q \otimes \cdots \otimes Q) \in \R^{n\times n^{k-1}}$. 
 \State \textbf{Output:}
 \edit{Return the order-$k$ tensor obtained by folding $\hat T$.}
\end{algorithmic}
\end{algorithm}

\begin{theorem}[Spectral method, Frobenius-norm bound] \label{thm:final_bound_montanari}
Let $T$ be a $(\mu_1, \mu_2)$-incoherent order-$k$ symmetric tensor.
Assume  \edit{$A=\unfold_1(T)\in \R^{n\times n^{k-1}}$} has rank $r$ and condition number $\kappa$.
Let $\hat{T}$ be the estimator obtained from Algorithm~\ref{alg:montanari}. 
For any  $a \geq 2$, there exist universal constants $C_0, C>0$ such that when \edit{$p\geq \frac{C_0 a\kappa^8 \mu_1^2\mu_2 r^3\log n}{n^k}$}  and 
$q\geq \frac{ak\log n}{n^{k}}$, with probability $1-O(n^{-a})$,
\begin{align}
    \|T-\hat T\|_F &\leq C\left(k\sqrt{r}  \kappa^2 \sqrt{\frac{a\mu_1\mu_2 r^2 \log n}{pn^k}}+ 2^{k} r^{1.5+k/2}  \sqrt{\frac{ak\mu_1^{k+1}\mu_2 \log n}{q n^{k}}} \right)\|T\|_F.
\end{align}
\end{theorem}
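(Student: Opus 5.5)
Write $P=UU^\top$ for the projector onto the true left singular subspace, $Q=\hat U\hat U^\top$ for its estimate, and $\mathcal P=P\otimes\cdots\otimes P$, $\mathcal Q=Q\otimes\cdots\otimes Q$ for the $(k-1)$-fold Kronecker products acting on the columns of the unfolding. Because $T$ is symmetric, every mode of $T$ has the same column space $U$, so $A = P A\,\mathcal P$.

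\textbf{Decomposition.} I will split the error as
\begin{align}
T-\hat T = \bigl(PA\mathcal P - QA\mathcal Q\bigr) + Q(A-Y)\mathcal Q .
\end{align}
The first term is a subspace-estimation error. The second is a denoising error from the uniform samples.

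\textbf{First term.} I telescope it one projector at a time:
\begin{align}
PA\mathcal P-QA\mathcal Q=(P-Q)A\mathcal P+\sum_{s=1}^{k-1} QA\,(Q^{\otimes(s-1)}\otimes(P-Q)\otimes P^{\otimes(k-1-s)}).
\end{align}
Each summand has Frobenius norm at most $\|P-Q\|\,\|A\|_F$, since the other factors are projections, which are contractions. This gives a total of at most $k\|P-Q\|\,\|T\|_F$.

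By \eqref{eq:wedge_spectral_op} of Theorem~\ref{thm:wedge_spectral_guarantees} with $m=n^{k-1}$,
\begin{align}
\|P-Q\|\le 2\|\hat UR-U\|\lesssim \kappa^2\sqrt{a\mu_1\mu_2r^2\log n/(pn^k)} .
\end{align}
That theorem applies because the assumed lower bound on $p$ covers both of its hypotheses. Carrying the factor $\sqrt r$ from $\|\cdot\|_F\le\sqrt{r}\,\|\cdot\|$ on rank-$r$ differences, if needed, produces the first term of the bound. This part is routine.

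\textbf{Second term: the main obstacle.} The difficulty is that $q$ may be as small as $\log n/n^k$, i.e.\ only $O(\log n)$ observed entries. In that regime no spectral-norm concentration of $Y-A$ is available, so I will avoid it entirely.

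The second term has rank at most $r$, and its row and column spaces lie in those of $Q$ and $\mathcal Q$. Hence
\begin{align}
\|Q(A-Y)\mathcal Q\|_F=\|\hat U^\top (A-Y)(\hat U^{\otimes(k-1)})\|_F ,
\end{align}
which is the Frobenius norm of an $r\times r^{k-1}$ core tensor. Each of its $r^k$ entries has the form $\langle \tilde T/q - T, \hat u_{a_1}\otimes\cdots\otimes\hat u_{a_k}\rangle$.

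A key technical point is that $\hat U$ is independent of the uniform samples, because $Y$ is formed from fresh i.i.d.\ samples rather than by splitting the wedge samples. So I may condition on $\hat U$ and treat each core entry as a sum of independent Bernoulli-weighted terms. Bernstein's inequality then bounds each entry by
\begin{align}
\sqrt{\frac{ak\log n}{q}\sum_{\omega}T_\omega^2\prod_t\hat u_{a_t}(\omega_t)^2}+\frac{ak\log n}{q}\max_\omega|T_\omega|\prod_t|\hat u_{a_t}(\omega_t)| .
\end{align}
A union bound over the $r^k$ core entries costs the factor $k$ inside the logarithm.

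\textbf{Controlling the Bernstein terms.} This is where the $\ell_{2,\infty}$ guarantee \eqref{eq:wedge_spectral_loo} is needed. It gives $\|\hat U\|_{2,\infty}\le 2\|U\|_{2,\infty}\le 2\sqrt{\mu_1 r/n}$ under the stated lower bound on $p$, and this is the source of the $2^k$ factor. Together with the incoherence bound
\begin{align}
\max_\omega|T_\omega|\le \|A\|_F\sqrt{\mu_1 r/n}\,\sqrt{\mu_2 r/n^{k-1}} ,
\end{align}
this bounds both the variance proxy and the maximum term. The variance proxy is at most $(4\mu_1 r/n)^{k}\max_\omega T_\omega^2\, n^k$. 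When $q\ge ak\log n/n^k$ the $\sqrt{\cdot}$ term dominates up to the factor $(1+\sqrt{ak\log n/(qn^k)})$, which is what the assumption on $q$ controls.

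\textbf{Assembly.} Summing the squares of the $r^k$ core entries and taking a square root gives
\begin{align}
2^k r^{k/2}\,(\mu_1 r)^{k/2}\sqrt{\mu_1\mu_2 r^2\, ak\log n/(qn^k)}\,\|T\|_F ,
\end{align}
which matches the second term up to the bookkeeping of powers of $r$. The careful accounting of exponents of $r$ and $\mu_1$ is the step most likely to need adjustment.
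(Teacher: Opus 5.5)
Your decomposition $T-\hat T=(A-QA\mathcal Q)+Q(A-Y)\mathcal Q$ matches the paper's proof. So do the telescoping of the first term, the use of fresh uniform samples to make $\hat U$ independent of $Y$, and the use of \eqref{eq:wedge_spectral_loo} to get $\|\hat U\|_{2,\infty}\le 2\sqrt{\mu_1 r/n}$. Your first term even avoids the paper's extra $\sqrt r$.

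The gap is in the second term, and it is not bookkeeping. Your final display equals $2^k r^{k+1}\sqrt{ak\mu_1^{k+1}\mu_2\log n/(qn^k)}\,\|T\|_F$. The theorem claims $2^k r^{1.5+k/2}\sqrt{ak\mu_1^{k+1}\mu_2\log n/(qn^k)}\,\|T\|_F$, so you are off by a factor $r^{(k-1)/2}$.

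The loss comes from the entrywise strategy itself. You apply scalar Bernstein to each of the $r^k$ core entries separately and then sum squares. That charges the Bernstein $L$-term $\frac{ak\log n}{q}\|T\|_\infty\|\hat U\|_{2,\infty}^k$ once per entry, so it appears with a factor $r^{k/2}$ in Frobenius norm. Sharpening the variance proxy does not help. You can bound it by $q^{-1}\|T\|_\infty^2$ using $\sum_\omega\prod_t\hat u_{a_t}(\omega_t)^2=1$, which improves the square-root term. But the $L$-contribution alone is still $2^k r^{k+1}\mu_1^{(k+1)/2}\mu_2^{1/2}\,\frac{ak\log n}{qn^k}\,\|A\|$. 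This exceeds the target whenever $\frac{ak\log n}{qn^k}\gtrsim r^{-(k-1)}$, a regime the hypothesis $q\ge ak\log n/n^k$ allows.

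The missing idea is to control the projected error at the matrix level and convert to Frobenius norm only at the end, as the paper does.
\begin{itemize}
\item Write $Q(Y-A)Q^{\otimes(k-1)}$ (or equivalently the $r\times r^{k-1}$ core) as a sum of independent rank-one matrices $(Y_{ij}-A_{ij})(Qe_i)\bigl(Q^{\otimes(k-1)}e_j\bigr)^\top$.
\item Each summand has operator norm at most $2q^{-1}\|A\|_{\max}\|Q\|_{2,\infty}^k$, and the variance proxy is at most $q^{-1}n^k\|A\|_{\max}^2\|Q\|_{2,\infty}^{2k}$.
\item Apply matrix Bernstein; the dimension factor is $\log(n+n^{k-1})\lesssim k\log n$. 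This gives $\|\mathcal Q(Y-A)\|\lesssim r\|A\|(2\sqrt{\mu_1 r/n})^k\sqrt{\mu_1\mu_2}\bigl(\sqrt{ak\log n/q}+ak\log n/(qn^{k/2})\bigr)$.
\item Pass to Frobenius norm using rank at most $r$, which costs a single $\sqrt r$ rather than $r^{k/2}$.
\end{itemize}
Absorbing the second Bernstein term via $q\ge ak\log n/n^k$ then yields exactly the stated $2^k r^{1.5+k/2}$ term.
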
 
A key ingredient in the proof is to show that the projection \(Q\) computed in
Step~6 has bounded \(\ell_{2,\infty}\)-norm, via
Theorem~\ref{thm:wedge_spectral_guarantees}. This control, in turn, yields a
sharp bound on the approximation error of \(\hat T\). Appendix C
contains the proof of Theorem~\ref{thm:final_bound_montanari}, and also
extends Algorithm~\ref{alg:montanari} to the asymmetric tensor setting.
From Theorem~\ref{thm:final_bound_montanari}, achieving $\varepsilon$-approximation error $\|T-\hat T\|_F\leq \varepsilon \|T\|_F$ requires sample complexity $O(\varepsilon^{-2} n\log n)$ when $\kappa$, $k$, $r$, $\mu_1$, $\mu_2$  are all independent of $n$, improving  the
$\tilde O(n^{k/2})$ sample complexity in \citep{montanari.sun_2018_spectral}. \edit{For comparison, order-3 tensor completion under uniform entry sampling has rank dependence \(O(r n^{1.5}\operatorname{polylog}(n))\) in \citep{potechin2017exact}. The rank dependence in our bound is likely not optimized and may be improved by a more refined analysis.}
\edit{Equivalently, in the fixed-parameter regime \(k,r,\kappa,\mu_1,\mu_2=O(1)\), taking \(p n^k\gtrsim \varepsilon^{-2}\log n\) and \(q n^k\gtrsim \varepsilon^{-2}\log n\) gives relative Frobenius error at most \(\varepsilon\).  Under the sample-count convention of Section~\ref{sec:sampling}, this corresponds to \(O(\varepsilon^{-2}n\log n)\) wedge samples and \(O(\varepsilon^{-2}\log n)\) additional uniform entries.}

\section{Gradient descent with wedge sampling}\label{sec:GD}

\begingroup
We consider in this section the case of a $\mu$-CP incoherent tensor
\[
    T = \sum_{i=1}^r x_i^\star \otimes x_i^\star \otimes x_i^\star .
\]
Define
\[
    \lambda_{\min} = \min_i \|x_i^\star\|^3,
    \qquad
    \lambda_{\max} = \max_i \|x_i^\star\|^3,
\]
and assume for simplicity that
\[
    \kappa_{\mathrm{CP}} = \frac{\lambda_{\max}}{\lambda_{\min}} = O(1).
\]
\endgroup
In this setting, Cai et al.~\cite{cai2022nonconvex} propose a three-step algorithm: (1) a spectral initialization step based on the off-diagonal entries of the matrix $AA^\top$, with $A = \unfold_1(T)$ in our notation; (2) an extraction step to refine $U$ into estimates for each CP-factor $x_i$ (see \citep[Algorithm 3]{cai2022nonconvex}); and (3) gradient descent on the objective 
\begin{equation}
      F(\hat X) = \frac1{6q}\left\|\mathcal{P}_{\Omega}\left(T - \sum_{i=1}^r \hat x_i \otimes \hat x_i \otimes \hat x_i\right) \right\|_F^2, 
    \label{eq:gd_objective}
\end{equation}
  \edit{where \(\hat X\) is the optimization variable.  In the proof, following the notation of Cai et al.~\cite{cai2022nonconvex}, we write \(X=[x_1,\ldots,x_r]\in\mathbb R^{n\times r}\) for the ground-truth factor matrix, with \(x_i=x_i^\star\) from the display above.}
Assuming noiseless observations, \citep[Corollary 2.9]{cai2022nonconvex} show that for a
$\mu$-CP incoherent rank-$r$ tensor of size $n\times n \times n$, uniform entry sampling at rate $q \gtrsim \frac{\mu^{4} r^{4}\,\log^4(n)}{n^{3/2}}$  suffices for their gradient-based method with an
appropriate choice of hyperparameters, to converge geometrically. 

\begin{algorithm}
\caption{Tensor Completion with Wedge Sampling via Gradient Descent}
\begin{algorithmic}[1]
\State \textbf{Input:} Unknown symmetric tensor $T \in \mathbb{R}^{n \times n \times n}$, wedge  sampling rate $p$, uniform sampling rate $q$
\State \textbf{Wedge sampling:} 
\State \quad Let $\tilde{\mathcal W}$ be the index set of wedges sampled with probability $p$ from
$\mathcal W_3:=\{(i,\ell,j):1\leq i\leq j\leq n,\ \ell\in[n^2]\}$.
\State \textbf{Initialization:}
\State \quad Form matrix $Z$ defined in Algorithm~\ref{alg:wedge} based on $\tilde{\mathcal W}$.
    \State \quad Compute the 
    leading $r$ unit eigenvectors of $Z$ denoted by $\hat U\in \R^{n\times r}$. 
    \State \quad Uniformly sample elements of $T$ with probability $q$ to obtain index set $\Omega$.
    \State \quad Refine $\hat U$ into estimates for each CP-factor using $\Omega$ to obtain $X_0 \in \mathbb{R}^{n \times r}$ (See \citep[Algorithm 3]{cai2022nonconvex}).
    
 \State \textbf{Gradient Descent:} 
 \State \quad Perform $t$ steps of gradient descent on objective \eqref{eq:gd_objective}, using probability $q$, sample $\Omega$, and initialization $X_0$ to obtain $\hat{X}^t$.
 \State \textbf{Output:} \edit{$\hat T^t = \sum_{i=1}^r \hat{x}_i^t \otimes \hat{x}_i^t \otimes \hat{x}_i^t$.}
\end{algorithmic}\label{alg:wedge_GD}
\end{algorithm}

We run the same refinement framework as \citep{cai2022nonconvex}, with the following differences, as presented in Algorithm~\ref{alg:wedge_GD}:
 (i) We replace their initial spectral estimate with the wedge sampling one obtained from Algorithm \ref{alg:wedge}, with probability $p$;
    (ii) We run the remainder of the algorithm from \citep{cai2022nonconvex} with a sparser sampling rate $q$.  Our algorithm possesses the same theoretical guarantees but with a much lower sample complexity. The proof of the following theorem can be found in Appendix D.
\begin{theorem}[Wedge sampling with gradient descent]\label{thm:cai_sparse}
    Let $T$ be a $\mu$-CP incoherent tensor of order $3$ and rank $r$, and let $\delta>0$ be fixed. \edit{Assume that $\kappa_{\mathrm{CP}}=\lambda_{\max}/\lambda_{\min}=O(1)$} and there exist absolute constants $c_0,c_1,c_2>0$ such that
    \[
        \edit{p \geq \frac{c_0\mu^8r^5\log^2 n}{n^3}},\qquad
        q \geq \frac{c_1\mu^6r^4\log^5 n}{n^2},\qquad
        r \leq c_2\left(\frac{n}{\mu^4\log^2 n}\right)^{1/3}.
    \]
    Then there are hyperparameters and a step size
    \[0<\eta\leq (4\lambda_{\max}^{4/3})^{-1}\] 
    such that, for all sufficiently large $n$ (depending only on $\delta$), with probability at least $1-\delta$, Algorithm~\ref{alg:wedge_GD} satisfies, \edit{simultaneously for every integer $t\geq0$},
    \begin{align}
        \|\hat T^t-T\|_F &\lesssim \rho^t\|T\|_F,\\
        \|\hat T^t-T\|_\infty &\lesssim \rho^t\|T\|_\infty,
    \end{align}
    where \(\rho:=1-\eta\lambda_{\min}^{4/3}/4\in(0,1)\).
\end{theorem}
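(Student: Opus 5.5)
The plan is to reuse the analysis of \citep{cai2022nonconvex} as a black box wherever possible. We change only two ingredients: the initialization guarantee and the random-tensor concentration bounds used in the local analysis of gradient descent. The proof of \citep[Theorem 2.4/Corollary 2.9]{cai2022nonconvex} has a modular structure. It (a) assumes a spectral estimate $\hat U$ with $\|\hat U R-U\|$ and $\|\hat UR-U\|_{2,\infty}$ small, (b) runs the extraction step \citep[Algorithm 3]{cai2022nonconvex} on $\hat U$ and the samples $\Omega$ to get $X_0$ with $\min_\pi\|X_0\Pi-X\|_F\le \epsilon\|X\|_F$ and a matching $\ell_\infty$ bound, and (c) shows by induction with leave-one-out sequences that gradient descent stays in a local region where the loss is strongly convex and smooth. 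In that region the error contracts by $\rho=1-\eta\lambda_{\min}^{4/3}/4$ per step. We will verify that each of (a)--(c) holds under our rates $p,q$.

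\textbf{Step 1 (initialization).} Apply Theorem~\ref{thm:wedge_spectral_guarantees} with $A=\unfold_1(T)$ and $m=n^2$. By Lemma~\ref{lem:unfold_incoherence}, $\mu_1\le2\mu$ and $\mu_2\le 2\mu^2$. Since $\kappa_{\mathrm{CP}}=O(1)$ and the factors are nearly orthogonal (because $r\mu\le n/2$), $\kappa=O(1)$. With $p\ge c_0\mu^8r^5\log^2 n/n^3$, both \eqref{eq:wedge_spectral_op} and \eqref{eq:wedge_spectral_loo} give errors of order $\sqrt{\mu^3 r^3\log n/(n^3p)}\lesssim (\mu^{5}r^2\log n)^{-1/2}$, times $\|U\|_{2,\infty}$ in the second case. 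This is at least as accurate as the spectral guarantee \citep{cai2022nonconvex} require of their initialization at their sampling rate. The surplus powers of $\mu,r,\log n$ in $p$ are chosen precisely to meet their $\epsilon\lesssim 1/(\mu r\log n)$-type requirement.

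\textbf{Step 2 (extraction).} The extraction step uses random projections of $\hat U\hat U^\top$ and a rank-one subroutine on $\mathcal P_\Omega(T)/q$ contracted along a random vector. Its analysis needs only: (i) the subspace accuracy from Step 1, and (ii) concentration of $\frac1q\mathcal P_\Omega(T)-T$ when contracted against vectors lying in (a perturbation of) the incoherent column space. We will bound (ii) using the incoherent tensor norm and Theorem~\ref{thm:symmetric_tensor_concentration}, which is valid down to $q\approx n^{-2}$. This replaces their spectral-norm bound, which forced $q\gtrsim n^{-3/2}$. Contractions against incoherent vectors are exactly what the incoherent norm controls.

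\textbf{Step 3 (local convergence; the main obstacle).} In \citep{cai2022nonconvex}, the restricted strong convexity and smoothness of $F$ and the leave-one-out $\ell_{2,\infty}$ control all reduce to bounding quantities such as $\langle \frac1q\mathcal P_\Omega(\mathcal J)-\mathcal J, a\otimes b\otimes c\rangle$, with $a,b,c$ incoherent (columns of the iterates or of $X$), together with row-wise sums handled by Bernstein. We will go through each such lemma and replace the operator-norm estimate $\|\frac1q\mathcal P_\Omega(\mathcal J)-\mathcal J\|\lesssim \sqrt{\log^3 n/(qn^{2})}$-type bounds by the incoherent-norm bound of Theorem~\ref{thm:symmetric_tensor_concentration}. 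The incoherence of the iterates must then be maintained along the induction; the leave-one-out argument provides exactly this, with $\|\hat X^t-X\|_{2,\infty}$ bounded at every step. The difficulty is that the iterates depend on $\Omega$, so the incoherent-norm bound must hold uniformly over the incoherent set. It is, however, a supremum bound, so uniformity is automatic once the iterates are shown to lie in that set. Collecting the powers gives the stated $q\ge c_1\mu^6r^4\log^5 n/n^2$. The rank condition $r\le c_2(n/(\mu^4\log^2 n))^{1/3}$ keeps the cross terms $\mu r/\sqrt n$ from the near-orthogonality of the factors below the strong-convexity constant.

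\textbf{Step 4 (conclusion).} With step size $\eta\le(4\lambda_{\max}^{4/3})^{-1}$, the induction gives $\|\hat X^t\Pi-X\|_F\lesssim\rho^t\|X\|_F$ and the analogous $\ell_{2,\infty}$ bound. Since $\hat T^t-T$ is a sum of trilinear differences, these convert to the claimed Frobenius and $\ell_\infty$ bounds, using $\|T\|_F\asymp\sqrt r\lambda$ and $\|T\|_\infty\asymp$ incoherent scale. Taking a union over the $O(n^{-a})$ failure events and the constant-probability event in the extraction step yields success probability $1-\delta$ for large $n$.
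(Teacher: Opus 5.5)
Your overall architecture matches the paper's: wedge initialization via Theorem~\ref{thm:wedge_spectral_guarantees}, followed by Cai et al.'s extraction and leave-one-out gradient descent run at the sparser rate $q$. However, Step~3 rests on a false claim. Theorem~\ref{thm:symmetric_tensor_concentration} bounds $\|q^{-1}\mathcal P_\Omega(J)-J\|_\delta$ for a \emph{fixed} tensor $J$, and the supremum in $\|\cdot\|_\delta$ runs only over rank-one test tensors. It therefore gives uniformity over data-dependent test vectors when the sampled tensor is fixed (e.g.\ $J=T$), but not uniformity over the sampled tensor. The term that genuinely needs new treatment at $q\asymp\mathrm{polylog}(n)/n^2$ is the residual part of the Hessian,
\[
\alpha_3=\frac2q\Big\langle\mathcal P_\Omega\Big(\sum_{j}\hat x_j^{\otimes3}-T\Big),\ \sum_i v_i\otimes v_i\otimes\hat x_i\Big\rangle ,
\]
which must be controlled uniformly over every $\hat X$ in the basin and every $V\in\R^{n\times r}$. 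That uniformity is what lets the contraction hold for all $t$ without a union bound over iterations. Here the sampled tensor $\sum_j\hat x_j^{\otimes3}-T$ is itself $\Omega$-dependent, so Theorem~\ref{thm:symmetric_tensor_concentration} says nothing about it, and ``uniformity is automatic'' fails. Likewise, the quadratic Hessian terms sample tensors built from the arbitrary directions $V$. So describing every term as $\langle q^{-1}\mathcal P_\Omega(J)-J,a\otimes b\otimes c\rangle$ with fixed $J$ and incoherent $a,b,c$ is inaccurate. Even where it applies, Theorem~\ref{thm:symmetric_tensor_concentration} as stated gives, at your $q$, a first term of order $\log^{5/2}(n)\,\lambda_{\max}/(\sqrt{c_1}\,\mu^{3/2}r)$. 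It is therefore not the small-deviation bound you want from it; the paper only ever multiplies an incoherent-norm bound by a small leave-one-out difference.

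The missing idea is a bound on $\alpha_3$ that involves no concentration over the iterate at all. The paper dominates entrywise:
\[
|(\mathcal P_\Omega(E)\times_3\hat x_i)_{ab}|\le\|\hat x_i\|_\infty\|E\|_\infty P_{ab},\qquad E=\sum_j\hat x_j^{\otimes3}-T,\quad P=\mathcal P_\Omega(\mathbf 1^{\otimes3})\times_3\mathbf 1.
\]
It then uses that the operator norm is monotone under entrywise domination by a nonnegative matrix. The iterate-free matrix $P$ has $\mathrm{Bin}(n,q)$ entries, and Schur's test plus a Chernoff bound on its row and column sums give $\|P\|\le 2n^2q$ once $q\gtrsim\log n/n^2$. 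The $\ell_{2,\infty}$ basin condition supplies $\|\hat x_i\|_\infty\lesssim\sqrt{\mu/n}\,\lambda_{\max}^{1/3}$ and $\|E\|_\infty\lesssim\epsilon\mu^{3/2}r\lambda_{\max}/n^{3/2}$. Together these yield $|\alpha_3|\lesssim\epsilon\mu^2r\lambda_{\max}^{4/3}\|V\|_F^2\le\frac1{10}\lambda_{\min}^{4/3}\|V\|_F^2$ for $\epsilon\le c_3/(\mu^2r)$. The other Hessian terms are taken from Cai et al.\ at $q\gtrsim\mu^2r^2\log n/n^2$.

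A second, smaller omission is in Step~2. The $\ell_\infty$ extraction bound needs leave-one-out copies of the \emph{wedge} eigenspace, built from the matrices $Z^{(s)}$ and independent of the fresh uniform sample, satisfying $\|\hat U\hat U^\top-\hat U^{(s)}\hat U^{(s)\top}\|\lesssim\sqrt{\mu^4r^3\log n/(n^3p)}\,\sqrt{\mu r/n}$. Theorem~\ref{thm:wedge_spectral_guarantees} does not state this. The incoherent-norm bound must also be made uniform in the data-dependent level $\delta=\|\hat u^\tau\|_\infty$, which the paper does with a dyadic grid of $\delta$ values.
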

\edit{In particular, when \(\mu,r,\kappa_{\mathrm{CP}}=O(1)\), the theorem permits \(p\asymp \log^2 n/n^3\) and \(q\asymp \log^5 n/n^2\).  The wedge stage then uses \(\tilde O(n)\) wedge samples and the refinement stage uses \(\tilde O(n)\) uniformly sampled entries, while the iterates converge geometrically to \(T\).}

At the core of the previous results lies a concentration property of $\hat T$ on \emph{delocalized} vectors.  Previous works (see \citep{jain2014provable,cai2022nonconvex}, among others) have made use of the fact that for a delocalized tensor $T$,
$\| q^{-1} \mathcal{P}_\Omega(T) - T \| \lesssim \frac{\mathrm{polylog}(n)}{n^{3/2}q} \|T\|$. 
 This implies that when $q \gg n^{-3/2}$,
the observed tensor $\hat T$ concentrates around $T$ in spectral norm. 
Unfortunately, in the sparse regime where $q \lesssim n^{-2}$, this crucial concentration property fails to hold. Instead, we use the \emph{$\delta$-incoherent spectral norm} as introduced in \citep{yuan2017incoherent}.

 For $\delta\in \R^k$, and a tensor $T \in \mathbb{R}^{n_1 \times \dots \times n_k}$, define
\[ \|T\|_\delta = \sup_{U \in \mathcal{U}(\delta)} \langle T,U\rangle, \qquad \mathrm{where} \qquad \mathcal{U}(\delta) = \bigcup_{j_1 \neq j_2} \mathcal{U}_{j_1, j_2}(\delta),\]
\[   \mathcal{U}_{j_1, j_2}(\delta) = \left\{ u_1 \otimes \dots \otimes u_k : \|u_j\| \leq 1 \ \forall j \in [k], \edit{\|u_j\|_\infty \leq \delta_j} \ \forall j\neq j_1, j_2 \right\}.\] 
In other words, the $\delta$-norm is the restriction of the tensor spectral norm to maximizing the inner product $\langle T, U\rangle$ over rank-1 tensors where each component but at most two is delocalized.

We obtain the following incoherent norm concentration for sparse random tensors: 
\begin{theorem}[Concentration of sparse random tensors]\label{thm:symmetric_tensor_concentration}
       Assume $T\in \R^{n \times \cdots \times  n}$ is an order-$k$ tensor \edit{with $k\ge3$} and $\delta_i = \sqrt{\frac{\mu}{n}}$ for all $i \in [k]$, with \edit{$1\leq \mu\leq n$}. \edit{There is a universal constant \(C>0\) such that,} with probability at least $1 - Ck^3n^{-10}$, 
    \[ \| q^{-1}\mathcal{P}_\Omega(T) - T \|_\delta \lesssim 2^k k^{k+2.5} \log(n)^{k+2} \left( \sqrt{\frac{n}q} + \frac{\mu^{\frac k2-1}}{qn^{\frac k2-1}} \right) \norm{T}_\infty. \] 
\end{theorem}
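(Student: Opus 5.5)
We write $E = q^{-1}\mathcal{P}_\Omega(T) - T = \sum_{\omega} T_\omega(q^{-1}\xi_\omega - 1)e_\omega$, where the $\xi_\omega\sim\Bern(q)$ are independent. We must bound $\sup_{U\in\mathcal U(\delta)}\langle E,U\rangle$. By a union over the $\binom k2$ choices of $(j_1,j_2)$ and by symmetry, it suffices to treat $\mathcal U_{1,2}(\delta)$. There $u_1,u_2$ are arbitrary unit vectors and $u_3,\dots,u_k$ are delocalized with $\|u_j\|_\infty\le\sqrt{\mu/n}$.

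\textbf{Step 1: reduce to a matrix norm.} For fixed $u_3,\dots,u_k$, the supremum over $u_1,u_2$ equals the operator norm of the $n\times n$ random matrix
\[
M(u_3,\dots,u_k) = E\times_3 u_3\cdots\times_k u_k = \sum_{\omega} T_\omega (q^{-1}\xi_\omega-1)\,u_3(\omega_3)\cdots u_k(\omega_k)\, e_{\omega_1}e_{\omega_2}^\top .
\]
This is a sum of independent rank-one matrices. We apply matrix Bernstein. Each summand has norm at most $q^{-1}\|T\|_\infty(\mu/n)^{(k-2)/2}$, which gives the term $\frac{\mu^{k/2-1}}{qn^{k/2-1}}\|T\|_\infty$. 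For the variance, $\|\sum \E[\cdot][\cdot]^\top\|$ is at most $q^{-1}\|T\|_\infty^2\max_{\omega_1}\sum_{\omega_2,\omega_3,\dots}\prod_{j\ge3}u_j(\omega_j)^2 \le q^{-1}n\|T\|_\infty^2$, using $\|u_j\|=1$. This gives the $\sqrt{n/q}\,\|T\|_\infty$ term. Together, for each fixed tuple, the bound holds with failure probability $n^{-c}$ times a $\log n$ factor.

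\textbf{Step 2: uniformize over delocalized vectors.} A naive $\varepsilon$-net on the $(k-2)$-fold product of delocalized unit balls has cardinality $e^{O(kn)}$. That is far too big for Bernstein with only $\log n$ in the exponent, and this is the main obstacle. To get around it, we decompose each delocalized $u_j$ dyadically by magnitude into level sets $u_j=\sum_{s}u_j^{(s)}$. Each piece $u_j^{(s)}$ has entries of size $\approx 2^{-s}\sqrt{\mu/n}$ on a support of size $\lesssim 4^s n/\mu$, and only $O(\log n)$ levels matter after discarding tiny entries at negligible cost. By multilinearity, this produces at most $(C\log n)^{k-2}$ tuples of levels (this and the later union bounds are where the $\log(n)^{k+2}$ and $k^{k}$-type factors come from).

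For a fixed level tuple, we would treat the vectors as sign-patterned and near-flat on their supports, and handle two regimes. If the supports are large, flatness makes the effective number of configurations compatible with a Bernstein tail of the form $\exp(-t/L)$. The sparse Bernoulli structure gives Bennett-type tails $\exp(-t\log(t/\sigma^2)/L)$, so a deviation $t$ many times the mean pays for a union bound over $\binom{n}{\text{support}}\le e^{\text{support}\cdot\log n}$ sets. If the supports are small, the number of subsets is small, and we union bound directly. We expect this counting balance, in the style of Kahn--Szemer\'edi light/heavy pair arguments (as in Yuan--Zhang's incoherent-norm analysis), to be the hardest part. There we would first try the light/heavy split: light couples are controlled by the net argument above, and heavy couples by a discrepancy property of $\Omega$ (no small sub-box of indices contains too many samples), which holds with probability $1-n^{-10}$ by a Chernoff bound.

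\textbf{Step 3: assemble.} We sum over level tuples, permutations of the free modes, and the approximation error from discretizing $u_1,u_2$ (a standard $1/2$-net with factor $4$). Then we union bound all failure probabilities to get $1-Ck^3n^{-10}$, giving the stated bound.
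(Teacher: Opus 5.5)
Your Step 1 computation is correct for a fixed tuple $(u_3,\dots,u_k)$. The proposal breaks at the uniformization, which is the entire content of the theorem and which you leave unresolved. Once you take the supremum over $u_1,u_2$ as an operator norm, matrix Bernstein at the target level $t\asymp\sqrt{n\log n/q}\,\|T\|_\infty$ yields only failure probability $n^{-c}$. The reason is that the matrix variance $q^{-1}n\|T\|_\infty^2$ has already spent the factor $n$. To reach failure probability $e^{-ckn}$ you would need $t\gtrsim\sqrt{k}\,n\|T\|_\infty/\sqrt q$, a factor $\sqrt n$ too large. At either target term the Bernstein (or Bennett) exponent for the matrix is only polylogarithmic.

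Decomposing $u_3,\dots,u_k$ dyadically does not rescue this. A single level piece of a delocalized unit vector can live on $\Theta(n/\mu)$ or more coordinates, so it still has $e^{\Theta(n/\mu)}$ support and sign patterns, and flatness does not reduce that count. The plan is also internally inconsistent. The light/heavy split of couples and the $1/2$-net on $u_1,u_2$ in Step 3 only make sense if $u_1,u_2$ remain net vectors, which Step 1 has eliminated.

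The missing idea is to union-bound over the scalar forms $\langle E,u_1\otimes\cdots\otimes u_k\rangle$ on a dyadic grid in \emph{all} $k$ modes. This grid has cardinality $e^{Ck\bar n}$, with $u_1,u_2$ unrestricted. The scalar variance is at most $q^{-1}\|T\|_\infty^2$, not $q^{-1}n\|T\|_\infty^2$. So at level $\sqrt{kn\log n/q}\,\|T\|_\infty$ the sub-Gaussian exponent is of order $kn\log n$ and pays for the whole grid; this is where the $\sqrt{n/q}$ term comes from.

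What remains is the range term. It is driven by large entries of $u_1\otimes u_2$, since the restricted modes contribute at most $\gamma_\delta=(\mu/n)^{(k-2)/2}$. So the decomposition must be by the magnitude of $|u_1(i_1)u_2(i_2)|$, not of $u_3,\dots,u_k$.

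The paper handles this as follows:
\begin{itemize}
\item It symmetrizes and conditions on $\Omega$.
\item It splits $U$ into level blocks $V_{a,b}(U)$, with pair levels $A_a$ and restricted levels $C_b$.
\item It peels by the energy $\|\mathcal P_{A_a}(u_1\otimes u_2)\|_F^2\approx 2^{-\ell}$.
\item It applies Hoeffding with the observed energy $\|\mathcal P_\Omega V_{a,b}\|_F^2\lesssim C^kq\{2^{-\ell}+2^{-(a+b)/2}x\}$.
\item When $2^{-\ell}\ge 2^{-(a+b)/2}x$, the full grid entropy is affordable.
\item Otherwise it recounts entropy on the observed trace only, via trimmed rectangles. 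Their aspect ratio is bounded by the maximal fiber degree $\nu(\Omega)$, and their occupancies obey a simultaneous Bernstein (discrepancy) bound.
\end{itemize}
That balance between code length and exponent produces the $\gamma_\delta/q$ term with its $k^{k}\log^{k}$ factors. Your remark about a discrepancy property of $\Omega$ points in this direction. Without the scalar net and this entropy--exponent trade-off, however, the proposal does not yet prove the theorem.
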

Theorem~\ref{thm:symmetric_tensor_concentration} is proved in Appendix E. 
The bound is in the spirit of Yuan and Zhang~\cite{yuan2017incoherent}, but a more careful analysis yields a sharper expression for the second term in the displayed bound. In particular, for a delocalized tensor with $\|T\|_{\infty}\asymp n^{-k/2}\|T\|$, Theorem~\ref{thm:symmetric_tensor_concentration} provides concentration in the incoherent tensor norm down to sampling rate $q = n^{-(k-1)}$, up to $\mathrm{polylog}(n)$ factors. \edit{This regime is not captured by the usual operator-norm concentration tools} \citep{jain2014provable,cai2022nonconvex,yuan2016tensor,nguyen2015tensor,zhou2021sparse,boedihardjo2024injective}.
\edit{Relative to Yuan and Zhang~\cite{yuan2017incoherent}, our incoherence norm concentration result extends the available sparse-sampling regime down to \(q=\mathrm{polylog}(n)n^{-(k-1)}\).}

\section{Extension to noisy tensor completion}\label{sec:noisy}

In this section, we extend our wedge-sampling framework to noisy tensor completion. Recovering low-rank structure from noisy observations is a central problem in high-dimensional statistics and machine learning, and has been extensively studied in the literature \citep{candes.plan_2010_matrix,candes2011tight,cai2021nonconvex,cai2022nonconvex,ma2023robust,stoger2024non}. \edit{We use a stagewise fixed-entry additive Gaussian noise model.  The wedge-initialization stage observes a fixed noisy unfolding \(A+G\), while the uniform-refinement stage observes a separate fixed noisy tensor \(T+H\).  Within each stage, repeated access to the same entry uses the same noise realization; across the two stages, the Gaussian arrays \(G\) and \(H\) are independent.} Under this model, we prove that both the wedge-initialized spectral method and the wedge-initialized gradient descent method are stable under noise and retain strong recovery guarantees. The resulting reconstruction error consists of two terms: a sampling error due to incomplete observations and an intrinsic noise term determined by the Gaussian perturbations.

We begin by specifying the noisy observation models for wedge sampling and uniform entry sampling. \edit{All random objects \(G,H,\delta,\omega\) below are mutually independent.}

\paragraph{Wedge sampling under noise}
Recall $A=\unfold_1(T)\in \R^{n\times n^{k-1}}$.
Let $G\in\R^{n\times n^{k-1}}$ have i.i.d. $N(0,\sigma^2)$ entries and define the noisy unfolding
\begin{align}\label{eq:noisy_wedge}
         \widetilde{A}=A+ G.
\end{align}
For every wedge $(i,\ell,j)$ with $1\le i\le j\le n$ and $\ell\in[m]$, draw an independent indicator
        $\delta_{i\ell j}\sim \mathrm{Ber}(p)$.
When $\delta_{i\ell j}=1$, the noisy entries $\widetilde{A}_{i\ell}$ and $\widetilde{A}_{j\ell}$ are revealed \edit{from this fixed noisy unfolding}.  Define the raw noisy wedge Gram matrix
\begin{equation}\label{eq:raw-wedge-Z}
\begin{aligned}
        \widetilde{Z}
        :=&\sum_{1\le i<j\le n}\sum_{\ell=1}^m
        \frac{\delta_{i\ell j}}{p}\widetilde{A}_{i\ell}\widetilde{A}_{j\ell}(e_ie_j^\top+e_je_i^\top) +\sum_{i=1}^n\sum_{\ell=1}^m
        \frac{\delta_{i\ell i}}{p}(\widetilde{A}_{i\ell})^2 e_ie_i^\top .
\end{aligned}
\end{equation}
For notational convenience, write \(\delta_{\{i,j\},\ell}:=\delta_{\min\{i,j\},\ell,\max\{i,j\}}\), so the wedge indicator is symmetric in its two row indices.
Conditionally on $\widetilde{A}$, $\E_\delta \widetilde{Z}=\widetilde{A}(\widetilde{A})^\top$, and after averaging over both the wedge sampling and the Gaussian noise,
\[
        \E_{\delta,G}\widetilde{Z}=AA^\top+m\sigma^2 I_n.
\]

\paragraph{Uniform entry sampling under noise}
\edit{Independently of the wedge-stage noise \(G\) and wedge indicators \(\delta\), draw}
\[
        \omega_{i_1,\ldots,i_k}\sim \Bern(q),
        \qquad
        H_{i_1,\ldots,i_k}\iid N(0,\sigma^2).
\]
The observed noisy tensor entries are
\begin{align}\label{eq:noisy_uniform}
        \omega_{\bm i}(T_{\bm i}+ H_{\bm i}),
        \qquad
        \bm i=(i_1,\ldots,i_k)\in[n]^k.
\end{align}
Define the debiased noisy tensor
\begin{align}\label{eq:noisy_Y}
        \widetilde\cY=q^{-1}\,\omega\odot(T+ H),
        \qquad
        \edit{\widetilde{Y}=\unfold_1(\widetilde\cY).}
\end{align}

\subsection{Spectral method for noisy tensor completion}
We will use the same Algorithm~\ref{alg:montanari} for noisy tensor completion, with two differences:
\begin{enumerate}
    \item In  Step 5, the $Z$ matrix  is replaced by the noisy version $\widetilde{Z}$ in \eqref{eq:raw-wedge-Z}.
    \item In Step 7, $Y$ is replaced by the noisy version $\widetilde{Y}$ in \eqref{eq:noisy_Y}.
\end{enumerate}

\begin{theorem}[Spectral method for noisy tensor completion]\label{thm:noisy_spectral}
{
Under the setup of Theorem~\ref{thm:wedge_spectral_guarantees} with the noisy observation models \eqref{eq:noisy_wedge} and \eqref{eq:noisy_uniform}, let \(\hat T\) be the output of Algorithm~\ref{alg:montanari}, and write \(\theta=\sigma/\|A\|\).  Assume \(k\ge3\).  There exist universal constants \(C>1\), a sufficiently large $C_0>0$, and sufficiently small \(c,c_1>0\) such that, if \(\mu_1r\le c n\), \(\mu_2r\le c n^{k-1}\), and
\begin{align}\label{eq:thm11-simple-noise}
        C_0\kappa^8\mu_1^2\mu_2r^3\log n
        &\le
        p n^k
        \le
        c_1
        \sqrt{k}\,\mu_1^{1/4}\mu_2^{1/2}r^{3/4}
        n^{k/2+1/4}\log n,\nonumber\\
        q n^k
        &\ge C_0 C^{2k} k^2\mu_1^k r(\log n)^2,\nonumber\\
        \theta
        &\le
        c\kappa^{-4}
        \min\Bigg\{
        \frac{(\mu_1r)^{1/4}p^{1/2}}
             {\sqrt k\,n^{k/2+1/4}(\log n)^{1/2}},
        \frac{(\mu_1\mu_2)^{1/4}r^{1/2}p^{1/4}}
             {\sqrt k\,n^{3k/4}(\log n)^{1/4}}
        \Bigg\}
\end{align}
then, with probability at least \(1-O(n^{-3})\),
\[
\begin{aligned}
        \frac{\|T-\hat T\|_F}{\|T\|_F}
        \lesssim&
        k\kappa^2
        \sqrt{\frac{\mu_1\mu_2 r^3\log n}{p n^k}}
        +
        C^k\mu_1^{(k+1)/2}\mu_2^{1/2}r^{(k+3)/2}
        \sqrt{\frac{k\log n}{q n^k}}+
        \frac{\sigma}{\|T\|_F}
        \sqrt{\frac{k r^k\log n}{q}} .
\end{aligned}
\]
}
\end{theorem}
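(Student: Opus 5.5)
The proof follows the noiseless argument behind Theorem~\ref{thm:final_bound_montanari}, with a noise perturbation added in each of its two stages. We write $Q=\hat U\hat U^\top$ and $P=UU^\top$. Since $T$ is symmetric with column space $U$ in every mode, $A=PA(P\otimes\cdots\otimes P)$. We then decompose
\[
T-\hat T=\bigl[PA(P^{\otimes k-1})-QA(Q^{\otimes k-1})\bigr]+Q(A-Y)(Q^{\otimes k-1})+Q(Y-\widetilde Y)(Q^{\otimes k-1}).
\]
The first bracket is the projection error. It is controlled by $\|Q-P\|\lesssim\|\hat UR-U\|$ via a telescoping sum over the $k$ modes, each term contributing at most $\|Q-P\|\,\|T\|_F$; this produces the first term $k\kappa^2\sqrt{\mu_1\mu_2r^3\log n/(pn^k)}$. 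The second term is the uniform sampling fluctuation. It is handled exactly as in the noiseless proof, using the $\ell_{2,\infty}$ control of $\hat U$ (hence of $Q$) and a matrix Bernstein bound over the at most $r^k$-dimensional projected space; this gives the $C^k$ term. The third term is pure Gaussian noise: $Q\,\unfold_1(q^{-1}\omega\odot H)\,(Q^{\otimes k-1})$. Conditionally on $\omega$ and on the wedge stage (which is independent of $H$), this is a Gaussian tensor projected onto a fixed $r^k$-dimensional subspace. Its Frobenius norm squared has mean $q^{-2}\sigma^2\sum_{\bm i}\omega_{\bm i}\prod_j\|Q_{i_j,\cdot}\|^2$. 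Using $\|Q\|_{2,\infty}^2\lesssim\mu_1 r/n$ (from \eqref{eq:wedge_spectral_loo}) together with concentration of $\sum\omega$, this is of order $\sigma^2 r^k/q$ up to logs. Gaussian concentration of the norm then yields $\sigma\sqrt{kr^k\log n/q}$.

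The remaining task, which is the core of the proof, is to re-establish Theorem~\ref{thm:wedge_spectral_guarantees} for $\widetilde Z$. First, the identity shift does not matter: we compare $\widetilde Z$ with $AA^\top+m\sigma^2 I_n$, which has the same top-$r$ eigenspace $U$ and the same eigengap $\sigma_r^2$. We then split
\[
\widetilde Z-AA^\top-m\sigma^2I=(Z-AA^\top)+(\text{cross terms }A\leftrightarrow G)+(\text{pure }GG\text{ terms}).
\]
The first piece is bounded by Theorem~\ref{thm:wedge_concentration}. For the cross terms $\sum\delta_{i\ell j}p^{-1}(A_{i\ell}G_{j\ell}+G_{i\ell}A_{j\ell})E_{ij}$, we condition on $\delta$ and apply a matrix Gaussian series bound. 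The variance proxy is $p^{-2}\sigma^2\max_i\sum_{j,\ell}\delta A_{i\ell}^2\lesssim\sigma^2\|A\|^2 m\cdot\mu_2 r/(mp)$ after using incoherence and counting sampled wedges. This produces the requirement that $\theta$ be small as in the second entry of the min. For the $GG$ piece, the off-diagonal part is a sparse Gaussian chaos with per-row variance $\approx p^{-1}\sigma^4 nm$. A decoupling argument plus matrix Bernstein gives a bound $\sigma^2\sqrt{nm/p}$ up to logs, and requiring this to be below $\kappa^{-2}\sigma_r^2$ gives the first entry of the min. The diagonal part concentrates as $p^{-1}\sum_\ell\delta_{i\ell i}G_{i\ell}^2\approx m\sigma^2\pm\sigma^2\sqrt{m\log n/p}$; the upper bound on $pn^k$ in \eqref{eq:thm11-simple-noise} is presumably what keeps these diagonal fluctuations and the sparse-regime truncations in the right balance. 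Davis--Kahan then gives the operator bound.

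The main obstacle is the $\ell_{2,\infty}$ bound, since the Frobenius analysis of the $Y$ term needs $\|Q\|_{2,\infty}$. We would rerun the leave-one-out argument from Appendix B. For each $i$, define $\widetilde Z^{(i)}$ by replacing row/column $i$ by its expectation (now including $m\sigma^2$ and the conditional mean over $G_{i\cdot}$). This makes $\widetilde Z^{(i)}$ independent of $(\delta_{\{i,\cdot\},\cdot},G_{i\cdot})$. The key new estimate is $\|(\widetilde Z-\E)_{i,\cdot}\hat U^{(i)}\|$, where the Gaussian row $G_{i\cdot}$ interacts with the sampled wedges; conditionally it is Gaussian, so the same scaling as the cross and $GG$ terms above appears. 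We would aim to show that under \eqref{eq:thm11-simple-noise} all noise contributions are dominated by the noiseless leave-one-out terms. Then \eqref{eq:wedge_spectral_loo} holds verbatim, $\|Q\|_{2,\infty}\lesssim\sqrt{\mu_1 r/n}$, and the three bounds combine by a union bound with failure probability $O(n^{-3})$.
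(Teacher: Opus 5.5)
Your plan is viable and reaches the stated bound, but it departs from the paper's proof in three places.

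(i) For the operator bound on $\widetilde Z-AA^\top-m\sigma^2I$, the paper does not split by source of randomness. It conditions on $\widetilde A=A+G$ and reruns the wedge Bernstein argument for $\widetilde Z-\widetilde A\widetilde A^\top$, using Gaussian bounds on $\|\widetilde A\|_{\max}$, $\|\widetilde A\|_{2,\infty}$ and $\|\widetilde A\|_{\infty,2}$. It then treats the unsampled remainder $AG^\top+GA^\top+(GG^\top-m\sigma^2I)$ by a Gaussian bound and a Wishart bound. This sidesteps your sampled cross terms and the decoupling of a sparse Gaussian chaos.

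(ii) The paper's leave-one-out zeroes row and column $s$ of $\Delta$. Given $G$, this is independent of the wedges touching $s$, but not of the $s$-th row of $G$. So the Gaussian part of $\Delta_{s,\cdot}F^{(s)}$ is bounded crudely through $\|F^{(s)}\|$, which produces the $\kappa^2\mathcal E^2$ terms in Lemma~\ref{lem:noisy-loo-row}. The paper then only concludes $\|Q\|_{2,\infty}\le C\sqrt{\mu_1r/n}$, which is all Step 2 needs. Your conditional-expectation leave-one-out is independent of both $\delta_{\{s,\cdot\},\cdot}$ and the $s$-th row of $G$, conditionally on the other rows of $G$. That is cleaner, and it is what makes your stronger target of recovering \eqref{eq:wedge_spectral_loo} verbatim realistic.

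(iii) For the refinement noise, the paper truncates, applies matrix Bernstein to the operator norm, and then pays $\sqrt r$. Your direct Frobenius computation also works, and your telescoping bound $k\|P-Q\|\,\|T\|_F$ even saves a factor $\sqrt r$ in the projection term.

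Two points need correcting when you write it out.

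First, the $\theta$-hypotheses are not eigengap conditions. The proof needs every wedge-stage noise contribution to be $\lesssim E_0:=\sqrt{\mu_1\mu_2r^2\log n/(pn^k)}$. Otherwise Davis--Kahan leaves $\theta$-dependent terms in $\|Q-P\|$, and these do not appear in the conclusion. In the paper, the second entry of the minimum gives $\theta^2kn^k\log n/p\lesssim c\kappa^{-4}E_0$, which controls the quadratic (pure $GG$) terms. The first entry, combined with the upper bound on $pn^k$, gives $\theta n^{k/2}\lesssim c\kappa^{-4}E_0$, a crude majorant of all the linear cross terms. Your attribution is therefore reversed, and the upper bound on $pn^k$ has nothing to do with diagonal fluctuations. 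You should verify domination by $E_0$ term by term; it does hold under the hypotheses.

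Second, in the Gaussian step the trivial Lipschitz constant of $H\mapsto q^{-1}Q^{\otimes k}\mathcal P_\Omega(H)$ is $\sigma/q$. It gives a deviation of order $\sigma\sqrt{\log n}/q$, far above $\sigma\sqrt{kr^k\log n/q}$ when $qn^k$ is only polylogarithmic. You need the sharper bound $\|Q^{\otimes k}\mathcal P_\Omega\|^2\lesssim q+\|Q\|_{2,\infty}^{2k}k\log n$. One way to get it is matrix Chernoff applied to $\sum_{\bm i\in\Omega}(Q^{\otimes k}e_{\bm i})(Q^{\otimes k}e_{\bm i})^\top$. This suffices once you have $\|Q\|_{2,\infty}^2\lesssim\mu_1r/n$ and the lower bound on $qn^k$.
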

\edit{In particular, when \(\sigma=0\), the perturbative noise assumptions are vacuous and the bound reduces to the noiseless sampling terms.  The remaining sparse-regime upper bound on \(p\), together with the two-term noise condition on \(\theta\), is used only to keep the displayed perturbation argument simple; for fixed \(k,r,\kappa,\mu_1,\mu_2\), it is compatible with \(p n^k\asymp \log n\).  In this fixed-parameter regime, the first two terms scale as \(O(\sqrt{\log n/(p n^k)})\) and \(O(\sqrt{\log n/(q n^k)})\), while the last term is the unavoidable refinement-stage Gaussian noise floor.}
The proof of Theorem~\ref{thm:noisy_spectral} is provided in Appendix F.
\subsection{Gradient descent for noisy tensor completion}

We next give a noisy counterpart of Theorem~\ref{thm:cai_sparse}.  The proof is
deferred to Appendix G and follows the proof strategy of
Cai et al.~\cite{cai2022nonconvex}: first control the initializer, then run the local
leave-one-out gradient-descent induction, and finally convert factor error into
tensor error.
Throughout this subsection,
\[
        T=\sum_{i=1}^r x_i^\star\otimes x_i^\star\otimes x_i^\star,
        \qquad
        X_\star=[x_1^\star,\ldots,x_r^\star],
\]
and
\[
        \lambda_{\min}=\min_i\|x_i^\star\|^3,\qquad
        \lambda_{\max}=\max_i\|x_i^\star\|^3,\qquad
        \edit{\frac{\lambda_{\max}}{\lambda_{\min}}=O(1).}
\]
Let \(A=\unfold_1(T)=U\Sigma V^\top\).  In the wedge stage, we use the noisy
model from Section~\ref{sec:noisy}, namely \(\widetilde A=A+G\) with
\(G_{ij}\stackrel{\mathrm{i.i.d.}}{\sim}N(0,\sigma^2)\).  \edit{The entrywise
refinement stage uses an independent fixed noisy copy of the tensor.}  In the entrywise
stage, we observe
\[
        \omega_{abc}\bigl(T_{abc}+H_{abc}\bigr),
        \qquad
        H_{abc}\stackrel{\mathrm{i.i.d.}}{\sim}N(0,\sigma^2),
        \qquad
        \omega_{abc}\sim{\mathrm{Bernoulli}}(q),
\]
and use
\[
        F_\sigma(X)
        :=
        \frac{1}{6q}
        \left\|
        \mathcal P_\Omega
        \left(
        T+H-\sum_{i=1}^r x_i\otimes x_i\otimes x_i
        \right)
        \right\|_F^2 .
\]
{
Write \(F_0\) for the same loss with \(H=0\), and set
\[
        \tau:=\frac{\sigma}{\lambda_{\min}} .
\]
}

\begin{theorem}[Noisy wedge sampling with gradient descent]\label{thm:noisy_cai_sparse}
Under the setup of this subsection, fix \(\delta>0\) and let
\(\widehat T^t\) be the output of Algorithm~\ref{alg:wedge_GD} with noisy
inputs \(\widetilde Z\) and
\(\widetilde{\mathcal Y}=q^{-1}\mathcal P_\Omega(T+H)\), objective
\(F_\sigma\), the hyperparameter choices from Theorem~\ref{thm:cai_sparse},
and step size \edit{\(0<\eta\le (4\lambda_{\max}^{4/3})^{-1}\)}.  {There exist sufficiently large absolute constants \(c_0,c_1\) and sufficiently small absolute constants \(c_2,c\) such that, if}
{
\[
\begin{aligned}
        p &\ge c_0\frac{\mu^8 r^5\log^2 n}{n^3},
        \qquad
        q \ge c_1\frac{\mu^6 r^4\log^5 n}{n^2},\\
        r &\le c_2
        \left(
        \frac{n}{\mu^4\log^2 n}
        \right)^{1/3}, \qquad
        \tau
        \le
        \frac{c}{\mu^4r^3n^2},
\end{aligned}
\]
}
{then,} for all sufficiently large \(n\) depending only on \(\delta\), with
probability at least \(1-\delta\), simultaneously for all \(0\le t\le n^5\),
\[
        \|\widehat T^t-T\|_F
        \lesssim
        \left(\rho^t+\tau\sqrt{\frac{n\log n}{q}}\right)\|T\|_F,
        \qquad
        \|\widehat T^t-T\|_\infty
        \lesssim
        \left(\rho^t+\sqrt{\mu^3r}\,\tau\sqrt{\frac{n\log n}{q}}\right)
        \|T\|_\infty,
\]
where \(\rho=1-\eta\lambda_{\min}^{4/3}/4\).
\end{theorem}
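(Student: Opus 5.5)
We follow the three-stage structure of the noiseless argument behind Theorem~\ref{thm:cai_sparse} and track an additive noise term through each stage: control the wedge initializer under noise, run the leave-one-out gradient-descent induction of Cai et al.~\cite{cai2022nonconvex} with a noise-augmented error target, and finally convert factor errors into tensor errors.

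\textbf{Step 1 (initialization).} The key identity is $\E_{\delta,G}\widetilde Z=AA^\top+m\sigma^2 I_n$ with $m=n^2$. The shift by a multiple of the identity does not change eigenvectors, so $\hat U$ estimates $U$ as long as $\widetilde Z-\E\widetilde Z$ is small compared with $\sigma_r(A)^2$. We split
\[
\widetilde Z-AA^\top-m\sigma^2I_n=(Z-AA^\top)+\mathcal{E}_{AG}+\mathcal{E}_{GG},
\]
where $Z$ is the noiseless wedge matrix, $\mathcal E_{AG}$ is the wedge-sampled cross term $p^{-1}\delta(AG^\top+GA^\top)$, and $\mathcal E_{GG}$ is the centered wedge-sampled $GG^\top$ term. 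The first piece is bounded by Theorem~\ref{thm:wedge_concentration}. For the other two, we condition on $\delta$ and use Gaussian chaos (Hanson--Wright) together with matrix Bernstein. This gives bounds of order $\sigma\|A\|\sqrt{n\log n/p}$ for the cross term and $\sigma^2\sqrt{n^3\log n/p}$ (plus diagonal terms) for the quadratic term. Since $\|A\|\asymp\lambda_{\max}$, the assumption $\tau\le c/(\mu^4r^3n^2)$ with $p\gtrsim n^{-3}\log^2 n$ makes both terms $o(\lambda_{\min}^2/(\mu r))$. The leave-one-out argument behind Theorem~\ref{thm:wedge_spectral_guarantees} then carries over: each leave-one-out copy also drops row $i$ of $G$, and independence is preserved. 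The result is the same $\ell_{2,\infty}$ guarantee that the noiseless proof feeds into the extraction step \citep[Algorithm 3]{cai2022nonconvex}. Extraction uses the noisy $\widetilde{\mathcal Y}$, so we need $\|q^{-1}\mathcal P_\Omega(H)\|_\delta\lesssim\sigma\sqrt{n\log n/q}$ up to polylog factors. We would obtain this by rerunning the proof of Theorem~\ref{thm:symmetric_tensor_concentration} with Gaussian-times-Bernoulli entries, after truncating $H$ at $\sigma\sqrt{\log n}$. This gives an $X_0$ within the basin required by the noiseless analysis, perturbed by an $O(\tau\sqrt{n\log n/q})$ relative error.

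\textbf{Step 2 (GD induction).} We write $\nabla F_\sigma=\nabla F_0-\frac1{2q}\mathcal P_\Omega(H)\times_2 x_i\times_3 x_i$, with the analogous expression for each factor. Each iteration of the noiseless contraction from the proof of Theorem~\ref{thm:cai_sparse} gives $\|X^{t+1}-X_\star\|\le\rho\|X^t-X_\star\|$. With noise, we instead aim for
\[
\|X^{t+1}-X_\star\|\le\rho\|X^t-X_\star\|+\eta\|q^{-1}\mathcal P_\Omega(H)\times_2 x\times_3 x\|.
\]
The noise term is controlled by the incoherent norm, because the iterates stay delocalized: the $\ell_{2,\infty}$ bounds are maintained through the leave-one-out sequences, exactly as in \cite{cai2022nonconvex}. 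Its size is $\lesssim\sigma\lambda_{\max}^{2/3}\sqrt{n\log n/q}$. Unrolling the recursion gives an error of $\rho^t$ plus a stationary term $\eta(\ldots)/(1-\rho)\asymp\tau\lambda_{\min}^{1/3}\sqrt{n\log n/q}$. The leave-one-out and $\ell_\infty$ versions are handled the same way, and the $\ell_\infty$ version picks up a factor $\sqrt{\mu^3r}$ from incoherence of the factors. Union bounds over $t\le n^5$ and over the leave-one-out indices cost only $\log n$ factors. This is why the statement restricts to $t\le n^5$.

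\textbf{Step 3 (conversion).} Factor errors convert to tensor errors through the expansion $\hat x^{\otimes3}-x^{\otimes3}$, exactly as in the final step of the proof of Theorem~\ref{thm:cai_sparse}. This yields the stated bounds in both the Frobenius and the entrywise norm.

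\textbf{Main obstacle.} We expect Step 2 to be the hardest: keeping the noise contribution inside the leave-one-out induction. The noise perturbs the incoherence hypotheses that the contraction itself relies on, so the induction hypothesis must include the noise floor. This only closes if that floor stays below the basin radius, which is where the smallness condition on $\tau$ is used. The first thing to try is to state the induction with the error target $\rho^t+C\tau\sqrt{n\log n/q}$ and check that every lemma of \cite{cai2022nonconvex} tolerates this additive slack.
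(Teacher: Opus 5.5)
Your overall architecture matches the paper's, and Steps~1 and~3 are fine. That architecture is: noisy wedge initialization via $\E\widetilde Z=AA^\top+m\sigma^2I_n$ with a leave-one-out argument, then extraction, then a leave-one-out GD induction with an additive noise floor, then factor-to-tensor conversion. The gap is in how Step~2 treats the noise. You bound the noise gradient $\mathcal N_H(X^t)$ at the current, data-dependent iterate uniformly through the incoherent norm $\|q^{-1}\mathcal P_\Omega(H)\|_\delta$, and that tool does not give the rates you claim.

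First, Theorem~10 (after truncating $H$ at $\sigma\sqrt{\log n}$) only yields $\|q^{-1}\mathcal P_\Omega(H)\|_\delta\lesssim\sigma\sqrt{n\,\mathrm{polylog}(n)/q}$, with several extra logarithms. Your Frobenius floor is therefore $\tau\sqrt{n\,\mathrm{polylog}(n)/q}$, not $\tau\sqrt{n\log n/q}$.

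Second, and more seriously, in the $\ell_{2,\infty}$ recursion you would bound the row-$s$ noise $\langle q^{-1}\mathcal P_\Omega(H),e_s\otimes x\otimes x\rangle$ by $\|q^{-1}\mathcal P_\Omega(H)\|_\delta\|x\|^2\approx\sigma\lambda^{2/3}\sqrt{n/q}$. That is the same order as the full Euclidean norm of a noise-gradient column. The stated $\ell_\infty$ bound instead needs row-wise noise of order $\sigma\lambda^{2/3}\sqrt{\mu r\log n/q}$. Saying the leave-one-out and $\ell_\infty$ versions are ``handled the same way'' thus inflates the $\ell_{2,\infty}$ floor by roughly $\sqrt{n/(\mu r)}$, and the $\ell_\infty$ tensor conclusion misses by about $\sqrt n$.

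The paper closes this with two devices that your plan lacks.

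\emph{Linearize the noise at the truth.} Since $\nabla F_\sigma=\nabla F_0-\mathcal N_H$ and $\nabla F_0(X_\star)=0$, write $X^{t+1}-X_\star=X^t-X_\star-\eta[\nabla F_\sigma(X^t)-\nabla F_\sigma(X_\star)]+\eta\mathcal N_H(X_\star)$. Noise then enters only in two ways.
\begin{enumerate}
\item Through the fixed vector $\mathcal N_H(X_\star)$. Coordinatewise Bernstein bounds it by $C\sigma\lambda_{\max}^{2/3}\sqrt{nr\log n/q}$, which is where the sharp $\sqrt{\log n}$ comes from.
\item Through the Hessian perturbation $D\mathcal N_H$. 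This only has to be at most $\tfrac18\lambda_{\min}^{4/3}$ uniformly on the basin, so the incoherent norm's polylog losses are harmless there. This is one place where the smallness of $\tau$ is used.
\end{enumerate}

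\emph{Hold out the noise in the leave-one-out sequence.} The leave-one-out loss, and the leave-one-out initializers, must replace both the sampling \emph{and} the Gaussian noise on slice $s$ by the population slice. Then $X^{(s),t}$ is independent of $\{(\omega_{sbc},H_{sbc})\}$, and row $s$ of the leave-one-out update carries no noise. The held-out term $\mathcal N_H(X^{(s),t})-\mathcal N_H^{(s)}(X^{(s),t})$ is then controlled by conditional Bernstein at scale $\sigma\lambda_{\max}^{2/3}\sqrt{\mu r\log n/q}$. This gives the $\ell_{2,\infty}$ floor $\tau\sqrt{n\log n/q}\,\|X_\star\|_{2,\infty}$. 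The $\sqrt{\mu^3r}$ factor in the $\ell_\infty$ statement comes afterwards, from the factor-to-tensor conversion.

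This per-iteration conditional estimate is also the real reason for the union bound over $t\le n^5$. With a purely uniform incoherent-norm event, as in your plan, no time horizon would be needed. That is a sign the independence step is missing from your Step~2.
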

\edit{For a self-contained fixed-parameter sufficient condition, suppose \(\mu,r,\lambda_{\max}/\lambda_{\min}=O(1)\), take \(p\asymp \log^2 n/n^3\) and \(q\asymp \log^5 n/n^2\), and assume, for a sufficiently small constant \(c_3>0\),}
\[
        \edit{\frac{\sigma}{\lambda_{\min}}\le c_3 n^{-2}.}
\]
\edit{Then the displayed assumptions in Theorem~\ref{thm:noisy_cai_sparse} hold for all sufficiently large \(n\).  In this regime the theorem gives}
\[
        \edit{\frac{\|\widehat T^t-T\|_F}{\|T\|_F}
        \lesssim
        \rho^t+
        \frac{\sigma}{\lambda_{\min}}\frac{n^{3/2}}{\log^2 n},}
\]
\edit{and the same noise floor, up to fixed constants, for the normalized entrywise error.  The total number of observations is \(\tilde O(n)\).}
In particular, when \(\sigma=0\), the \edit{terms containing \(\tau\)} vanish and the conclusion
\edit{matches the noiseless geometric guarantee in Theorem~\ref{thm:cai_sparse}
over the stated time horizon}.

The theorem shows that the wedge-initialized gradient descent procedure is
stable under the Gaussian observation model in Section~\ref{sec:noisy}.  The
sampling requirements on \(p,q\), and \(r\) are the same as in the noiseless
Theorem~\ref{thm:cai_sparse}; the only additional requirements are signal-to-noise
conditions ensuring that the noisy wedge initialization and the noisy gradient
perturbations remain inside the local basin.  The final error consists of the
same geometric optimization term as in Theorem~\ref{thm:cai_sparse} plus the
statistical noise floor \(\edit{\tau\sqrt{n\log n/q}}\), matching the form of the
local noisy GD guarantees in Cai et al.~\cite{cai2022nonconvex}.

%\hl{We plan to add 1 or 2 experiments?}

\section{Numerical experiments}\label{sec:numerical}

Wedge sampling improves initialization, and thus downstream completion, by allocating budget to wedges that induce informative correlations in the unfolded matrices. The experiments follow this pipeline:
(i) subspace recovery for the unfolding (Algorithm~\ref{alg:wedge}; cf.\ Theorem~\ref{thm:wedge_spectral_guarantees}),
(ii) spectral completion (Algorithm~\ref{alg:montanari}; cf.\ Theorem~\ref{thm:final_bound_montanari}),
(iii) refinement by gradient descent from the spectral initializer (Section~5; cf.\ Theorem~\ref{thm:cai_sparse}).
Unless otherwise noted, the unknown tensor \(T\in\R^{n\times n\times n}\) is symmetric, and the ground-truth factor \(X\in\R^{n\times r}\) has i.i.d.\ Gaussian entries with column normalization. Due to storage constraints, we evaluate the approximate relative reconstruction error on a subset of \(n^2\) entries of \(T\), and report the median over 20 trials with either wedge or uniform sampling. \edit{Figure~\ref{fig:alg1_factor_recovery} presents empirical results for factor recovery in the noiseless setting. Figure~\ref{fig:alg1_factor_recovery_noise}, Figure~\ref{fig:alg2_spectral_TC} and Figure~\ref{fig:alg3_GD_TC} present empirical results for factor and tensor recovery in the noisy setting.}

\begin{figure}
    \centering
    \includegraphics[width=0.7\textwidth]{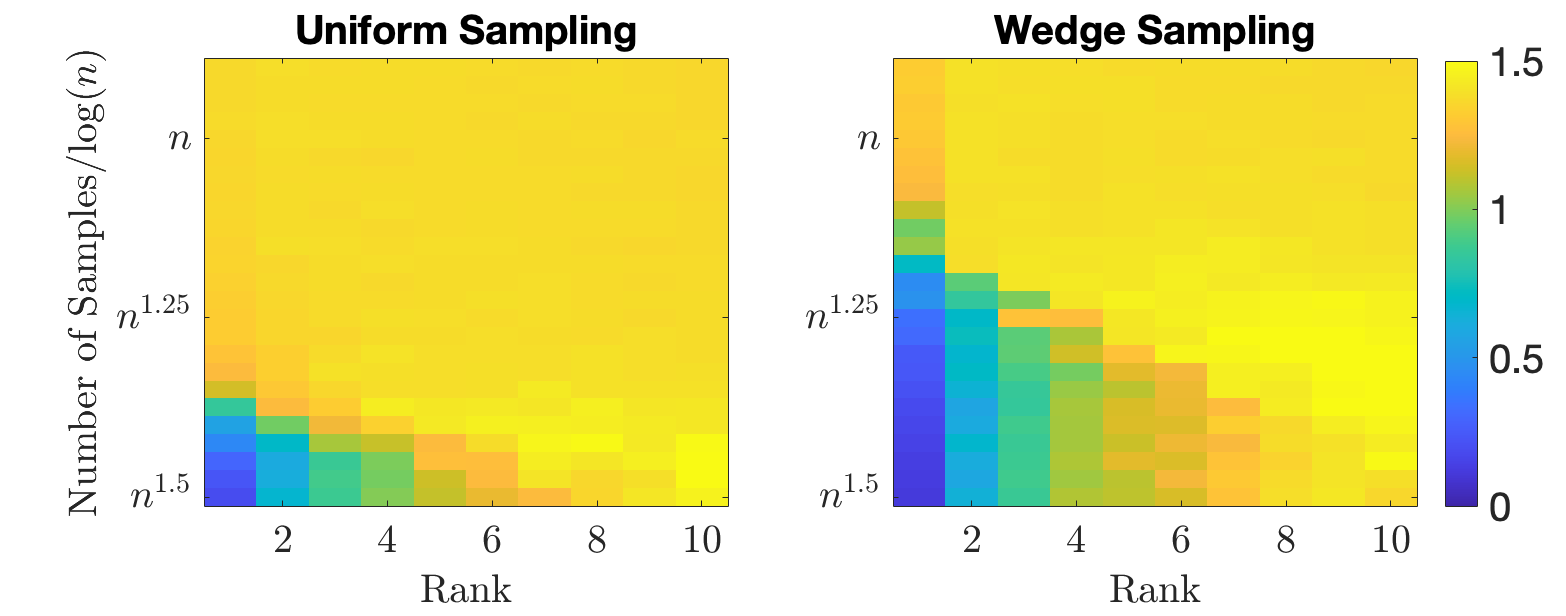}
    \caption{The subspace estimate $U$ performance of  Algorithm~\ref{alg:wedge}.}
    \label{fig:alg1_factor_recovery}
\end{figure}

Figure~\ref{fig:alg1_factor_recovery} supports the view that uniform sampling is bottlenecked by spectral initialization. 
We compare wedge and uniform sampling for subspace recovery in factor matrix estimation with \(n=1000\). Each heatmap cell reports the median relative error \(\|\hat U \Pi - U\|/\|U\|\), where \(\Pi=\arg\min_P \|\hat U P-U\|_F\) accounts for sign/permutation ambiguity, over ranks \(r\in\{1,\dots,10\}\). We use \(p_{\text{wedge}}=O(\log n/n^{s+1})\) and \(p_{\text{unif}}=O(\log n/n^{s})\) with \(s\in\{1.5,1.525,\dots,2.10\}\), and plot the vertical axis in terms of average sample complexity for direct comparison. The results show that wedge sampling achieves accurate recovery at budgets where uniform sampling remains stuck, with the gap widening as \(r\) increases.

\noindent
\begin{minipage}[t]{0.48\textwidth}
    \begin{figure}[H]
        \centering
        \includegraphics[width=\linewidth]{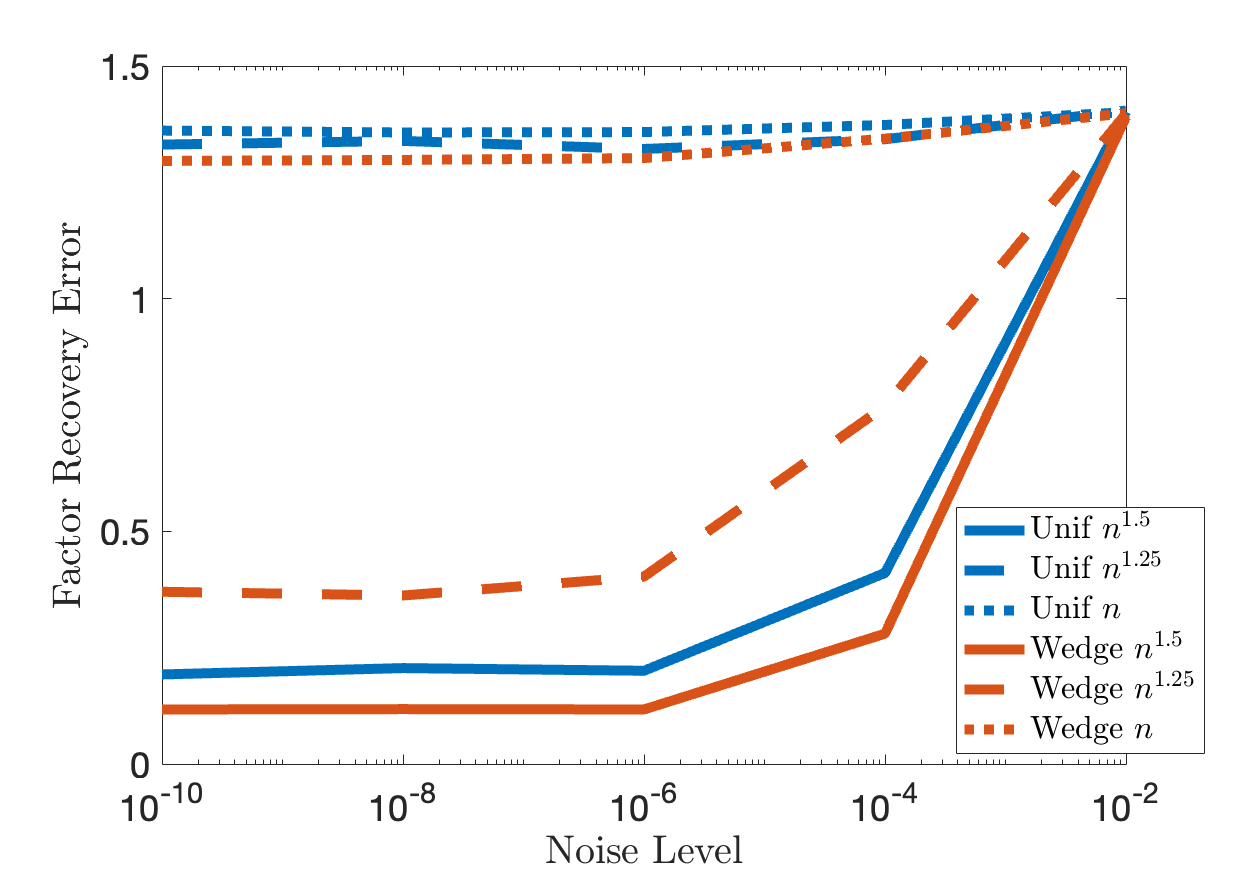}
        \caption{Comparison of subspace recovery error as a function of noise level ($\sigma$). }
        \label{fig:alg1_factor_recovery_noise}
    \end{figure}
\end{minipage}
\hfill
\begin{minipage}[t]{0.48\textwidth}
    \begin{figure}[H]
        \centering
        \includegraphics[width=\linewidth]{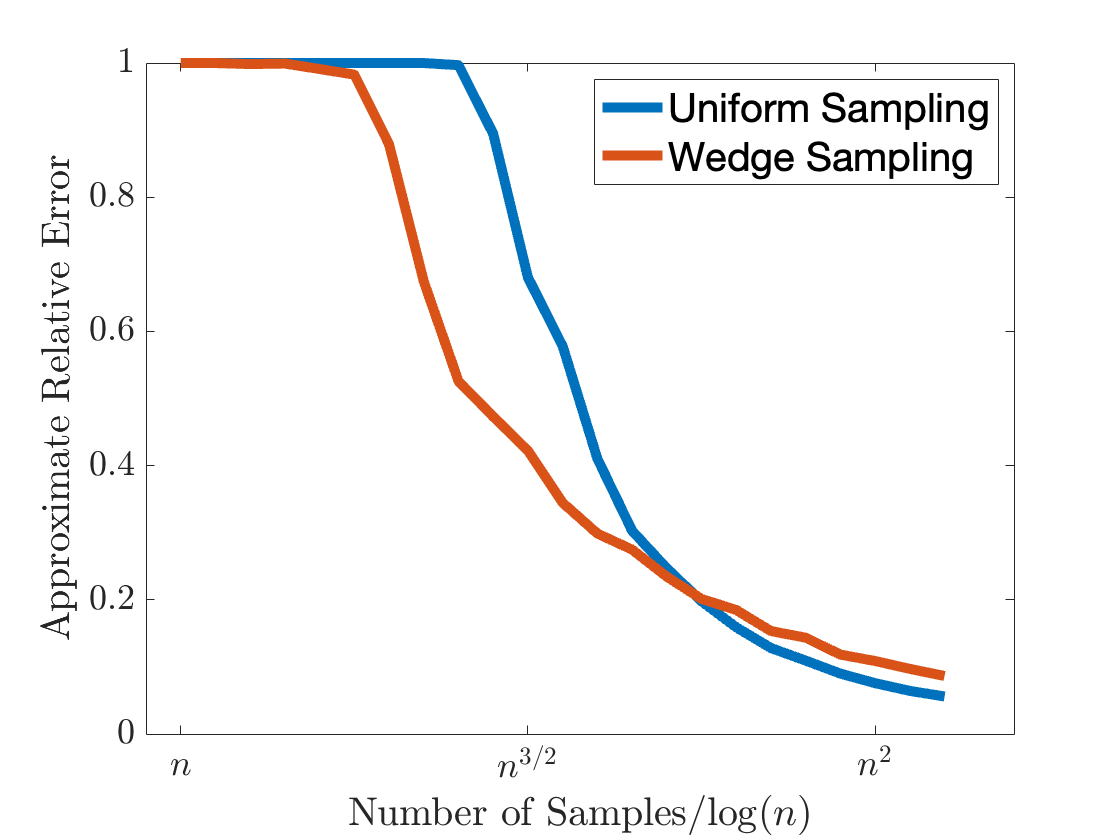}
        \caption{The relative reconstruction error of Algorithm~\ref{alg:montanari}.}
        \label{fig:alg2_spectral_TC}
    \end{figure}
\end{minipage}

We also evaluate wedge sampling in a noisy factor-recovery experiment, \edit{shown in Figure~\ref{fig:alg1_factor_recovery_noise}}. For a symmetric third-order tensor \(\mathcal{T}=\sum_{j=1}^{r}u_j^{\otimes 3}\), \edit{we generate a fixed noisy unfolding \(A+G\), where \(A=\unfold_1(\mathcal T)\) and \(G\) has i.i.d.\ \(N(0,\sigma^2)\) entries, and then reveal entries from this fixed noisy unfolding according to the sampling scheme while varying \(\sigma\). } All parameter choices are the same as in Figure~\ref{fig:alg1_factor_recovery}, except that we fix the rank $r =1$ and evaluate fewer choices of sample complexity. In particular, we use \(n=1000\) and  rates \(p_{\text{wedge}}=O(\log n/n^{s+1})\) and \(p_{\text{unif}}=O(\log n/n^{s})\) with \(s\in\{1.5, 1.75, 2\}\). In Figure~\ref{fig:alg1_factor_recovery_noise}, the red lines represent uniform sampling over various approximate sample complexities, and the blue lines represent wedge sampling. The solid, dashed, and dotted lines correspond to $\tilde{O}(n^{1.5})$, $\tilde{O}(n^{1.25})$, and $\tilde{O}(n)$ sample complexities. \edit{In Figure~\ref{fig:alg1_factor_recovery_noise}, the horizontal} axis represents the noise level and the vertical axis is the median, over 20 trials, relative error \(\|\hat U \Pi - U\|/\|U\|\), where \(\Pi=\arg\min_P \|\hat U P-U\|_F\) accounts for sign/permutation ambiguity. We can see that as the noise level increases, the subspace recovery error gets larger, as expected. We also observe that, fixing the sample rate, wedge sampling performs better than uniform sampling.

\edit{Figure~\ref{fig:alg2_spectral_TC} and Figure~\ref{fig:alg3_GD_TC} present empirical results for tensor completion under noisy measurements. The noise generation is as described for Figure~\ref{fig:alg1_factor_recovery_noise} and we fix $\sigma = 10^{-4}$.}
Figure~\ref{fig:alg2_spectral_TC} shows that the improved subspace estimate yields an end-to-end gain for Algorithm~\ref{alg:montanari}: wedge-initialized completion reaches the low-error regime at substantially smaller budgets, while uniform sampling improves only once the sample size is sufficiently large. 
Figure~\ref{fig:alg3_GD_TC} reports subspace-estimation budgets for Algorithm~\ref{alg:wedge_GD} with \(n=100\) and \(r=1\). 
We set \(p_{\text{wedge}}=O(\log n / n^{s+1})\) for wedge sampling, and compare against uniform sampling subspace estimation with \(p_{\text{unif}}=O(\log n / n^{s})\) for \(s\in\{1,1.5,1.75\}\). 
\edit{In both pipelines, the refinement samples used after the subspace-initialization step are drawn uniformly at rate \(p_{\text{unif}}\).} 
The panel label \(n^s\) corresponds to an effective budget \(\tilde O(n^s)\).
Overall, the experiment suggests that wedge sampling primarily improves the spectral initializer: at \(\tilde O(n^2)\) both methods converge quickly, at \(\tilde O(n^{1.5})\) wedge sampling converges faster to a lower error, and at \(\tilde O(n^{1.25})\) uniform sampling typically stalls while wedge sampling continues to make progress.

\begin{figure}
    \centering
    \includegraphics[width=1\linewidth]{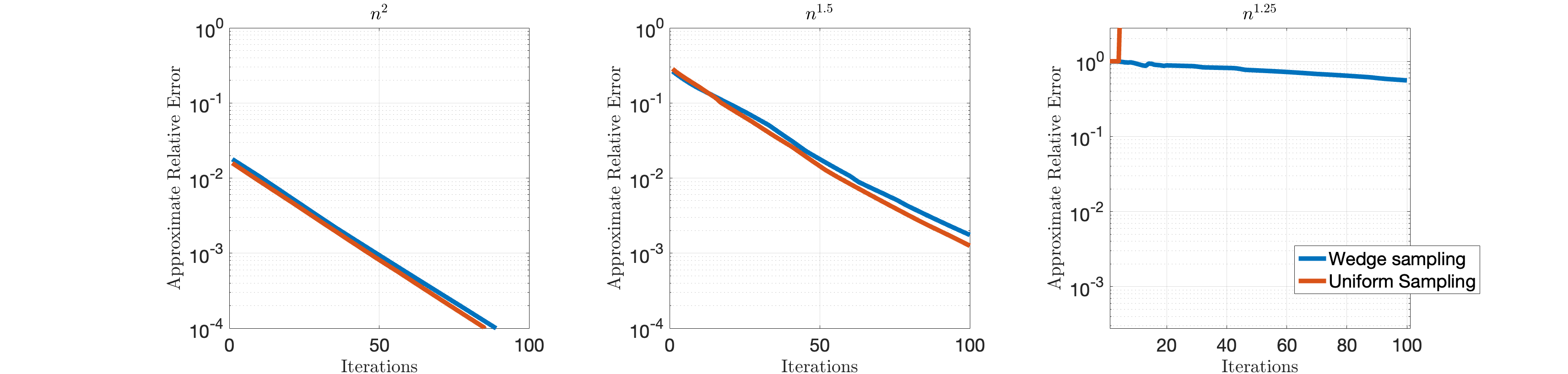}
    \caption{Comparison of tensor completion via GD when using Wedge Sampling and Uniform Sampling for subspace estimation. Each panel corresponds to a different number of samples, ignoring constants and log factors, (left) $n^2$ (middle) $n^{1.5}$, and (right) $n^{1.25}$.  }
    \label{fig:alg3_GD_TC}
\end{figure}

\section*{Acknowledgments}
A preliminary version of this work was presented at the Conference on Learning Theory (COLT) 2026.
H.L. was partially supported by the National Science Foundation Grant NSF-DMS 2412403. A.M. was partially supported by the Rose Hills Innovators Fellowship and the Simons Grant MPS-TSM-00014026. Y.Z. was partially supported by the Simons Grant MPS-TSM-00013944.

\bibliographystyle{plainnat}
\bibliography{wedge}

\appendix

\section{Proof of Lemma~4}\label{sec:proof_lemma_unfold_incoherence}
 
\begin{proof} 
Fix a mode $j\in[k]$ and write the CP decomposition
\[
T \;=\; \sum_{i=1}^r x_i^{(1)}\otimes\cdots\otimes x_i^{(k)}.
\]
Let $a_i^{(\ell)} := x_i^{(\ell)}/\|x_i^{(\ell)}\|_2$ be the normalized factors and
$\lambda_i := \prod_{\ell=1}^k \|x_i^{(\ell)}\|_2$ the corresponding weights, so that
\[
T \;=\; \sum_{i=1}^r \lambda_i\, a_i^{(1)}\otimes\cdots\otimes a_i^{(k)}.
\]
Define the following matrices:
\[
A^{(\ell)} := \bigl[a_1^{(\ell)}\ ,\cdots\ ,a_r^{(\ell)}\bigr]\in\R^{n_\ell\times r},
\qquad
\Lambda := \diag(\lambda_1,\dots,\lambda_r).
\]
Let $m_j:=\prod_{\ell\neq j} n_\ell$ and define
\[
B^{(j)} := \bigl[b_1^{(j)},\ \cdots,\ b_r^{(j)}\bigr]\in\R^{m_j\times r},
\qquad
b_i^{(j)} := a_i^{(1)}\otimes\cdots\otimes a_i^{(j-1)}\otimes a_i^{(j+1)}\otimes\cdots\otimes a_i^{(k)} .
\]
A standard property of CP tensors and mode-$j$ unfolding yields
\begin{equation}\label{eq:unfold_factorization}
\unfold_j(T) \;=\; A^{(j)} \Lambda \bigl(B^{(j)}\bigr)^\top .
\end{equation}
{
The assumptions also make both factor matrices full column rank.  Indeed,
Gershgorin's theorem gives
\[
 \lambda_{\min}\!\left((A^{(j)})^\top A^{(j)}\right)
 \ge 1-\frac{(r-1)\mu}{n_j}>0,
 \qquad
 \lambda_{\min}\!\left((B^{(j)})^\top B^{(j)}\right)
 \ge 1-\frac{(r-1)\mu^{k-1}}{m_j}>0;
\]
the second inequality follows from
\(m_j=\prod_{\ell\ne j}n_\ell\ge(2r\mu)^{k-1}\).
Consequently, since \(\Lambda\) is invertible,
\eqref{eq:unfold_factorization} has rank \(r\) and implies}
the following column subspaces are the same:
\[
\col(\unfold_j(T)) = \col(A^{(j)}),
\qquad
\col\bigl(\unfold_j(T)^\top\bigr)=\col(B^{(j)}).
\]
Hence, if \[\unfold_j(T)=U^{(j)}\Sigma^{(j)}(V^{(j)})^\top\] is the SVD, then
$U^{(j)}$ is an orthonormal basis for $\col(A^{(j)})$ and $V^{(j)}$ is an orthonormal
basis for $\col(B^{(j)})$. Therefore there exist orthogonal matrices $R,S\in\R^{r\times r}$
such that
\begin{equation}\label{eq:U_from_A}
U^{(j)} \;=\; A^{(j)}\bigl((A^{(j)})^\top A^{(j)}\bigr)^{-1/2}R,
\qquad
V^{(j)} \;=\; B^{(j)}\bigl((B^{(j)})^\top B^{(j)}\bigr)^{-1/2}S.
\end{equation}

By the second condition in~(2),
the Gram matrix $G^{(j)}:=(A^{(j)})^\top A^{(j)}\in \R^{r\times r}$ satisfies
$G^{(j)}_{ii}=1$ and $|G^{(j)}_{ii'}|\le \frac{\mu}{n_j}$ for $i\neq i'$.
By Gershgorin's circle theorem,
\begin{equation}\label{eq:g_min}
\lambda_{\min}(G^{(j)}) \;\ge\; 1-\frac{(r-1)\mu}{n_j}.
\end{equation}
In particular, since we assume in the condition of Lemma~4 that
\begin{align} \label{eq:mu_condition}
r\mu \leq \frac{1}{2} \min_j n_j,
\end{align}
we have $\lambda_{\min}(G^{(j)})\ge 1/2$.

\paragraph{Step 1: bound $\|U^{(j)}\|_{2,\infty}$.}
From the first condition in~(2), applied to normalized vectors,
we have for each $i\in[r]$,
\[
\|a_i^{(j)}\|_\infty^2 \;\le\; \frac{\mu}{n_j}.
\]
Thus, for any row index $t\in[n_j]$,
\[
\|A^{(j)}_{t,:}\|_2^2
=\sum_{i=1}^r |a_i^{(j)}(t)|^2
\;\le\; r\max_{i\in[r]}\|a_i^{(j)}\|_\infty^2
\;\le\; \frac{\mu r}{n_j},
\]
so $\|A^{(j)}\|_{2,\infty}^2\le \mu r/n_j$. Combining with~\eqref{eq:U_from_A} gives
\[
\|U^{(j)}\|_{2,\infty}
\;\le\;
\|A^{(j)}\|_{2,\infty}\,\|(G^{(j)})^{-1/2}\|
\leq \sqrt{\frac{2\mu r}{n_j}},
\]
and therefore for all $j\in [k]$,
\[
\|U^{(j)}\|_{2,\infty}^2 \leq  \frac{\mu_1 r}{n_j} \quad \mathrm{with} \quad \mu_1 \leq  2\mu.
\]

\paragraph{Step 2: bound $\|V^{(j)}\|_{2,\infty}$.}
For any multi-index $\mathbf{t}=(t_\ell)_{\ell\neq j}\in\prod_{\ell\neq j}[n_\ell]$,
the $\mathbf{t}$-th row of $B^{(j)}$ has entries
\[
B^{(j)}_{\mathbf{t},i} = \prod_{\ell\neq j} a_i^{(\ell)}(t_\ell).
\]
Using the first condition in~(2) for each $\ell\neq j$,
we have $\|a_i^{(\ell)}\|_\infty^2\le \mu/n_\ell$, hence
\[
\bigl|B^{(j)}_{\mathbf{t},i}\bigr|^2
\;\le\;
\prod_{\ell\neq j}\|a_i^{(\ell)}\|_\infty^2
\;\le\;
\prod_{\ell\neq j}\frac{\mu}{n_\ell}
\;=\;
\frac{\mu^{k-1}}{m_j}.
\]
Therefore,
\[
\|B^{(j)}_{\mathbf{t},\cdot}\|_2^2
=\sum_{i=1}^r |B^{(j)}_{\mathbf{t},i}|^2
\;\le\;
\frac{\mu^{k-1} r}{m_j},
\qquad\text{so}\qquad
\|B^{(j)}\|_{2,\infty}^2 \le \frac{\mu^{k-1} r}{m_j}.
\]

Moreover, for $i\neq i'$,
\[
\langle b_i^{(j)}, b_{i'}^{(j)}\rangle
=
\prod_{\ell\neq j}\langle a_i^{(\ell)}, a_{i'}^{(\ell)}\rangle,
\]
so by the second condition in~(2), \[\bigl|\langle b_i^{(j)}, b_{i'}^{(j)}\rangle\bigr|
\;\le\;\frac{\mu^{k-1}}{m_j}.\]  
Thus the Gram matrix $H^{(j)}:=(B^{(j)})^\top B^{(j)}$ has diagonal entries $1$ and
off-diagonals bounded by $\frac{\mu^{k-1}}{m_j}$, yielding (again by Gershgorin)
\[
\lambda_{\min}(H^{(j)}) \;\ge\; 1-\frac{(r-1)\mu^{k-1}}{m_j}.
\]
In particular, from \eqref{eq:mu_condition},  $(r-1)\mu^{k-1}\le \frac{1}{2} m_j$, therefore we have
$\|(H^{(j)})^{-1/2}\|\le \sqrt 2$.
Using~\eqref{eq:U_from_A} then gives
\[
\|V^{(j)}\|_{2,\infty}
\;\le\;
\|B^{(j)}\|_{2,\infty}\,\|(H^{(j)})^{-1/2}\|
\leq 
\sqrt{\frac{2\mu^{k-1} r}{m_j}},
\]
and hence
\[
\|V^{(j)}\|_{2,\infty}^2 \;\le\; \frac{\mu_2 r}{m_j}
\quad\text{with}\quad \mu_2 \leq 2 \mu^{k-1}.
\]
Combining the bounds for $U^{(j)}$ and $V^{(j)}$ establishes that $T$ is
$(\mu_1,\mu_2)$-incoherent with $\mu_1\leq 2\mu$ and $\mu_2\leq 2 \mu^{k-1}$.
\end{proof}
\section{Proofs for Theorems~6 and ~7}\label{sec:app:wedge_proofs}

\subsection{Preliminaries}

We first collect a few consequences of $(\mu_1, \mu_2)$-incoherence on a matrix $A$.

\begin{lemma}\label{lem:norm_bound}
    For any $A \in \R^{d_1\times d_2}, B\in \R^{d_2\times d_3}$, 
    \[\|AB\|_{2,\infty}\leq \|A\|_{2,\infty} \|B\|.\]
\end{lemma}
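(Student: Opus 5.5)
The plan is to argue row by row. Write $\|M\|_{2,\infty}=\max_i\|e_i^\top M\|_2$, where $e_i^\top M$ is the $i$-th row of $M$. The $i$-th row of $AB$ is $e_i^\top AB=(e_i^\top A)B$, so the task reduces to bounding $\|xB\|_2$ for a single row vector $x=e_i^\top A\in\R^{1\times d_2}$.

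For that, we will use the definition of the operator norm applied to $B^\top$:
\[
\|xB\|_2=\|B^\top x^\top\|_2\le \|B^\top\|\,\|x^\top\|_2=\|B\|\,\|e_i^\top A\|_2,
\]
where $\|B^\top\|=\|B\|$ because a matrix and its transpose have the same singular values.

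Next, $\|e_i^\top A\|_2\le\|A\|_{2,\infty}$ holds by definition for every $i$. Taking the maximum over $i\in[d_1]$ on both sides then gives $\|AB\|_{2,\infty}\le\|A\|_{2,\infty}\|B\|$.

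None of these steps is a real obstacle. The only point that needs care is to identify the rows of $AB$ as $(e_i^\top A)B$ and to put the operator norm on the right-hand factor.
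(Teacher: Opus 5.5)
Your proof is correct and is essentially the paper's argument: the paper also writes $\|AB\|_{2,\infty}=\sup_i\|e_i^\top AB\|$, bounds each row by $\|e_i^\top A\|\,\|B\|$, and then takes the supremum. Your remark that $\|B^\top\|=\|B\|$ makes explicit a row-vector operator-norm step that the paper uses implicitly.
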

\begin{proof}
     We have 
     \begin{align}
         \|AB\|_{2,\infty}=\sup_{i} \|e_i^\top AB\|\leq \sup_{i} \|e_i^\top A\| \|B\| =\|A\|_{2,\infty} \|B\|
     \end{align}
     as desired.
\end{proof}

\begin{lemma}\label{lem:matrix_norm}
    Assume that $A\in\mathbb{R}^{n\times m}$ is $(\mu_1,\mu_2)$-incoherent, we have the following holds for $A$:
    \begin{align}
     \|A\|_{2,\infty}&\leq \sqrt{\frac{\mu_1r}{n}} \|A\|, \quad 
     \|A\|_{\infty,2}\leq \sqrt{\frac{\mu_2r}{m}} \|A\|, \label{eq:2-infty_A}\\
     \|A\|_{\max}&\leq \sqrt{\frac{\mu_1\mu_2}{mn}} r\|A\| \label{eq:infty_norm}, \\
        \|A\circ A\|_{\max} &\leq \frac{\mu_1\mu_2 r^2}{mn} \|A\|^2,
    \end{align}
    where  $\circ$ denotes the Hadamard, or element-wise, product.
\end{lemma}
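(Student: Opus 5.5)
The plan is to write $A = U\Sigma V^\top$ and read each bound off this factorization, using Lemma~\ref{lem:norm_bound} for $\ell_{2,\infty}$ norms of products. The only nontrivial inputs are the incoherence bounds $\|U\|_{2,\infty}^2\le \mu_1 r/n$ and $\|V\|_{2,\infty}^2\le \mu_2 r/m$ from Definition~\ref{def:mu12-incoh}, together with $\|\Sigma\| = \|A\|$ and $\|V\|=\|U\|=1$.

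For the first bound in \eqref{eq:2-infty_A}, I apply Lemma~\ref{lem:norm_bound} with the factorization $A = U\cdot(\Sigma V^\top)$. This gives $\|A\|_{2,\infty}\le \|U\|_{2,\infty}\|\Sigma V^\top\| \le \sqrt{\mu_1 r/n}\,\|A\|$. For the second bound, I interpret $\|A\|_{\infty,2}$ as $\|A^\top\|_{2,\infty}$, the maximum column norm. The same argument applied to $A^\top = V\cdot(\Sigma U^\top)$ then yields $\sqrt{\mu_2 r/m}\,\|A\|$.

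For the max-norm bound \eqref{eq:infty_norm}, I write each entry as $A_{i\ell} = U_{i,\cdot}\Sigma V_{\ell,\cdot}^\top$. Cauchy--Schwarz then gives
\[
|A_{i\ell}|\le \|U_{i,\cdot}\|_2\,\|\Sigma\|\,\|V_{\ell,\cdot}\|_2\le \sqrt{\frac{\mu_1 r}{n}}\sqrt{\frac{\mu_2 r}{m}}\,\|A\| = \sqrt{\frac{\mu_1\mu_2}{mn}}\, r\,\|A\|.
\]
The Hadamard bound follows at once, since $\|A\circ A\|_{\max} = \|A\|_{\max}^2$, and squaring the previous display gives $\frac{\mu_1\mu_2 r^2}{mn}\|A\|^2$.

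There is no real obstacle here. The only point needing care is the notational convention for $\|\cdot\|_{\infty,2}$, which I take to be the maximal column $\ell_2$ norm, i.e.\ the $\ell_{2,\infty}$ norm of the transpose. I will also use $\|\Sigma V^\top\| = \|\Sigma\|$, which holds because $V$ has orthonormal columns.
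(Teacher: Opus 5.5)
Your proposal is correct and follows the paper's proof essentially verbatim: the SVD factorization with Lemma~\ref{lem:norm_bound} for the two $\ell_{2,\infty}$ bounds (reading $\|A\|_{\infty,2}$ as $\|A^\top\|_{2,\infty}$, as the paper does), the row-times-column Cauchy--Schwarz bound for $\|A\|_{\max}$, and $\|A\circ A\|_{\max}=\|A\|_{\max}^2$ for the Hadamard bound.
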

\begin{proof}
    Let $A=U\Sigma V^\top$. Then $\|A\|_{2,\infty}=\|U\Sigma V^\top\|_{2,\infty}\leq \|U \|_{2,\infty} \|\Sigma\|\|V\| \leq  \sqrt{\frac{\mu_1r}{n}} \|A\|$.
    Similarly, $\|A\|_{\infty,2}=\|A^\top\|_{2,\infty} \leq \|V\|_{2,\infty} \|A\| \leq \sqrt{\frac{\mu_2r}{m}} \|A\| $. This proves \eqref{eq:2-infty_A}. Next, \[\|A\|_{\max}=\max |e_i^\top U\Sigma V^\top e_j| \leq \|U\|_{2,\infty} \|\Sigma\| \|V\|_{2,\infty} \leq \sqrt{\frac{\mu_1\mu_2}{mn}} r\|A\|.\] This proves \eqref{eq:infty_norm}. \edit{Finally, $\|A\circ A\|_{\max}=\|A\|_{\max}^2$, which gives the last inequality.}
\end{proof}

We will restate and use the following version of Matrix Bernstein's inequality, stated in  \citep[Lemma 11]{chen2014robust} as a consequence of  \citep[Theorem 1.6]{tropp2012user}:
\begin{lemma}[Matrix Bernstein inequality]
\label{thm:(Matrix-Bernstein-inequality)}
Consider $N$ independent random matrices $M_\ell$ $(1\le \ell\le m)$ of dimension $d_1\times d_2$, each satisfying 
\[
\mathbb{E}[M_\ell]=0
\quad\text{and}\quad
\|M_\ell\|\le B \ \text{almost surely}.
\]
Define
\[
\sigma^2 :=
\max\left\{
\left\|\sum_{\ell=1}^m \mathbb{E}[M_\ell M_\ell^\top]\right\|
\;,\;
\left\|\sum_{\ell=1}^m \mathbb{E}[M_\ell^\top M_\ell]\right\|
\right\}.
\]
Then there exists a universal constant $C>0$ such that for any   $a\ge2$,
\[
\left\|\sum_{\ell=1}^m M_\ell\right\|
\le
C\left(
\sqrt{a\sigma^2\log(d_1+d_2)}
+
aB\log(d_1+d_2)
\right)
\]
with probability at least $1-(d_1+d_2)^{-a}$.
\end{lemma}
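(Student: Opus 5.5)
The plan is to derive the lemma directly from the rectangular matrix Bernstein tail bound of \citep[Theorem 1.6]{tropp2012user}, and only convert its tail probability into the stated high-probability form. Write $S=\sum_{\ell} M_\ell$ and $d=d_1+d_2$. Under the hypotheses (independent, centered summands with $\|M_\ell\|\le B$ almost surely, and variance proxy $\sigma^2$ as defined), Tropp's theorem gives, for every $t\ge 0$,
\[
\P\bigl(\|S\|\ge t\bigr)\le d\,\exp\!\left(\frac{-t^2/2}{\sigma^2+Bt/3}\right).
\]
It therefore suffices to choose $t=C\bigl(\sqrt{a\sigma^2\log d}+aB\log d\bigr)$ and show that
\[
\frac{t^2/2}{\sigma^2+Bt/3}\ge (a+1)\log d,
\]
since then the right-hand side is at most $d\cdot d^{-(a+1)}=d^{-a}$.

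To check this inequality, I will split into two cases according to which term dominates the denominator.

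\emph{Case 1: $\sigma^2\ge Bt/3$.} The denominator is at most $2\sigma^2$. Using $t\ge C\sqrt{a\sigma^2\log d}$, the exponent is at least $t^2/(4\sigma^2)\ge C^2 a\log d/4$. This is at least $(a+1)\log d$ once $C^2\ge 8$, because $a\ge 2$ implies $a+1\le 3a/2$.

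\emph{Case 2: $\sigma^2< Bt/3$.} The denominator is at most $2Bt/3$. Using $t\ge CaB\log d$, the exponent is at least $3t/(4B)\ge 3Ca\log d/4$. This is at least $(a+1)\log d$ once $C\ge 2$.

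Taking $C=3$, say, covers both cases and yields the claim. The degenerate situation $\sigma=B=0$ is trivial, since then $S=0$ almost surely.

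There is no real obstacle here; the only care needed is bookkeeping. First, the lemma's prefactor $d_1+d_2$ must match Tropp's dimensional factor, and I will use $a\ge 2$ (so $a+1\le 3a/2$) to absorb the extra $\log d$ coming from that factor into the constant $C$. Second, the index range in the statement ($N$ matrices versus $1\le\ell\le m$) is only a notational inconsistency, and the argument is unaffected by the number of summands.
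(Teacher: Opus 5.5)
Your proof is correct and follows the same route as the paper. The paper gives no separate argument: it cites the lemma (via Lemma~11 of Chen et al.) as a consequence of Tropp's rectangular matrix Bernstein inequality. Your two-case check that $\frac{t^2/2}{\sigma^2+Bt/3}\ge (a+1)\log(d_1+d_2)$ when $t=3\bigl(\sqrt{a\sigma^2\log(d_1+d_2)}+aB\log(d_1+d_2)\bigr)$ is exactly the omitted conversion from Tropp's tail bound to the stated high-probability form. The constants are valid, since $C^2\ge 6$ and $C\ge 2$ already suffice, using $a+1\le 3a/2$ and $d_1+d_2\ge 2$.
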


\paragraph{Matrix perturbation} We summarize a few matrix perturbation concepts for subspace reconstruction. Given orthogonal matrices $U, \hat U \in \R^{n \times r}$, we define $H = \hat U^\top U$. Since $\|H\| \leq 1$, we can write its SVD as \[H = U_H \cos(\Theta) V_H^\top, \] where \[\cos(\Theta) = \diag(\cos(\theta_1), \dots, \cos(\theta_r)).\] The $\theta_i$ are known as the \emph{principal angles} between the subspaces spanned by $U$ and $\hat U$.

It is known \citep{schonemann1966generalized} that the solution to the Procrustes problem of eq.~(3) is the \emph{sign matrix} of $H$, defined as 
\begin{align}\label{def:sgn_matrix}
   \sgn(H) = U_H V_H^\top. 
\end{align}
We collect some interesting facts about these quantities in the following lemma:
\begin{lemma}\label{lem:sintheta_bounds}
Let \(H=\hat U^\top U\) and \(R=\sgn(H)\). In operator norm, the following identities hold:
    \begin{align}
        \|\hat U \hat U^\top - UU^\top \| &= \|\sin(\Theta)\|, \\
        \|H - \sgn(H) \| &\leq \|\sin(\Theta)\|^2, \\
        \|\hat U\sgn(H) - U\| &\leq 2\|\sin(\Theta)\|.
    \end{align}
    The same holds if the operator norm is replaced by any rotation-invariant norm (in particular, the Frobenius norm).
\end{lemma}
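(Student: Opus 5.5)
All three bounds in Lemma~\ref{lem:sintheta_bounds} follow from the SVD $H = U_H\cos(\Theta)V_H^\top$ together with the CS decomposition of the two orthonormal frames $U$ and $\hat U$. The idea is to reduce each quantity to a diagonal matrix of functions of the principal angles. Unitarily invariant norms are then read off entrywise, which handles both the operator and Frobenius cases at once.

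\emph{First identity.} Let $U_\perp$ be an orthonormal complement of $U$. Standard CS-decomposition reasoning shows that the nonzero singular values of $\hat U\hat U^\top - UU^\top$ are $\sin\theta_i$, each appearing twice. In the operator norm this gives exactly $\|\sin\Theta\|$. For general unitarily invariant norms, the multiplicity two appears: in Frobenius norm the exact value is $\sqrt2\|\sin\Theta\|_F$. I will state the identity in operator norm and, for other norms, record it up to a constant factor (at most $2$). Concretely, I will use $\|\hat U^\top U_\perp\| = \|\sin\Theta\|$, which holds because $(\hat U^\top U)(\hat U^\top U)^\top + (\hat U^\top U_\perp)(\hat U^\top U_\perp)^\top = I$, so $\hat U^\top U_\perp$ has singular values $\sin\theta_i$.

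\emph{Second bound.} By definition $H - \sgn(H) = U_H(\cos\Theta - I)V_H^\top$. Its singular values are $1-\cos\theta_i$, and $1-\cos\theta \le 1-\cos^2\theta = \sin^2\theta$ because $0\le\cos\theta\le1$. This bound holds entrywise, so it transfers to any unitarily invariant norm.

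\emph{Third bound.} I will decompose
\[
\hat U\sgn(H) - U = \hat U(\sgn(H) - H) + (\hat U\hat U^\top U - U).
\]
The first term has norm at most $\|\sin\Theta\|^2 \le \|\sin\Theta\|$ by the second bound and $\|\hat U\|=1$. The second term equals $-(I-\hat U\hat U^\top)U$, and $\bigl((I-\hat U\hat U^\top)U\bigr)^\top(I-\hat U\hat U^\top)U = I - H^\top H = V_H\sin^2\Theta V_H^\top$, so its singular values are exactly $\sin\theta_i$. Summing the two pieces gives the constant $2$, in any unitarily invariant norm.

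The only delicate step is the first identity in non-operator norms. There I would either note the $\sqrt2$ factor for the Frobenius norm or restrict that identity to the operator norm; the other two bounds are routine.
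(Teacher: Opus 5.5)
Your proposal is correct and follows the same skeleton as the paper: the SVD $H=U_H\cos(\Theta)V_H^\top$, the identity $\|H-\sgn(H)\|=\|I-\cos\Theta\|$, and the same split
$\hat U\sgn(H)-U=\hat U(\sgn(H)-H)+(\hat U H-U)$.

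It improves on the paper in two places.

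\begin{itemize}
\item You are right that the first identity does not extend to arbitrary unitarily invariant norms. The paper only verifies it in operator norm, via $\|\hat U\hat U^\top-UU^\top\|^2=\|I-HH^\top\|$. In Frobenius norm the exact value is $\sqrt2\,\|\sin\Theta\|_F$, so the lemma's closing sentence is inaccurate for that identity.
\item For the third bound, the paper uses $\|\hat U\hat U^\top U-U\|\le\|\hat U\hat U^\top-UU^\top\|$ and then reuses the first identity. In Frobenius norm this would bring in the same factor $\sqrt2$. Your direct computation that $(I-\hat U\hat U^\top)U$ has singular values exactly $\sin\theta_i$ avoids this. It is what actually makes the third bound valid in every unitarily invariant norm.
\end{itemize}

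Two small repairs are needed.

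\begin{itemize}
\item In the third bound you use $\|\sin\Theta\|^2\le\|\sin\Theta\|$, which fails in Frobenius norm once $\|\sin\Theta\|_F>1$. The paper's final line has the same issue. Instead, bound the first term directly with the entrywise inequality $1-\cos\theta_i\le\sin\theta_i$. This gives $\|I-\cos\Theta\|\le\|\sin\Theta\|$ in any unitarily invariant norm.
\item In the second bound, the entrywise argument gives $\|\sin^2\Theta\|$. Passing to $\|\sin\Theta\|^2$ requires the normalization $\|e_1e_1^\top\|=1$. This holds for the operator and Frobenius norms, so state it as an assumption.
\end{itemize}
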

\begin{proof}
    Since $U, \hat U$ are orthogonal, we have
    \[ \|\hat U \hat U^\top - UU^\top \|^2 = \|I - \hat U^\top U U^\top \hat U \| = \|I - HH^\top \| = \|I - \cos^2(\Theta)\| = \|\sin(\Theta)\|^2. \]
    Similarly, 
    \[\|H - \sgn(H)\| = \|I - \cos(\Theta)\| \leq \|\sin(\Theta)\|^2\]
    Finally, combining the two above inequalities, we have:
    \begin{align*}
        \|\hat U \sgn(H) - U\| &\leq \|\hat U(\sgn(H) - H)\| + \|\hat UH - U\| \\
        &\leq \|\sgn(H) - H \| + \|\hat U \hat U^\top U - U \| \\
        &\leq \|\sgn(H) - H \| + \| \hat U\hat U^\top  - U U^\top\| \\
        &\leq \|\sin(\Theta)\|^2+\|\sin(\Theta)\|\leq 2\|\sin(\Theta)\|.
    \end{align*}
\end{proof}

To compare $\hat U$ and $U$, it is therefore important to obtain a bound on $\sin(\Theta)$. The following Davis-Kahan inequality (see, e.g., \citep[Corollary 2.8]{chen2021spectral}) will be used repeatedly in our proof.

\begin{lemma}[Davis-Kahan inequality]\label{lem:DK}
Let $M, \hat M$ be two symmetric matrices, with \edit{$\Delta = \hat M - M$}. Write the eigendecompositions of $M$ and $\hat M$ as
\begin{align}
\hat M=
\hat U
\hat \Lambda
\hat U^{\top} + \hat U_{\perp}
\hat \Lambda_{\perp}
\hat U_{\perp}^{\top}, \quad M=U\Lambda U^\top +U_{\perp} \Lambda_{\perp} U_{\perp}^\top,
\end{align}
where $\Lambda$ (resp. $\hat\Lambda$) contains the top $r$ eigenvalues of $M$ (resp $\hat M$).
Assume a spectral gap
\(\displaystyle
\delta\;:=\lambda_r(M)-\lambda_{r+1}(M) > 0, 
\)
and a perturbation bound \(\|\Delta\|\le\delta/2\).
Then
\begin{align}
\|\sin(\Theta)\| \leq \frac{2\|\Delta U\|}{\delta} \leq 
  \frac{2\,\|\Delta\|}{\delta}.
\end{align}
\end{lemma}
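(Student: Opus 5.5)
The goal is the Davis--Kahan bound of Lemma~\ref{lem:DK}. I would prove it through the Sylvester-equation form of the $\sin\Theta$ theorem, working with the complementary projection $\hat U_\perp^\top U$. Since $\|\sin(\Theta)\| = \|\hat U_\perp^\top U\|$ (the singular values of $\hat U_\perp^\top U$ are the sines of the principal angles, because $U^\top \hat U\hat U^\top U + U^\top\hat U_\perp\hat U_\perp^\top U = I_r$), it suffices to bound $\|\hat U_\perp^\top U\|$.

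\emph{Step 1: eigenvalue separation.} By Weyl's inequality, every eigenvalue of $\hat M$ collected in $\hat\Lambda_\perp$ satisfies
\[
\lambda_{r+1}(\hat M)\le \lambda_{r+1}(M)+\|\Delta\|\le \lambda_{r+1}(M)+\delta/2 .
\]
Every eigenvalue of $M$ in $\Lambda$ is at least $\lambda_r(M)=\lambda_{r+1}(M)+\delta$. Hence the spectrum of $\Lambda$ and the spectrum of $\hat\Lambda_\perp$ are separated by a gap of at least $\delta/2$.

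\emph{Step 2: Sylvester identity.} Multiply $\hat M U - M U = \Delta U$ on the left by $\hat U_\perp^\top$. Using $\hat U_\perp^\top \hat M = \hat\Lambda_\perp \hat U_\perp^\top$ and $MU = U\Lambda$, this gives
\[
\hat\Lambda_\perp (\hat U_\perp^\top U) - (\hat U_\perp^\top U)\Lambda = \hat U_\perp^\top \Delta U .
\]

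\emph{Step 3: invert the Sylvester operator.} For an equation $AX - XB = C$ with $A,B$ symmetric and spectra separated by a gap $g$ in the ordered sense (all eigenvalues of $A$ below $c$, all of $B$ at least $c+g$), one has $\|X\|\le \|C\|/g$. I would prove this by the Bhatia--Davis--McIntosh argument: shift so that $c+g/2=0$, so that $\|A\|\le a$ and $B\succeq (a+g)I$ for some $a\ge 0$. Then $X = -(C + XB - AX)$ rearranges to $X = (AX - C)B^{-1}$, which yields $\|X\|\le (a\|X\|+\|C\|)/(a+g)$ and hence $\|X\|\le \|C\|/g$. The shifting step needs a little care, since $\hat\Lambda_\perp$ may have eigenvalues unbounded below; to handle this I would instead use the spectral-projector expansion, writing $X_{ij}=C_{ij}/(\hat\lambda_i-\lambda_j)$ in eigenbases. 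That bound is immediate in Frobenius norm. In operator norm I would invoke the Bhatia--Davis--McIntosh theorem with constant $1$ for the ordered (one-sided) separation, which holds here.

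\emph{Step 4: combine.} Applying Step 3 with gap $g=\delta/2$ gives
\[
\|\sin\Theta\| \le \frac{\|\hat U_\perp^\top\Delta U\|}{\delta/2}\le \frac{2\|\Delta U\|}{\delta}\le\frac{2\|\Delta\|}{\delta}.
\]

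The main obstacle is the operator-norm Sylvester bound with constant exactly $1$ under one-sided separation. Since the statement is cited from \citep[Corollary 2.8]{chen2021spectral}, I would simply invoke that standard result there rather than reprove it.
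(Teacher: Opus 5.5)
Your argument is correct and is the standard route behind the result the paper only cites (Chen et al., Corollary~2.8): Weyl's inequality separates $\sigma(\Lambda)\subset[\lambda_r(M),\lambda_1(M)]$ from $\sigma(\hat\Lambda_\perp)\subset(-\infty,\lambda_r(M)-\delta/2]$ by $\delta/2$, and the Sylvester identity for $X=\hat U_\perp^\top U$ with $C=\hat U_\perp^\top\Delta U$ converts that separation into the $\sin\Theta$ bound. The one slip is the shift in Step~3, and fixing it makes Step~3 self-contained. Center at the midpoint $m$ of $[\lambda_r(M),\lambda_1(M)]$ rather than at the midpoint of the gap, and set $\rho:=(\lambda_1(M)-\lambda_r(M))/2$. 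Then $\|\Lambda-mI\|\le\rho$ and $\hat\Lambda_\perp-mI\preceq-(\rho+\delta/2)I$, so $X=(\hat\Lambda_\perp-mI)^{-1}\bigl(C+X(\Lambda-mI)\bigr)$ gives $\|X\|\le(\|C\|+\rho\|X\|)/(\rho+\delta/2)$, hence $\|X\|\le 2\|C\|/\delta$. This needs no appeal to Bhatia--Davis--McIntosh, and eigenvalues of $\hat\Lambda_\perp$ lying far below only help.
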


\subsection{Proof of Theorem~6}

\begin{proof}
We proceed by showing the conditions in Lemma~\ref{thm:(Matrix-Bernstein-inequality)} hold, then apply it to obtain the final result. 
To that end, for $i<j$, let 
\begin{align}\label{eq:def_Silj}
S_{i\ell j}=(Z_{i\ell j}-\E Z_{i\ell j})+(Z_{i\ell j}-\E Z_{i\ell j})^\top=\frac{1}{p} (\1\{(i,\ell,j)\in \tilde{\mathcal W} \}-p) A_{i\ell}A_{j\ell} (E_{ij}+E_{ji}),
\end{align}
where $E_{ij}=e_ie_j^\top$.
And 
\[ S_{i\ell i} =\frac{1}{p} (\1\{(i,\ell,i)\in \tilde{\mathcal W} \}-p) A_{i\ell}^2 E_{ii}.\]
Because triples $(i,\ell,j)$ are sampled independently from $\mathcal W$, the $\{S_{i\ell j}, 1\leq i\leq j\leq n, \ell\in [m]\}$ are independent, mean-zero matrices. Then we have the following decomposition
\begin{align}
   Z -AA^\top= \sum_{1\leq i\leq j\leq n ,\ell\in [m]} S_{i\ell j},
\end{align}
where $S_{i\ell j}$ are symmetric, independent, centered random matrices with 
\begin{align}\label{eq:Sikj_bound}
    \|S_{i\ell j}\| = \|Z_{i\ell j}-\E Z_{i\ell j}\|\leq \frac{1}{p} \|A\|_{\max}^2 \leq\frac{\sigma_{1}^2\mu_1\mu_2 r^2}{pmn},
\end{align}
where $\sigma_{1} = \| A \|$ and the last inequality follows from Lemma~\ref{lem:matrix_norm}. Now we compute the variance parameter:
\begin{align}
    \sum_{1\leq i< j\leq n, \ell\in [m]} &\E [S_{i\ell j}^2]+\sum_{i\in [n],\ell\in [m]} \E [S_{i\ell i}^2]\\
    =& \sum_{1\leq i< j\leq n, \ell\in [m]} \frac{1-p}{p} A_{i\ell}^2 A_{j\ell}^2 (E_{ii}+E_{jj}) + \sum_{i\in [n],\ell\in [m]}\frac{1-p}{p}  A_{i\ell}^4 E_{ii}\\
    =& \frac{1-p}{p} \sum_{i,j\in [n],\ell\in[m]} A_{i\ell}^2 A_{j\ell}^2 E_{ii}.\label{eq:E_sum}
\end{align}
First, observe that 
\[
\sum_{j,\ell}\;A_{i\ell}^{2}\,A_{j\ell}^{2}\;=\;\sum_{\ell}\Bigl(A_{i\ell}^{2}\,\sum_{j}A_{j\ell}^{2}\Bigr)\;=\;\sum_{\ell}A_{i\ell}^{2}\,\|a_{\ell}\|_{2}^{2},
\]
where within this proof, $a_{\ell}\in\mathbb{R}^{n}$ denotes the $\ell$-th column of $A$. Then, from \eqref{eq:E_sum}, we have  
\begin{align}
\left\Vert \sum_{i\leq j,\ell}\E[S_{i\ell j}S_{i\ell j}^{\top}]\right\Vert  & \leq\frac{1}{p}\max_{i}\left|\sum_{j\in [n], \ell\in [m]}A_{i\ell}^{2}A_{j\ell}^{2}\right| \\
& \leq\;\frac{1}{p}\,\left(\max_{i}\;\sum_{\ell}\;A_{i\ell}^{2}\right)\left(\max_{j}\,\|a_{j}\|_{2}^{2}\right) \\
& =\frac{1}{p}\|A\|_{2,\infty}^{2}\|A\|_{\infty,2}^{2} \leq \frac{\mu_1\mu_2 r^{2}\sigma_1^{4}}{pmn},
\end{align}
where the last inequality follows from Lemma~\ref{lem:matrix_norm}. 
From Lemma~\ref{thm:(Matrix-Bernstein-inequality)}, with probability at least $1-n^{-a}$,
\begin{align}
    \|Z-AA^\top \| \leq C\left(\sqrt{\frac{a\mu_1\mu_2 r^2 \log n}{pmn}}+\frac{a\mu_1\mu_2 r^2 \log n}{pmn}\right) \|A\|^2.
\end{align}
This finishes the proof.
\end{proof}

\subsection{Proof of Theorem~7: $\ell_2$ bound}

Recall $\kappa=\frac{\sigma_{1}(A)}{\sigma_{r}(A)}$ and   denote in decreasing order \[\lambda_i=\lambda_i(AA^\top)=\sigma_i^2(A), \quad 1\leq i\leq r.\] As a corollary of Theorem~6 by using the condition $p\geq \frac{C_0\kappa^4 a \mu_1\mu_2 r^2\log n}{mn}$, we obtain the following:
\begin{lemma}\label{lemma:spectral_gap}
       There is a universal constant $C_0>0$ such that  when $p\geq \frac{C_0\kappa^4 a \mu_1\mu_2 r^2\log n}{mn}$, we have 
    \begin{align}\label{eq:lambda_r}
        \lambda_r \geq 4 \|Z-AA^\top \|
    \end{align}
    with probability $1-O(n^{-a})$.
\end{lemma}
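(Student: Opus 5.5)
The plan is to combine Theorem~\ref{thm:wedge_concentration} with the relation $\lambda_r=\sigma_r^2=\sigma_1^2/\kappa^2=\|A\|^2/\kappa^2$. Theorem~\ref{thm:wedge_concentration} holds for any $a>2$ with probability at least $1-O(n^{-a})$. On that event,
\[
\|Z-AA^\top\|\le C\bigl(\sqrt{\varepsilon}+\varepsilon\bigr)\|A\|^2,\qquad \varepsilon:=\frac{a\mu_1\mu_2r^2\log n}{pmn},
\]
where $C$ is the universal constant from the matrix Bernstein inequality (Lemma~\ref{thm:(Matrix-Bernstein-inequality)}).

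The target $\lambda_r\ge 4\|Z-AA^\top\|$ therefore reduces to the deterministic inequality $4C(\sqrt{\varepsilon}+\varepsilon)\le \kappa^{-2}$. The hypothesis $p\ge C_0\kappa^4 a\mu_1\mu_2r^2\log n/(mn)$ is exactly $\varepsilon\le 1/(C_0\kappa^4)$. Since $\kappa\ge1$ and $C_0\ge1$, this gives $\varepsilon\le 1$, so $\varepsilon\le\sqrt{\varepsilon}$. Hence
\[
4C(\sqrt{\varepsilon}+\varepsilon)\le 8C\sqrt{\varepsilon}\le \frac{8C}{\sqrt{C_0}\,\kappa^2}.
\]
Choosing $C_0\ge 64C^2$ (and $C_0\ge1$) makes this at most $\kappa^{-2}$, which gives the claim on the same event, with probability $1-O(n^{-a})$.

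The argument is essentially routine. The only point needing care is that the $\kappa^4$ factor in the sampling condition is what survives the square root in the dominant $\sqrt{\varepsilon}$ term, producing exactly the $\kappa^{-2}$ needed to compare $\|A\|^2$ with $\lambda_r$.

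One bookkeeping detail remains. The matrix Bernstein inequality is applied with dimension $d_1+d_2=2n$, so the failure probability is $(2n)^{-a}\le n^{-a}$ and $\log(2n)\lesssim\log n$. This is already absorbed into Theorem~\ref{thm:wedge_concentration}'s constant, so no adjustment of $C_0$ beyond a universal factor is needed.
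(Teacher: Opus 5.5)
Your proposal is correct and follows the paper's proof: you invoke Theorem~\ref{thm:wedge_concentration}, rewrite $\|A\|^2=\kappa^2\lambda_r$, and use the lower bound on $p$ to make the bracketed factor at most $\kappa^{-2}/4$ for large $C_0$. Your explicit choice $C_0\ge 64C^2$, together with the observation that $\varepsilon\le 1$ lets the $\sqrt{\varepsilon}$ term dominate, simply fills in the constant-chasing the paper leaves implicit.
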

\begin{proof}
From Theorem~6, with probability $1-O(n^{-a})$,
\begin{align}
    \|Z-AA^\top \| &\leq C\left(\sqrt{\frac{a\mu_1\mu_2 r^2 \log n}{pmn}}+\frac{a\mu_1\mu_2 r^2 \log n}{pmn}\right) \|A\|^2\\
    &\leq C\left(\sqrt{\frac{a\mu_1\mu_2 r^2 \log n}{pmn}}+\frac{a\mu_1\mu_2 r^2 \log n}{pmn}\right)\kappa^2 \lambda_r.
\end{align}
Since $p\geq \frac{C_0\kappa^4 a \mu_1\mu_2 r^2\log n}{mn}$, we obtain \eqref{eq:lambda_r} for sufficiently large $C_0$.
\end{proof}

We can then apply Lemma~\ref{lem:DK} to the top-$r$ eigenvalues of $Z$ and $AA^\top$. Since $\delta = \lambda_r \geq 2 \|Z - AA^\top\|$ from the previous lemma, the conditions of Lemma~\ref{lem:DK} apply and we find
\[ \|\hat U\sgn(H)-U\| \leq 2\|\sin(\Theta)\| \leq \frac{2\|Z - AA^\top\|}{\lambda_r}, \]
having used Lemma~\ref{lem:sintheta_bounds} for the first inequality. Using the bound of Theorem~6, as well as the fact that $\lambda_r \geq \kappa^{-2}\|A\|^2$, we find
\[  \|\hat U\sgn(H)-U\| \lesssim \kappa^2\left(\sqrt{\frac{a\mu_1\mu_2 r^2 \log n}{pmn}}+\frac{a\mu_1\mu_2 r^2 \log n}{pmn}\right). \]
The lower bound on $p$ in the condition of Theorem~7 implies that the first term of the sum above dominates the second, hence
\[ \|\hat U\sgn(H)-U\| \lesssim \sqrt{\frac{\kappa^4 a\mu_1\mu_2 r^2 \log n}{pmn}}. \]
This finishes the proof of the first claim in Theorem~7.

\subsection{Proof of Theorem~7: $\ell_{2, \infty}$ bound}\label{subsec:app:wedge_spectral}

\paragraph{Preliminary lemmas}
We define the error $\Delta\coloneqq Z-AA^\top $, and let \[Z^{(s)}=AA^\top +\Delta^{(s)},\] where $\Delta^{(s)}$ is the matrix $\Delta$ with zeros in the $s$-th row and column (the leave-one-out version of $\Delta$). Let $\hat{\lambda}_i^{(s)}$ be the eigenvalues of $Z^{(s)}$ in decreasing order of absolute value. Let $\hat U^{(s)}$ be the top $r$ eigenvectors of $Z^{(s)}$ and $H^{(s)}={(\hat{U}^{(s)})}^\top U$.

Before proving Theorem~7, we collect some auxiliary lemmas for the leave-one-out analysis.

\begin{lemma}\label{lemma:submatrix}
There is an absolute constant $C>0$ such that  with probability at least $1-n^{-a}$,
    \begin{align}
      \max_{s\in [n]}  \left\|Z^{(s)}-AA^\top\right\| \leq C\left(\sqrt{\frac{a\mu_1\mu_2 r^2 \log n}{pmn}}+\frac{a\mu_1\mu_2 r^2 \log n}{pmn}\right) \|A\|^2.
    \end{align}
\end{lemma}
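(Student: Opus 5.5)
The plan is to reduce to the proof of Theorem~6 applied to a subset of the independent summands, and then take a union bound over $s$. By definition $Z^{(s)}-AA^\top=\Delta^{(s)}$, where $\Delta=Z-AA^\top=\sum_{1\le i\le j\le n,\ell\in[m]} S_{i\ell j}$ with the centered matrices $S_{i\ell j}$ from \eqref{eq:def_Silj}. Each $S_{i\ell j}$ is supported on the entries $(i,j),(j,i)$, or on $(i,i)$ when $i=j$. Zeroing the $s$-th row and column of $\Delta$ therefore simply removes every summand with $i=s$ or $j=s$:
\[
\Delta^{(s)}=\sum_{\substack{1\le i\le j\le n,\ \ell\in[m]\\ i\neq s,\ j\neq s}} S_{i\ell j}.
\]
This is again a sum of independent, symmetric, mean-zero random matrices.

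Next I would check the two Bernstein parameters for this restricted sum. The uniform bound \eqref{eq:Sikj_bound}, $\|S_{i\ell j}\|\le \sigma_1^2\mu_1\mu_2r^2/(pmn)$, holds term by term, so it is unchanged. For the variance, each $\E[S_{i\ell j}^2]$ is a positive semidefinite diagonal matrix. Hence the variance sum over the restricted index set is dominated entrywise (and in the Loewner order) by the full sum \eqref{eq:E_sum}, and its norm is at most $\mu_1\mu_2r^2\sigma_1^4/(pmn)$ by the same computation as in Theorem~6.

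Applying Lemma~\ref{thm:(Matrix-Bernstein-inequality)} with $a+1$ in place of $a$ then gives, for each fixed $s$,
\[
\|\Delta^{(s)}\|\le C\Bigl(\sqrt{\tfrac{a\mu_1\mu_2 r^2\log n}{pmn}}+\tfrac{a\mu_1\mu_2 r^2\log n}{pmn}\Bigr)\|A\|^2
\]
with probability at least $1-(2n)^{-(a+1)}$. A union bound over the $n$ choices of $s$ yields the statement with failure probability at most $n\cdot n^{-(a+1)}=n^{-a}$. The change from $a$ to $a+1$ only affects the constant $C$.

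There is no real obstacle here. The one point that needs care is confirming that deleting a row and column corresponds exactly to dropping whole summands, so that independence and centering are preserved. This holds because each wedge term touches only the entries indexed by its two endpoints. An alternative shortcut is to note that $\Delta^{(s)}=P_s\Delta P_s$ with $P_s=I-e_se_s^\top$, which gives $\|\Delta^{(s)}\|\le\|\Delta\|$ deterministically. Theorem~6 then yields the bound simultaneously for all $s$ with no union bound at all, and this is the cleanest route.
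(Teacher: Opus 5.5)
Your proposal is correct, and the shortcut at the end is exactly the paper's proof. Since $\Delta^{(s)}$ is obtained from $\Delta$ by zeroing the $s$-th row and column, that is, $\Delta^{(s)}=P_s\Delta P_s$ with $P_s=I-e_se_s^\top$, one has $\|Z^{(s)}-AA^\top\|=\|\Delta^{(s)}\|\le\|\Delta\|$ deterministically for every $s$. The bound then follows directly from Theorem~6 on a single event, with no union bound.

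Your primary route is also valid. It identifies $\Delta^{(s)}$ as the sub-sum over wedges not touching $s$, dominates the variance proxy in Loewner order by the full one, applies matrix Bernstein with $a+1$, and takes a union bound over $s$; the failure probability $n(2n)^{-(a+1)}\le n^{-a}$ checks out. However, it re-derives what the compression inequality gives for free, at the cost of adjusting the constant. It would only be worth the extra work if you needed a bound on $\Delta^{(s)}$ sharper than the bound on $\Delta$, which is not the case here.
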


\begin{proof}
    Since $\Delta^{(s)}$ is a submatrix of $\Delta$, we have $\left\| Z^{(s)}-AA^\top\right\| =\left\|\Delta^{(s)}\right\|\leq \left\|\Delta\right\|$. Then the claim follows from Theorem~6.
\end{proof}

\begin{lemma}\label{lemma:Davis_Kahan_loo} 
When $p\geq \frac{C_0\kappa^4 a \mu_1\mu_2 r^2\log n}{mn}$, with probability at least $1-O(n^{-a})$, we have for all $s\in [n]$,
    \begin{align}
        \|H^{-1}\|  &\leq 2, \label{eq:415claim2}\\
        \|(H^{(s)})^{-1}\| &\leq 2,\label{eq:415claim3}\\
        \left\|\hat{U} \hat{U}^\top -(\hat{U}^{(s)})(\hat{U}^{(s)})^\top\right\| &\leq \edit{\frac{C\left\|\left(Z-Z^{(s)}\right) \hat{U}^{(s)}\right\|}{\lambda_r}}\label{eq:415claim4},
    \end{align}
    where $H = \hat{U}^\top U$ and $U, \hat{U}$ are  the same as in Theorem~7.
\end{lemma}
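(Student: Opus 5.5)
All three claims will be derived on the single event of Lemma~\ref{lemma:spectral_gap} intersected with Lemma~\ref{lemma:submatrix}, which holds with probability $1-O(n^{-a})$. On this event we have $\|\Delta\|\le \lambda_r/4$ and $\max_s\|\Delta^{(s)}\|\le\|\Delta\|\le\lambda_r/4$. Every argument below is deterministic given these two inequalities, so the bounds hold simultaneously for all $s\in[n]$ and no union bound is needed beyond what Lemma~\ref{lemma:submatrix} already provides.

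For \eqref{eq:415claim2}, I apply Lemma~\ref{lem:DK} with $M=AA^\top$ and $\hat M=Z$. The gap is $\delta=\lambda_r-\lambda_{r+1}(AA^\top)=\lambda_r$ because $\operatorname{rank}(A)=r$, and the perturbation satisfies $\|\Delta\|\le\lambda_r/4\le\delta/2$. Hence $\|\sin\Theta\|\le 2\|\Delta\|/\lambda_r\le 1/2$. Since $H=U_H\cos(\Theta)V_H^\top$, its smallest singular value is $\min_i\cos\theta_i=\sqrt{1-\|\sin\Theta\|^2}\ge\sqrt{3}/2$. Therefore $\|H^{-1}\|\le 2/\sqrt3\le 2$. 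The same argument with $Z^{(s)}$ in place of $Z$ gives \eqref{eq:415claim3}, because $\|Z^{(s)}-AA^\top\|=\|\Delta^{(s)}\|\le\lambda_r/4$.

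For \eqref{eq:415claim4}, I apply Lemma~\ref{lem:DK} a second time, now with $M=Z^{(s)}$ and $\hat M=Z$. By Weyl's inequality, $\lambda_r(Z^{(s)})\ge \lambda_r-\|\Delta^{(s)}\|\ge 3\lambda_r/4$ and $\lambda_{r+1}(Z^{(s)})\le\|\Delta^{(s)}\|\le\lambda_r/4$, so the gap of $Z^{(s)}$ is at least $\lambda_r/2$. I then need $\|Z-Z^{(s)}\|\le \lambda_r/4$. This holds because $Z-Z^{(s)}=\Delta-\Delta^{(s)}$ is supported on the $s$-th row and column of $\Delta$, and its norm is at most $2\|\Delta\|$. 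If this gives a factor $2$ that is too large, I either shrink the constant (take $C_0$ large so that $\|\Delta\|\le\lambda_r/16$ in Lemma~\ref{lemma:spectral_gap}) or use the variant of Davis–Kahan that needs only the eigenvalue separation computed from Weyl. Lemma~\ref{lem:DK}, in its first form $\|\sin\Theta\|\le 2\|\Delta U\|/\delta$, with unperturbed eigenvectors $\hat U^{(s)}$, then yields
\[
\bigl\|\hat U\hat U^\top-\hat U^{(s)}(\hat U^{(s)})^\top\bigr\|=\|\sin\Theta\|\le \frac{2\|(Z-Z^{(s)})\hat U^{(s)}\|}{\lambda_r/2}=\frac{4\|(Z-Z^{(s)})\hat U^{(s)}\|}{\lambda_r}.
\]
The equality is the first identity in Lemma~\ref{lem:sintheta_bounds}.

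The main obstacle is bookkeeping: the eigenvalues of $Z^{(s)}$ are indexed by absolute value, so I must check that the top-$r$ eigenvalues in absolute value coincide with the top-$r$ algebraic eigenvalues. This follows from the Weyl bounds above, since the remaining eigenvalues have absolute value at most $\lambda_r/4<3\lambda_r/4$. The same check must be made for $Z$. Beyond that, the proof only requires $C_0$ large enough for all the constant-factor requirements.
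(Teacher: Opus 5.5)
Your proof is correct and follows the paper's argument. The first two claims come from Davis--Kahan against $AA^\top$ on the event $\lambda_r\ge 4\|\Delta\|$; you lower-bound $\sigma_{\min}(H)$ via $\cos\theta_{\max}=\sqrt{1-\|\sin\Theta\|^2}$ where the paper uses $\|H-\sgn(H)\|\le\|\sin\Theta\|^2$, an equivalent step. The third claim comes from a second Davis--Kahan with $Z^{(s)}$ as the unperturbed matrix. Your treatment of the constants is also more careful than the paper's: its bound $\|Z-Z^{(s)}\|\le\lambda_r/2$ does not by itself meet the hypothesis $\|Z-Z^{(s)}\|\le\delta/2$ when the gap $\delta$ is only known to be at least $\lambda_r/2$, and enlarging $C_0$ or using the Weyl-based separation repairs this. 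Together with your absolute-value ordering check for $Z^{(s)}$, this fills in what the paper summarizes as ``an eigengap at least a constant multiple of $\lambda_r$.''
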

\begin{proof}
 We write $H=U_H\cos(\Theta) V_H^\top$ as in the previous section.
From Lemma~\ref{lemma:spectral_gap}, we have with probability $1-O(n^{-a})$,  $\lambda_r \geq 4\|Z-AA^\top\|=4\|\Delta\|$.   Hence from Lemmas~\ref{lem:sintheta_bounds} and~\ref{lem:DK},
\begin{align}
    \|H-\sgn(H)\|^{1/2} \leq \|\sin(\Theta)\|\leq \frac{2\|\Delta\|}{\lambda_r} \leq \frac12.
\end{align}
As a result,
\[\sigma_{\min}(H)\geq \sigma_{\min}(\sgn(H))- \|H-\sgn(H)\| =1-\|H-\sgn(H)\|\geq \frac34,\]
which shows \eqref{eq:415claim2}. Applying the same proof to $\|H^{(s)}\|$ by using Lemma~\ref{lemma:submatrix} yields \eqref{eq:415claim3}.
Since \[\lambda_r \geq 4\|\Delta\|\geq 2 \|\Delta-\Delta^{(s)}\|= 2\|Z-Z^{(s)}\|,\] \edit{and \(Z^{(s)}\) has an eigengap at least a constant multiple of \(\lambda_r\),} applying Davis-Kahan inequality (Lemma~\ref{lem:DK}) to  $Z=Z^{(s)}+(Z-Z^{(s)})$ implies \eqref{eq:415claim4}.
\end{proof}

\begin{lemma}\label{lem:FixF}
        For any fixed matrix $F\in \R^{n\times r}$, with probability $1-O(n^{-a})$,  
        \begin{align}
            \|(Z-AA^\top) F\|_{2,\infty} \leq C\left(\sqrt{\frac{a\mu_1^2\mu_2 r^3 \log n}{pmn^2}} \|F\|_{F}+\frac{a\mu_1\mu_2 r^2 \log n}{pmn} \|F\|_{2,\infty}\right) \|A\|^2.
        \end{align}
        In particular, 
        \begin{align}
    \|(Z-AA^\top) U\|_{2,\infty} \leq C\left(\sqrt{\frac{a\mu_1^2\mu_2 r^4 \log n}{pmn^2}} +\frac{a\mu_1\mu_2 r^2 \log n}{pmn} \|U\|_{2,\infty}\right) \|A\|^2  . \label{eq:E44}
\end{align}
\end{lemma}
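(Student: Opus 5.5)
The plan is to bound one row at a time and then take a union bound over $s\in[n]$. Fix $s\in[n]$. The $s$-th row of $(Z-AA^\top)F$ is
\[
e_s^\top \Delta F=\sum_{1\le i\le j\le n,\ \ell\in[m]} e_s^\top S_{i\ell j}F .
\]
Only wedges with $s\in\{i,j\}$ contribute. For $j\neq s$ the wedge $(\min\{s,j\},\ell,\max\{s,j\})$ contributes $p^{-1}(\1\{\cdot\in\tilde{\mathcal W}\}-p)A_{s\ell}A_{j\ell}F_{j,\cdot}$, and the diagonal wedge $(s,\ell,s)$ contributes $p^{-1}(\1-p)A_{s\ell}^2F_{s,\cdot}$. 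This writes the row as a sum of independent, centered random vectors $v_{j\ell}\in\R^{1\times r}$, indexed by $(j,\ell)\in[n]\times[m]$. I will apply Lemma~\ref{thm:(Matrix-Bernstein-inequality)}, treating each $v_{j\ell}$ as a $1\times r$ matrix; then $\log(d_1+d_2)\lesssim\log n$, since $r\le n$.

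\emph{Uniform bound.} For each term,
\[
\|v_{j\ell}\|\le p^{-1}\|A\|_{\max}^2\|F\|_{2,\infty}\le \frac{\mu_1\mu_2r^2}{pmn}\|A\|^2\|F\|_{2,\infty}
\]
by Lemma~\ref{lem:matrix_norm}. Multiplied by $a\log n$, this gives the second term of the claim.

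\emph{Variance.} Both $\sum\E v v^\top$ and $\|\sum \E v^\top v\|$ are bounded by
\[
\sum_{j,\ell}\E\|v_{j\ell}\|^2\le p^{-1}\sum_{j,\ell}A_{s\ell}^2A_{j\ell}^2\|F_{j,\cdot}\|^2\le p^{-1}\|A\|_{\max}^2\sum_j\|F_{j,\cdot}\|^2\sum_\ell A_{s\ell}^2 .
\]
Here I bound $A_{j\ell}^2\le\|A\|_{\max}^2$, which lets the sum over $j$ collapse to $\|F\|_F^2$. The last factor is at most $\|A\|_{2,\infty}^2$. Using Lemma~\ref{lem:matrix_norm}, the variance is at most
\[
p^{-1}\cdot\frac{\mu_1\mu_2r^2}{mn}\|A\|^2\cdot\frac{\mu_1 r}{n}\|A\|^2\,\|F\|_F^2=\frac{\mu_1^2\mu_2r^3}{pmn^2}\|A\|^4\|F\|_F^2 .
\]
Its square root times $\sqrt{a\log n}$ is exactly the first term. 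Applying Bernstein with exponent $a+1$ and a union bound over the $n$ rows gives probability $1-O(n^{-a})$.

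\emph{Specialization.} The particular case \eqref{eq:E44} follows by taking $F=U$, which is deterministic, with $\|U\|_F=\sqrt r$; this turns $r^3$ into $r^4$.

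The only delicate point is the variance step. One must avoid using the cruder estimate $\|A\|_{\infty,2}^2$ for the $j$-sum, and instead keep $\|F\|_F$ by bounding $A_{j\ell}^2$ entrywise. Only this ordering of the bounds produces the extra $1/n$ factor relative to Theorem~\ref{thm:wedge_concentration}.
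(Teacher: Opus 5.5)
Your proposal is correct and follows the paper's proof essentially step for step. It uses the same row-wise decomposition into independent $1\times r$ summands indexed by $(j,\ell)$, the same uniform bound $p^{-1}\|A\|_{\max}^2\|F\|_{2,\infty}$, and the same variance bound $p^{-1}\|A\|_{\max}^2\|A\|_{2,\infty}^2\|F\|_F^2$ obtained by bounding $A_{j\ell}^2$ entrywise, followed by matrix Bernstein and a union bound over rows. Applying Bernstein with exponent $a+1$ before the union bound is slightly more careful bookkeeping of the failure probability than the paper's, and it only changes constants.
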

\begin{proof}
We have $\|\Delta F\|_{2,\infty} =\max_i  \| e_i^\top \Delta F\| $. For each $i$,  we can write $  e_i^\top \Delta F $ as
\begin{align} e_i^\top \Delta F= \sum_{j=1}^n (\Delta_{ij} F_{j \cdot})=  \sum_{j,\ell} S_{i\ell j} F_{j \cdot},
\end{align} 
where $S_{i\ell j}$ was defined in~\eqref{eq:def_Silj}. This decomposes $e_i^\top \Delta F$ as a sum of $mn$ independent random matrices of size $1\times r$. We have from \eqref{eq:Sikj_bound},
\begin{align}
\label{eq:norm_split}
    \|S_{itj} F_{j \cdot}\|_2\leq \|S_{itj}\| \sup_{j} \|F_{j,\cdot}\|\leq \frac{\sigma_1^2\mu_1\mu_2 r^2}{pmn} \|F\|_{2,\infty},
\end{align}
and 
\begin{align}
   \sigma^2&=\max \left\{  \left \| \sum_{j,\ell} A_{i\ell}^2 A_{j\ell}^2 p^{-1} F_{j,\cdot}^\top F_{j,\cdot}\right\|,\left \| \sum_{j,\ell} A_{i\ell}^2 A_{j\ell}^2 p^{-1} F_{j,\cdot} F_{j,\cdot}^\top\right\|\right\}\\
   &\leq \sum_{j,\ell} A_{i\ell}^2 A_{j\ell}^2 p^{-1}\|F_{j,\cdot}\|_2^2\\
   &\leq p^{-1}\|A\|_{\max}^2 \sum_{j,\ell } A_{i\ell}^2 \|F_{j,\cdot}\|^2\\
   &\leq p^{-1}\|A\|_{\max}^2 \|A\|_{2,\infty}^2 \|F\|_F^2\leq \frac{\mu_1^2\mu_2r^3}{\edit{p}mn^2}\|A\|^4 \|F\|_F^2.
\end{align}
Applying the matrix Bernstein's inequality (Lemma~\ref{thm:(Matrix-Bernstein-inequality)}) and taking a union bound over $i\in [n]$ finishes the proof of the first claim. Now we take $F=U$. Since $\|U\|_F=\sqrt{r}$, we obtain \eqref{eq:E44}.
\end{proof}

\begin{lemma}
With probability $1-O(n^{1-a})$, for all $s\in [n]$,
\begin{align}
    \|\Delta_{s, \cdot} (\hat{U}^{(s)} H^{(s)}-U) \| \leq C\left(\sqrt{\frac{a\mu_1^2\mu_2 r^4 \log n}{pmn^2}} +\frac{a\mu_1\mu_2 r^2 \log n}{pmn} \|\hat{U}^{(s)} H^{(s)}-U\|_{2,\infty}\right) \|A\|^2  . \label{eq:step2}
\end{align}
\end{lemma}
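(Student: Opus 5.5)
The plan is the standard leave-one-out decoupling. For each fixed $s$, $\Delta^{(s)}$ vanishes in row and column $s$, so $Z^{(s)}=AA^\top+\Delta^{(s)}$ depends only on the wedge indicators $\1\{(i,\ell,j)\in\tilde{\mathcal W}\}$ with $s\notin\{i,j\}$. The row $\Delta_{s,\cdot}$ is instead built from $S_{s\ell j}$ (or $S_{j\ell s}$), i.e.\ wedges with $s$ as an endpoint. These two families of indicators are disjoint and independent. Hence, conditionally on $\{\1\{(i,\ell,j)\in\tilde{\mathcal W}\}: s\notin\{i,j\}\}$, the matrix $F^{(s)}:=\hat U^{(s)}H^{(s)}-U\in\R^{n\times r}$ is deterministic, while $\Delta_{s,\cdot}$ is still a sum of independent centered terms.

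Next I would rerun the argument of Lemma~\ref{lem:FixF} for the single row $s$, conditionally on the above sigma-field, with $F=F^{(s)}$. Writing
\[
e_s^\top\Delta F^{(s)}=\sum_{j\in[n],\,\ell\in[m]} S_{s\ell j}F^{(s)}_{j\cdot},
\]
where for $j<s$ the summand is read as the $(s,j)$ entry of $S_{j\ell s}$, each summand is a $1\times r$ independent centered matrix. By \eqref{eq:Sikj_bound} its norm is at most $\frac{\sigma_1^2\mu_1\mu_2r^2}{pmn}\|F^{(s)}\|_{2,\infty}$. The variance computation is identical to that in Lemma~\ref{lem:FixF} and gives $\sigma^2\le \frac{\mu_1^2\mu_2r^3}{pmn^2}\|A\|^4\|F^{(s)}\|_F^2$. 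Applying Lemma~\ref{thm:(Matrix-Bernstein-inequality)} then gives, with conditional probability at least $1-O(n^{-a})$,
\[
\|\Delta_{s,\cdot}F^{(s)}\|\le C\Big(\sqrt{\tfrac{a\mu_1^2\mu_2r^3\log n}{pmn^2}}\|F^{(s)}\|_F+\tfrac{a\mu_1\mu_2r^2\log n}{pmn}\|F^{(s)}\|_{2,\infty}\Big)\|A\|^2 .
\]
Integrating out the conditioning and taking a union bound over $s\in[n]$ yields the stated probability $1-O(n^{1-a})$.

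It remains to bound $\|F^{(s)}\|_F\le C\sqrt r$, which turns the first term into $\sqrt{a\mu_1^2\mu_2r^4\log n/(pmn^2)}$ as in \eqref{eq:step2}. For this I would write $\|F^{(s)}\|_F\le\|\hat U^{(s)}\|_F\|H^{(s)}\|+\|U\|_F\le 2\sqrt r$, using $\|H^{(s)}\|\le1$. This bound is deterministic, so it needs no event at all. Its only cost is a constant, which is absorbed into $C$.

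The main obstacle is the independence bookkeeping. Diagonal wedges $(s,\ell,s)$ enter only $\Delta_{ss}$, and $\Delta^{(s)}$ zeroes that entry, so they also belong to the ``row $s$'' family. The $j=s$ term of the sum must therefore be included in the Bernstein sum rather than dropped. With the two families separated this way, the conditional application of Bernstein is legitimate and the rest is routine.
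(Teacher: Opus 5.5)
Your proposal is correct and follows essentially the same route as the paper. The paper also uses independence of $\Delta_{s,\cdot}$ from $\hat U^{(s)}H^{(s)}-U$, a rerun of the matrix Bernstein argument of Lemma~\ref{lem:FixF} with $F=\hat U^{(s)}H^{(s)}-U$ and $\|F\|_F\le 2\sqrt r$, and a union bound over $s$. Your explicit conditioning on the off-slice wedge indicators, and your inclusion of the diagonal wedges $(s,\ell,s)$ in the Bernstein sum, just spell out what the paper states more tersely.
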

\begin{proof}
  By our definition of the leave-one-out sequence, $\Delta_{s,\cdot}$ and $\hat{U}^{(s)} H^{(s)}-U$ are independent.
On the other hand,  \[\|\hat{U}^{(s)} H^{(s)}-U\|_F\leq  \| \hat{U}^{(s)} H^{(s)}\|_F +\|U\|_F\leq 2\sqrt{r}.\] 
Let $F=\hat{U}^{(s)} H^{(s)}-U$. Then 
\begin{align}
% \Delta_{s,\cdot } F=\sum_{j} \Delta_{sj} F_{j,\cdot}=\sum_{j}\sum_{\ell} A_{s\ell} A_{j\ell}\,  p^{-1} (\1\{ (i,\ell,j)\in \mathcal W\}-p) F_{j,\cdot}
\Delta_{s,\cdot}F=\sum_{j=1}^n\sum_{\ell=1}^m A_{s\ell}A_{j\ell}\,p^{-1}\!\left(\1\{(\min\{s,j\},\ell,\max\{s,j\})\in\edit{\tilde{\mathcal W}}\}-p\right)F_{j,\cdot}
\end{align}
is a sum of $mn$ many independent random matrices of size $1\times r$. Repeating the proof of Lemma~\ref{lem:FixF} with the matrix Bernstein's inequality, we obtain  \eqref{eq:step2} by taking a union bound over $s\in [n]$.
\end{proof}

\begin{lemma}\label{lemma:LOO_concentration}
    For any fixed matrix $F\in \R^{n\times r}$, with probability $1-O(n^{1-a})$ for all $s\in [n]$,
    \begin{align}
        \|(Z-Z^{(s)})F\| \leq C\left(\sqrt{\frac{a\mu_1\mu_2 r^2 \log n}{pmn}} +\frac{a\mu_1\mu_2 r^2 \log n}{pmn} \right) \|F\|_{2,\infty}\|A\|^2.\label{eq:E550}
    \end{align}
    In particular, \begin{align}\label{eq:E55}
     \|(Z-Z^{(s)})\hat{U}^{(s)}\|\leq C\left(\sqrt{\frac{a\mu_1\mu_2 r^2 \log n}{pmn}}+\frac{a\mu_1\mu_2 r^2 \log n}{pmn}\right)  \|\hat{U}^{(s)}\|_{2,\infty} \|A\|^2.
\end{align}
\end{lemma}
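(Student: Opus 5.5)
The plan is to write out $Z-Z^{(s)}=\Delta-\Delta^{(s)}$ explicitly and split it into a row piece and a column piece. By construction, $\Delta^{(s)}$ is $\Delta$ with its $s$-th row and column set to zero. Hence $\Delta-\Delta^{(s)}=e_se_s^\top\Delta+\Delta e_se_s^\top-\Delta_{ss}e_se_s^\top$, and since $\Delta$ is symmetric,
\[
\|(Z-Z^{(s)})F\|\le \|\Delta_{s,\cdot}F\|+\|\Delta_{\cdot,s}\|\,\|F_{s,\cdot}\|+|\Delta_{ss}|\,\|F_{s,\cdot}\|\le \|\Delta_{s,\cdot}F\|+2\|\Delta_{s,\cdot}\|_2\|F\|_{2,\infty}.
\]
It therefore suffices to bound $\|\Delta_{s,\cdot}\|_2$ and $\|\Delta_{s,\cdot}F\|_2$. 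Each should be at most the bracket in \eqref{eq:E550} times $\|A\|^2$, with the second also carrying a factor $\|F\|_{2,\infty}$.

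For $\|\Delta_{s,\cdot}\|_2$, I will use $\|\Delta_{s,\cdot}\|_2\le\|\Delta\|$ and invoke Theorem~6 directly. That gives the bracket times $\|A\|^2$ with probability $1-O(n^{-a})$, simultaneously for all $s$. Alternatively, one can apply vector Bernstein to $\Delta_{s,\cdot}=\sum_{j,\ell}S_{s\ell j}$ restricted to row $s$, which gives the same order. Either route handles the column contribution.

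For the row term $\Delta_{s,\cdot}F=\sum_{j,\ell}p^{-1}(\1\{\cdot\}-p)A_{s\ell}A_{j\ell}F_{j,\cdot}$, I will repeat Lemma~\ref{lem:FixF} but measure $F$ in $\ell_{2,\infty}$ in both the uniform bound and the variance. Each summand is a $1\times r$ vector of norm at most $p^{-1}\|A\|_{\max}^2\|F\|_{2,\infty}\le \frac{\mu_1\mu_2r^2}{pmn}\|A\|^2\|F\|_{2,\infty}$. The variance is at most
\[
p^{-1}\sum_{j,\ell}A_{s\ell}^2A_{j\ell}^2\|F_{j,\cdot}\|^2\le p^{-1}\|F\|_{2,\infty}^2\|A\|_{2,\infty}^2\|A\|_{\infty,2}^2\le \frac{\mu_1\mu_2r^2}{pmn}\|A\|^4\|F\|_{2,\infty}^2,
\]
using the computation from the proof of Theorem~6 and Lemma~\ref{lem:matrix_norm}. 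Matrix Bernstein (Lemma~\ref{thm:(Matrix-Bernstein-inequality)}) plus a union bound over $s\in[n]$ then gives \eqref{eq:E550} with probability $1-O(n^{1-a})$.

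The main obstacle is \eqref{eq:E55}, because $\hat U^{(s)}$ is not a fixed matrix. The saving point is that $\hat U^{(s)}$ is a function only of $\Delta^{(s)}$, i.e.\ of the wedge indicators $\delta_{i\ell j}$ with $s\notin\{i,j\}$. Meanwhile $\Delta_{s,\cdot}$ and $\Delta_{\cdot,s}$ depend only on the indicators with $s\in\{i,j\}$, which form a disjoint family of independent variables. So I will condition on $\Delta^{(s)}$, apply the fixed-$F$ bound to $F=\hat U^{(s)}$ conditionally, and then remove the conditioning and take the union bound over $s$. The column terms only need the deterministic bound $\|\Delta_{s,\cdot}\|\le\|\Delta\|$ together with $\|F_{s,\cdot}\|\le\|\hat U^{(s)}\|_{2,\infty}$, so they raise no dependence issue.
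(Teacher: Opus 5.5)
Your proof is correct, but you decompose $(Z-Z^{(s)})F$ differently from the paper.

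The paper writes $(Z-Z^{(s)})F=(\Delta-\Delta^{(s)})F$ as a single sum of $mn$ independent $n\times r$ matrices, one per wedge containing $s$, and applies matrix Bernstein once. This handles the row and column contributions together. For $F=\hat U^{(s)}$ it then uses that the whole $s$-th row and column of $\Delta$ is independent of $\hat U^{(s)}$.

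You instead peel off the column part and bound it deterministically by $2\|\Delta\|\,\|F\|_{2,\infty}$ on the event of Theorem~\ref{thm:wedge_concentration}. Only the row part $\Delta_{s,\cdot}F$ then needs a vector Bernstein bound and the leave-one-out conditioning. As a result, the column part holds on one global event, uniformly in $s$ and in $F$, including data-dependent $F$.

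Your variance bound $p^{-1}\|F\|_{2,\infty}^2\|A\|_{2,\infty}^2\|A\|_{\infty,2}^2$ also gives exactly the displayed rate $\mu_1\mu_2r^2/(pmn)$. Following the paper's hint literally does not: the $\|F\|_F$-based variance of Lemma~\ref{lem:FixF}, combined with $\|F\|_F\le\sqrt n\,\|F\|_{2,\infty}$, gives a variance of order $\mu_1^2\mu_2r^3/(pmn)$. The leading term is then of order $\eta=\sqrt{a\mu_1^2\mu_2r^3\log n/(pmn)}$ rather than $\sqrt{a\mu_1\mu_2r^2\log n/(pmn)}$. This loss is harmless downstream, because Lemma~\ref{lemma:E6} only uses \eqref{eq:E55} at scale $\eta$, but your computation is the one that matches the statement as written.

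In the other direction, the paper's single Bernstein application is more compact and does not need Theorem~\ref{thm:wedge_concentration} as an input.
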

\begin{proof} 
Denote the centered and normalized Bernoulli random variable
\begin{align}
    w_{i\ell j}=p^{-1} \left(\1\{ (i,\ell,j)\in \edit{\tilde{\mathcal W}}\}-p\right).
\end{align}
We have  
\begin{align}(Z-Z^{(s)})F&=(\Delta-\Delta^{(s)})F\\
&=\sum_{i\neq s} \Delta_{is}(E_{si}+ E_{is}) F + \Delta_{ss}E_{ss} F\\
&=\sum_{i\neq s}\sum_{\ell} w_{i\ell s} A_{i\ell}A_{s\ell}  (E_{is}+E_{si}) F+\sum_{\ell} w_{s\ell s}A_{s\ell}^2 E_{ss} F.
\end{align}
which is a sum of $mn$ many independent random matrices. Similar to the proof of Lemma~\ref{lem:FixF}, 
we can apply the matrix Bernstein's inequality (Lemma~\ref{thm:(Matrix-Bernstein-inequality)}) again to obtain \eqref{eq:E550}, where we also use the inequality $\|F\|_F \leq \sqrt{n}\|F\|_{2,\infty}$.
Since $Z-Z^{(s)}$ is independent of $\hat{U}^{(s)}$, applying \eqref{eq:E550}, we obtain with probability $1-n^{-a}$, for all $s\in [n]$, \eqref{eq:E55} holds.
\end{proof}

We now introduce the following error parameter 
\begin{align}
    \eta:=\sqrt{\frac{a\mu_1^2\mu_2 r^3 \log n}{pmn}}.
\end{align}
With all the previous lemmas in this subsection, we conclude with the following $\ell_{2,\infty}$-bound.
\begin{lemma}\label{lemma:E6}
Assume \edit{$p\geq \frac{C_0\kappa^4 a \mu_1^2\mu_2 r^3\log n}{mn}$} for some absolute constant $C_0>0$.  We have with probability $1-O(n^{1-a})$,
    \begin{align}
        \|Z(\hat U H-U)\|_{2,\infty} \leq 4 C\|A\|^2 \eta (\| \hat{U} H-U\|_{2,\infty} +2\kappa^2\|U\|_{2,\infty}).
    \end{align}
\end{lemma}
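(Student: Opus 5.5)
We bound each row $e_s^\top Z(\hat UH-U)$ separately and take a maximum over $s\in[n]$. Splitting $Z=AA^\top+\Delta$, we write
\begin{align}
e_s^\top Z(\hat UH-U) = e_s^\top AA^\top(\hat UH-U) + \Delta_{s,\cdot}(\hat UH-U).
\end{align}

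The first term is deterministic given $\hat U$. By Lemma~\ref{lem:norm_bound} and Lemma~\ref{lem:matrix_norm},
\begin{align}
\|e_s^\top AA^\top(\hat UH-U)\|\le \|A\|_{2,\infty}\|A\|\,\|\hat UH-U\| \le \sqrt{\mu_1 r/n}\,\|A\|^2\,\|\hat UH-U\|.
\end{align}
Here $\|\hat UH-U\|=\|(\hat U\hat U^\top-UU^\top)U\|\le\|\sin\Theta\|\lesssim\kappa^2\sqrt{a\mu_1\mu_2r^2\log n/(pmn)}$ by the $\ell_2$ part of Theorem~7. Since incoherence gives $\|U\|_{2,\infty}\ge\sqrt{r/n}$, this term is at most a constant times $\|A\|^2\eta\,\kappa^2\|U\|_{2,\infty}$, with $\eta=\sqrt{a\mu_1^2\mu_2r^3\log n/(pmn)}$.

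For the second term, $\Delta_{s,\cdot}$ and $\hat U$ are dependent, so we use the leave-one-out sequence:
\begin{align}
\Delta_{s,\cdot}(\hat UH-U)=\Delta_{s,\cdot}(\hat U^{(s)}H^{(s)}-U)+\Delta_{s,\cdot}(\hat UH-\hat U^{(s)}H^{(s)}).
\end{align}

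The first piece is controlled by \eqref{eq:step2}, which gives
\begin{align}
\|A\|^2\Bigl(\sqrt{a\mu_1^2\mu_2r^4\log n/(pmn^2)}+\tfrac{a\mu_1\mu_2r^2\log n}{pmn}\,\|\hat U^{(s)}H^{(s)}-U\|_{2,\infty}\Bigr).
\end{align}
Its first summand equals $\eta\sqrt{r/n}\le\eta\|U\|_{2,\infty}$. For the second summand, the lower bound on $p$ gives $\frac{a\mu_1\mu_2r^2\log n}{pmn}\le\eta$. We also need
\begin{align}
\|\hat U^{(s)}H^{(s)}-U\|_{2,\infty}\le\|\hat UH-U\|_{2,\infty}+\|\hat UH-\hat U^{(s)}H^{(s)}\|,
\end{align}
so everything reduces to bounding the perturbation difference.

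For the second piece, $\|\Delta_{s,\cdot}\|\le\|\Delta\|\lesssim\eta\|A\|^2$ by Theorem~6, so it suffices to bound $\|\hat UH-\hat U^{(s)}H^{(s)}\|$ in operator norm. Since $\hat UH-\hat U^{(s)}H^{(s)}=(\hat U\hat U^\top-\hat U^{(s)}\hat U^{(s)\top})U$, this is at most $\|\hat U\hat U^\top-\hat U^{(s)}\hat U^{(s)\top}\|$. Combining \eqref{eq:415claim4} with \eqref{eq:E55},
\begin{align}
\|\hat U\hat U^\top-\hat U^{(s)}\hat U^{(s)\top}\|\lesssim \frac{\|A\|^2}{\lambda_r}\sqrt{\frac{a\mu_1\mu_2r^2\log n}{pmn}}\,\|\hat U^{(s)}\|_{2,\infty}\le\kappa^2\eta\|\hat U^{(s)}\|_{2,\infty}.
\end{align}

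Next we control $\|\hat U^{(s)}\|_{2,\infty}$ via \eqref{eq:415claim3}:
\begin{align}
\|\hat U^{(s)}\|_{2,\infty}\le2\|\hat U^{(s)}H^{(s)}\|_{2,\infty}\le 2\bigl(\|\hat UH-U\|_{2,\infty}+\|U\|_{2,\infty}+\|\hat UH-\hat U^{(s)}H^{(s)}\|\bigr).
\end{align}
When $\kappa^2\eta$ is smaller than a small constant, which is guaranteed by the hypothesis on $p$, the difference term on the right can be absorbed into the left-hand side. This yields
\begin{align}
\|\hat UH-\hat U^{(s)}H^{(s)}\|\lesssim\kappa^2\eta\bigl(\|\hat UH-U\|_{2,\infty}+\|U\|_{2,\infty}\bigr).
\end{align}

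Collecting all pieces, each contribution is $\|A\|^2\eta$ times a combination of $\|\hat UH-U\|_{2,\infty}$ and $\kappa^2\|U\|_{2,\infty}$, with any extra factors of $\eta$ or $\kappa^2\eta$ bounded by $1$. Choosing constants gives the claimed form $4C\|A\|^2\eta(\|\hat UH-U\|_{2,\infty}+2\kappa^2\|U\|_{2,\infty})$. Union bounds over $s$ account for the $1-O(n^{1-a})$ probability.

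The main obstacle is the self-referential absorption step in bounding $\|\hat U^{(s)}\|_{2,\infty}$ and $\|\hat UH-\hat U^{(s)}H^{(s)}\|$, together with keeping exact track of the constants $4$ and $2\kappa^2$. If these exact constants prove awkward, we would settle for absolute constants and adjust $C$ accordingly.
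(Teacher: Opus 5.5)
Your proof is correct and follows the paper's argument essentially step for step. You use the same split $Z=AA^\top+\Delta$, the same leave-one-out decomposition of $\Delta_{s,\cdot}(\hat UH-U)$ via \eqref{eq:step2}, \eqref{eq:415claim4}, \eqref{eq:E55}, and the same self-bounding absorption of $\|\hat UH-\hat U^{(s)}H^{(s)}\|$ using $\|(H^{(s)})^{-1}\|\le 2$ and $\kappa^2\eta\ll 1$.

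The only deviation is the $AA^\top$ term. You bound it to first order by $\|A\|_{2,\infty}\|A\|\,\|\sin(\Theta)\|$, whereas the paper uses the second-order bound $\|U\|_{2,\infty}\|A\|^2\|\sin(\Theta)\|^2$. Your bound is legitimate, since $\sqrt{\mu_1 r/n}\cdot\kappa^2\sqrt{a\mu_1\mu_2r^2\log n/(pmn)}=\kappa^2\eta/\sqrt{n}\le\kappa^2\eta\|U\|_{2,\infty}$. Note that the last inequality comes from orthonormality of $U$, not from incoherence as you state.
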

\begin{proof}
    By the triangle inequality,
    \begin{align}
    \|Z(\hat U H-U)\|_{2,\infty} \leq \|\Delta(\hat U H-U)\|_{2,\infty}    + \|AA^\top(\hat U H-U)\|_{2,\infty}.
    \end{align}
   We bound the two terms on the right-hand side in two steps. 

\medskip 

\paragraph{Step 1: bounding $\|\Delta(\hat U H-U)\|_{2,\infty}$.}
We further decompose the error into two terms:
\begin{align}
    \|\Delta(\hat U H-U)\|_{2,\infty} &=\max_{s \in [n]} \| \Delta_{s,\cdot}(\hat U H-U) \|\\
    &\leq  \max_{s \in [n]} \| \Delta_{s,\cdot}(\hat U H-\hat{U}^{(s)}H^{(s)}) \|+\max_{s \in [n]} \|\Delta_{s,\cdot} (\hat{U}^{(s)}H^{(s)}-U) \| \label{eq:step1}.
\end{align}
   For the first term in \eqref{eq:step1},
   \begin{align}\label{eq:term1_LOO}
     \max_{s} \| \Delta_{s,\cdot}(\hat U H-\hat{U}^{(s)}H^{(s)}) \|\leq \|\Delta\| \max_{s}   \|\hat U H-\hat{U}^{(s)}H^{(s)}\| .
   \end{align}
Note that for all $s\in [n]$, with probability $1-O(n^{1-a})$,
\begin{align}
  \|\hat U H-\hat{U}^{(s)}H^{(s)}\| &=\| \hat{U} \hat{U}^\top U-\hat{U}^{(s)} (\hat{U}^{(s)})^\top U\|\\
  &\leq \| \hat{U} \hat{U}^\top -\hat{U}^{(s)} (\hat{U}^{(s)})^\top \|\\
  &\leq \edit{\frac{C\|(Z-Z^{(s)}) \hat{U}^{(s)} \|}{\lambda_r}}\\
  &\leq 4C\kappa^2 \eta \|\hat{U}^{(s)}\|_{2,\infty}  \\
  &= 4C\kappa^2 \eta \| \hat{U}^{(s)} H^{(s)} (H^{(s)})^{-1}\|_{2,\infty}\\
  &\leq 8C\kappa^2 \eta\| \hat{U}^{(s)} H^{(s)}\|_{2,\infty}\\
  & \leq 8C\kappa^2 \eta (\| \hat{U} H\|_{2,\infty}+ \| \hat U H-\hat{U}^{(s)} H^{(s)}\|_{2,\infty})\\
  & \leq 8C\kappa^2 \eta (\| \hat{U} H\|_{2,\infty}+ \| \hat U H-\hat{U}^{(s)} H^{(s)}\|), \label{eq:self_bounding}
\end{align}
where in the third line we use \eqref{eq:415claim4},  in the fourth line we use \eqref{eq:E55}, and in the sixth line we use \eqref{eq:415claim2}. We can choose $p\geq \frac{C_0\kappa^4 a \mu_1^2\mu_2 r^3\log n}{mn}$ with $C_0$ sufficiently large such that 
\begin{align}\label{eq:gap}
8C\kappa^2 \eta \leq \frac{1}{2}.
\end{align}
Then from \eqref{eq:self_bounding}, we have 
\begin{align}
  \|\hat U H-\hat{U}^{(s)}H^{(s)}\| \leq 16C\kappa^2\eta  \| \hat{U} H\|_{2,\infty}\leq  16C\kappa^2\eta(\| \hat{U} H-U\|_{2,\infty}   +\| U\|_{2,\infty}). \label{eq:gap2}
\end{align}
Then from \eqref{eq:term1_LOO} and Lemma~6, with probability $1-O(n^{1-a})$,
\begin{align}
      &\max_{s} \| \Delta_{s,\cdot}(\hat U H-\hat{U}^{(s)}H^{(s)}) \|\\
      \leq & \|\Delta\|\,\|\hat U H-\hat{U}^{(s)}H^{(s)}\|\\ 
      \leq & (2C\eta)\|A\|^2 \cdot 16C\kappa^2\eta(\| \hat{U} H-U\|_{2,\infty}   +\| U\|_{2,\infty}).\label{eq:hatU}
\end{align}

For the second term in \eqref{eq:step1}, from inequality \eqref{eq:step2},  with probability at least $1-O(n^{1-a})$, for all $s\in [n]$,  
\begin{align}
     &\|\Delta_{s,\cdot} (\hat{U}^{(s)}H^{(s)}-U) \|\\
     &\leq C\|A\|^2 (\eta\sqrt{r/n} +\eta^2 \|\hat{U}^{(s)}H^{(s)}-U\|_{2,\infty})\\
     &\leq C\|A\|^2 \eta \sqrt{r/n}+ C\|A\|^2 \eta^2 (\|\hat{U}^{(s)}H^{(s)}-\hat U H\|_{2,\infty}+\|\hat U H-U\|_{2,\infty})\\
     &\leq C\|A\|^2 \eta \sqrt{r/n}+ C\|A\|^2 \eta^2 (\|\hat{U}^{(s)}H^{(s)}-\hat U H\|+\|\hat U H-U\|_{2,\infty})\\
     &\leq C\|A\|^2 \eta \sqrt{r/n}+ C\|A\|^2 \eta^2 ( (16C\kappa^2 \eta  +1)\| \hat{U} H-U\|_{2,\infty}  + 16C\kappa^2 \eta\|U\|_{2,\infty})\\
     & \leq C\|A\|^2 \eta \sqrt{r/n}+ C\|A\|^2 \eta^2 ( 2\| \hat{U} H-U\|_{2,\infty}  + 16C\kappa^2 \eta\|U\|_{2,\infty})\label{eq:hatU2}
\end{align}
where in the fourth inequality we use \eqref{eq:gap2}  and in the last inequality we use \eqref{eq:gap}.

Then from \eqref{eq:step1}, combining   \eqref{eq:hatU} and \eqref{eq:hatU2},  we find
\begin{align}
\|\Delta(\hat U H-U)\|_{2,\infty}
&\leq C\|A\|^2 \eta \left( \sqrt{r/n} +4\| \hat{U} H-U\|_{2,\infty} +3\|U\|_{2,\infty}\right)\\
& \leq 4C\|A\|^2 \eta \left(  \| \hat{U} H-U\|_{2,\infty} +\|U\|_{2,\infty}\right)\label{eq:step1new},
\end{align} 
where in the second inequality we use $\sqrt{r/n} \leq \|U\|_{2,\infty}$.
\medskip

   \paragraph{Step 2: bounding $\|AA^\top(\hat U H-U)\|_{2,\infty}$.}
Since $AA^\top=U\Sigma^2 U^\top$, we have 
\begin{align}
   \|AA^\top(\hat U H-U)\|_{2,\infty}&=\| U\Sigma^2 (U^\top \hat{U} H-U^\top U)\|_{2,\infty}\\
   &\leq \|U\|_{2,\infty} \|A\|^2 \| U^\top \hat{U} (\hat{U})^\top U-I\|\\
   &=\|U\|_{2,\infty} \|A\|^2 \| UU^\top-\hat U\hat U^\top\|^2\\
   & \leq \|U\|_{2,\infty} \frac{4\|A\|^2\|\Delta\|^2}{\lambda_r^2}\\
   & \leq 16C^2\kappa^4 \eta^2 \|A\|^2  \|U\|_{2,\infty}\\
   &\leq C \kappa^2 \eta \|A\|^2\|U\|_{2,\infty} \label{eq:final_step2}.
\end{align}
In the third line above, we use Lemma~\ref{lem:sintheta_bounds}, in the fourth line we use Davis-Kahan inequality (Lemma~\ref{lem:DK}), in the fifth line we use  Theorem~6, and in the last line we use \eqref{eq:gap}.  

\medskip

   \paragraph{Step 3: Final bound.}
Combining \eqref{eq:step1new} and \eqref{eq:final_step2}, we obtain
   \begin{align}
        \|Z(\hat U H-U)\|_{2,\infty} \leq 4 C\|A\|^2 \eta (\| \hat{U} H-U\|_{2,\infty} +2\kappa^2\|U\|_{2,\infty})
   \end{align}
   as desired.
\end{proof}

\paragraph{Proof of Theorem~7 }
To finish the proof of Theorem~7, we first decompose the error into three parts as follows.
\begin{lemma}\label{lem:three_decomposition}
    With probability $1-O(n^{1-a})$, we have 
\begin{align}
    \|\hat U H-U\|_{2,\infty} \leq \gamma_1+ \gamma_2 +\gamma_3,
\end{align}
where 
\begin{align}\label{eq:error_term}
    \gamma_1&= \frac{2\|Z(\hat U H-U)\|_{2,\infty}}{\lambda_r},\quad 
    \gamma_2= \frac{4\|ZU\|_{2,\infty}\|\Delta\|}{\lambda_r^2}, \quad 
    \gamma_3 =\frac{\|\Delta U \|_{2,\infty}}{\lambda_r}.
\end{align}
\end{lemma}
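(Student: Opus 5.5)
We start from the eigenvector identity $Z\hat U=\hat U\hat\Lambda$, where $\hat\Lambda=\diag(\hat\lambda_1,\dots,\hat\lambda_r)$ holds the top $r$ eigenvalues of $Z$. Multiplying on the right by $\hat\Lambda^{-1}H$ gives $\hat UH=Z\hat U\hat\Lambda^{-1}H$. On the event of Lemma~\ref{lemma:spectral_gap} we have $\lambda_r\ge 4\|\Delta\|$, so Weyl's inequality gives $\hat\lambda_i\ge\lambda_i-\|\Delta\|\ge \tfrac34\lambda_r$. Hence $\hat\Lambda$ is invertible and $\|\hat\Lambda^{-1}\|\le \tfrac{4}{3\lambda_r}\le 2/\lambda_r$. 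The plan is to write $\hat UH-U$ as a sum of three pieces, each matched to one of $\gamma_1,\gamma_2,\gamma_3$.

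The main algebraic step is to commute $\hat\Lambda^{-1}$ past $H$. From $\hat U^\top Z=\hat\Lambda\hat U^\top$ and $AA^\top U=U\Sigma^2$ we get
\[
\hat\Lambda H - H\Sigma^2=\hat U^\top Z U-\hat U^\top AA^\top U=\hat U^\top\Delta U .
\]
Multiplying by $\hat\Lambda^{-1}$ on the left and $\Sigma^{-2}$ on the right gives $\hat\Lambda^{-1}H=H\Sigma^{-2}-\hat\Lambda^{-1}\hat U^\top\Delta U\Sigma^{-2}$.

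We then decompose
\[
\hat UH-U = Z(\hat U\hat\Lambda^{-1}H - U\Sigma^{-2}) + (ZU\Sigma^{-2}-U)
\]
and analyze the pieces as follows.

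\textbf{Second piece.} Since $AA^\top U\Sigma^{-2}=U$, we have $ZU\Sigma^{-2}-U=\Delta U\Sigma^{-2}$. Lemma~\ref{lem:norm_bound} bounds its $\ell_{2,\infty}$ norm by $\|\Delta U\|_{2,\infty}/\lambda_r=\gamma_3$.

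\textbf{First piece.} Substituting the commutation identity gives
\[
Z\hat U\hat\Lambda^{-1}H - ZU\Sigma^{-2} = Z(\hat UH-U)\Sigma^{-2} - Z\hat U\hat\Lambda^{-1}\hat U^\top\Delta U\Sigma^{-2}.
\]
The first term contributes at most $\|Z(\hat UH-U)\|_{2,\infty}/\lambda_r\le\gamma_1$, again by Lemma~\ref{lem:norm_bound}.

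\textbf{Residual term.} Using $Z\hat U\hat\Lambda^{-1}=\hat U$, the residual is $\hat U\hat U^\top\Delta U\Sigma^{-2}$. This is where the form of $\gamma_2$ has to be produced, and I expect it to be the main obstacle. Its natural bound is $\|\hat U\|_{2,\infty}\|\Delta\|/\lambda_r$, but $\gamma_2$ is written in terms of $\|ZU\|_{2,\infty}$ instead. To get there, I would rewrite $\hat U=Z\hat U\hat\Lambda^{-1}$ once more, so the residual becomes $Z\hat U\hat\Lambda^{-1}\hat U^\top\Delta U\Sigma^{-2}$. Then I split $Z\hat U=Z\hat UH H^{-1}=[Z(\hat UH-U)+ZU]H^{-1}$.

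\textbf{Bounding the $ZU$ part.} The $ZU$ term is bounded using $\|H^{-1}\|\le2$ from \eqref{eq:415claim2}, $\|\hat\Lambda^{-1}\|\le 2/\lambda_r$, $\|\Delta\|$, and $\|\Sigma^{-2}\|=1/\lambda_r$. This yields a constant times $\|ZU\|_{2,\infty}\|\Delta\|/\lambda_r^2$. Tracking the constants, this is $\gamma_2$ provided the Weyl step is sharpened to $\|\hat\Lambda^{-1}\|\le 4/(3\lambda_r)$ and $\|H^{-1}\|\le 4/3$, both of which hold on the event $\lambda_r\ge4\|\Delta\|$.

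\textbf{Bounding the leftover $Z(\hat UH-U)$ part.} This term is at most $\|Z(\hat UH-U)\|_{2,\infty}\cdot\frac{8\|\Delta\|}{3\lambda_r^2}\le \frac{2}{3}\|Z(\hat UH-U)\|_{2,\infty}/\lambda_r$, using $\|\Delta\|/\lambda_r\le1/4$. It is absorbed into the slack of $\gamma_1$: the first piece used only $1/\lambda_r$ of the budget $2/\lambda_r$.

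All events come from Lemmas~\ref{lemma:spectral_gap} and~\ref{lemma:Davis_Kahan_loo}, so the conclusion holds with probability $1-O(n^{1-a})$. If the constants turn out not to close exactly, I would loosen the factors $2$ and $4$ in $\gamma_1,\gamma_2$ accordingly; this does not affect the later use of the lemma.
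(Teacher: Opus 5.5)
Your proposal is correct and follows essentially the paper's own argument. It uses the same identity $\hat UH\Sigma^2 - ZU = Z(\hat UH-U) - Z\hat U\hat\Lambda^{-1}\hat U^\top\Delta U$, the same split $Z\hat U = [Z(\hat UH-U)+ZU]H^{-1}$, and the same absorption via $\|\Delta\|\le\lambda_r/4$.

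The sharpened constants you propose are valid but not needed. The bounds $\|H^{-1}\|\le 2$ and $\|\hat\Lambda^{-1}\|\le 2/\lambda_r$ already give exactly the factor $4$ in $\gamma_2$ and the factor $2$ in $\gamma_1$, which is how the paper closes the estimate.
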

\begin{proof}
    Recall $A=U\Sigma V^\top, AA^\top=U\Sigma^2 U^\top$. Let $Z=
\hat U
\hat \Lambda
\hat U^{\top} + \hat U_{\perp}
\hat \Lambda_{\perp}
\hat U_{\perp}^{\top}.$ From Lemma~\ref{lemma:spectral_gap}, we have 
\begin{align}\label{eq:hat_lambda}
    \hat{\lambda}_r\geq \lambda_r-\|\Delta\| \geq \frac{1}{2} \lambda_r.
\end{align}
Therefore $\hat{\Lambda}$ is invertible. We first have
\begin{align}
\| \hat U H-U   \|_{2,\infty}&=\|\hat U H-  AA^\top U\Sigma^{-2} \|_{2,\infty}\\
&\leq \|\hat U H -ZU\Sigma^{-2} \|_{2,\infty} +\|\Delta U \Sigma^{-2}\|_{2,\infty}\\
&\leq \|\hat U H -ZU\Sigma^{-2} \|_{2,\infty} +\frac{\|\Delta U\|_{2,\infty}}{\lambda_r}\\
&= \|\hat U H -ZU\Sigma^{-2} \|_{2,\infty} +\gamma_3. \label{eq:gamma_3_inequality}
\end{align}

Next, 
\begin{align}
    \|\hat U H -ZU\Sigma^{-2} \|_{2,\infty}&=\| (\hat UH \Sigma^2-ZU)\Sigma^{-2}\|_{2,\infty}\\
    &\leq \frac{\| \hat UH \Sigma^2-ZU\|_{2,\infty}}{\lambda_r}, \label{eq:next_eq}
\end{align}
where we use Lemma~\ref{lem:norm_bound} in the last inequality. Now we decompose $\hat U H\Sigma^2$ as 
\begin{align}
    \hat U H\Sigma^2&=\hat U\hat{\Lambda} (\hat{\Lambda})^{-1} H \Sigma^2\\
    &=Z \hat U \hat \Lambda^{-1} \hat U^\top U \Sigma^2\\
    &=Z \hat U \hat \Lambda^{-1}  (\hat U^\top AA^\top U)\\
    &=Z \hat U \hat \Lambda^{-1}  (\hat U^\top Z U-\hat U ^\top \Delta U)\\
    &= Z\hat U \hat \Lambda^{-1} (\hat{\Lambda} \hat U^\top U-\hat U ^\top \Delta U)\\
    &=Z\hat U H-Z\hat U \hat{\Lambda}^{-1} \hat U ^\top \Delta U\\
    &=  ZU +Z(\hat U H-U) -Z\hat U \hat{\Lambda}^{-1} \hat U ^\top \Delta U.
\end{align}
The identity above implies 
\begin{align}
    \| \hat U H\Sigma^2-ZU\|_{2,\infty} &\leq \|Z(\hat U H-U)\|_{2,\infty} + \|Z\hat U \hat{\Lambda}^{-1} \hat U ^\top \Delta U\|_{2,\infty}\\
    &\leq \|Z(\hat U H-U)\|_{2,\infty} + \|Z\hat U\|_{2,\infty} \|\hat{\Lambda}^{-1} \| \|\hat U \| \|\Delta \| \|U\|\\
    &\leq \|Z(\hat U H-U)\|_{2,\infty} + \frac{2 \|Z\hat U\|_{2,\infty} \|\Delta\|}{\lambda_r}\\
    &\leq \|Z(\hat U H-U)\|_{2,\infty} + \frac{4\|Z\hat U H\|_{2,\infty} \|\Delta \| }{\lambda_r}\\
    &\leq \|Z(\hat U H-U)\|_{2,\infty}+  \frac{4\| Z(\hat U H-U)\|_{2,\infty}\| \Delta\|}{\lambda_r} + \frac{4 \|Z U\|_{2,\infty}\|\Delta\|}{\lambda_r}\\
    &\leq 2 \|Z(\hat U H-U)\|_{2,\infty} + \frac{4 \|Z U\|_{2,\infty}\|\Delta\|}{\lambda_r} \label{eq:last_eq}
\end{align}
where in the third inequality, we use \eqref{eq:hat_lambda},  in the fourth line we use \eqref{eq:415claim2}, and Lemma~\ref{lemma:spectral_gap} is applied in the last inequality. Combining \eqref{eq:next_eq} and \eqref{eq:last_eq}, we obtain 
\begin{align}
     \|\hat U H -ZU\Sigma^{-2} \|_{2,\infty}\leq \gamma_1+\gamma_2.
\end{align}
Then from \eqref{eq:gamma_3_inequality}, Lemma~\ref{lem:three_decomposition} holds.
\end{proof}

With Lemma~\ref{lem:three_decomposition}, we are ready to prove Theorem~7.
\begin{proof}[Proof of Theorem~7]
It remains to prove the second claim of Theorem~7.
Set $b=a+1$.  Invoke all preceding leave-one-out lemmas with $b$ in
place of $a$, and in this proof write
\[
    \eta:=\sqrt{\frac{b\mu_1^2\mu_2r^3\log n}{pmn}}.
\]
Since $a>2$ implies $b\leq 3a/2$, the sampling condition in the theorem,
after enlarging the universal constant $C_0$, implies the corresponding
conditions with $b$.  The simultaneous leave-one-out events therefore have
probability $1-O(n^{1-b})=1-O(n^{-a})$.  On this event, all of the following
estimates hold:
\begin{align}
    \lambda_r &\geq 4 \|\Delta\|, \quad \|\Delta\| \leq 2C\eta \|A\|^2,\label{eq:E1}\\
       \|\Delta U\|_{2,\infty} &\leq 2C\eta \|U\|_{2,\infty} \|A\|^2,\label{eq:E4}\\
       \| Z(\hat U H-U)\|_{2,\infty}&\leq  4 C\|A\|^2 \eta (\| \hat{U} H-U\|_{2,\infty} +2\kappa^2\|U\|_{2,\infty}) \label{eq:E6}.
\end{align}
where  \eqref{eq:E1} is due to Lemma~\ref{lemma:spectral_gap},  \eqref{eq:E4} is due to \eqref{eq:E44}, and \eqref{eq:E6} is from Lemma~\ref{lemma:E6}. 

Recall the error terms defined in \eqref{eq:error_term}.
From \eqref{eq:E4},
\begin{align}
    \gamma_3 \leq  2C\eta \kappa^2 \|U\|_{2,\infty}. \label{eq:G1}
\end{align}
Similarly,
\begin{align}
    \|ZU\|_{2,\infty} &\leq \|(Z-AA^\top) U\|_{2,\infty} +\| AA^\top U\|_{2,\infty}\\
    &\leq 2C\eta \|U\|_{2,\infty} \|A\|^2 + \| U\Lambda^2 \|_{2,\infty} \\
    & \leq 2C\eta \|U\|_{2,\infty} \|A\|^2 + \|A\|^2 \|U\|_{2,\infty}\\
    &=(2C\eta+1) \|A\|^2 \|U\|_{2,\infty}.
\end{align}
Therefore 
\begin{align}
    \gamma_2 \leq  8C\eta (2C\eta+1) \kappa^4 \|U\|_{2,\infty} \leq 16C\eta \kappa^4 \|U\|_{2,\infty}. \label{eq: G_2}
\end{align}
For $\gamma_1$, we have from \eqref{eq:E6} that 
\begin{align}
    \gamma_1 & \leq 8 C \kappa^2  \eta (\| \hat{U} H-U\|_{2,\infty} +2\kappa^2\|U\|_{2,\infty})\\
    &\leq 8 C \kappa^2  \eta (\gamma_1+\gamma_2+\gamma_3 +2\kappa^2\|U\|_{2,\infty}).
\end{align}
Choosing $C_0$ sufficiently large such that $8C\kappa^2 \eta<1/2$, we have 
\begin{align}
    \gamma_1 \leq 16C\kappa^2 \eta ( \gamma_2+\gamma_3 +2\kappa^2 \|U\|_{2,\infty}),
\end{align}
which implies 
\begin{align}
     \|\hat U H-U\|_{2,\infty}&\leq \gamma_1+\gamma_2+\gamma_3\\
     &\leq 2 \gamma_2 + 2\gamma_3 + 32C\kappa^4\eta  \|U\|_{2,\infty}\\
     &\leq 52C\kappa^4 \eta \|U\|_{2,\infty}.
\end{align}
Moreover, from Lemma~\ref{lemma:Davis_Kahan_loo},
\begin{align}
    \| \hat U \sgn (H)-\hat U H\|_{2,\infty} & \leq \|\hat U \|_{2,\infty} \| H-\sgn(H)\|\\
    &\leq 2 \|\hat U H\|_{2,\infty}\| H-\sgn(H)\|\\
    & \leq 8 (\|\hat U H-U\|_{2,\infty} + \|U\|_{2,\infty} ) \frac{ \|\Delta \|^2}{\lambda_r^2}\\
    & \leq 8(53C\kappa^4\eta)  \cdot   \kappa^4 ( 4C^2 \eta^2) \|U\|_{2,\infty}\\
    & \lesssim \eta \kappa^4  \|U\|_{2,\infty},
\end{align}
where in the last inequality we use \eqref{eq:gap} that $C\kappa^2 \eta \leq \frac{1}{16}$. Therefore 
\begin{align}
    \| \hat U \sgn (H)-U\|_{2,\infty} \leq \| \hat U \sgn (H)-\hat U H\|_{2,\infty} + \| \hat U H-U\|_{2,\infty} \lesssim  \eta \kappa^4 \|U\|_{2,\infty}.
\end{align}
This finishes the proof of Theorem~7.
\end{proof}

\section{Proof of Theorem~8}\label{sec:app_final_montanri}
In this section, we will apply the subspace recovery bound of Theorem~7 to the unfolded tensor $\unfold_1(T)\in \R^{n\times n^{k-1}}$ and provide the performance analysis of Algorithm~2.

Whenever the theorem parameter satisfies $a\geq2$, we apply
Theorem~7 with $b=a+1>2$.  Since
$b\leq3a/2$, this changes only universal constants in the displayed sampling
conditions and gives failure probability $O(n^{-b})=O(n^{-a-1})$ for the
wedge-spectral step.

\subsection{Concentration of $Y$ after a delocalized projection}

Recall $Q=\hat U\hat U^\top$ from Algorithm~2.  From the $\ell_{2,\infty}-$subspace recovery bound from Theorem~7, we are able to show $Q$ has a bounded $\ell_{2,\infty}$-norm in the next lemma.
\begin{lemma} \label{lem:delocalization_Q}
Assume \edit{$p\geq \frac{C_0 a\kappa^8 \mu_1^2\mu_2 r^3\log n}{n^k}$} for some constant $C_0>0$.
  With probability at least $1-O(n^{-a})$,
  \begin{align}
 \|Q\|_{2,\infty}\leq 2  \sqrt{\frac{\mu_1r}n}.
  \end{align}
\end{lemma}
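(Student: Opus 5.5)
The plan is to reduce the claim to the $\ell_{2,\infty}$ bound \eqref{eq:wedge_spectral_loo} of Theorem~\ref{thm:wedge_spectral_guarantees}, applied to $A=\unfold_1(T)\in\R^{n\times n^{k-1}}$, so that $m=n^{k-1}$ and $mn=n^k$. The hypothesis $p\geq C_0 a\kappa^8\mu_1^2\mu_2r^3\log n/n^k$ implies the sampling condition $p\geq C_0\kappa^4 a\mu_1^2\mu_2 r^3\log n/(mn)$ of Theorem~\ref{thm:wedge_spectral_guarantees}, since $\kappa\ge1$. As in the preamble of this section, I would invoke the theorem with $b=a+1$ so the failure probability is $O(n^{-a})$, at the cost of enlarging $C_0$.

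First I bound $\|\hat U\|_{2,\infty}$. With $R=\sgn(H)$ orthogonal, $\|\hat U\|_{2,\infty}=\|\hat UR\|_{2,\infty}\le \|U\|_{2,\infty}+\|\hat UR-U\|_{2,\infty}$. By \eqref{eq:wedge_spectral_loo} the second term is at most $C\sqrt{\kappa^8 a\mu_1^2\mu_2r^3\log n/(n^kp)}\,\|U\|_{2,\infty}$. The extra factor $\kappa^8$ in the hypothesis on $p$ is chosen precisely so that this prefactor is at most $C/\sqrt{C_0}$. Choosing $C_0$ large makes it at most $\sqrt2-1$. Hence $\|\hat U\|_{2,\infty}\le\sqrt2\,\|U\|_{2,\infty}\le\sqrt{2\mu_1r/n}$ by $(\mu_1,\mu_2)$-incoherence (Definition~\ref{def:tensor_incoherence}).

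Next I pass from $\hat U$ to $Q=\hat U\hat U^\top$. Lemma~\ref{lem:norm_bound} gives $\|Q\|_{2,\infty}\le\|\hat U\|_{2,\infty}\|\hat U^\top\|=\|\hat U\|_{2,\infty}$, since $\hat U$ has orthonormal columns. This yields $\|Q\|_{2,\infty}\le\sqrt2\sqrt{\mu_1r/n}\le2\sqrt{\mu_1r/n}$.

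There is no real obstacle here. The only point needing care is bookkeeping of constants: the universal constant hidden in $\lesssim$ in \eqref{eq:wedge_spectral_loo} must be absorbed by $C_0$. The probability adjustment via $b=a+1$ (using $b\le 3a/2$) is already handled as stated before the lemma.
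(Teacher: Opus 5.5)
Your proposal is correct and follows essentially the same route as the paper. The paper also aligns via $\hat U\,\sgn(H)$, bounds $\|\hat U\,\sgn(H)-U\|_{2,\infty}$ by the $\ell_{2,\infty}$ estimate of Theorem~\ref{thm:wedge_spectral_guarantees} with $C_0$ large, and concludes with $\|Q\|_{2,\infty}\le\|\hat U\|_{2,\infty}\|\hat U\|$; the only difference is that it bounds the perturbation term crudely by $\|U\|_{2,\infty}$ to get the constant $2$, where you obtain $\sqrt2$.
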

\begin{proof}
Since $\sgn(H)$ is an orthonormal matrix from \eqref{def:sgn_matrix}, we notice
\begin{align}
   \|\hat U\|_{2,\infty}=\sup_{i} \|e_i^\top \hat U\|_2  =\sup_{i} \|e_i^\top \hat U \sgn(H)\|_2= \|\hat U \sgn(H)\|_{2,\infty}.
\end{align}
Then
\begin{align}
     \|\hat U\|_{2,\infty} & \leq \|U\|_{2,\infty} + \|\hat U \sgn(H)-U\|_{2,\infty}\\
     &\leq \|U\|_{2,\infty}+ \|U\|_{2,\infty}\leq 2 \sqrt{\frac{\mu_1 r}{n}},
\end{align}
where in the second inequality we apply Theorem~7 and choose $C_0$ sufficiently large.
Therefore,
\begin{align}
    \|Q\|_{2,\infty}=\|\hat U \hat  U^\top \|_{2,\infty} \leq \|\hat U\|_{2,\infty}  \|\hat U\|\leq 2 \sqrt{\frac{\mu_1 r}{n}}.
\end{align}
This finishes the proof.
\end{proof}

% The incoherent property of $\hat U$ is crucial for our proof in Lemma~\ref{lemma:Projection_Y}. 

Recall in Algorithm~2, $Y=\unfold_1\left(\frac{1}{q} \tilde T\right)$ where $\tilde T$ is the subsampled tensor of $T$ with uniform sampling probability $q$. Let $Q=\hat U\hat U^\top \in \R^{n\times n}$ be a projection matrix. Denote 
\begin{align}
    \mathcal Q(Y)=Q Y (Q \otimes \cdots \otimes Q) \in \R^{n\times n^{k-1}}, \quad \mathcal Q(A)=Q A (Q \otimes \cdots \otimes Q).
\end{align}

We will  show a concentration inequality of $\mathcal Q(Y-A)$, where the error bound depends on $\|Q\|_{2,\infty}$:
\begin{lemma} \label{lemma:Projection_Y}
Let $A=\unfold_1(T)$.
With probability at least $1-O(n^{-a(k-1)})$, 
    \begin{align}
         &\|\mathcal Q(Y-A) \| \\
         &\leq Cr\|A\|  \left(2 \sqrt{\frac{\mu_1 r}{n}} \right)^k\sqrt{\mu_1\mu_2} \left( \sqrt{ak q^{-1}\log n } +akq^{-1}n^{-k/2} \log n\right).
    \end{align}
\end{lemma}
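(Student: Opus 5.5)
Since $Q$ is built from the wedge samples only, it is independent of the uniform mask defining $Y$. I will therefore condition on $Q$ and treat it as a fixed projection with $\|Q\|_{2,\infty}\le 2\sqrt{\mu_1 r/n}$; this holds with probability $1-O(n^{-a})$ by Lemma~\ref{lem:delocalization_Q}. Write
\[
Y-A=\sum_{\bm i\in[n]^k}\Big(\frac{\omega_{\bm i}}{q}-1\Big)T_{\bm i}\,e_{i_1}\big(e_{i_2}\otimes\cdots\otimes e_{i_k}\big)^\top ,
\]
where the $\omega_{\bm i}\sim\Bern(q)$ are independent. Then
\[
\mathcal Q(Y-A)=\sum_{\bm i}M_{\bm i},\qquad M_{\bm i}=\Big(\frac{\omega_{\bm i}}{q}-1\Big)T_{\bm i}\,(Qe_{i_1})\big(Qe_{i_2}\otimes\cdots\otimes Qe_{i_k}\big)^\top .
\]
This is a sum of independent, centered random matrices of size $n\times n^{k-1}$. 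The plan is to apply the matrix Bernstein inequality (Lemma~\ref{thm:(Matrix-Bernstein-inequality)}) with $d_1+d_2\le 2n^{k-1}$, so that $\log(d_1+d_2)\lesssim k\log n$. This produces the factor $ak\log n$ and the failure probability $n^{-a(k-1)}$.

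\emph{Uniform bound $B$.} Each $M_{\bm i}$ has rank one, so
\[
\|M_{\bm i}\|\le q^{-1}|T_{\bm i}|\prod_{j=1}^k\|Qe_{i_j}\|\le q^{-1}\|A\|_{\max}\,\|Q\|_{2,\infty}^k .
\]
By Lemma~\ref{lem:matrix_norm} with $m=n^{k-1}$ we have $\|A\|_{\max}\le r\sqrt{\mu_1\mu_2}\,n^{-k/2}\|A\|$. Hence
\[
B\le q^{-1}n^{-k/2}\,r\sqrt{\mu_1\mu_2}\,\|A\|\Big(2\sqrt{\mu_1 r/n}\Big)^k,
\]
which is exactly the second term of the claimed bound.

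\emph{Variance.} For the first variance term,
\[
\sum_{\bm i}\E M_{\bm i}M_{\bm i}^\top\preceq q^{-1}\sum_{\bm i}T_{\bm i}^2\Big(\prod_{j\ge2}\|Qe_{i_j}\|^2\Big)Qe_{i_1}e_{i_1}^\top Q .
\]
I bound $\prod_{j\ge 2}\|Qe_{i_j}\|^2\le\|Q\|_{2,\infty}^{2(k-1)}$ and then sum over $(i_2,\dots,i_k)$, which gives $\sum_\ell A_{i_1\ell}^2\le\|A\|_{2,\infty}^2$. Since $\sum_{i_1}Qe_{i_1}e_{i_1}^\top Q=Q^2=Q$ has norm $1$, this yields the bound $q^{-1}\|Q\|_{2,\infty}^{2(k-1)}\|A\|_{2,\infty}^2$. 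The second variance term is handled symmetrically. There I keep $\|Qe_{i_1}\|^2\le\|Q\|_{2,\infty}^2$, bound all but one of the remaining factors by $\|Q\|_{2,\infty}^2$, sum the weights via $\|A\|_{\infty,2}^2$, and use a Kronecker identity ($\sum Q e e^\top Q=Q$) in the last mode.

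The step I expect to be the main obstacle is this variance estimate. The target is $\sigma^2\lesssim q^{-1}r^2\mu_1\mu_2\|A\|^2\|Q\|_{2,\infty}^{2k}$. The naive bounds above lose factors of the form $\|Q\|_{2,\infty}^{-2}\sim n/(\mu_1 r)$ against $\|A\|_{2,\infty}^2$, and it is not clear they can be recovered. To close the gap I would replace $|T_{\bm i}|^2$ by $\|A\|_{\max}^2\sim r^2\mu_1\mu_2 n^{-k}\|A\|^2$ and then use $\sum_{i}\|Qe_i\|^2=\operatorname{tr}Q=r$ in every mode. This turns the variance term into
\[
q^{-1}\|A\|_{\max}^2\,\|Q\|_{2,\infty}^{2(k-1)}\,\Big\|\sum_{i_1}Qe_{i_1}e_{i_1}^\top Q\Big\|\cdot n^{k-1}.
\]
Since $\|A\|_{\max}^2\,n^{k-1}\sim r^2\mu_1\mu_2\|A\|^2/n$ and $1/n\le\|Q\|_{2,\infty}^2$ (up to $\mu_1 r$), this gives the desired $\|Q\|_{2,\infty}^{2k}$ scaling, and the mixed modes are handled the same way.

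Plugging $B$ and $\sigma$ into Bernstein with parameter $a$ then gives the stated inequality.
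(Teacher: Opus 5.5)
Your proposal is correct and follows the paper's proof. Conditioning on $Q$ (which is independent of the uniform mask), you write $\mathcal Q(Y-A)$ as a sum of independent rank-one terms $(q^{-1}\omega_{\bm i}-1)T_{\bm i}\,(Qe_{i_1})\bigl(Qe_{i_2}\otimes\cdots\otimes Qe_{i_k}\bigr)^\top$, bound each by $q^{-1}\|A\|_{\max}\|Q\|_{2,\infty}^k$, control the variance by $q^{-1}\mu_1\mu_2r^2\|A\|^2\|Q\|_{2,\infty}^{2k}$ through $\|A\|_{\max}^2$, and apply matrix Bernstein with $\log(n+n^{k-1})\lesssim k\log n$.

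The paper bounds the variance even more crudely, by the triangle inequality over all $n^k$ summands. The obstacle you anticipated does not arise: since $\operatorname{tr}Q=r$, one always has $\|Q\|_{2,\infty}^2\ge r/n$, so your first bounds via $\|A\|_{2,\infty}^2\le\mu_1 r\|A\|^2/n$ and $\|A\|_{\infty,2}^2\le\mu_2 r\|A\|^2/n^{k-1}$ already fall under the target.
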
 
\begin{proof}
Denote
    \begin{align}
       M:=\mathcal Q(Y-A)= Q (Y-A)(Q\otimes \cdots Q\otimes Q) \in \R^{n\times n^{k-1}},
    \end{align}
    where $Q\otimes \cdots \otimes Q\in \R^{n^{k-1} \times n^{k-1}}$.
    We can decompose  $Y-A$ in the standard basis as 
    \begin{align}
      Y-A=\sum_{i\in [n], j\in [n^{k-1}]} (Y_{ij}-A_{ij}) e_i e_j^\top.  
    \end{align}
    Then
    \begin{align}\label{eq:def_M}
        M&=\sum_{ij} (Y_{ij}-A_{ij}) (Q e_i) ((Q\otimes \cdots Q\otimes Q) e_j)^\top.
    \end{align}
    Note that for any $j\in [n]^{k-1}$, there exists $j_1,\dots,j_{k-1}\in [n]$ such that  $e_j=e_{j_1}\otimes \cdots \otimes e_{j_{k-1}}\in \R^{n^{k-1}}$.  Hence 
    \begin{align}
     \|(Q\otimes \cdots Q\otimes Q) e_j \|_2&=\| (Qe_{j_1})\otimes \cdots (Q e_{j_{k-1}}) \|_2\\
     &=\|Qe_{j_1}\|_2 \cdots \|Q {e_{j_{k-1}}}\|_2 \\
     &\leq \|Q\|_{2,\infty}^{k-1}.
    \end{align}
    Therefore from \eqref{eq:def_M},
    \begin{align}
        |M_{ij}| &=|(Y_{ij}-A_{ij}) (Qe_i)((Q\otimes \cdots Q\otimes Q) e_j)^\top| \\
        &\leq \frac{2}{q} |A_{ij}| \cdot  \|Q\|_{2,\infty}^k \\
        &\leq  2q^{-1} \sqrt{\frac{\mu_1\mu_2}{n^k}}r\|A\| \|Q\|_{2,\infty}^k,
    \end{align}
    where in the last inequality we use \eqref{eq:infty_norm}.
    And since $M$ is a sum of  independent rank-1 matrices of size $n\times (n^{k-1})$, the variance parameter in the matrix Bernstein's inequality satisfies
    \begin{align}
        \sigma^2&\leq \sum_{i,j}q^{-1} \|A\|_{\max}^2 \|Q\|_{2,\infty}^{2k}\\
        &\leq n^k q^{-1} \frac{\mu_1\mu_2r^2}{n^k} \|A\|^2 \|Q\|_{2,\infty}^{2k}\\
        &=q^{-1} \mu_1\mu_2 r^2 \|A\|^2 \|Q\|_{2,\infty}^{2k}.
    \end{align}
    Therefore, by matrix Bernstein's inequality (Lemma~\ref{thm:(Matrix-Bernstein-inequality)}),  with probability at least $1-n^{-a(k-1)}$,
    \begin{align}
          &\|\mathcal Q(Y-A) \| \\&\leq Cr\|A\| \|Q\|_{2,\infty}^k \sqrt{\mu_1\mu_2} \left( \sqrt{ak q^{-1}\log n } +akq^{-1}n^{-k/2} \log n\right)\\
          &\leq Cr\|A\|  \left(2 \sqrt{\frac{\mu_1 r}{n}} \right)^k\sqrt{\mu_1\mu_2} \left( \sqrt{ak q^{-1}\log n } +akq^{-1}n^{-k/2} \log n\right),
    \end{align}
    as desired.
\end{proof}

\subsection{Proof of Theorem~8}
\begin{proof}
    Let $T^*=\mathcal Q(A)$ and $\hat{T}=\mathcal Q(Y)$, where with an abuse of notation we also identify $T^*$ and $\hat T$ as order-$k$ tensors by folding the long matrix back to the tensor form. We complete the proof in the following three steps.

\medskip

\textbf{Step 1: Bounding $\|T-T^*\|_F$.}
Notice that
\begin{align}
    \|T-T^*\|_F&=\|A-\mathcal Q (A)\|_F\\
    &=\|A-QA(Q\otimes \cdots \otimes Q)\|_F\\
    &=\|(I-Q)A+ Q A (I\otimes \cdots \otimes I -Q\otimes \cdots \otimes Q)\|_F.
\end{align}
{
Put \(d=k-1\).  The required telescoping identity is
\[
 I^{\otimes d}-Q^{\otimes d}
 =\sum_{\ell=1}^{d}
 Q^{\otimes(\ell-1)}\otimes(I-Q)\otimes
 I^{\otimes(d-\ell)}.
\]
Thus the second term above is the sum of the \(d\) terms obtained by
right-multiplying \(QA\) by the tensor-product factors in this display.
}
By triangle inequality and since $T$ is symmetric,
\begin{align}
    \|T-T^*\|_F \leq k \max_{M}  \|(I-Q) A M \|_F,
\end{align}
where the maximum is over all $(n^{k-1} \times n^{k-1})$ matrices with $\|M\|\leq 1$. Since $\rank(A)=r$, we obtain 
\begin{align}\label{eq:TTstar}
     \|T-T^*\|_F &\leq k  \|(I-Q)A\|_F \leq \sqrt{r} k \|(I-Q)A\|.
\end{align}
Recall $A=U\Sigma V^\top$. We obtain with probability $1-O(n^{-a})$,
\begin{align}
    \|(I-Q)A\|&= \|(I-\hat U\hat U^\top )U\Sigma V^\top \|\\
     &\leq \| U-\hat U\hat U^\top U\| \|A\|\\
     &=\| (U -\hat U\hat U^\top U) U^\top U\| \|A\|\\
     &= \| (UU^\top -\hat U \hat U^\top ) U\| \|A\|\\
     & \leq  \|UU^\top-\hat U \hat U^\top \| \|A\|\\
     &\leq \frac{2\|\Delta\| \|A\|}{\lambda_r}\leq 4C \kappa^2\eta \|A\|,
\end{align}
where the last line is due to Davis-Kahan inequality (Lemma~\ref{lem:DK}) and Theorem~6 and 
\begin{align}
    \eta:=\sqrt{\frac{a\mu_1\mu_2 r^2 \log n}{pn^k}}.
\end{align}
Therefore from \eqref{eq:TTstar},
\begin{align}
    \|T-T^*\|_F \leq 4Ck\sqrt{r}  \kappa^2 \eta \|A\|\leq 4Ck\sqrt{r}  \kappa^2 \eta\|T\|_F.
\end{align}

\medskip

\textbf{Step 2: Bounding $\|T^*-\hat{T}\|_F$.} From Lemma~\ref{lemma:Projection_Y}, with probability at least $1-n^{-a(k-1)}$,
\begin{align}
  &\|T^*-\hat{T}\|_F\\
  &= \|\mathcal Q(A-Y)\|_F\\
   &\leq \sqrt{r} \|\mathcal Q(A-Y)\|\\
   &\leq Cr^{1.5}\|A\|  \left(2 \sqrt{\frac{\mu_1 r}{n}} \right)^k\sqrt{\mu_1\mu_2} \left( \sqrt{\edit{ak} q^{-1}\log n } +\edit{ak}q^{-1}n^{-k/2} \log n\right)\\
   &\leq 2Cr^{1.5}\|T\|_F  \left(2 \sqrt{\frac{\mu_1 r}{n}} \right)^k\sqrt{\mu_1\mu_2 \edit{ak} q^{-1}\log n}\\
   &=2^{k+1} Cr^{1.5+k/2} \sqrt{\frac{\edit{ak}\mu_1^{k+1}\mu_2 \log n}{q n^{k}}} \|T\|_F.
\end{align}
where in the last inequality we use the assumption that $q\geq \frac{ak\log n}{n^{k}}$.

\textbf{Step 3: Final bound.}
Finally,  when \edit{$p\geq \frac{C_0 a\kappa^8 \mu_1^2\mu_2 r^3\log n}{n^k}$} and $q\geq \frac{ak\log n}{n^{k}}$, with probability $1-O(n^{-a})$,
\begin{align}
    &\|T-\hat T\|_F \\
    &\leq \|T-T^*\|_F+ \|T^*-\hat T\|_F\\
    &\leq \left(4Ck\sqrt{r}  \kappa^2 \sqrt{\frac{a\mu_1\mu_2 r^2 \log n}{pn^k}}+ 2^{k+1} Cr^{1.5+k/2}  \sqrt{\frac{ak\mu_1^{k+1}\mu_2 \log n}{q n^{k}}} \right)\|T\|_F.
\end{align}
This finishes the proof.
\end{proof}

\subsection{Generalization to asymmetric tensor completion}
\label{sec:asym_thm_and_proof}

In this section, we generalize Algorithm~2 to \edit{asymmetric} low-rank tensors. 

\begin{algorithm}[ht!]
\caption{Spectral Tensor Completion with Wedge Sampling (Asymmetric; reduces to Algorithm~2 when $n_1=\cdots=n_k$ and $Q^{(1)}=\cdots=Q^{(k)}$)\label{alg:asym_tensor}}
\begin{algorithmic}[1]
\State \textbf{Input:} Order-$k$ tensor $T\in\mathbb{R}^{n_1\times\cdots\times n_k}$, wedge sampling rate $p$, uniform sampling rate $q$, rank $r$.
\State \textbf{Debiased uniform subsample:} Let $\Omega\subseteq [n_1]\times\cdots\times[n_k]$ include each entry independently with probability $q$ (independently of all wedge samples), and set
$\widetilde T := \mathcal P_\Omega(T)$ and $Y := \frac{1}{q}\,\widetilde T$. Then $\mathbb{E}[Y]=T$ and $Y^{(j)}:=\unfold_j(Y)$.
\State \textbf{Mode-wise wedge spectral step:} For each $j=1,\dots,k$:
\begin{enumerate}
\item Let $A^{(j)}:=\unfold_j(T)\in\mathbb{R}^{n_j\times N_j}$ with $N_j:=\prod_{t\neq j}n_t$.
Construct the (debiased) wedge matrix $Z^{(j)}\in\mathbb{R}^{n_j\times n_j}$ from wedge samples on $A^{(j)}$ at rate $p$ so that $\mathbb{E}[Z^{(j)}]=A^{(j)}(A^{(j)})^\top$.
\item Compute the top-$r$ eigenvectors $\widehat U^{(j)}\in\mathbb{R}^{n_j\times r}$ of $Z^{(j)}$ and set $Q^{(j)}:=\widehat U^{(j)}(\widehat U^{(j)})^\top$.
\end{enumerate}
\State \textbf{Denoising (multilinear projection):} Define
\[
\widehat T \ :=\ Y \times_1 Q^{(1)} \times_2 Q^{(2)} \cdots \times_k Q^{(k)} .
\]
Equivalently, for any fixed mode $j$,
\[
\unfold_j(\widehat T) \;=\; Q^{(j)}\,Y^{(j)} \Big(\bigotimes_{t\neq j} Q^{(t)}\Big).
\]
\State \textbf{Output:} $\widehat T$.
\end{algorithmic}
\end{algorithm}

\paragraph{Asymmetric model and notation.}
Let $T\in\mathbb{R}^{n_1\times\cdots\times n_k}$ be an order-$k$ tensor and define, for each mode $j\in[k]$,
\[
A^{(j)}:=\unfold_j(T)\in\mathbb{R}^{n_j\times N_j},\qquad N_j:=\prod_{t\neq j}n_t,
\]
and assume $\mathrm{rank}(A^{(j)})=r$ for all $j$.
Write the SVD as
\begin{align}\label{eq:svd_j}
  A^{(j)} \;=\; U^{(j)} \Sigma^{(j)} (V^{(j)})^\top,
  \qquad U^{(j)}\in\mathbb{R}^{n_j\times r},\quad V^{(j)}\in\mathbb{R}^{N_j\times r}.
\end{align}
Denote $\lambda^{(j)}_i:=\sigma_i^2(A^{(j)})$, $\lambda^{(j)}_r>0$, and the mode-wise condition number
$\kappa_j:=\sigma_1(A^{(j)})/\sigma_r(A^{(j)})$; set $\kappa:=\max_j \kappa_j$.
Let $Z^{(j)}:=A^{(j)}(A^{(j)})^\top+\Delta^{(j)}$ be the (debiased) wedge matrix and
$\widehat U^{(j)}$ its top-$r$ eigenspace with projector $Q^{(j)}:=\widehat U^{(j)}(\widehat U^{(j)})^\top$.
For each mode, define the cross-Gram matrix and its Procrustes factor
\[
H_U^{(j)}:=(\widehat U^{(j)})^\top U^{(j)},
\qquad R^{(j)}:=\operatorname{sgn}(H_U^{(j)}).
\]
Let $Y^{(j)}:=\unfold_j(Y)$ where $Y=\frac{1}{q}\widetilde T$ is the debiased \emph{uniform} subsample, so that $\mathbb{E}[Y^{(j)}]=A^{(j)}$
(where this expectation is over the uniform subsampling, conditionally on the wedge samples used to form $Z^{(j)}$).

We use the same incoherence parameters $(\mu_1,\mu_2)$ as in the symmetric case, but assume they hold \emph{uniformly} for all $j$:
\begin{align}\label{eq:asym_incoh}
\|U^{(j)}\|_{2,\infty}^2 \ \le\ \mu_1\,\frac{r}{n_j},
\qquad
\|V^{(j)}\|_{2,\infty}^2 \ \le\ \mu_2\,\frac{r}{N_j},
\qquad j=1,\dots,k.
\end{align}
Note that $n_jN_j=\prod_{t=1}^kn_t=:n_{\mathrm{tot}}$ for every $j$.
\edit{Fix \(a\ge2\), and let \(C_0>0\) be a sufficiently large universal constant.}
Assume wedge sampling at rate $p$ and debiased uniform sampling at rate $q$ satisfy
\begin{equation}\label{eq:asym_sampling_levels}
p\geq \frac{C_0 a\kappa^8 \mu_1^2\mu_2 r^3\log n_{\max}}{n_{\mathrm{tot}}},
\qquad
q \ \ge\ \frac{ak\log n_{\max}}{\,n_{\mathrm{tot}}\,},
\end{equation}
where \(n_{\max}:=\max_{1\le j\le k} n_j\).
Apply the mode-wise wedge results with $b=a+1$ in place of $a$.  Since
$b\leq3a/2$, the constant $C_0$ absorbs this change.  Then, with probability
at least $1-O(k n_{\max}^{-a})$, simultaneously for all $j$,
\begin{equation}\label{eq:asym_gap}
\lambda^{(j)}_r \ \ge\ 4\,\|Z^{(j)}-A^{(j)}(A^{(j)})^\top\|.
\end{equation}

\begin{theorem}[Asymmetric main error bound]\label{thm:asym_main}
\edit{Under the notation and assumptions above,} let $\widehat T$ be the output of Algorithm~\ref{alg:asym_tensor}. Then with probability at least $1-O(k n_{\max}^{-a})$,
\begin{align}
&\|T-\widehat T\|_{\mathrm{F}}\lesssim \Bigg(
 k\,\sqrt r\,\kappa^2\,
\sqrt{\frac{a\,\mu_1\mu_2\,r^2\log n_{\max}}{\,p\,n_{\mathrm{tot}}\,}}
\ +\ 
2^{k}\,r^{1.5+k/2}
\sqrt{\frac{ak\,\mu_1^{k+1}\mu_2\,\log n_{\max}}{\,q\,n_{\mathrm{tot}}\,}}
\Bigg)\,\|T\|_{\mathrm{F}} .
\label{eq:asym_final_bound}
\end{align}
\end{theorem}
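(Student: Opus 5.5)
The plan is to follow the symmetric proof of Theorem~8 mode by mode. Write $T^*:=T\times_1 Q^{(1)}\cdots\times_k Q^{(k)}$, so that $\widehat T-T^*=(Y-T)\times_1 Q^{(1)}\cdots\times_k Q^{(k)}$. Then split
\[
\|T-\widehat T\|_F\le \|T-T^*\|_F+\|T^*-\widehat T\|_F .
\]
We work on the intersection of the events in \eqref{eq:asym_gap} and of the mode-wise conclusions of Theorems~6 and~7, applied to each $A^{(j)}$ with $n=n_j$ and $m=N_j$, so that $mn=n_{\mathrm{tot}}$. A union bound over $j\in[k]$ gives probability $1-O(kn_{\max}^{-a})$. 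Since the uniform subsample $\Omega$ is independent of the wedge samples, we may condition on the $Q^{(j)}$ when treating $Y$.

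For the bias term we telescope the multilinear projection:
\[
T-T^*=\sum_{j=1}^k T\times_1 Q^{(1)}\cdots\times_{j-1}Q^{(j-1)}\times_j (I-Q^{(j)}).
\]
Each summand, unfolded along mode $j$, equals $(I-Q^{(j)})A^{(j)}M_j$ with $\|M_j\|\le1$, since the $M_j$ are Kronecker products of projections. Its Frobenius norm is therefore at most $\sqrt r\,\|(I-Q^{(j)})A^{(j)}\|$. Exactly as in Step~1 of the proof of Theorem~8, Davis--Kahan (Lemma~\ref{lem:DK}) together with Theorem~6 gives
\[
\|(I-Q^{(j)})A^{(j)}\|\le \frac{2\|\Delta^{(j)}\|\,\|A^{(j)}\|}{\lambda^{(j)}_r}\lesssim \kappa^2\sqrt{\frac{a\mu_1\mu_2r^2\log n_{\max}}{p\,n_{\mathrm{tot}}}}\;\|A^{(j)}\| .
\]
Using $\|A^{(j)}\|\le\|T\|_F$ and summing over $j$ yields the first term of \eqref{eq:asym_final_bound}.

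For the variance term we unfold along mode $1$:
\[
\unfold_1(T^*-\widehat T)=Q^{(1)}(Y^{(1)}-A^{(1)})\Big(\bigotimes_{t\ne1}Q^{(t)}\Big).
\]
This has rank at most $r$, so its Frobenius norm is at most $\sqrt r$ times its operator norm. We then repeat Lemma~\ref{lemma:Projection_Y}, which writes the matrix as a sum of independent rank-one terms $(Y_{\bm i}-T_{\bm i})\,(Q^{(1)}e_{i_1})\otimes\bigotimes_{t\ne1}(Q^{(t)}e_{i_t})$. Each term is bounded by $q^{-1}\|T\|_{\max}\prod_t\|Q^{(t)}\|_{2,\infty}$, and the variance proxy is $q^{-1}n_{\mathrm{tot}}\|T\|_{\max}^2\prod_t\|Q^{(t)}\|_{2,\infty}^2$. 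By Lemma~\ref{lem:matrix_norm} applied to $A^{(1)}$ we have $\|T\|_{\max}\le r\sqrt{\mu_1\mu_2/n_{\mathrm{tot}}}\,\|A^{(1)}\|$. Matrix Bernstein (Lemma~\ref{thm:(Matrix-Bernstein-inequality)}) with the choice $q\ge ak\log n_{\max}/n_{\mathrm{tot}}$ then makes the $\sqrt{\cdot}$ term dominate, as in Step~2 of Theorem~8.

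The main obstacle is the delocalization bound $\|Q^{(t)}\|_{2,\infty}\le 2\sqrt{\mu_1 r/n_t}$ for every mode, which is Lemma~\ref{lem:delocalization_Q} applied mode-wise. It requires the $\ell_{2,\infty}$ part of Theorem~7 for each $A^{(t)}$, and this is exactly where the $\kappa^8\mu_1^2\mu_2r^3$ wedge condition in \eqref{eq:asym_sampling_levels} is used. Once this bound is in place, the product satisfies $\prod_t\|Q^{(t)}\|_{2,\infty}\le 2^k(\mu_1 r)^{k/2}n_{\mathrm{tot}}^{-1/2}$. The factors $n_{\mathrm{tot}}$ then combine into $1/(q\,n_{\mathrm{tot}})$, giving the second term $2^k r^{1.5+k/2}\sqrt{ak\mu_1^{k+1}\mu_2\log n_{\max}/(q\,n_{\mathrm{tot}})}\,\|T\|_F$. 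Some care is needed with dimensional constants. Matrix Bernstein is applied with dimension $n_1+N_1\le n_{\mathrm{tot}}\le n_{\max}^k$, which produces the factor $k\log n_{\max}$. The mode-wise Theorem~7 bounds hold with $\log n_t\le\log n_{\max}$ and failure probabilities $O(n_t^{-a})$. If some $n_t$ is small, these probabilities should be made uniform by running the Bernstein steps with $\log n_{\max}$ in place of $\log n_t$, which the sampling condition permits.
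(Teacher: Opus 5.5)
Your proposal is correct and is essentially the paper's proof. It uses the same split through $T^*=T\times_1Q^{(1)}\cdots\times_kQ^{(k)}$, telescoping plus mode-wise Davis--Kahan for the bias, and a mode-$1$ unfolding handled by the matrix Bernstein argument of Lemma~\ref{lemma:Projection_Y} together with the mode-wise delocalization $\|Q^{(t)}\|_{2,\infty}\le 2\sqrt{\mu_1 r/n_t}$ for the variance.

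Your handling of the $\log n_t$ versus $\log n_{\max}$ failure probabilities is more careful than the paper's sketch. The only bookkeeping you leave implicit is that Theorem~7 needs $a>2$, which the paper covers by invoking it with $b=a+1$.
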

In particular, when $n_1=\cdots=n_k=n$ and $Q^{(1)}=\cdots=Q^{(k)}$, this reduces (up to replacing $\log n$ by $\log n_{\max}$) to Theorem~8.
\begin{remark}
To reuse each symmetric lemma for mode $j$, substitute
\[
  A\mapsto A^{(j)},\quad
  Z\mapsto Z^{(j)},\quad
  \Delta\mapsto \Delta^{(j)},\quad
  \lambda_r\mapsto \lambda^{(j)}_r,\quad
  Q\mapsto Q^{(j)},
\]
and use the uniform incoherence parameters \eqref{eq:asym_incoh}.
The only structural change is that the right projection $Q^{\otimes (k-1)}$ becomes $\bigotimes_{t\neq j} Q^{(t)}$; we handle this via the multilinear projection bound in Lemma~\ref{lem:QYA_asym}.
\end{remark}
Before the formal proof of Theorem \ref{thm:asym_main}, we need a series of auxiliary lemmas.

\begin{theorem}[Asymmetric $\ell_{2,\infty}$ subspace recovery]
\label{thm:asym_l2inf}
Assume \eqref{eq:asym_sampling_levels} holds. \edit{With probability at least $1-O(k n_{\max}^{-a})$, simultaneously for all $j\in[k]$,}
\[
\|\widehat U^{(j)} R^{(j)}-U^{(j)}\|_{2,\infty}
\;\le\;
C\,\kappa^{4}\,\|U^{(j)}\|_{2,\infty}\,
\sqrt{\frac{a\,\mu_{1}^2\mu_{2}\,r^{3}\log n_{\max}}{p\,n_{\mathrm{tot}}}}.
\]
\end{theorem}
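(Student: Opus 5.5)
The plan is a mode-by-mode reduction to the symmetric $\ell_{2,\infty}$ bound in Theorem~7, followed by a union bound over the $k$ modes.

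\textbf{Setting up each mode.} Fix $j\in[k]$ and regard $A^{(j)}\in\R^{n_j\times N_j}$ as the long matrix of Algorithm~1, with $n=n_j$ and $m=N_j$. By \eqref{eq:asym_incoh}, $A^{(j)}$ has rank $r$ and is $(\mu_1,\mu_2)$-incoherent, with condition number $\kappa_j\le\kappa$. The matrix $Z^{(j)}$ is built exactly as in \eqref{eq:defZ}. The only bookkeeping change is that $mn$ becomes $n_jN_j=n_{\mathrm{tot}}$.

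\textbf{Translating the sampling condition.} Theorem~7 requires $p\ge C_0\kappa_j^4 a\mu_1^2\mu_2 r^3\log n_j/(n_jN_j)$. This follows from \eqref{eq:asym_sampling_levels}, since $\kappa_j^4\le\kappa^8$ and $\log n_j\le\log n_{\max}$. The second claim of Theorem~7 then gives
\[
\|\widehat U^{(j)}R^{(j)}-U^{(j)}\|_{2,\infty}\lesssim \sqrt{\frac{\kappa_j^8 a\mu_1^2\mu_2 r^3\log n_j}{p\,n_{\mathrm{tot}}}}\,\|U^{(j)}\|_{2,\infty},
\]
which yields the stated bound after replacing $\kappa_j$ by $\kappa$ and $\log n_j$ by $\log n_{\max}$.

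\textbf{Matching the failure probability.} Theorem~7 fails with probability $O(n_j^{-a})$, which need not be $O(n_{\max}^{-a})$ when $n_j$ is small. To fix this, I would rerun the matrix Bernstein steps (Theorem~6 and Lemmas~\ref{lem:FixF}--\ref{lemma:LOO_concentration}) with deviation parameter $a'=a\log n_{\max}/\log n_j$. This makes each tail $n_j^{-a'}=n_{\max}^{-a}$. The effect is that every $a\log n_j$ in the bounds becomes $a\log n_{\max}$, which is exactly the form in the statement.

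The leave-one-out union bound is over only $n_j\le n_{\max}$ rows. As in the symmetric proof, I would absorb this factor by using $b=a+1$, so each mode fails with probability $O(n_{\max}^{-a})$. A union bound over $j\in[k]$ then gives $1-O(kn_{\max}^{-a})$.

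\textbf{Main obstacle.} The substantive step is checking that the proof of Theorem~7 uses the dimension only through the Bernstein logarithm $\log(d_1+d_2)$. That logarithm now involves $n_j+N_j$ rather than $n_j$, so the $a\log n_{\max}$ form must be checked against it. It also requires independence of the wedge samples across different modes, or at least within a single mode, which is all that is needed here.

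I expect this check to go through: for the vector-valued sums the Bernstein dimension is $1+r$, and for the $n_j\times n_j$ sums it is $2n_j$. With the convention above, each of these is at most a constant multiple of $\log n_{\max}$ once $a'$ is chosen as described.
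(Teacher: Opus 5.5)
Your proposal is correct and is essentially the paper's proof: apply Theorem~7 to each unfolding $A^{(j)}$ with $b=a+1$ in place of $a$, then take a union bound over the $k$ modes. Rescaling the deviation parameter to $a\log n_{\max}/\log n_j$ makes explicit a step the paper only asserts, namely that the fixed-mode failure probability is $O(n_{\max}^{-b})$ rather than the $O(n_j^{-b})$ that Theorem~7 gives directly; also, since $Z^{(j)}$ is $n_j\times n_j$, the only Bernstein dimension logarithm is $\log(2n_j)$, so your concern about $n_j+N_j$ never arises.
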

\begin{proof} Apply the symmetric proof of Theorem~7
with $b=a+1$ and then replace
$(A,Z,\Delta,\lambda_r,Q)$ by $(A^{(j)},Z^{(j)},\Delta^{(j)},\lambda^{(j)}_r,Q^{(j)})$, using \eqref{eq:asym_incoh}.
The fixed-mode failure probability is $O(n_{\max}^{-b})$; a union bound over
the $k$ modes is therefore $O(k n_{\max}^{-b})$, which is bounded by the
displayed $O(k n_{\max}^{-a})$ probability.
\end{proof}

\begin{lemma}[Debiased projection concentration, mode $j$]\label{lem:QYA_asym}
Fix $j\in[k]$ and let \[Y^{(j)}=\unfold_j(Y)\] with $Y=\tfrac{1}{q}\,\widetilde T$ as in Algorithm~\ref{alg:asym_tensor}. Then, with probability at least $1-O(n_{\max}^{-a})$,
\begin{align}
&\big\|\,Q^{(j)}(Y^{(j)}-A^{(j)})\big(\textstyle\bigotimes_{t\neq j} Q^{(t)}\big)\,\big\|\\
&\le
C\,r\,\|A^{(j)}\|\,
\Big(\,\prod_{t=1}^{k}\|Q^{(t)}\|_{2,\infty}\Big)\cdot\\
& \sqrt{\mu_1\mu_2}\,
\left(\sqrt{\frac{ak\log n_{\max}}{q}}
\ +\ ak\frac{\log n_{\max}}{q}\cdot \frac{1}{\sqrt{n_{\mathrm{tot}}}}\right).
\label{eq:QYA_asym_bound}
\end{align}
\end{lemma}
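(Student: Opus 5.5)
This is the mode-$j$ analogue of Lemma~\ref{lemma:Projection_Y}, and I would prove it the same way, with the matrix Bernstein inequality (Lemma~\ref{thm:(Matrix-Bernstein-inequality)}). The first step is to condition on all wedge samples. By Algorithm~\ref{alg:asym_tensor} these are independent of $\Omega$, so the projectors $Q^{(1)},\dots,Q^{(k)}$ become fixed and only the uniform subsampling remains random. Write $W:=\bigotimes_{t\neq j}Q^{(t)}$ and set $M:=Q^{(j)}(Y^{(j)}-A^{(j)})W$. Expanding in the standard basis gives
\[
M=\sum_{i\in[n_j],\,c\in[N_j]}\bigl(Y^{(j)}_{ic}-A^{(j)}_{ic}\bigr)\,(Q^{(j)}e_i)(We_c)^\top .
\]
Each $Y^{(j)}_{ic}-A^{(j)}_{ic}=(q^{-1}\1\{(i,c)\in\Omega\}-1)A^{(j)}_{ic}$ has mean zero, and the summands are independent rank-one matrices.

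Next I would bound the summands uniformly. Each column index $c$ corresponds to a multi-index $(c_t)_{t\neq j}$, so $We_c=\bigotimes_{t\neq j}Q^{(t)}e_{c_t}$ and $\|We_c\|_2\le\prod_{t\neq j}\|Q^{(t)}\|_{2,\infty}$. Together with $\|Q^{(j)}e_i\|_2\le\|Q^{(j)}\|_{2,\infty}$, every summand is bounded by
\[
B\le 2q^{-1}\|A^{(j)}\|_{\max}\prod_{t=1}^k\|Q^{(t)}\|_{2,\infty}.
\]
Lemma~\ref{lem:matrix_norm}, applied to $A^{(j)}$ under the uniform incoherence \eqref{eq:asym_incoh} and using $n_jN_j=n_{\mathrm{tot}}$, gives $\|A^{(j)}\|_{\max}\le r\sqrt{\mu_1\mu_2/n_{\mathrm{tot}}}\,\|A^{(j)}\|$.

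For the variance parameter, both $\|\sum\E MM^\top\|$ and $\|\sum\E M^\top M\|$ are bounded by $\sum_{i,c}q^{-1}(A^{(j)}_{ic})^2\prod_t\|Q^{(t)}\|_{2,\infty}^2$. This is at most $q^{-1}\mu_1\mu_2r^2\|A^{(j)}\|^2\prod_t\|Q^{(t)}\|_{2,\infty}^2$, since there are $n_{\mathrm{tot}}$ terms and the factor $1/n_{\mathrm{tot}}$ cancels. (Bounding by $\|A^{(j)}\|_F^2$ would also work and is even cleaner.)

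Finally I would apply Bernstein in dimension $d_1+d_2=n_j+N_j\le k\,n_{\max}^{k-1}$, roughly. Then $\log(d_1+d_2)\lesssim k\log n_{\max}$, which explains the factor $ak\log n_{\max}$ in the statement, and the failure probability is at most $n_{\max}^{-a}$. The Bernstein bound $\sqrt{a\sigma^2\log(d_1+d_2)}+aB\log(d_1+d_2)$ then gives exactly the two displayed terms, $\sqrt{ak\log n_{\max}/q}$ and $ak\,q^{-1}\log n_{\max}/\sqrt{n_{\mathrm{tot}}}$. Removing the conditioning is free, because the bound holds for every realization of the projectors.

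The only delicate point is the dimension and log bookkeeping: checking that $\log(n_j+N_j)\le k\log n_{\max}$ for $n_{\max}\ge2$. Everything else mirrors the symmetric proof, with the one real change being that $\|Q\|_{2,\infty}^k$ becomes $\prod_t\|Q^{(t)}\|_{2,\infty}$.
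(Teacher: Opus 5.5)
Your proposal is correct and follows the paper's proof essentially step for step. You expand $Q^{(j)}(Y^{(j)}-A^{(j)})\bigl(\bigotimes_{t\neq j}Q^{(t)}\bigr)$ into independent centered rank-one summands, bound each by $2q^{-1}\|A^{(j)}\|_{\max}\prod_{t}\|Q^{(t)}\|_{2,\infty}$ with $\|A^{(j)}\|_{\max}\le r\sqrt{\mu_1\mu_2/n_{\mathrm{tot}}}\,\|A^{(j)}\|$, use the variance proxy $q^{-1}\mu_1\mu_2r^2\|A^{(j)}\|^2\prod_t\|Q^{(t)}\|_{2,\infty}^2$, and apply matrix Bernstein with $\log(n_j+N_j)\lesssim k\log n_{\max}$. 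Your explicit conditioning on the wedge samples, which fixes the projectors and makes them independent of $\Omega$, is left implicit in the paper and is exactly what justifies the argument.
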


\begin{proof}
Let $d_1=n_j$ and $d_2=N_j$. Write $Y^{(j)}-A^{(j)}=\sum_{i\in[d_1],\,\alpha\in[d_2]}(Y^{(j)}_{i\alpha}-A^{(j)}_{i\alpha})\,e_ie_\alpha^\top$.
Then
\[
Q^{(j)}(Y^{(j)}-A^{(j)})\Big(\bigotimes_{t\neq j}Q^{(t)}\Big)
=\sum_{i,\alpha}(Y^{(j)}_{i\alpha}-A^{(j)}_{i\alpha})\,(Q^{(j)}e_i)\,\Big(\big(\bigotimes_{t\neq j}Q^{(t)}\big)e_\alpha\Big)^\top.
\]
For each $\alpha$ corresponding to a tuple $(\alpha_t)_{t\neq j}$, we have
$\big(\bigotimes_{t\neq j}Q^{(t)}\big)e_\alpha=\bigotimes_{t\neq j}(Q^{(t)}e_{\alpha_t})$, hence
\[
\Big\|\Big(\bigotimes_{t\neq j}Q^{(t)}\Big)e_\alpha\Big\|_2
=\prod_{t\neq j}\|Q^{(t)}e_{\alpha_t}\|_2
\le \prod_{t\neq j}\|Q^{(t)}\|_{2,\infty}.
\]
Also $\|Q^{(j)}e_i\|_2\le \|Q^{(j)}\|_{2,\infty}$, so every summand has operator norm bounded by
\[
\|(Y^{(j)}_{i\alpha}-A^{(j)}_{i\alpha})\,(Q^{(j)}e_i)\,((\otimes_{t\neq j}Q^{(t)})e_\alpha)^\top\|
\ \le\ |Y^{(j)}_{i\alpha}-A^{(j)}_{i\alpha}|\cdot \prod_{t=1}^k \|Q^{(t)}\|_{2,\infty}.
\]
Moreover, by the incoherence \eqref{eq:asym_incoh} applied to $A^{(j)}$,
\[
\|A^{(j)}\|_{\max}
\le \max_{i,\alpha}\|e_i^\top U^{(j)}\|_2\,\|\Sigma^{(j)}\|\,\|e_\alpha^\top V^{(j)}\|_2
\le r\,\|A^{(j)}\|\sqrt{\frac{\mu_1\mu_2}{n_jN_j}}
= r\,\|A^{(j)}\|\sqrt{\frac{\mu_1\mu_2}{n_{\mathrm{tot}}}}.
\]
Since $Y^{(j)}_{i\alpha}=q^{-1}A^{(j)}_{i\alpha}$ with probability $q$ and $0$ otherwise, we have
$|Y^{(j)}_{i\alpha}-A^{(j)}_{i\alpha}|\le 2q^{-1}|A^{(j)}_{i\alpha}|$, hence
\[
R\;\lesssim\; q^{-1}\,r\,\|A^{(j)}\|\sqrt{\frac{\mu_1\mu_2}{n_{\mathrm{tot}}}}\cdot \prod_{t=1}^k \|Q^{(t)}\|_{2,\infty}.
\]
A standard matrix Bernstein calculation exactly as in Lemma~\ref{lemma:Projection_Y}, with dimension parameter \[\log(d_1+d_2)\lesssim k\log n_{\max}\]
gives a variance proxy
\[
\sigma^2\ \lesssim\ q^{-1}\,\mu_1\mu_2\,r^2\,\|A^{(j)}\|^2\cdot \Big(\prod_{t=1}^k \|Q^{(t)}\|_{2,\infty}\Big)^2,
\]
and therefore
\begin{align}
\big\|Q^{(j)}(Y^{(j)}-A^{(j)})(\otimes_{t\neq j}Q^{(t)})\big\|
\ & \lesssim\ r\|A^{(j)}\|\Big(\prod_{t=1}^k \|Q^{(t)}\|_{2,\infty}\Big)\sqrt{\mu_1\mu_2}\\
& \left(\sqrt{\frac{ak\log n_{\max}}{q}}+\frac{ak\log n_{\max}}{q}\cdot \frac{1}{\sqrt{n_{\mathrm{tot}}}}\right),
\end{align}
which is \eqref{eq:QYA_asym_bound}.
\end{proof}
\begin{lemma} \label{lem:Q_2inf_asym}
Under \eqref{eq:asym_sampling_levels}, \edit{with probability $1-O(k n_{\max}^{-a})$, simultaneously for all $j\in[k]$,}
\[
\|Q^{(j)}\|_{2,\infty}\ \le\ 2\sqrt{\frac{\mu_1 r}{n_j}},\qquad
Q^{(j)}:=\widehat U^{(j)}(\widehat U^{(j)})^\top .
\]
\end{lemma}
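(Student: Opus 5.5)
The plan is to repeat the proof of Lemma~\ref{lem:delocalization_Q} mode by mode, using Theorem~\ref{thm:asym_l2inf} in place of Theorem~7. We work on the event of Theorem~\ref{thm:asym_l2inf}, which has probability at least $1-O(k n_{\max}^{-a})$ and holds simultaneously for all $j\in[k]$. Since the bound is needed for every mode at once, it must hold simultaneously in $j$; this is why a union bound over the $k$ modes is used rather than a fixed-mode statement. We may also take the gap event \eqref{eq:asym_gap}, which is needed to define $R^{(j)}$ meaningfully, to be included in the same event.

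Fix $j$. The first step uses that $R^{(j)}=\sgn(H_U^{(j)})$ is orthogonal, so row norms are preserved:
\[
\|\widehat U^{(j)}\|_{2,\infty}=\|\widehat U^{(j)}R^{(j)}\|_{2,\infty}.
\]
By the triangle inequality,
\[
\|\widehat U^{(j)}\|_{2,\infty}\le \|U^{(j)}\|_{2,\infty}+\|\widehat U^{(j)}R^{(j)}-U^{(j)}\|_{2,\infty}.
\]
Theorem~\ref{thm:asym_l2inf} bounds the second term by
\[
C\kappa^4\sqrt{\frac{a\mu_1^2\mu_2r^3\log n_{\max}}{p\,n_{\mathrm{tot}}}}\;\|U^{(j)}\|_{2,\infty}.
\]
Under \eqref{eq:asym_sampling_levels}, the square root is at most $(C_0\kappa^8)^{-1/2}\kappa^{-4}\cdot \kappa^4$-type quantity, precisely $\le C_0^{-1/2}\kappa^{-4}$. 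Hence the prefactor is at most $C C_0^{-1/2}\le 1$ once $C_0$ is large. This gives $\|\widehat U^{(j)}\|_{2,\infty}\le 2\|U^{(j)}\|_{2,\infty}\le 2\sqrt{\mu_1 r/n_j}$ by \eqref{eq:asym_incoh}.

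The last step is to apply Lemma~\ref{lem:norm_bound} with $\|\widehat U^{(j)}\|=1$:
\[
\|Q^{(j)}\|_{2,\infty}=\|\widehat U^{(j)}(\widehat U^{(j)})^\top\|_{2,\infty}\le\|\widehat U^{(j)}\|_{2,\infty}\le 2\sqrt{\mu_1 r/n_j}.
\]

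The only delicate point is the bookkeeping of constants. The universal constant $C$ from Theorem~\ref{thm:asym_l2inf} must be absorbed by enlarging $C_0$ in \eqref{eq:asym_sampling_levels}. This is legitimate because $C$ does not depend on $C_0$, and the $\kappa^8$ factor in the sampling condition exactly cancels the $\kappa^4$ prefactor after taking the square root.
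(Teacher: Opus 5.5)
Your proposal is correct and follows the paper's proof exactly: the paper just says the argument is identical to that of Lemma~\ref{lem:delocalization_Q}, namely invariance of row norms under the orthogonal $\sgn(H)$, the triangle inequality with the $\ell_{2,\infty}$ bound (here Theorem~\ref{thm:asym_l2inf}) made at most $\|U^{(j)}\|_{2,\infty}$ by enlarging $C_0$, and then $\|Q^{(j)}\|_{2,\infty}\le\|\widehat U^{(j)}\|_{2,\infty}\|\widehat U^{(j)}\|$. The only blemish is the garbled phrase about the square root, which should simply read ``$\le C_0^{-1/2}\kappa^{-4}$'' as you then state; also, the gap event is not needed to define $R^{(j)}$, since $\sgn(H)$ is always orthogonal, though including it is harmless.
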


\begin{proof}
The proof is identical to the symmetric proof of Lemma~\ref{lem:delocalization_Q}.
\end{proof}
Using Theorem~\ref{thm:asym_l2inf} and Lemmas~\ref{lem:QYA_asym} and~\ref{lem:Q_2inf_asym}, we can show that the projection error is dominated by the mode that carries the largest mode-wise projection error.

\begin{proof}[Proof of Theorem~\ref{thm:asym_main}]
Define the (data-dependent) projection of the \emph{true} tensor
\[
T^* \ :=\ T \times_1 Q^{(1)} \times_2 Q^{(2)} \cdots \times_k Q^{(k)}.
\]
Then $\widehat T$ is the same projection applied to $Y$, so $\widehat T-T^*=(Y-T)\times_1 Q^{(1)}\times\cdots\times_k Q^{(k)}$ and
\[
\|T-\widehat T\|_F\le \|T-T^*\|_F+\|T^*-\widehat T\|_F.
\]

\emph{(i) Deterministic projection error.}
A standard telescoping expansion over modes yields
\[
\|T-T^*\|_F \ \le\ \sum_{j=1}^k \|(I-Q^{(j)})A^{(j)}\|_F \ \le\ k\sqrt r \,\max_j \|(I-Q^{(j)})A^{(j)}\|.
\]
By Davis--Kahan (as in the symmetric proof, mode-wise) and \eqref{eq:asym_gap},
\[
\|(I-Q^{(j)})A^{(j)}\|
\ \le\ \frac{2\|\Delta^{(j)}\|}{\lambda^{(j)}_r}\,\|A^{(j)}\|
\ \lesssim\ \kappa^2\sqrt{\frac{a\,\mu_1\mu_2\,r^2\log n_{\max}}{p\,n_{\mathrm{tot}}}}\ \|A^{(j)}\|.
\]
Using $\|A^{(j)}\|\le \|A^{(j)}\|_F=\|T\|_F$ gives the first term in \eqref{eq:asym_final_bound}.

\emph{(ii) Statistical error.}
Fix any mode $j$ (say $j=1$ for concreteness). Using $\|X\|_F\le \sqrt r\,\|X\|$ for matrices of rank at most $r$,
\begin{align*}
\|T^*-\widehat T\|_F
&=\big\|\unfold_j(T^*-\widehat T)\big\|_F \\
&=\big\|\,Q^{(j)}(A^{(j)}-Y^{(j)})\big(\textstyle\bigotimes_{t\neq j}Q^{(t)}\big)\,\big\|_F
\ \le\ \sqrt r\ \big\|\,Q^{(j)}(Y^{(j)}-A^{(j)})\big(\textstyle\bigotimes_{t\neq j}Q^{(t)}\big)\,\big\|.
\end{align*}
Apply Lemma~\ref{lem:QYA_asym} and then Lemma~\ref{lem:Q_2inf_asym} to bound
\[
\prod_{t=1}^k \|Q^{(t)}\|_{2,\infty}
\ \le\ \prod_{t=1}^k \left(2\sqrt{\frac{\mu_1 r}{n_t}}\right)
\ =\ 2^k(\mu_1 r)^{k/2}\cdot \frac{1}{\sqrt{n_{\mathrm{tot}}}}.
\]
Substituting this into \eqref{eq:QYA_asym_bound}, and using the sampling lower bound on $q$ in \eqref{eq:asym_sampling_levels} to dominate the
Bernstein ``$R$-term'' by the ``$\sigma$-term'' exactly as in the symmetric proof, yields
\[
\|T^*-\widehat T\|_F
\ \le\ C\cdot 2^k\,r^{1.5+k/2}
\sqrt{\frac{ak\mu_1^{k+1}\mu_2\,\log n_{\max}}{q\,n_{\mathrm{tot}}}}\ \|T\|_F,
\]
which is the second term in \eqref{eq:asym_final_bound}.  
\end{proof}

\section{Proof of Theorem~9}\label{sec:app:proof_cai}
We control each step of the algorithm separately. 

\subsection{Subspace estimation}

The first step is a direct application of Theorem~7. Let $A = \unfold_1\left(T\right)$; from the proof of Lemma~4, $A$ is a rank-$r$ and $(c\mu, c\mu^2)$-incoherent matrix with condition number $\kappa \lesssim \kappa_{\mathrm{CP}}$ and left singular vectors (up to rotation)
\[ U = X(X^\top X)^{-\frac12}. \]
If we let as before
\begin{equation}
    R =\argmin_{O\in \R^{r\times r}, O^\top O =I_r}\|\hat{U} O-U\|_F,
\end{equation}
then we can apply Theorem~7 to obtain:
\begin{proposition}\label{prop:app::subspace:main}
    Assume that there exists a large enough constant $c_0 > 0$ such that
    \begin{equation}
        \edit{p \geq \frac{c_0 \mu^4 r^3 \log(n)}{n^3}}.
    \end{equation} 
    Then with probability at least $1 - O(n^{-10})$, we have
    \begin{align}
        \|\hat UR - U\| &\lesssim  \mathcal{E}_{\mathrm{se}} \\
        \| \hat UR - U \|_{2, \infty} &\lesssim \mathcal{E}_{\mathrm{se}} \sqrt{\frac{\mu r}n}
    \end{align}
    where
    \begin{equation}
        \mathcal{E}_{\mathrm{se}} := \edit{\sqrt{\frac{\mu^4 r^3 \log(n)}{n^3 p}}}
    \end{equation}
\end{proposition}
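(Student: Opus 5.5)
The plan is to obtain the proposition as a direct specialization of Theorem~7 to the unfolding $A=\unfold_1(T)\in\R^{n\times n^2}$, so that $m=n^2$. The argument has three ingredients: verify the incoherence hypotheses, control the condition number $\kappa$ of $A$ by $\kappa_{\mathrm{CP}}=O(1)$, and substitute the resulting parameters into the two bounds of Theorem~7 with $a=10$.

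\emph{Incoherence.} First I check that Lemma~4 applies. Its only extra hypothesis is $r\mu\le n/2$, which follows from the standing assumption $r\le c_2\bigl(n/(\mu^4\log^2 n)\bigr)^{1/3}$ once $n$ is large. Lemma~4 then gives that $A$ is $(\mu_1,\mu_2)$-incoherent with $\mu_1\le 2\mu$ and $\mu_2\le 2\mu^2$ (since $k=3$). It also identifies the column space of $A$ with that of $X$, so $U=X(X^\top X)^{-1/2}$ up to rotation, matching the statement.

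\emph{Condition number.} This is the only step needing a small argument beyond quoting. From the factorization $A=A^{(1)}\Lambda (B^{(1)})^\top$ in the proof of Lemma~4, the singular values of $A$ lie between
\[
\sigma_{\min}(A^{(1)})\,\sigma_{\min}(B^{(1)})\,\lambda_{\min}
\quad\text{and}\quad
\|A^{(1)}\|\,\|B^{(1)}\|\,\lambda_{\max}.
\]
By Gershgorin, exactly as in that proof, the Gram matrices $G^{(1)}=(A^{(1)})^\top A^{(1)}$ and $H^{(1)}=(B^{(1)})^\top B^{(1)}$ have eigenvalues in $[1/2,3/2]$. Hence $\kappa\le 3\lambda_{\max}/\lambda_{\min}=3\kappa_{\mathrm{CP}}=O(1)$, so every power of $\kappa$ appearing in Theorem~7 is absorbed into constants.

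\emph{Substitution.} Take $a=10$ in Theorem~7, giving failure probability $O(n^{-10})$. The sampling condition $p\ge C_0\kappa^4 a\mu_1^2\mu_2 r^3\log n/(mn)$ becomes $p\gtrsim \mu^4 r^3\log n/n^3$, which is exactly the assumption with $c_0$ large; this also implies the weaker first condition of Theorem~7. The two bounds then become:
\begin{itemize}
\item The operator-norm bound \eqref{eq:wedge_spectral_op} gives
\[
\|\hat UR-U\|\lesssim \sqrt{\frac{\mu^3 r^2\log n}{n^3p}}\le \mathcal E_{\mathrm{se}},
\]
since $\mu,r\ge1$.
\item The $\ell_{2,\infty}$ bound \eqref{eq:wedge_spectral_loo} gives
\[
\|\hat UR-U\|_{2,\infty}\lesssim \sqrt{\frac{\mu^4 r^3\log n}{n^3 p}}\;\|U\|_{2,\infty},
\qquad\text{with }\ \|U\|_{2,\infty}\le\sqrt{\frac{\mu_1 r}{n}}\lesssim\sqrt{\frac{\mu r}{n}}.
\]
\end{itemize}
Combining these yields the claimed $\mathcal E_{\mathrm{se}}\sqrt{\mu r/n}$.

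I expect no real obstacle. The only points needing care are tracking the constants in $\mu_1,\mu_2$ and the $\kappa$ bound via the Gram matrices, and checking that $R$ here coincides with the Procrustes rotation $\sgn(H)$ used in Theorem~7.
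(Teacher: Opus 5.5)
Your proposal is correct and matches the paper's proof: the paper also applies Theorem~7 to $A=\unfold_1(T)$ with $m=n^2$ and $a=10$, takes $(\mu_1,\mu_2)\lesssim(\mu,\mu^2)$ from Lemma~4, and uses $\kappa\lesssim\kappa_{\mathrm{CP}}$. Your Gershgorin bound on the Gram matrices, which gives $\kappa\le 3\kappa_{\mathrm{CP}}$, spells out a step the paper only asserts.
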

\begingroup
We will also need leave-one-out versions of both the wedge initialization and the uniform refinement sample. For the wedge initialization, let
\[
    Z^{(s)}=AA^\top+\Delta^{(s)}
\]
be the leave-one-out wedge matrix from the proof of Theorem~7, where $\Delta^{(s)}$ is obtained from $\Delta=Z-AA^\top$ by setting the $s$-th row and column to zero. Let $\hat U^{(s)}$ be the leading $r$ eigenvectors of $Z^{(s)}$.

Separately, for the uniformly sampled refinement tensor, define
\[
 \tilde T^{(s)}_{(i_1, i_2, i_3)} =
 \begin{cases}
    qT_{i_1, i_2, i_3} &\text{if } i_1 = s \text{ or } i_2 = s \text{ or } i_3 = s, \\
    T_{i_1, i_2, i_3} & \text{if } (i_1, i_2, i_3) \in \Omega \text{ and } i_1, i_2, i_3 \neq s, \\
    0 &\text{otherwise}.
\end{cases}
\]
Then $\tilde T^{(s)}$ does not depend on
\[
    \Omega_s := \{ (i_1, i_2, i_3) \in \Omega\,:\,i_1 = s \text{ or } i_2 = s \text{ or } i_3 = s\},
\]
and $\hat U^{(s)}$ is independent of the entire uniform refinement sample $\Omega$. Equivalently, $\tilde T^{(s)}=\mathcal P_\Omega^{(s)}(T)$, where $\mathcal P_\Omega^{(s)}$ multiplies entries touching $s$ by $q$ and agrees with $\mathcal P_\Omega$ on all other entries.
\endgroup
We can repeat the analysis of Lemma~\ref{lemma:Davis_Kahan_loo} to obtain:
\begin{proposition}\label{prop:app::subspace:loo}
    Assume that there exists a large enough constant $c_0 > 0$ such that
    \begin{equation}
        \edit{p \geq \frac{c_0 \mu^4 r^3 \log(n)}{n^3}}.
    \end{equation} 
    Then with probability at least $1 - O(n^{-10})$, \edit{simultaneously for all $s\in[n]$}, we have
    \begin{align}
        \left\| \hat U \hat U^\top -  \hat U^{(s)}\hat U^{(s)}{}^\top\right\| &\lesssim \mathcal{E}_{\mathrm{se}} \cdot\sqrt{\frac{\mu r}{n}}
    \end{align}
\end{proposition}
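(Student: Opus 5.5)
The plan is to reuse the leave-one-out machinery from the proof of Theorem~7, specialized to $A=\unfold_1(T)\in\R^{n\times n^2}$ with $m=n^2$. By the proof of Lemma~4, $A$ is $(c\mu,c\mu^2)$-incoherent with $\kappa\lesssim\kappa_{\mathrm{CP}}=O(1)$. The starting point is \eqref{eq:415claim4} of Lemma~\ref{lemma:Davis_Kahan_loo}, which gives, simultaneously for all $s\in[n]$,
\[
\left\| \hat U \hat U^\top - \hat U^{(s)}\hat U^{(s)}{}^\top\right\| \le \frac{C\|(Z-Z^{(s)})\hat U^{(s)}\|}{\lambda_r}.
\]
This holds once $\lambda_r\ge 4\|\Delta\|$, which Lemma~\ref{lemma:spectral_gap} ensures under the assumed lower bound on $p$ (with $a$ taken as a constant, e.g.\ $a=11$, so the failure probability is $O(n^{-10})$).

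Next I bound the numerator with \eqref{eq:E55} of Lemma~\ref{lemma:LOO_concentration}. It is legitimate here because $Z-Z^{(s)}$ depends only on the wedge indicators touching row/column $s$, while $\hat U^{(s)}$ is independent of them. Since $p\gtrsim \mu^4r^3\log n/n^3$ makes the quadratic term dominated by the square-root term, this gives
\[
\|(Z-Z^{(s)})\hat U^{(s)}\|\lesssim \sqrt{\frac{\mu_1\mu_2 r^2\log n}{pn^3}}\,\|\hat U^{(s)}\|_{2,\infty}\|A\|^2 .
\]
Dividing by $\lambda_r\ge\kappa^{-2}\|A\|^2$ and using $\mu_1\mu_2\lesssim\mu^3\le\mu^4$, the prefactor is $\lesssim \mathcal E_{\mathrm{se}}$ (the extra factor $r^{1/2}$ in $\mathcal E_{\mathrm{se}}$ gives slack).

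The remaining and main step is to show $\|\hat U^{(s)}\|_{2,\infty}\lesssim\sqrt{\mu r/n}$ uniformly in $s$. I would follow the self-bounding chain \eqref{eq:self_bounding}–\eqref{eq:gap2}. First, $\|\hat U^{(s)}\|_{2,\infty}\le 2\|\hat U^{(s)}H^{(s)}\|_{2,\infty}$ by \eqref{eq:415claim3}. Then
\[
\|\hat U^{(s)}H^{(s)}\|_{2,\infty}\le\|\hat UH\|_{2,\infty}+\|\hat UH-\hat U^{(s)}H^{(s)}\|,
\]
and \eqref{eq:gap2} bounds the last term by $16C\kappa^2\eta(\|\hat UH-U\|_{2,\infty}+\|U\|_{2,\infty})$. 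Proposition~\ref{prop:app::subspace:main} (i.e.\ Theorem~7's $\ell_{2,\infty}$ bound) gives $\|\hat UH-U\|_{2,\infty}\lesssim\|U\|_{2,\infty}\lesssim\sqrt{\mu r/n}$. The delicate point is that \eqref{eq:gap} requires $8C\kappa^2\eta\le 1/2$ with $\eta$ carrying $\mu_1^2\mu_2r^3$. I must check that the assumed $p\ge c_0\mu^4r^3\log n/n^3$ suffices after inserting $\mu_1\lesssim\mu$ and $\mu_2\lesssim\mu^2$. If the powers of $\mu$ do not quite match, I would enlarge $c_0$ or accept a constant-factor change, noting $\kappa=O(1)$.

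Finally, combining these bounds yields $\|\hat U\hat U^\top-\hat U^{(s)}\hat U^{(s)}{}^\top\|\lesssim\mathcal E_{\mathrm{se}}\sqrt{\mu r/n}$ for all $s$ on a single event. That event is the intersection of those in Lemmas~\ref{lemma:spectral_gap}, \ref{lemma:Davis_Kahan_loo}, \ref{lemma:LOO_concentration} and Proposition~\ref{prop:app::subspace:main}, with union bounds over $s$ already built in, so it has probability $1-O(n^{-10})$.
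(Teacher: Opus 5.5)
Your proposal is correct and is essentially the paper's argument. The paper obtains this proposition by rerunning the leave-one-out analysis of Theorem~7: the Davis--Kahan bound \eqref{eq:415claim4}, then \eqref{eq:E55} for $(Z-Z^{(s)})\hat U^{(s)}$, then the self-bounding chain \eqref{eq:self_bounding}--\eqref{eq:gap2} to get $\|\hat U^{(s)}\|_{2,\infty}\lesssim\|U\|_{2,\infty}$. The check you left open succeeds at once: $\mu_1^2\mu_2\le 8\mu^4$ and $mn=n^3$ give $\eta\lesssim\mathcal{E}_{\mathrm{se}}$, so \eqref{eq:gap} follows from the hypothesis on $p$ for large $c_0$. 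Note that enlarging $c_0$ could not have repaired a genuine mismatch in powers of $\mu$, but none occurs.
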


\subsection{Refinement}

\paragraph{Preliminaries}

We now move onto the refinement part of \citep{cai2022nonconvex}. Denote
\[ \bar x_i = \frac{x_i}{\|x_i\|}, \quad \text{and} \quad \lambda_i = \|x_i\|^3. \]

For $\tau \in [L]$, the algorithm:
\begin{enumerate}
    \item generates a random vector $\theta^\tau = \hat U \hat U^\top g^\tau$, where $g^\tau \sim \mathcal{N}(0, I_n)$;
    \item forms the matrix $\tilde M^\tau = q^{-1}\tilde T \times_3 \theta^\tau$
    %\item records the top singular vector $\hat u^\tau$ of $\tilde M^\tau$, the associated scalar product $\hat\lambda^\tau = \langle q^{-1}\tilde T, \hat u^\tau \rangle$ and the spectral gap between the first two singular values of $M$.
    \item records the top singular vector $\hat u^\tau$, the scalar $\hat\lambda^\tau=\langle q^{-1}\tilde T,(\hat u^\tau)^{\otimes 3}\rangle$, and the spectral gap between the first two singular values of $\tilde M^\tau$.
\end{enumerate}
It then runs a pruning procedure to keep only $r$ of the pairs $(\hat u_i, \hat \lambda_i)$, and estimates each $x_i$ via $\hat x_i = (\hat \lambda_i)^{1/3}\hat u_i$. The guarantees for the direction and scalar estimates are given in the following lemma:

\begin{proposition}\label{prop:app:refinement:main}
    Let $\delta > 0$ be a small constant, and assume that there exists $c_0, c_1 > 0$ large enough and $c_2$ small enough such that
    \[ \edit{p \geq \frac{c_0 \mu^4 r^4 \log^2(n)}{n^3}}, \quad  q \geq \frac{c_1 \mu^3 r^3 \log^{5}(n)}{n^2} \quad \text{and} \quad r \leq c_2\sqrt{\frac{n}{\mu\log^2(n)}} \]
    Then there exists a choice of $L$ such that with probability at least $1 - \delta$, there is a permutation $\pi: [r] \to [r]$ such that \edit{for every $i\in[r]$},
    \begin{align}
        \left\|\hat u_i - \bar x_{\pi(i)}\right\| &\lesssim \sqrt{\frac{\mu^2 r^2 \log^2(n)}{n^2q}} + \sqrt{\frac{\mu r \log(n)}{n}}\label{eq:retrieval_l2} \\
        \left\|\hat u_i - \bar x_{\pi(i)}\right\|_\infty &\lesssim \left(\edit{\sqrt{\frac{\mu^4 r^4 \log^2(n)}{n^3 p}}} + \sqrt{\frac{\mu^2 r^3 \log^3(n)}{n^2q}} + \sqrt{\frac{\mu r^2 \log^2(n)}{n}}  \right) \cdot \sqrt{\frac{\mu}n} \label{eq:retrieval_linf} \\
        \left|\hat \lambda_i - \lambda_{\pi(i)} \right| &\lesssim \left( \sqrt{\frac{\mu^2 r^2 \log^2(n)}{n^2q}} + \sqrt{\frac{\mu r \log(n)}{n}} \right) \lambda_{\pi(i)} \label{eq:retrieval_lbda}.
    \end{align}
\end{proposition}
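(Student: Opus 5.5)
The natural route is to reuse the analysis of the retrieval step in \citep[Algorithm 3 and its Theorem/Lemmas on random projections]{cai2022nonconvex}. We change only two inputs: the subspace estimate $\hat U$ now comes from wedge sampling (Propositions~\ref{prop:app::subspace:main} and~\ref{prop:app::subspace:loo}), and the sparse tensor $q^{-1}\tilde T$ is controlled in the incoherent norm of Theorem~\ref{thm:symmetric_tensor_concentration} rather than in spectral norm.

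\textbf{Step 1: signal part.} For each $\tau$, write $\tilde M^\tau = T\times_3\theta^\tau + E^\tau$ with $E^\tau=(q^{-1}\tilde T - T)\times_3\theta^\tau$. The signal part is $\sum_i \lambda_i\langle \bar x_i,\theta^\tau\rangle \bar x_i\bar x_i^\top$. Since $\theta^\tau=\hat U\hat U^\top g^\tau$ and $\|\hat U\hat U^\top-UU^\top\|\lesssim\mathcal E_{\mathrm{se}}$, the coefficients $\langle\bar x_i,\theta^\tau\rangle$ are, up to a perturbation of order $\mathcal E_{\mathrm{se}}\sqrt{\log n}$, a Gaussian vector whose covariance is close to the Gram matrix of the $\bar x_i$ (near identity by CP incoherence). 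The Gaussian anti-concentration argument of \citep{cai2022nonconvex} then shows that with $L$ large enough, for each $i$ some $\tau$ has a top coefficient exceeding the second by a gap $\gtrsim \lambda_{\min}/\sqrt{\log n}$. By the Gershgorin bound in Lemma~\ref{lem:unfold_incoherence}, the cross-terms between different $\bar x_i$ contribute the $\sqrt{\mu r\log n/n}$ terms in \eqref{eq:retrieval_l2} and \eqref{eq:retrieval_lbda}.

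\textbf{Step 2: noise part, the main obstacle.} Previously $\|E^\tau\|$ was bounded using $q\gg n^{-3/2}$, which is no longer available. Instead I would bound $\|E^\tau\|=\sup_{u,v}\langle q^{-1}\tilde T-T, u\otimes v\otimes\theta^\tau\rangle$. Here only the third factor need be delocalized, and $\|\theta^\tau\|_\infty\lesssim \|\hat U\|_{2,\infty}\sqrt{r\log n}\lesssim\sqrt{\mu r^2\log n/n}$ by Lemma~\ref{lem:delocalization_Q}. This is exactly a $\delta$-norm with two free factors, so Theorem~\ref{thm:symmetric_tensor_concentration} with $k=3$, $\|T\|_\infty\lesssim\mu^{3/2}r\lambda_{\max}/n^{3/2}$ gives $\|E^\tau\|\lesssim \mathrm{polylog}(n)\,(\sqrt{n/q}+\sqrt{\mu}/(q\sqrt n))\,\mu^{3/2}r^{3/2}\lambda/n^{3/2}$, which is $\lesssim\sqrt{\mu^2r^2\log^2 n/(n^2q)}\,\lambda$ when $q\gtrsim n^{-2}\mathrm{polylog}$. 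The delicate points are:
\begin{itemize}
\item $\theta^\tau$ depends on the wedge sample but not on $\Omega$, so conditioning on $\hat U$ and $g^\tau$ is legitimate;
\item the bound must be uniform over $L$ draws, handled by a union bound.
\end{itemize}
Wedin's theorem then gives \eqref{eq:retrieval_l2}, and $\hat\lambda^\tau$ is bounded by expanding $\langle q^{-1}\tilde T,(\hat u^\tau)^{\otimes3}\rangle$ with the same $\delta$-norm estimate, once $\hat u^\tau$ is known to be delocalized. This yields \eqref{eq:retrieval_lbda}.

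\textbf{Step 3: $\ell_\infty$ bound via leave-one-out.} For \eqref{eq:retrieval_linf}, write $\hat u=\tilde M\hat u/\sigma_1$ and bound each coordinate $s$. Replace $\hat U$ by $\hat U^{(s)}$, at cost $\mathcal E_{\mathrm{se}}\sqrt{\mu r/n}$ from Proposition~\ref{prop:app::subspace:loo}; this gives the $p$-term. Replace $\tilde T$ by $\tilde T^{(s)}$, whose top vector $\hat u^{(s)}$ is independent of $\Omega_s$. The row $e_s^\top E\,\hat u^{(s)}$ is then a sum of independent terms, and scalar Bernstein gives the $\sqrt{\mu^2r^3\log^3 n/(n^2q)}\sqrt{\mu/n}$ contribution. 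The difference $\hat u-\hat u^{(s)}$ is handled by Davis--Kahan, closing a self-bounding inequality as in Lemma~\ref{lemma:E6}.

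Finally, the pruning step is unchanged from \citep{cai2022nonconvex}: the gap and $\hat\lambda$ estimates select one representative per component, giving the permutation $\pi$ with probability $1-\delta$.
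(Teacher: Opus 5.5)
Your proposal has a genuine gap in Step 2, and it propagates to all three conclusions. Bounding $\|E^\tau\|\le\|\theta^\tau\|_2\,\|q^{-1}\tilde T-T\|_\delta$ through Theorem~\ref{thm:symmetric_tensor_concentration} controls the noise only through the entrywise bound $\|T\|_\infty\lesssim\mu^{3/2}r\lambda_{\max}/n^{3/2}$. With $\|\theta^\tau\|_2\asymp\sqrt r$, the leading term is of order $\log^5(n)\,\mu^{3/2}r^{3/2}\lambda_{\max}/(n\sqrt q)$. This exceeds the target $\mu r\log(n)\,\lambda_{\max}/(n\sqrt q)$ in \eqref{eq:retrieval_l2} by a factor $\sqrt{\mu r}\log^4 n$ that does not depend on $q$. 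So your sentence ``which is $\lesssim\sqrt{\mu^2r^2\log^2 n/(n^2q)}\,\lambda$ when $q\gtrsim n^{-2}\mathrm{polylog}$'' is false. Worse, at the proposition's threshold $q\asymp c_1\mu^3r^3\log^5(n)/n^2$ your bound is of order $c_1^{-1/2}\log^{5/2}(n)\,\lambda_{\max}$. That exceeds the eigengap $\gtrsim\lambda_{\min}$, so Wedin cannot be invoked at all. Since $\|\tilde M^\tau-M^\tau\|$ also feeds \eqref{eq:retrieval_lbda} and the leave-one-out error $\mathcal E_{\mathrm{op}}$, everything downstream inherits the loss.

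The missing idea is that this step is not an obstacle here. You note that $\Omega$ is independent of $(\hat U,g^\tau)$, but then discard that independence by taking a supremum over the third factor. Instead, condition on $\theta^\tau$ and apply matrix Bernstein directly to $\sum_{a,b,c}(q^{-1}\omega_{abc}-1)T_{abc}\theta^\tau_c\,e_ae_b^\top$; this is what the paper does via \citep[Lemma D.4]{cai2022nonconvex}. Its variance is at most $q^{-1}\|\theta^\tau\|_\infty^2\|\unfold_1(T)\|_{2,\infty}^2\lesssim q^{-1}\|\theta^\tau\|_\infty^2\,\mu r\lambda_{\max}^2/n$, using row-norm incoherence rather than $\|T\|_\infty$. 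With $\|\theta^\tau\|_\infty\lesssim\|\hat U\|_{2,\infty}\sqrt{\log n}$ this gives exactly $\sqrt{\mu^2r^2\log^2 n/(n^2q)}\,\lambda_{\max}$. Your bound $\|\theta^\tau\|_\infty\lesssim\|\hat U\|_{2,\infty}\sqrt{r\log n}$ is loose by $\sqrt r$ and would itself spoil the stated rate. Since $\hat U^\top g^\tau\sim N(0,I_r)$, each coordinate satisfies $\theta^\tau_s\sim N(0,\|e_s^\top\hat U\|_2^2)$.

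The incoherent norm is genuinely needed only in the leave-one-out step, where an $\Omega$-dependent vector meets the noise. Comparing $\hat u^\tau$ with the top singular vector of $q^{-1}\tilde T\times_3\theta^{(s),\tau}$ produces $(q^{-1}\tilde T-T)\times_2\hat u^\tau\times_3(\theta^\tau-\theta^{(s),\tau})$. The paper bounds this with $\delta=\|\hat u^\tau\|_\infty$. Because that $\delta$ is data-dependent, Theorem~\ref{thm:delocalized_concentration} must first be made uniform over a dyadic grid of $\delta$'s. Only then does the self-bounding inequality $\|\hat u^\tau\|_\infty\le c\|\hat u^\tau\|_\infty+\sqrt{\mu/n}$ close. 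Your Step 3 (``handled by Davis--Kahan'') should make this term and the uniformity issue explicit.

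Separately, the gap $\lambda_{\min}/\sqrt{\log n}$ you obtain in Step 1 is too weak for the stated rates: Wedin would inflate both terms of \eqref{eq:retrieval_l2} by $\sqrt{\log n}$. You need the order-$\lambda_{\min}$ gap supplied by \citep[Lemma 5.13]{cai2022nonconvex} for $L\gtrsim r^{2\kappa^2}$.
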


We will again need a leave-one-out analysis to derive $\ell_\infty$ bounds in this step and the next. For $s \in [n]$, we let $\theta^{(s), \tau}, \edit{\tilde M^{(s), \tau}}, \hat u^{(s), \tau}, \edit{\hat\lambda^{(s), \tau}}, \gamma^{(s), \tau}$ correspond to the same quantities as above but computed using $\tilde T^{(s)}, \hat U^{(s)}$ instead of $\tilde T, \hat U$ respectively. The corresponding leave-one-out guarantees are the following:
\begin{proposition}\label{prop:app:refinement:loo_main}
    Instantiate the same assumptions as Proposition~\ref{prop:app:refinement:main}. Then with probability at least $1-\delta$, \edit{simultaneously for all $s \in [n]$ and every $i\in[r]$},
    \begin{align}
        \left\|\hat u_i - \hat u_{i}^{(s)}\right\| &\lesssim \left(\edit{\sqrt{\frac{\mu^4 r^4 \log^2(n)}{n^3 p}}} + \sqrt{\frac{\mu^2 r^3 \log^3(n)}{n^2q}} + \sqrt{\frac{\mu r^2 \log^2(n)}{n}}  \right) \cdot \sqrt{\frac{\mu}n} \label{eq:retrieval_loo_l2} \\
        \left| \left(\hat u_i^{(s)} - \bar x_{\pi(i)}\right)_s\right| &\lesssim \left(\sqrt{\frac{\mu^2 r^3 \log^3(n)}{n^2q}} + \sqrt{\frac{\mu r^2 \log^2(n)}{n}}\right) \cdot \sqrt{\frac{\mu}n} \label{eq:retrieval_loo_1coord} \\
        \left|\hat \lambda_i - \hat \lambda_i^{(s)} \right| &\lesssim \left(\edit{\sqrt{\frac{\mu^4 r^4 \log^2(n)}{n^3 p}}} + \sqrt{\frac{\mu^2 r^3 \log^3(n)}{n^2q}} + \sqrt{\frac{\mu r^2 \log^2(n)}{n}}  \right) \cdot \sqrt{\frac{\mu}n}\,\lambda_{\max} \label{eq:retrieval_loo_lbda} \\
    \end{align}
\end{proposition}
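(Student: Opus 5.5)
The plan is to follow the leave-one-out argument of \citep{cai2022nonconvex} for their retrieval step (their Algorithm~3), changing only the initialization. The spectral initializer is now the wedge estimate $\hat U$, with leave-one-out version $\hat U^{(s)}$, and the sampling rate $q$ is much sparser. I couple the original and leave-one-out runs by reusing the same Gaussian vectors $g^\tau$ for every $\tau\in[L]$. Then
\[
\theta^\tau-\theta^{(s),\tau}=\bigl(\hat U\hat U^\top-\hat U^{(s)}\hat U^{(s)\top}\bigr)g^\tau .
\]
This difference operator has rank at most $2r$. Proposition~\ref{prop:app::subspace:loo} together with Gaussian concentration therefore gives
\[
\|\theta^\tau-\theta^{(s),\tau}\|\lesssim \mathcal E_{\mathrm{se}}\sqrt{\mu r^2\log n/n}.
\]
This produces the $p$-dependent term $\sqrt{\mu^4r^4\log^2 n/(n^3p)}\sqrt{\mu/n}$ in \eqref{eq:retrieval_loo_l2} and \eqref{eq:retrieval_loo_lbda}. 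The bound must hold simultaneously for all $s\in[n]$ and $\tau\in[L]$; I obtain this with a union bound, using that $L$ is at most polynomial in $n$.

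\textbf{Step 1: bound \eqref{eq:retrieval_loo_l2}.} I write
\[
\tilde M^\tau-\tilde M^{(s),\tau}
= q^{-1}(\tilde T-\tilde T^{(s)})\times_3\theta^{(s),\tau}
+ q^{-1}\tilde T\times_3(\theta^\tau-\theta^{(s),\tau}).
\]
I then apply Wedin's theorem to the leading singular vector, using the spectral gap $\gamma^{(s),\tau}$. The lower bound on this gap comes from the proof of Proposition~\ref{prop:app:refinement:main} run on the leave-one-out data. For the second term, I use that $\theta^\tau-\theta^{(s),\tau}$ lies in the span of $\hat U$ and $\hat U^{(s)}$. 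Both are delocalized: this follows from Theorem~7 via Lemma~\ref{lem:delocalization_Q}. The population part $T\times_3(\cdot)$ is then controlled directly. The fluctuation $q^{-1}\mathcal P_\Omega(T)-T$ is handled through Theorem~\ref{thm:symmetric_tensor_concentration}, tested against the leave-one-out singular vectors, which are delocalized. For the first term, the difference tensor is supported on entries touching $s$, and $\theta^{(s),\tau}$ and $\hat u^{(s),\tau}$ are independent of $\Omega_s$. It therefore suffices to bound the action of $(\tilde T-\tilde T^{(s)})\times_3\theta^{(s),\tau}$ on $\hat u^{(s),\tau}$. This is a sum of independent sparse rank-one pieces, to which I apply matrix Bernstein (Lemma~\ref{thm:(Matrix-Bernstein-inequality)}). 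This yields the $q$-term $\sqrt{\mu^2r^3\log^3 n/(n^2q)}\sqrt{\mu/n}$.

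\textbf{Step 2: bound \eqref{eq:retrieval_loo_1coord}.} I use the eigen-equation $\sigma_1\hat u^{(s)}_i=\tilde M^{(s)}\hat u^{(s)}_i$ and read off coordinate $s$. By construction, every entry of $\tilde T^{(s)}$ touching $s$ equals $qT$. Hence $e_s^\top q^{-1}\tilde T^{(s)}$ coincides exactly with the population row, with no sampling noise, and so
\[
e_s^\top\tilde M^{(s)}\hat u^{(s)}_i=e_s^\top\bigl(T\times_3\theta^{(s)}\bigr)\hat u^{(s)}_i .
\]
Expanding $T=\sum_j\lambda_j\bar x_j^{\otimes3}$ leaves three kinds of terms: the main term $\lambda_{\pi(i)}\langle\bar x_{\pi(i)},\theta^{(s)}\rangle(\bar x_{\pi(i)})_s$, cross terms from $j\ne\pi(i)$, and the error from $\|\hat u_i^{(s)}-\bar x_{\pi(i)}\|$. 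The cross terms are bounded by incoherence, giving $\sqrt{\mu r^2\log^2 n/n}\sqrt{\mu/n}$. The error term is bounded by \eqref{eq:retrieval_l2} applied to the leave-one-out run, multiplied by $\sqrt{\mu/n}$. I also need $\sigma_1(\tilde M^{(s)})\approx\lambda_{\pi(i)}\langle\bar x_{\pi(i)},\theta^{(s)}\rangle$, which comes from the leave-one-out version of Proposition~\ref{prop:app:refinement:main}. Dividing through gives the coordinate bound. Notably, this bound carries no $p$-term, consistent with the statement.

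\textbf{Step 3: bound \eqref{eq:retrieval_loo_lbda}.} I telescope
\[
\hat\lambda_i-\hat\lambda_i^{(s)}=\langle q^{-1}(\tilde T-\tilde T^{(s)}),(\hat u_i^{(s)})^{\otimes3}\rangle+\bigl[\langle q^{-1}\tilde T,\hat u_i^{\otimes3}\rangle-\langle q^{-1}\tilde T,(\hat u_i^{(s)})^{\otimes3}\rangle\bigr].
\]
The first piece is again a Bernstein sum over $\Omega_s$ that is independent of $\hat u_i^{(s)}$. The second is bounded by the incoherent norm from Theorem~\ref{thm:symmetric_tensor_concentration} plus $\lambda_{\max}$, multiplied by $\|\hat u_i-\hat u_i^{(s)}\|$ from Step 1. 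The pruning step must select the same index set $\pi$ in both runs; this follows because the gap and $\hat\lambda$ perturbations are smaller than the pruning thresholds.

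I expect the main obstacle to be the $\Omega_s$ term in Step 1. Its Bernstein bound depends on $\|\hat u^{(s)}\|_\infty$ and $\|\theta^{(s)}\|_\infty$, which appear circular. I would first try to break the circularity by taking $\|\theta^{(s)}\|_\infty$ from the $\ell_{2,\infty}$ bound on $\hat U^{(s)}$, which comes from the wedge stage alone and is independent of $\Omega$. The bound on $\|\hat u^{(s)}\|_\infty$ would then come from the coordinate bound of Step 2, applied for every $s'\ne s$, combined with Step 1 in a self-bounding inequality. I would absorb the resulting term using the smallness assumptions on $r$ and the lower bound on $q$.
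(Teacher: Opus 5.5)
Your proposal follows the same strategy as the paper. Both adapt the leave-one-out analysis of the retrieval step from \citep{cai2022nonconvex}, with the wedge eigenspace substituted. Both use Bernstein on the held-out slice and the incoherent-norm bound of Theorem~\ref{thm:symmetric_tensor_concentration} for the sampling fluctuation acting on $\theta^\tau-\theta^{(s),\tau}$. Both use the same telescoping for $\hat\lambda$ and a self-bounding argument for delocalization.

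The routes differ in two places.
\begin{itemize}
\item The paper applies Wedin with the full-sample vector $\hat u^\tau$ as test vector. It passes through the top singular vector $\check u^{(s),\tau}$ of $q^{-1}\tilde T\times_3\theta^{(s),\tau}$ and imports the held-out-slice estimate from eq.~(215) of \citep{cai2022nonconvex}. So only $\|\hat u^\tau\|_\infty$ enters, and the self-bounding concerns a single vector. You test on $\hat u^{(s),\tau}$, so you must self-bound $\max_s\|\hat u^{(s),\tau}\|_\infty$ via your Step~2 at every $s'$; this also closes.
\item For \eqref{eq:retrieval_loo_1coord}, you observe that row $s$ of $\tilde M^{(s),\tau}$ is exactly the population row. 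This gives a self-contained argument in which no $p$-term ever appears. The paper instead goes through $\tilde M^\tau$, produces an $\mathcal E_{\mathrm{se}}$ term, and then absorbs it using $\mathcal E_{\mathrm{se}}\lesssim 1$.
\end{itemize}

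Two details need repair.

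\textbf{The $r$-power.} Your estimate $\|\theta^\tau-\theta^{(s),\tau}\|\lesssim\mathcal E_{\mathrm{se}}\sqrt{\mu r^2\log n/n}$ equals $\sqrt{\mu^4r^5\log^2 n/(n^3p)}\cdot\sqrt{\mu/n}$, not the $r^4$ term you claim. Under the stated lower bound on $p$ this is only $O(\sqrt r)\cdot\sqrt{\mu/n}$, and the loss propagates into your delocalization estimate. The rank-$2r$ count is too crude. Since $Z-Z^{(s)}$ is supported on row and column $s$, the matrix $(Z-Z^{(s)})\hat U^{(s)}$ has rank at most two. Davis--Kahan in Frobenius norm then bounds $\|\hat U\hat U^\top-\hat U^{(s)}\hat U^{(s)\top}\|_F$ by the same quantity as the operator norm. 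Gaussian concentration then gives $\|\theta^\tau-\theta^{(s),\tau}\|\lesssim\mathcal E_{\mathrm{se}}\sqrt{\mu r\log n/n}$, which is the bound the paper uses.

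\textbf{Uniformity in $\delta$.} Your self-bounding applies Theorem~\ref{thm:symmetric_tensor_concentration} at $\delta=\|\hat u^{(s),\tau}\|_\infty$. This level is random and depends on $\Omega_{-s}$, i.e.\ on the same sample as $q^{-1}\tilde T-T$, whereas the theorem holds for a fixed $\delta$. Without a version valid simultaneously for all $\delta\in[n^{-1/2},1]$, your self-bounding inequality is circular. The paper obtains the uniform version by a union bound over a dyadic grid of $O(\log n)$ levels, at the cost of a slightly larger failure exponent, together with monotonicity of $\|\cdot\|_\delta$ in $\delta$.

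Finally, delocalization of $\hat U$ and $\hat U^{(s)}$ does not make $\theta^\tau-\theta^{(s),\tau}$ delocalized relative to its own small norm. You do not need that: this vector occupies one of the two unrestricted modes of $\|\cdot\|_\delta$, and only the vector in the remaining mode must be delocalized.
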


In view of \citep[Lemma 5.17]{cai2022nonconvex}, it suffices to show that for a fixed $i \in [r]$, the inequalities \eqref{eq:retrieval_l2}-\eqref{eq:retrieval_lbda} hold for at least one $\tau \in [L]$. Since for a given vector $\theta$
\[ T \times_3 \theta = \sum_{i=1}^r \lambda_i \langle \bar x_i, \theta \rangle \bar x_i \bar x_i^\top, \]
the eigenvalues of $T \times_3 \theta$ are approximately
\[ \edit{\gamma_j^\tau = \lambda_j \langle \theta^\tau, \bar x_j \rangle,\qquad j\in[r]}, \]
with associated eigenvectors close to \edit{the corresponding $\bar x_j$'s}. As shown in \citep[Lemma 5.13]{cai2022nonconvex}, if $L \gtrsim r^{2\kappa^2}$, then for each $i \in [r]$ there exists at least one $\tau$ such that
\[ \gamma_i^\tau - \max_{j \neq i} |\gamma_j^\tau| \gtrsim \lambda_{\min}.  \]
We fix such a $\tau$ for the remaining of the proof. \edit{We choose the sign of $\hat u^\tau$ so that $\langle \hat u^\tau,\bar x_i\rangle>0$; the algorithm can implement this through the sign of the scalar $\hat\lambda^\tau$.}

\paragraph{$\ell_2$ bound} 

We begin with the simplest case, an $\ell_2$ bound between $\hat u^\tau$ and $\bar x_i$:
\begin{lemma}\label{lem:app:refinement:l2}
    Assume that
    \[ q \geq \frac{c_1 \mu^2 r^2 \log^2(n)}{n^2} \quad \text{and} \quad r \leq \edit{c_2\sqrt{\frac{n}{\mu\log^2 n}}}. \]%\frac{c_2 n}{n\log(n)} \]
    for some sufficiently large constant $c_1$ and small $c_2$. Then with probability at least $1 - O(n^{-11})$,
    \[ \| \hat u^\tau - \bar x_i \| \lesssim \underbrace{\sqrt{\frac{\mu^2 r^2 \log^2(n)}{n^2q}} + \sqrt{\frac{\mu r \log(n)}{n}}}_{:= \mathcal{E}_{\mathrm{proj}}} .\]
\end{lemma}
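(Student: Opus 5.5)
We split $\tilde M^\tau - T\times_3\theta^\tau$ into a sampling error and a projection error, bound each in operator norm, and then apply Davis--Kahan (Lemma~\ref{lem:DK}) to the top eigenvector of the symmetric matrix $\tilde M^\tau$. The eigengap comes from the chosen $\tau$: $\gamma_i^\tau - \max_{j\neq i}|\gamma_j^\tau| \gtrsim \lambda_{\min}$. The decomposition is
\[
\tilde M^\tau - \sum_j \lambda_j\langle \bar x_j,\theta^\tau\rangle \bar x_j\bar x_j^\top = \bigl(q^{-1}\mathcal P_\Omega(T)-T\bigr)\times_3\theta^\tau .
\]
The reference matrix $T\times_3\theta^\tau$ is exactly rank $r$, with eigenvectors nearly $\bar x_j$. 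They are not exactly orthogonal, but the CP incoherence $|\langle \bar x_j,\bar x_{j'}\rangle|\le\mu/n$ together with $r\le c_2\sqrt{n/(\mu\log^2 n)}$ makes the Gram matrix $I+O(r\mu/n)$. The resulting discrepancy is absorbed into the $\sqrt{\mu r\log n/n}$ term. This term also accounts for the Gaussian fluctuations of $\langle \theta^\tau,\bar x_j\rangle$ that are not aligned with $U$ (cf.\ \citep[Lemma 5.13]{cai2022nonconvex}).

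The main step is controlling $\|(q^{-1}\mathcal P_\Omega(T)-T)\times_3\theta^\tau\|$. We condition on the wedge sample, so $\theta^\tau=\hat U\hat U^\top g^\tau$ is independent of $\Omega$. By Lemma~\ref{lem:delocalization_Q} (or Proposition~\ref{prop:app::subspace:main}), $\|\hat U\|_{2,\infty}\lesssim\sqrt{\mu r/n}$, and Gaussian concentration gives $\|\theta^\tau\|_\infty\lesssim \sqrt{\mu r\log n/n}$ and $\|\theta^\tau\|\lesssim\sqrt r$. The matrix $\sum_{a,b,c}(q^{-1}\omega_{abc}-1)T_{abc}\theta_c\, e_ae_b^\top$ is a sum of independent terms, and we apply matrix Bernstein (Lemma~\ref{thm:(Matrix-Bernstein-inequality)}). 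The summand bound is $q^{-1}\|T\|_\infty\|\theta\|_\infty$. The variance is $q^{-1}\max_a\sum_{b,c}T_{abc}^2\theta_c^2\le q^{-1} n^2\|T\|_\infty^2\|\theta\|_\infty^2$. Using $\|T\|_\infty\lesssim \lambda_{\max}\mu^{3/2} r/n^{3/2}$, this gives
\[
\|E\times_3\theta^\tau\|\lesssim \lambda_{\max}\Bigl(\sqrt{\tfrac{\mu^4 r^3\log^2 n}{n^2 q}}+\tfrac{\mu^2 r^{3/2}\log^{3/2}n}{n^2 q}\Bigr).
\]
With $q\ge c_1\mu^2r^2\log^2 n/n^2$, the first term dominates and is $\ll\lambda_{\min}$. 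I expect the exact powers of $\mu$ and $r$ to need care to match $\mathcal{E}_{\mathrm{proj}}$. If the crude bound loses factors, I would use a sharper estimate for $\|\theta\|_\infty$ and the incoherent-norm concentration of Theorem~\ref{thm:symmetric_tensor_concentration}. That theorem applies directly because $\theta^\tau$ is delocalized, so $\sup_{u,v}\langle E,u\otimes v\otimes \theta\rangle\le\|E\|_\delta\|\theta\|$.

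Finally, we apply Lemma~\ref{lem:DK} with gap $\gtrsim\lambda_{\min}$ and perturbation equal to the sum of the two errors. Using $\kappa_{\mathrm{CP}}=O(1)$, this gives $\sin\angle(\hat u^\tau,\bar x_i)\lesssim\mathcal{E}_{\mathrm{proj}}$. The sign convention fixed above then converts this into the stated $\ell_2$ bound. The failure probability $O(n^{-11})$ comes from the Bernstein bound plus a union over Gaussian coordinates. The main obstacle is the Bernstein step, since we need the variance to scale with $\|\theta\|_\infty^2 n$ rather than $\|\theta\|^2$ so that $q\sim n^{-2}$ suffices.
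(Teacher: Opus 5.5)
Your architecture matches the paper's. You write $\tilde M^\tau=T\times_3\theta^\tau+(q^{-1}\tilde T-T)\times_3\theta^\tau$ and compare $T\times_3\theta^\tau$ with a matrix whose top singular vector is exactly $\bar x_i$ and whose gap is $\gtrsim\lambda_{\min}$; the error there is $\lesssim\sqrt{\mu r\log n/n}\,\lambda_{\max}$ by near-orthogonality of the $\bar x_j$. You then bound the sampling term by matrix Bernstein, using that the resampled $\Omega$ is independent of $\theta^\tau$, and finish with a $\sin\Theta$ bound.

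\textbf{The gap is in the Bernstein variance, which is the one quantitative step that matters.} You bound $\max_a\sum_{b,c}T_{abc}^2\theta_c^2\le n^2\|T\|_\infty^2\|\theta^\tau\|_\infty^2$. With $\|T\|_\infty\lesssim\mu^{3/2}r\lambda_{\max}/n^{3/2}$ this yields $\lambda_{\max}\sqrt{\mu^4r^3\log^2 n/(n^2q)}$, which is a factor $\mu\sqrt r$ above the first term of $\mathcal{E}_{\mathrm{proj}}$. Under the hypothesis $q\ge c_1\mu^2r^2\log^2 n/n^2$ your bound gives only $\mu\sqrt{r/c_1}\,\lambda_{\max}$. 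So it no longer implies the perturbation is $\ll\lambda_{\min}$ once $\mu^2r\gtrsim c_1$, and $r$ may grow like $\sqrt{n/(\mu\log^2 n)}$. Neither the perturbation condition for the $\sin\Theta$ theorem nor the stated rate follows.

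Your fallbacks do not repair this:
\begin{itemize}
\item The bound $\|\theta^\tau\|_\infty\lesssim\sqrt{\mu r\log n/n}$ is essentially sharp, so it cannot supply the missing factor.
\item Theorem~\ref{thm:symmetric_tensor_concentration}, applied to the normalized $\theta^\tau$ (so $\delta\approx\sqrt{\mu\log n/n}$), gives only about $\|\theta^\tau\|\log^5 n\sqrt{n/q}\,\|T\|_\infty\approx\lambda_{\max}\mu^{3/2}r^{3/2}\log^5 n/(n\sqrt q)$. This is worse still, because it also goes through $\|T\|_\infty$ and takes a worst case over both unrestricted modes.
\end{itemize}

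\textbf{The missing idea is to keep the sum over $(b,c)$ intact and use row incoherence of the unfolding.} By Lemma~\ref{lem:matrix_norm},
\[
\max_a\sum_{b,c}T_{abc}^2\theta_c^2\le\|\theta^\tau\|_\infty^2\,\|\unfold_1(T)\|_{2,\infty}^2\le\|\theta^\tau\|_\infty^2\,\frac{\mu_1 r}{n}\,\|\unfold_1(T)\|^2\lesssim\|\theta^\tau\|_\infty^2\,\frac{\mu r}{n}\,\lambda_{\max}^2 ,
\]
and the column-side variance is the same by symmetry of $T$. This gives
\[
\|(q^{-1}\tilde T-T)\times_3\theta^\tau\|\lesssim\sqrt{\mu r\log n/(nq)}\,\|\theta^\tau\|_\infty\lambda_{\max}\lesssim\sqrt{\mu^2r^2\log^2 n/(n^2q)}\,\lambda_{\max},
\]
which is exactly the bound the paper imports from \citep[Lemma D.4]{cai2022nonconvex}. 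The Bernstein $L$-term $q^{-1}\|T\|_\infty\|\theta^\tau\|_\infty\log n$ is then dominated under the stated lower bound on $q$. For $c_1$ large and $c_2$ small, the total perturbation is at most $c\lambda_{\min}$.

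A smaller point: if the entries of $\Omega$ are drawn independently, $\tilde M^\tau$ need not be symmetric. That is why the paper uses the top singular vector and Wedin's theorem rather than Davis--Kahan; the conclusion is unchanged.
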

\begin{proof}
    Define
    \begin{equation}\label{eq:app:def_M_tau}
        M^\tau = \gamma_i^\tau \bar x_i \bar x_i^\top + \left(I - \bar x_i \bar x_i^\top\right) \left(\sum_{j \neq i} \edit{\gamma_j^\tau} \bar x_j \bar x_j^\top\right) \left(I - \bar x_i \bar x_i^\top\right).
    \end{equation} 
    Then
    \begin{align}
        \tilde M^\tau &= T \times_3 \theta^\tau + \edit{(q^{-1}\tilde T - T)} \times_3 \theta^\tau \\
                    &= M^\tau + (T \times_3 \theta^\tau - M^\tau) + (q^{-1}\tilde T - T) \times_3 \theta^\tau.
    \end{align}
    From \citep[Appendix B.2]{cai2022nonconvex}, the following holds:
    \begin{itemize}
        \item $\sigma_1(M^\tau) = \gamma_i^\tau$, with associated singular vector $\bar x_i$;
        \item $\sigma_1(M^\tau) - \sigma_2(M^\tau) \gtrsim \lambda_{\min}$;
        \item $\|T \times_3 \theta^\tau - M^\tau \| \lesssim \sqrt{\frac{\mu r \log(n)}{n}} \lambda_{\max}$.
    \end{itemize}
    Further, since we resample a new tensor $\tilde T$, $\tilde T$ and $\theta^\tau$ are independent from each other, hence we can use \citep[Lemma D.4]{cai2022nonconvex} to obtain
    \[ \left\|(q^{-1}\tilde T - T) \times_3 \theta^\tau\right\| \lesssim \sqrt{\frac{\mu r \log(n)}{nq}} \|\theta^\tau\|_\infty \lambda_{\max} \lesssim \sqrt{\frac{\mu^2 r^2 \log(n)^2}{n^2q}} \lambda_{\max}, \]
    having used that 
    \[ \|\theta^\tau\|_\infty \leq \|\hat U\|_{2, \infty}\sqrt{\log(n)} \lesssim \sqrt{\frac{\mu r \log(n)}{n}}.\]
    We deduce that
    \[ \| M^\tau - \tilde M^\tau \| \lesssim \left(\sqrt{\frac{\mu^2 r^2 \log(n)^2}{n^2q}} + \sqrt{\frac{\mu r \log(n)}{n}} \right) \lambda_{\max} \ll \lambda_{\min}, \]
    and hence by Wedin's theorem
    \[ \| \hat u^\tau - \bar x_i \| \lesssim \frac{\|M^\tau - \tilde M^\tau \|}{\sigma_1(M^\tau) - \sigma_2(M^\tau)} \lesssim \sqrt{\frac{\mu^2 r^2 \log(n)^2}{n^2q}} + \sqrt{\frac{\mu r \log(n)}{n}}, \]
    which ends the proof.
\end{proof}

\paragraph{$\ell_\infty$ bound}

We first show that $\hat{u}^\tau$ and $\hat u^{(s), \tau}$ are close in the $s$-th coordinate:
\begin{lemma}\label{lem:app:refinement:loo_1coord}
     Assume that
    \[ \edit{p \geq \frac{c_0 \mu^4 r^4 \log^2(n)}{n^3}}, \quad  q \geq \frac{c_1 \mu^2 r^3 \log^{3}(n)}{n^2} \quad \text{and} \quad r \leq c_2\sqrt{\frac{n}{\mu\log^2(n)}} \]
    for some sufficiently large constant $c_1$ and small $c_2$. Then with probability at least $1 - O(n^{-10})$, \edit{simultaneously for all $s \in [n]$},
    \begin{align*}
        \left|\left( \hat u^{(s), \tau} - \bar x_i \right)_s \right| &\lesssim \mathcal{E}_{\mathrm{op}} \sqrt{\frac{\mu}{n}}, \\
    \end{align*}
    where
    \begin{equation}
        \mathcal{E}_{\mathrm{op}} := \sqrt{\frac{\mu^2 r^3 \log^3(n)}{n^2q}} + \sqrt{\frac{\mu r^2 \log^2(n)}{n}}.
    \end{equation}
\end{lemma}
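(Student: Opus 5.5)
We bound the $s$-th coordinate of $\hat u^{(s),\tau}-\bar x_i$ through the eigenvector equation for the leave-one-out matrix $\tilde M^{(s),\tau}=q^{-1}\tilde T^{(s)}\times_3\theta^{(s),\tau}$, where $\theta^{(s),\tau}=\hat U^{(s)}\hat U^{(s)\top}g^\tau$. The point is that row $s$ of $\tilde T^{(s)}$ is deterministic, equal to the $s$-th slice of $T$ up to the $q$-rescaling, which cancels against $q^{-1}$. Moreover, $\hat U^{(s)}$ is independent of the entire refinement sample $\Omega$. First, we apply the argument of Lemma~\ref{lem:app:refinement:l2} to $\tilde M^{(s),\tau}$ with $\theta^{(s),\tau}$ in place of $\theta^\tau$. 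Proposition~\ref{prop:app::subspace:loo} shows that $\theta^{(s),\tau}$ is close to $\theta^\tau$ and has $\|\theta^{(s),\tau}\|_\infty\lesssim\sqrt{\mu r\log n/n}$, so the gap condition $\gamma_i-\max_{j\neq i}|\gamma_j|\gtrsim\lambda_{\min}$ persists. This gives $\|\hat u^{(s),\tau}-\bar x_i\|\lesssim\mathcal E_{\mathrm{proj}}$ and a top singular value $\hat\sigma^{(s)}$ satisfying $|\hat\sigma^{(s)}-\gamma_i^{(s),\tau}|\lesssim\mathcal E_{\mathrm{proj}}\lambda_{\max}$, with $\hat\sigma^{(s)}\gtrsim\lambda_{\min}$.

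Next, we write $\hat u^{(s),\tau}_s=(\hat\sigma^{(s)})^{-1}e_s^\top\tilde M^{(s),\tau}\hat u^{(s),\tau}$ (up to the left/right singular vector distinction, handled as in \citep{cai2022nonconvex} since the matrix is symmetric). The $s$-th row of $\tilde M^{(s),\tau}$ equals $e_s^\top (T\times_3\theta^{(s),\tau})=\sum_j\lambda_j\langle\bar x_j,\theta^{(s),\tau}\rangle(\bar x_j)_s\bar x_j^\top$ exactly, because every entry touching $s$ is taken from $T$ itself. Therefore
\[
\hat u^{(s),\tau}_s-(\bar x_i)_s=\frac{1}{\hat\sigma^{(s)}}\sum_{j}\gamma_j^{(s)}(\bar x_j)_s\langle\bar x_j,\hat u^{(s),\tau}\rangle-(\bar x_i)_s .
\]
We split this sum into the $j=i$ term and the $j\ne i$ terms. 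For $j=i$, the error is $(\bar x_i)_s\bigl(\gamma_i^{(s)}\langle\bar x_i,\hat u^{(s),\tau}\rangle/\hat\sigma^{(s)}-1\bigr)$, which is at most $\sqrt{\mu/n}\cdot O(\mathcal E_{\mathrm{proj}})$. For $j\neq i$, we have $|\langle\bar x_j,\hat u^{(s),\tau}\rangle|\le|\langle\bar x_j,\bar x_i\rangle|+\|\hat u^{(s),\tau}-\bar x_i\|\lesssim\mu/n+\mathcal E_{\mathrm{proj}}$, and each term carries a factor $(\bar x_j)_s\le\sqrt{\mu/n}$ and $|\gamma_j^{(s)}|\lesssim\lambda_{\max}\|\theta^{(s),\tau}\|$-type size. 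Summing over $r$ terms with Cauchy–Schwarz gives $\sqrt{\mu/n}\cdot\sqrt r\,\mathcal E_{\mathrm{proj}}$. Since $\sqrt r\,\mathcal E_{\mathrm{proj}}\lesssim\mathcal E_{\mathrm{op}}$, this term is within the claimed bound. It also explains the extra powers of $r$ and $\log n$ appearing in $\mathcal E_{\mathrm{op}}$ relative to $\mathcal E_{\mathrm{proj}}$.

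The main obstacle is making the Wedin step uniform over $s$ and over the Gaussian direction. We must verify that $(q^{-1}\tilde T^{(s)}-T)\times_3\theta^{(s),\tau}$ still satisfies \citep[Lemma D.4]{cai2022nonconvex}. Its $s$-slices vanish, and it is independent of $\theta^{(s),\tau}$ because $\hat U^{(s)}$ and $g^\tau$ are independent of $\Omega$. We also need the $\ell_\infty$ control of $\theta^{(s),\tau}$, which comes from Lemma~\ref{lem:delocalization_Q}-type delocalization of $\hat U^{(s)}$ together with Gaussian maxima, at the cost of $\sqrt{\log n}$. A union bound over $s\in[n]$ (and $\tau\in[L]$) at failure probability $n^{-11}$ each then yields the simultaneous statement with probability $1-O(n^{-10})$. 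The assumption $p\gtrsim\mu^4r^4\log^2n/n^3$ enters only through Proposition~\ref{prop:app::subspace:loo}, which keeps $\theta^{(s),\tau}$ close to $\theta^\tau$ so that the eigengap is preserved.
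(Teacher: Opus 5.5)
Your argument is correct, and its core is the same as the paper's. Every entry of $q^{-1}\tilde T^{(s)}$ touching $s$ equals the corresponding entry of $T$, so row $s$ of $\tilde M^{(s),\tau}$ is exactly $e_s^\top(T\times_3\theta^{(s),\tau})$. The $s$-th coordinate error therefore reduces to an $\ell_2$ eigenvector error paired against a delocalized population row.

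The difference is in how the pieces are organized. The paper does not re-derive this reduction. It imports from Cai et al.~\cite{cai2022nonconvex} (Appendix B.4) the inequality $|(\hat u^{(s),\tau}-\bar x_i)_s|\lesssim\lambda_{\min}^{-1}\|\tilde M^{(s),\tau}-M^\tau\|\sqrt{\mu r\log n/n}$. It then bounds $\|\tilde M^{(s),\tau}-M^\tau\|$ by passing through $\tilde M^\tau$ and $\check M^{(s),\tau}=q^{-1}\tilde T\times_3\theta^{(s),\tau}$, so $\|\theta^{(s),\tau}-\theta^\tau\|$, and hence Proposition~\ref{prop:app::subspace:loo}, enters the perturbation bound itself. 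You instead derive the reduction from the eigenvector equation and perturb around the leave-one-out population matrix built from the $\gamma^{(s),\tau}_j$, using directly that $\theta^{(s),\tau}$ is independent of $\Omega$. Proposition~\ref{prop:app::subspace:loo} is then needed only for gap persistence and for $\|\theta^{(s),\tau}\|_\infty$. Your route is more self-contained; the paper's is shorter because it leans on Cai et al.

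One step needs tightening. The per-term bound $|\langle\bar x_j,\hat u^{(s),\tau}\rangle|\lesssim\mu/n+\mathcal E_{\mathrm{proj}}$, summed over $j\neq i$, costs an extra factor $\sqrt r$ and does not give $\sqrt r\,\mathcal E_{\mathrm{proj}}\sqrt{\mu/n}$. Instead, apply Cauchy--Schwarz in the form $\bigl(\sum_j(\gamma^{(s),\tau}_j)^2\bigr)^{1/2}\bigl(\sum_{j\neq i}\langle\bar x_j,\hat u^{(s),\tau}\rangle^2\bigr)^{1/2}$. Combine this with the fact that the matrix with columns $\bar x_1,\dots,\bar x_r$ has operator norm $O(1)$, which follows from Gershgorin, the CP cross-correlation bound, and $r\mu\ll n$. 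This gives $\bigl(\sum_{j\neq i}\langle\bar x_j,\hat u^{(s),\tau}\rangle^2\bigr)^{1/2}\lesssim\mathcal E_{\mathrm{proj}}+\sqrt r\,\mu/n$. It also gives $\bigl(\sum_j(\gamma^{(s),\tau}_j)^2\bigr)^{1/2}\lesssim\lambda_{\max}\|\theta^{(s),\tau}\|\lesssim\lambda_{\max}(\sqrt r+\sqrt{\log n})$. Since $\sqrt r+\sqrt{\log n}\le 2\sqrt{r\log n}$ and $\sqrt{r\log n}\,\mathcal E_{\mathrm{proj}}=\mathcal E_{\mathrm{op}}$, the $j\neq i$ contribution is then within $\mathcal E_{\mathrm{op}}\sqrt{\mu/n}$, as claimed.
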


\begin{proof}
    It suffices to show the above for a fixed $s$ with probability $1 - O(n^{-11})$. From \citep[Appendix B.4]{cai2022nonconvex}, we have
    \begin{align*} 
    \left|\left( \hat u^{(s), \tau} - \bar x_i \right)_s \right| &\lesssim \frac1{\lambda_{\min}} \left\| \tilde M^{(s), \tau} - M^{\tau} \right\| \cdot \sqrt{\frac{\mu r \log(n)}n} \\
    &\leq \frac1{\lambda_{\min}} \left(\left\| \tilde M^{(s), \tau} - \tilde M^{\tau} \right\| + \left\| \tilde M^{\tau} - M^{\tau} \right\|\right) \cdot \sqrt{\frac{\mu r \log(n)}n}
    \end{align*}
    where $M^\tau$ was defined in \eqref{eq:app:def_M_tau}. From the proof of Lemma~\ref{lem:app:refinement:l2}, we have
    \[ \left\| \tilde M^{\tau} - M^{\tau} \right\| \lesssim \mathcal{E}_{\mathrm{proj}} \cdot \lambda_{\max}, \]
    while for the second term, letting $\check M^{(s), \tau} = q^{-1}\tilde T \times_3 \theta^{(s), \tau}$,
    \begin{align*}
    \left\| \tilde M^{(s), \tau} - \tilde M^{\tau} \right\| \lesssim \|\tilde M^{(s), \tau} - \check M^{(s), \tau} \|+ \| (q^{-1} \tilde T - T) \times_3 (\theta^{(s), \tau} - \theta^{\tau}) \| + \| T \times_3 (\theta^{(s), \tau} - \theta^{\tau}) \|.
    \end{align*}
    \edit{By the lower bound on \(p\) in the lemma, \(\mathcal{E}_{\mathrm{se}}\le c\) for sufficiently large \(c_0\).  Hence,} from Proposition~\ref{prop:app::subspace:loo}, we have
    \begin{align} 
        \| \theta^{(s), \tau}\|_\infty &\lesssim \sqrt{\frac{\mu r \log(n)}{n}}. \\
         \left\| \theta^{(s), \tau} - \theta^\tau\right\| &\lesssim \left\| \hat U \hat U^\top - \hat U^{(s)}\hat U^{(s)}{}^\top  \right\| \cdot \sqrt{\log(n)} \leq \mathcal{E}_{\mathrm{se}}\cdot \sqrt{\frac{\mu r \log(n)}n}
    \end{align}
    Then we can apply \citep[Lemma B.2]{cai2022nonconvex}, \citep[Lemma D.4]{cai2022nonconvex}, as well as the definition of tensor norms, to find
    \begin{align}
        \|\tilde M^{(s), \tau} - \check M^{(s), \tau} \| &\lesssim  \sqrt{\frac{\mu^2 r^2 \log(n)}{n^2 q}}\lambda_{\max}\ ; \\
        \| (q^{-1} \tilde T - T) \times_3 (\theta^{(s), \tau} - \theta^{\tau}) \| &\lesssim \sqrt{\frac{\mu^2 r^2 \log^2(n)}{n^2 q}} \lambda_{\max}\ ; \\
        \| T \times_3 (\theta^{(s), \tau} - \theta^{\tau}) \| &\leq \lambda_{\max}\left\| \theta^{(s), \tau} - \theta^\tau\right\| \lesssim \mathcal{E}_{\mathrm{se}}\cdot \sqrt{\frac{\mu r \log(n)}n} \lambda_{\max}\ . \label{eq:app:bound_T_loo}
    \end{align}
    Putting everything together, and using $\mathcal{E}_{\mathrm{se}} \lesssim 1$, we find
    \[ \frac{\sqrt{\mu r\log(n)/n}}{\lambda_{\min}} \left\| \tilde M^{(s), \tau} - M^{\tau} \right\| \lesssim \mathcal{E}_{\mathrm{proj}}\sqrt{r\log(n)} + \sqrt{\frac{\mu^2 r^3 \log^3(n)}{n^2 q}} +  \mathcal{E}_{\mathrm{se}}\cdot \sqrt{\frac{\mu r^2 \log^2(n)}n} \lesssim \mathcal{E}_{\mathrm{op}}, \]
    which ends the proof.
\end{proof}
This coordinate-wise bound allows us to obtain an $\ell_\infty$ approximation bound:
\begin{lemma}\label{lem:app:refinement:linf+loo_l2}
     Assume that
    \[ \edit{p \geq \frac{c_0 \mu^4 r^4 \log^2(n)}{n^3}}, \quad  q \geq \frac{c_1 \mu^3 r^3 \log^{5}(n)}{n^2} \quad \text{and} \quad r \leq c_2\sqrt{\frac{n}{\mu\log^2(n)}} \]
    for some sufficiently large constants $c_0, c_1$ and small $c_2$. Then with probability at least $1 - O(n^{-10})$, \edit{simultaneously for all $s \in [n]$},
    \begin{align*}
        \left\|\hat u^\tau - \hat u^{(s), \tau} \right\|_2 &\lesssim \underbrace{\left(\edit{\sqrt{\frac{\mu^4 r^4 \log^2(n)}{n^3 p}}} + \sqrt{\frac{\mu^2 r^2\log^2(n)}{n^2q}} \right)}_{:= \mathcal{E}_{\mathrm{loo}}} \cdot \sqrt{\frac{\mu}n} \\
        \left\|\hat u^\tau - \bar x_i \right\|_\infty &\lesssim \left(\edit{\sqrt{\frac{\mu^4 r^4 \log^2(n)}{n^3 p}}} + \sqrt{\frac{\mu^2 r^3 \log^3(n)}{n^2q}} + \sqrt{\frac{\mu r^2 \log^2(n)}{n}}  \right) \cdot \sqrt{\frac{\mu}n} .
    \end{align*}
\end{lemma}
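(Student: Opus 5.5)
We will first prove the leave-one-out $\ell_2$ bound and then deduce the $\ell_\infty$ bound from it together with Lemma~\ref{lem:app:refinement:loo_1coord}. The deduction is straightforward. For each coordinate $s$, we decompose
\[
\bigl|(\hat u^\tau-\bar x_i)_s\bigr|\le \bigl\|\hat u^\tau-\hat u^{(s),\tau}\bigr\|_2+\bigl|(\hat u^{(s),\tau}-\bar x_i)_s\bigr|.
\]
The second term is at most $\mathcal E_{\mathrm{op}}\sqrt{\mu/n}$ by Lemma~\ref{lem:app:refinement:loo_1coord}. The first term is $\mathcal E_{\mathrm{loo}}\sqrt{\mu/n}$ once the first claim is shown. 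Summing gives the three-term bound, since $\mathcal E_{\mathrm{loo}}+\mathcal E_{\mathrm{op}}$ is dominated by the displayed sum: the $q$-term of $\mathcal E_{\mathrm{loo}}$ is absorbed by the $q$-term of $\mathcal E_{\mathrm{op}}$. A union bound over $s$, each event holding with probability $1-O(n^{-11})$, yields $1-O(n^{-10})$.

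For the $\ell_2$ bound, we will apply Wedin's theorem to the pair $\tilde M^\tau$, $\tilde M^{(s),\tau}$. By the proof of Lemma~\ref{lem:app:refinement:l2}, both matrices are within $o(\lambda_{\min})$ of $M^\tau$, so $\tilde M^{(s),\tau}$ has a spectral gap $\gtrsim\lambda_{\min}$. We will use the sharper, one-sided form of Wedin/Davis--Kahan:
\[
\|\hat u^\tau-\hat u^{(s),\tau}\|\lesssim \lambda_{\min}^{-1}\bigl\|(\tilde M^\tau-\tilde M^{(s),\tau})\hat u^{(s),\tau}\bigr\|,
\]
with signs aligned as in the setup. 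We then split
\[
\tilde M^\tau-\tilde M^{(s),\tau}=(\check M^{(s),\tau}-\tilde M^{(s),\tau})+q^{-1}\tilde T\times_3(\theta^\tau-\theta^{(s),\tau}),
\]
where $\check M^{(s),\tau}=q^{-1}\tilde T\times_3\theta^{(s),\tau}$, and bound the two pieces separately.

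\textbf{Wedge part.} Writing $q^{-1}\tilde T=T+(q^{-1}\tilde T-T)$, the $T$ component is bounded by $\lambda_{\max}\|\theta^\tau-\theta^{(s),\tau}\|$. By Proposition~\ref{prop:app::subspace:loo} and Gaussian concentration of $g^\tau$, $\|\theta^\tau-\theta^{(s),\tau}\|\lesssim\mathcal E_{\mathrm{se}}\sqrt{\mu r\log n/n}$, and this is of order $\sqrt{\mu^4r^4\log^2n/(n^3p)}\cdot\sqrt{\mu/n}$, which is the first term of $\mathcal E_{\mathrm{loo}}$. The fluctuation component $(q^{-1}\tilde T-T)\times_3(\theta^\tau-\theta^{(s),\tau})$ is handled by \citep[Lemma D.4]{cai2022nonconvex}. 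That lemma is applicable here because the wedge sample is independent of $\Omega$. Its output is $\mathcal E_{\mathrm{se}}$ times the $q$-term, which is lower order.

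\textbf{Sampling part (main obstacle).} The matrix $\check M^{(s),\tau}-\tilde M^{(s),\tau}$ is supported on entries touching index $s$, namely row $s$, column $s$, and slice $s$. We must exploit that it is applied to $\hat u^{(s),\tau}$, which is independent of $\Omega_s$ and delocalized: $\|\hat u^{(s),\tau}\|_\infty\lesssim\sqrt{\mu r\log n/n}$, which should follow inductively from Lemma~\ref{lem:app:refinement:loo_1coord} and the bound under proof, or from \citep[Appendix B]{cai2022nonconvex}. Conditioning on everything except $\Omega_s$, we will apply vector Bernstein separately to the row-$s$ piece $e_s^\top(\cdot)\hat u^{(s),\tau}$, the column-$s$ piece $(\cdot)e_s\,(\hat u^{(s),\tau})_s$, and the slice-$s$ piece weighted by $\theta^{(s),\tau}_s$. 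Entrywise incoherence gives $|T_{abc}|\lesssim\lambda_{\max}(\mu/n)^{3/2}$, and the weights satisfy $\|\theta^{(s),\tau}\|_\infty\lesssim\sqrt{\mu r\log n/n}$. With these, each piece should be bounded by $\sqrt{\mu^2r^2\log^2n/(n^2q)}\sqrt{\mu/n}\,\lambda_{\max}$ plus a Bernstein $L$-term that is controlled under the stated lower bound on $q$. Getting exactly one factor $\sqrt{\mu/n}$ rather than a cruder bound requires using the delocalization of both $\hat u^{(s),\tau}$ and $\theta^{(s),\tau}$; this bookkeeping is where we expect the difficulty. Dividing by $\lambda_{\min}$ and using $\kappa_{\mathrm{CP}}=O(1)$ then concludes the $\ell_2$ bound.
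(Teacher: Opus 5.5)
Your overall structure (one-sided Wedin with the leave-one-out vector as test vector, a split into a wedge part and a held-out-slice part, and the coordinatewise deduction of the $\ell_\infty$ bound) is reasonable. The gap is the fluctuation piece of your wedge part, $(q^{-1}\tilde T-T)\times_3 z$ with $z:=\theta^\tau-\theta^{(s),\tau}$.

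\textbf{The wedge fluctuation term.} Bounding it in operator norm via Lemma~D.4 of Cai et al.~\cite{cai2022nonconvex} gives $\sqrt{\mu r\log n/(nq)}\,\|z\|_\infty\lambda_{\max}$. With $\|z\|_\infty\lesssim\mathcal{E}_{\mathrm{se}}\sqrt{\mu r\log n/n}$ this is $\mathcal{E}_{\mathrm{se}}\sqrt{\mu^2r^2\log^2 n/(n^2q)}\,\lambda_{\max}$, as you say. But it is \emph{not} lower order. After dividing by $\lambda_{\min}$ the target is $\mathcal{E}_{\mathrm{loo}}\sqrt{\mu/n}$, and your term lacks the factor $\sqrt{\mu/n}$. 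It is dominated only if $q\gtrsim \mu r\log n/n$ or $\mathcal{E}_{\mathrm{se}}\lesssim\sqrt{\mu/n}$ (i.e.\ $p\gtrsim\mu^3r^3\log n/n^2$). At the rates the lemma allows it is too large by roughly $\sqrt n$ up to logarithms.

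A better operator-norm estimate should not be expected. For $q\asymp n^{-2}$, observed entries of $(q^{-1}\tilde T-T)\times_3 z$ have size about $\sqrt n\,|z_c|\,\lambda_{\max}$, up to $\mu$, $r$ and logarithmic factors. So the norm is small only if $\|z\|_\infty\lesssim\|z\|_2/\sqrt n$. Nothing makes $z=(\hat U\hat U^\top-\hat U^{(s)}\hat U^{(s)\top})g^\tau$ delocalized in that sense: the leave-one-out perturbation of the wedge matrix lives on row and column $s$, and in the sparse wedge regime it is carried by the few sampled wedges through $s$.

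The missing idea is to keep the delocalized test vector inside the contraction: $\|(q^{-1}\tilde T-T)\times_2 u\times_3 z\|_2\le\|q^{-1}\tilde T-T\|_\delta\|z\|_2$ with $\delta=\|u\|_\infty$. Then apply Theorem~\ref{thm:delocalized_concentration}, made uniform over a dyadic grid of $\delta$ because $u$ depends on $\Omega$, together with $\|T\|_\infty\lesssim\mu^{3/2}r\lambda_{\max}/n^{3/2}$. Under $q\ge c_1\mu^3r^3\log^5 n/n^2$ this gives $\mathcal{E}_{\mathrm{se}}\sqrt{\mu r\log n/n}\,\max\{1,\sqrt{n/\mu}\,\|u\|_\infty\}\,\lambda_{\max}$, which is the correct order. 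This is what the paper does, with $u=\hat u^\tau$ and an intermediate comparison to the top singular vector $\check u^{(s),\tau}$ of $\check M^{(s),\tau}$. The remaining comparison of $\check u^{(s),\tau}$ with $\hat u^{(s),\tau}$ plays the role of your held-out-slice Bernstein step.

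\textbf{Delocalization of the test vector.} This cannot be obtained first. Lemma~\ref{lem:app:refinement:loo_1coord} controls only the $s$-th coordinate of $\hat u^{(s),\tau}$. Its other coordinates $a$ require $\|\hat u^{(s),\tau}-\hat u^{(a),\tau}\|_2$, which is the bound being proved, so there is no induction to appeal to. Also, the level $\sqrt{\mu r\log n/n}$ you quote would cost a factor $\sqrt{r\log n}$ in the incoherent-norm step above. So ``first the $\ell_2$ bound, then the $\ell_\infty$ bound'' must be replaced by a self-bounding argument:
\begin{enumerate}
\item Prove both estimates with $\sqrt{\mu/n}$ replaced by $\max\{\sqrt{\mu/n},\|\hat u^\tau\|_\infty\}$. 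For your test vector, use $\|\hat u^{(s),\tau}\|_\infty\le\|\hat u^\tau\|_\infty+\|\hat u^\tau-\hat u^{(s),\tau}\|_2$ and absorb.
\item Deduce $\|\hat u^\tau-\bar x_i\|_\infty\le c\max\{\sqrt{\mu/n},\|\hat u^\tau\|_\infty\}$, with $c$ small by the sampling and rank conditions.
\item Close with $\|\hat u^\tau\|_\infty\le c\|\hat u^\tau\|_\infty+\sqrt{\mu/n}$.
\end{enumerate}
With these two changes, your held-out-slice computation and your coordinatewise deduction go through.
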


\begin{proof}
     We let $\check u^{(s), \tau}$ be the top singular vector of $\check M^{(s), \tau}$. By Wedin's theorem
    \begin{align*} 
    \| \hat u^\tau - \check u^{(s), \tau}   \|_2 &\lesssim \frac1{\lambda_{\min}}\left\| (\check M^{(s), \tau} - \tilde M^\tau) \hat u^\tau \right\|_2 \\
    &= \frac1{\lambda_{\min}}\left\| q^{-1}\tilde T \times_2 \hat u^\tau \times_3 \left(\theta^\tau - \theta^{(s), \tau}\right) \right\|_2 \\
    &\leq \frac1{\lambda_{\min}} \left(\left\| T \times_3 \left(\theta^\tau - \theta^{(s), \tau}\right) \right\| + \left\| (q^{-1} \tilde T - T) \times_2 \hat u^\tau \times_3 \left(\theta^\tau - \theta^{(s), \tau}\right) \right\| \right).
    \end{align*}
    The first term was already bounded in~\eqref{eq:app:bound_T_loo}.  To
    handle the data-dependent value $\|\hat u^\tau\|_\infty$ in the second
    term, first make the concentration estimate uniform in its norm
    parameter.  Let
    \[
        \mathcal D=\{2^j n^{-1/2}:0\leq j\leq
        \lceil\tfrac12\log_2 n\rceil\}\cap[n^{-1/2},1]
    \]
    and include $1$ in $\mathcal D$.  The proof of
    Theorem~\ref{thm:delocalized_concentration} allows its polynomial failure
    exponent to be increased from $10$ to $12$ by changing only its
    universal constant.  Applying that version at the $O(\log n)$ values in
    $\mathcal D$, and taking a union bound, gives a single event of
    probability $1-O(n^{-11})$ on which the theorem holds at every grid
    point.  The incoherent norm is monotone in $\delta$, and every
    $\delta\in[n^{-1/2},1]$ is bounded above by a grid point no larger than
    $2\delta$.  Hence, on the same event, the required concentration bound
    holds simultaneously for every such $\delta$, up to a universal factor.

    Moreover, CP incoherence gives the deterministic entrywise bound
    \[
        \|T\|_\infty
        \leq \sum_{j=1}^r\lambda_j\|\bar x_j\|_\infty^3
        \leq \frac{\mu^{3/2}r}{n^{3/2}}\lambda_{\max}.
    \]
    We may therefore apply the simultaneous bound with
    $\delta=\|\hat u^\tau\|_\infty$ to find
    \begin{align*}
        &\left\| (q^{-1} \tilde T - T) \times_2 \hat u^\tau \times_3 \left(\theta^\tau - \theta^{(s), \tau}\right) \right\| \\&\leq
        \left\|q^{-1} \tilde T - T\right\|_\delta  \cdot \left\|\theta^\tau - \theta^{(s), \tau}\right\|_2 \\
        &\lesssim \left( \sqrt{\frac{n\log(n)^5}q} + \frac{\log^5(n)}q \|\hat u^\tau\|_\infty \right)
        \frac{\mu^{3/2}r}{n^{3/2}} \lambda_{\max}
        \cdot \mathcal{E}_{\mathrm{se}} \sqrt{\frac{\mu r \log(n)}n}.
    \end{align*}
    The lower bound $q\geq c_1\mu^3r^3\log^5(n)/n^2$ implies
    \[
      \sqrt{\frac{n\log^5(n)}q}\frac{\mu^{3/2}r}{n^{3/2}}
      \lesssim 1,
      \qquad
      \frac{\log^5(n)}q\frac{\mu^{3/2}r}{n^{3/2}}
      \lesssim \sqrt{\frac n\mu}.
    \]
    Using also $\lambda_{\max}/\lambda_{\min}=O(1)$, we get
    \begin{align*}
         \| \check u^{(s), \tau} - \hat u^\tau  \|_2
         &\lesssim  \mathcal{E}_{\mathrm{se}} \sqrt{\frac{\mu r \log(n)}{n}} \max\left(\sqrt{\frac{n}{\mu}} \|\hat u^\tau\|_\infty, 1 \right) \\
         &\lesssim \mathcal{E}_{\mathrm{se}} \sqrt{r\log(n)} \max\left(\|\hat u^\tau\|_\infty, \sqrt{\frac{\mu}{n}} \right).
    \end{align*}
    Further, from \citep[eq. (215)]{cai2022nonconvex}, we also have
    \[ \|\check u^{(s), \tau} - \hat u^{(s), \tau}\|_2 \lesssim \frac{\mu r \log(n)}{n \sqrt{q}} \max\left( \sqrt{\frac{\mu}{n}}, \|\hat u^\tau\|_\infty \right). \]
    Summing both bounds, we get
    \begin{equation}\label{eq:app:loo_l2_recursive}
        \|\hat u^\tau - \hat u^{(s), \tau} \|_2 \lesssim \edit{\mathcal{E}_{\mathrm{loo}}\,\max\left( \sqrt{\frac{\mu}{n}}, \|\hat u^\tau\|_\infty \right)}.
    \end{equation} 
    Now, by Lemma~\ref{lem:app:refinement:loo_1coord}, for any $s \in [n]$,
    \begin{align*}
        \left|\left(\hat u^\tau - \bar x_i\right)_s\right| &\leq \|\hat u^\tau - \hat u^{(s), \tau} \|_2 +  \left|\left( \hat u^{(s), \tau} - \bar x_i\right)_s \right| \\
        &\lesssim \edit{\left(\mathcal{E}_{\mathrm{loo}} + \mathcal{E}_{\mathrm{op}}  \right)\max\left( \sqrt{\frac{\mu}{n}}, \|\hat u^\tau\|_\infty \right)}.
    \end{align*}
    Maximizing over $s$, we obtain
    \begin{equation}\label{eq:app:loo_infty_recursive}
        \left\|\hat u^\tau - \bar x_i\right\|_\infty \lesssim \edit{\left(\mathcal{E}_{\mathrm{loo}} + \mathcal{E}_{\mathrm{op}}  \right)\max\left( \sqrt{\frac{\mu}{n}}, \|\hat u^\tau\|_\infty \right)}.
    \end{equation} 
    \edit{The sampling and rank conditions of the lemma imply
    \[
        \mathcal{E}_{\mathrm{loo}}+\mathcal{E}_{\mathrm{op}}
        \lesssim
        c_0^{-1/2}+c_1^{-1/2}+c_2
        \le c
    \]
    after taking \(c_0,c_1\) sufficiently large and \(c_2\) sufficiently small.  Therefore,} we obtain
    \begin{align*} 
    \|\hat u^\tau\|_\infty &\leq \left\|\hat u^\tau - \bar x_i\right\|_\infty + \|\bar x_i \|_\infty \\
    &\leq c\|\hat u^\tau\|_\infty + \sqrt{\frac{\mu}{n}}.
    \end{align*}
    As a result, we get
    \[\|\hat u^\tau\|_\infty \lesssim \sqrt{\frac{\mu}{n}}, \]
    and the two inequalities follow from~\eqref{eq:app:loo_l2_recursive} and~\eqref{eq:app:loo_infty_recursive}.
\end{proof}

Finally, we show that the same error and leave-one-out guarantees apply to the reconstruction of $\lambda_i$:
\begin{lemma}\label{lem:app:refinement:lbda}
     Assume that
    \[ \edit{p \geq \frac{c_0 \mu^4 r^4 \log^2(n)}{n^3}}, \quad  q \geq \frac{c_1 \mu^3 r^3 \log^{5}(n)}{n^2} \quad \text{and} \quad r \leq c_2\sqrt{\frac{n}{\mu\log^2(n)}} \]
    for some sufficiently large constants $c_0, c_1$ and small $c_2$. Then with probability at least $1 - O(n^{-10})$, \edit{simultaneously for all $s \in [n]$},
    \begin{align*}
        \left|\edit{\hat\lambda^{(s), \tau} - \hat\lambda^\tau} \right| &\lesssim \mathcal{E}_{\mathrm{loo}} \cdot \sqrt{\frac{\mu}n} \lambda_{\max} \\
        \left|\edit{\hat\lambda^{\tau}} - \lambda_i \right| &\lesssim \mathcal{E}_{\mathrm{proj}} \cdot \lambda_{\max}.
    \end{align*}
\end{lemma}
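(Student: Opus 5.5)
We write $\hat\lambda^\tau=\langle q^{-1}\tilde T,(\hat u^\tau)^{\otimes 3}\rangle$ and $\hat\lambda^{(s),\tau}=\langle \tilde T^{(s)}\text{-based tensor},(\hat u^{(s),\tau})^{\otimes 3}\rangle$, and treat the two claims separately, reusing the $\ell_2$, $\ell_\infty$ and leave-one-out bounds of Lemmas~\ref{lem:app:refinement:l2} and~\ref{lem:app:refinement:linf+loo_l2}. In particular, on the good event we already have $\|\hat u^\tau\|_\infty,\|\hat u^{(s),\tau}\|_\infty\lesssim\sqrt{\mu/n}$, $\|\hat u^\tau-\bar x_i\|\lesssim\mathcal E_{\mathrm{proj}}$ and $\|\hat u^\tau-\hat u^{(s),\tau}\|\lesssim \mathcal E_{\mathrm{loo}}\sqrt{\mu/n}$.

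\emph{Estimation error.} We split $\hat\lambda^\tau-\lambda_i=\langle q^{-1}\tilde T-T,(\hat u^\tau)^{\otimes3}\rangle+\bigl(\langle T,(\hat u^\tau)^{\otimes3}\rangle-\lambda_i\bigr)$. For the deterministic part, we expand $T=\sum_j\lambda_j\bar x_j^{\otimes3}$. The $j=i$ term gives $\lambda_i(\langle\bar x_i,\hat u^\tau\rangle^3-1)=O(\mathcal E_{\mathrm{proj}}^2\lambda_i)$, since $1-\langle \bar x_i,\hat u^\tau\rangle\le \|\hat u^\tau-\bar x_i\|^2/2$. For $j\ne i$ we bound $|\langle\bar x_j,\hat u^\tau\rangle|\le \mu/n+\mathcal E_{\mathrm{proj}}$; summing the cubes over $j$ using $\sum_j\langle\bar x_j,\hat u^\tau\rangle^2\lesssim 1$ (near-orthogonality from CP incoherence) gives $O(\mathcal E_{\mathrm{proj}}\lambda_{\max})$. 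For the random part, we use the incoherent-norm concentration (Theorem~\ref{thm:symmetric_tensor_concentration}, in the grid-uniform form used in the previous lemma) with $\delta=\|\hat u^\tau\|_\infty\lesssim\sqrt{\mu/n}$, together with $\|T\|_\infty\le \mu^{3/2}rn^{-3/2}\lambda_{\max}$. Under the assumption on $q$ this gives $O(\sqrt{\mu^3r^2\log^5 n/(n^2q)})\lambda_{\max}$, which is $\lesssim\mathcal E_{\mathrm{proj}}\lambda_{\max}$ after adjusting log powers. Alternatively, one can quote the corresponding bound of \citep[Appendix B]{cai2022nonconvex}, since $\hat u^\tau$ is delocalized.

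\emph{Leave-one-out error.} We write $\hat\lambda^\tau-\hat\lambda^{(s),\tau}=\langle q^{-1}\tilde T-q^{-1}\tilde T^{(s)},(\hat u^{(s),\tau})^{\otimes 3}\rangle+\bigl(\langle q^{-1}\tilde T,(\hat u^\tau)^{\otimes3}\rangle-\langle q^{-1}\tilde T,(\hat u^{(s),\tau})^{\otimes3}\rangle\bigr)$, with normalization matching $\tilde T^{(s)}$.
\begin{itemize}
\item The second bracket telescopes into three terms of the form $q^{-1}\tilde T\times_1 a\times_2 b\times_3 (\hat u^\tau-\hat u^{(s),\tau})$ with $a,b$ delocalized unit vectors. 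Each is bounded by $(\|T\|+\|q^{-1}\tilde T-T\|_\delta)\,\|\hat u^\tau-\hat u^{(s),\tau}\|$. Here $\|T\|\lesssim\lambda_{\max}$, and the incoherent-norm term is $\lesssim\lambda_{\max}$ exactly as in the proof of Lemma~\ref{lem:app:refinement:linf+loo_l2}. This gives $O(\mathcal E_{\mathrm{loo}}\sqrt{\mu/n}\,\lambda_{\max})$.
\item The first bracket only involves entries touching index $s$. Since $\hat u^{(s),\tau}$ is independent of $\Omega_s$, it is a sum of independent centered terms. Scalar Bernstein gives a variance of order $q^{-1}\|T\|_\infty^2 n^2\|\hat u^{(s),\tau}\|_\infty^6\cdot n\cdot\ldots$; more simply, each term carries a factor $(\hat u^{(s),\tau})_s\lesssim\sqrt{\mu/n}$, so the bracket is $\lesssim\sqrt{\mu/n}\,\sqrt{\mu^2r^2\log n/(n^2q)}\,\lambda_{\max}$, which fits within $\mathcal E_{\mathrm{loo}}\sqrt{\mu/n}\lambda_{\max}$.
\end{itemize}
A union bound over $s$ gives the probability $1-O(n^{-10})$.

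The main obstacle is the first bracket of the leave-one-out decomposition. The summand with all three indices equal to $s$, and the bounded-term parameter $q^{-1}\|T\|_\infty\|\hat u\|_\infty^3$, must be checked to remain below $\mathcal E_{\mathrm{loo}}\sqrt{\mu/n}$ at the sparse rate $q\asymp n^{-2}\mathrm{polylog}$. I would first try carrying it out through the Bernstein computation above, using the full $\ell_\infty$ delocalization of $\hat u^{(s),\tau}$ at every coordinate.
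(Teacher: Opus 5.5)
Your leave-one-out part follows the paper's argument and works. You split it into the held-out-slice term, handled by Bernstein using independence of $\hat u^{(s),\tau}$ from that slice, and the telescoped difference, bounded by $(\|T\|+\|q^{-1}\tilde T-T\|_\delta)\|\hat u^\tau-\hat u^{(s),\tau}\|$. The max-summand you flag is harmless: it is $\lesssim \mu^3 r\log n/(qn^3)\,\lambda_{\max}$. Your deterministic bound on $\langle T,(\hat u^\tau)^{\otimes3}\rangle-\lambda_i$ is also correct.

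\textbf{The gap is in the random part of the estimation error.} You apply the incoherent-norm bound directly to $(\hat u^\tau)^{\otimes 3}$ with $\delta\asymp\sqrt{\mu/n}$ and $\|T\|_\infty\le \mu^{3/2}rn^{-3/2}\lambda_{\max}$. That gives at best
\[
|\langle q^{-1}\tilde T-T,(\hat u^\tau)^{\otimes3}\rangle|\lesssim\sqrt{\frac{\mu^3r^2\log^5 n}{n^2q}}\,\lambda_{\max}.
\]
This is $\sqrt{\mu\log^3 n}$ times the first term of $\mathcal E_{\mathrm{proj}}$. It also cannot be absorbed by $\sqrt{\mu r\log n/n}$ when $q\asymp n^{-2}\mathrm{polylog}(n)$. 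At the minimal admissible $q$ it is of order $\lambda_{\max}/\sqrt{c_1 r}$, whereas $\mathcal E_{\mathrm{proj}}\lambda_{\max}\to0$.

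So this is not a matter of adjusting log powers; the power of $\mu$ is also off. The incoherent-norm estimate is uniform over all delocalized rank-one test tensors, and here it only gives $O(\lambda_{\max})$, never $O(\mathcal E_{\mathrm{proj}}\lambda_{\max})$. Quoting Cai et al.\ does not obviously help either. A bound for the data-dependent $\hat u^\tau$ needs uniform control of exactly this kind, and their operator-norm concentration is unavailable at $q\ll n^{-3/2}$.

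\textbf{The fix} is to center at the deterministic vector $\bar x_i$ before invoking the uniform bound, which is what the paper does:
\[
\langle q^{-1}\tilde T-T,(\hat u^\tau)^{\otimes3}\rangle=\langle q^{-1}\tilde T-T,\bar x_i^{\otimes3}\rangle+\langle q^{-1}\tilde T-T,(\hat u^\tau)^{\otimes3}-\bar x_i^{\otimes3}\rangle .
\]
The first term has a fixed test tensor. Scalar Bernstein, with variance at most $q^{-1}\|T\|_\infty^2$, gives $\sqrt{\mu^3r^2\log n/(n^3q)}\,\lambda_{\max}$ plus a negligible max-summand term, well inside $\mathcal E_{\mathrm{proj}}\lambda_{\max}$.

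For the second term, use the same three-term telescoping you already use in the leave-one-out step. Put the factor $\hat u^\tau-\bar x_i$ in a free mode and the delocalized vectors $\hat u^\tau,\bar x_i$ elsewhere. This gives at most $3\|q^{-1}\tilde T-T\|_\delta\,\|\hat u^\tau-\bar x_i\|\lesssim \mathcal E_{\mathrm{proj}}\lambda_{\max}$. The crude $O(\lambda_{\max})$ incoherent-norm bound now multiplies the small factor $\|\hat u^\tau-\bar x_i\|$ from Lemma~\ref{lem:app:refinement:l2}.
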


\begin{proof}
    By definition, we have
    \begin{align*} 
        \left|\edit{\hat\lambda^{(s), \tau} - \hat\lambda^\tau} \right| &= \left| \langle q^{-1}\tilde T^{(s)}, (\hat u^{(s), \tau})^{\otimes 3}  \rangle - \langle q^{-1}\tilde T, (\hat u^{\tau})^{\otimes 3}  \rangle \right| \\
        &\leq \left| \langle q^{-1}(\tilde T^{(s)} - \tilde T), (\hat u^{(s), \tau})^{\otimes 3}  \rangle \right| + \left| \langle q^{-1}\tilde T, (\hat u^{(s), \tau})^{\otimes 3}- (\hat u^{\tau})^{\otimes 3}  \rangle \right|.
        \end{align*}
        For the first term, we directly use \citep[eq. (221)]{cai2022nonconvex} to find
        \[ \left| \langle q^{-1}(\tilde T^{(s)} - \tilde T), (\hat u^{(s), \tau})^{\otimes 3}  \rangle \right| \lesssim \sqrt{\frac{\mu r \log(n)}{n^2q}} \sqrt{\frac{\mu}{n}} \lambda_{\max} .\]
        For the second term, put $d=\hat u^{(s),\tau}-\hat u^\tau$.  The exact expansion is
        \[
        \begin{aligned}
        (\hat u^{(s),\tau})^{\otimes3}-(\hat u^\tau)^{\otimes3}
        ={}&d\otimes\hat u^\tau\otimes\hat u^\tau
        +\hat u^\tau\otimes d\otimes\hat u^\tau
        +\hat u^\tau\otimes\hat u^\tau\otimes d\\
        &+d\otimes d\otimes\hat u^\tau
        +d\otimes\hat u^\tau\otimes d
        +\hat u^\tau\otimes d\otimes d+d^{\otimes3}.
        \end{aligned}
        \]
        Since all these vectors are delocalized and $\|\hat u^{(s), \tau}- \hat u^{\tau}\| \lesssim 1$, we can apply Theorem~\ref{thm:delocalized_concentration} to $\delta \equiv \sqrt{\frac \mu n}$ to each of the seven terms and obtain
        \[ \|q^{-1}\tilde T - T\|_\delta \lesssim \|T\| \lesssim \lambda_{\max}, \]
        hence
        \[ \left| \langle q^{-1}\tilde T, (\hat u^{(s), \tau})^{\otimes 3}- (\hat u^{\tau})^{\otimes 3}  \rangle \right| \lesssim \|\hat u^{(s), \tau}- \hat u^{\tau}\| \cdot \lambda_{\max} \lesssim \mathcal{E}_{\mathrm{loo}} \cdot \sqrt{\frac{\mu}n} \lambda_{\max}, \]
        which ends the first part of the proof. For the second, write this time
        \[ \left|\edit{\hat\lambda^\tau} - \lambda_i \right|  \leq  \left| \langle q^{-1}\tilde T - T, (\bar x_i)^{\otimes 3}  \rangle \right| +  \left| \langle q^{-1}\tilde T - T, (\hat u^{\tau})^{\otimes 3} - (\bar x_i)^{\otimes 3}  \rangle \right| + \left| \langle T, (\hat u^{\tau})^{\otimes 3}  \rangle - \lambda_i \right|.  \]
        We can use \citep[Lemma D.4]{cai2022nonconvex} on the first term, Lemma~\ref{lem:app:refinement:l2} and \citep[Appendix B.8.3]{cai2022nonconvex} on the third to find
        \begin{align*}
             \left| \langle q^{-1}\tilde T - T, (\bar x_i)^{\otimes 3}  \rangle \right| &\lesssim \sqrt{\frac{\mu^2 r \log(n)}{n^2 \edit{q}}} \lambda_{\max} \ ;\\
             \left| \langle q^{-1}\tilde T - T, (\hat u^{\tau})^{\otimes 3} - (\bar x_i)^{\otimes 3}  \rangle \right|&\lesssim \|\hat u^\tau - \bar x_i\| \lambda_{\max} \lesssim \mathcal{E}_{\mathrm{proj}} \lambda_{\max}\ ;\\
              \left| \langle T, (\hat u^{\tau})^{\otimes 3}  \rangle - \lambda_i \right| &\lesssim \frac{\mu^3 r}{n^3} \lambda_{\max}.
        \end{align*}
        The proof follows from checking that all three terms are bounded by $\mathcal{E}_{\mathrm{proj}} \lambda_{\max}$.
\end{proof}

Having proved Lemmas~\ref{lem:app:refinement:l2}-\ref{lem:app:refinement:lbda}, Propositions~\ref{prop:app:refinement:main} and~\ref{prop:app:refinement:loo_main} follow from the same proof as \citep[Lemma 5.17]{cai2022nonconvex}.

\subsection{Gradient descent}
The final step of the algorithm from \citep{cai2022nonconvex} is to perform gradient descent on the objective
\[ F(\hat X) = \frac1{6q}\left\|\mathcal{P}_{\Omega}\left(T - \sum_{i=1}^r \hat x_i \otimes \hat x_i \otimes \hat x_i\right) \right\|_F^2. \]
We let
  \[ \hat X^0 = \hat U_{\mathrm{fac}}\hat \Lambda^{1/3} \]
where $\hat\Lambda = \diag(\hat \lambda_1, \dots, \hat \lambda_r)$ and $\hat U_{\mathrm{fac}}=[\hat u_1,\dots,\hat u_r]$ are the estimates obtained in the previous section. Without loss of generality, since $F$ is permutation-invariant, we assume that the permutation $\pi$ in Proposition~\ref{prop:app:refinement:main} is the identity. Then, each further estimate is obtained by
\[ \hat X^{t+1} = \hat X^t - \eta \nabla F(\hat X^t). \]
Our goal is to show the convergence of this algorithm. We begin with a result on the initialization:
\begin{corollary}\label{cor:app:gd:init}
    Let $\delta, c > 0$ be arbitrarily small constants. Assume that there exists $c_0, c_1 > 0$ large enough and $c_2$ small enough such that
    \[ \edit{p \geq \frac{c_0 \mu^8 r^5 \log^2(n)}{n^3}}, \quad  q \geq \frac{c_1 \mu^6 r^4 \log^{5}(n)}{n^2} \quad \text{and} \quad r \leq c_2\left(\frac{n}{\mu^4\log^2(n)}\right)^{1/3}. \]
    Then with probability at least $1 - \delta$,
    \begin{align}
    \|\hat X^0 - X\|_F &\leq \frac{c}{\mu^{2}r} \|X\|_F \\
    \|\hat X^0 - X\|_{2, \infty} &\leq \frac{c}{\mu^{2}r} \|X\|_{2, \infty}.
    \end{align}
\end{corollary}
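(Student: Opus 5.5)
The plan is to derive Corollary~\ref{cor:app:gd:init} directly from Propositions~\ref{prop:app:refinement:main} and~\ref{prop:app:refinement:loo_main}, converting direction and scale errors into factor errors. Take the permutation $\pi$ to be the identity. Write $\hat x_i^0 = \hat\lambda_i^{1/3}\hat u_i$ and $x_i = \lambda_i^{1/3}\bar x_i$, and decompose
\[
\hat x_i^0 - x_i = \hat\lambda_i^{1/3}(\hat u_i - \bar x_i) + (\hat\lambda_i^{1/3}-\lambda_i^{1/3})\bar x_i .
\]
By \eqref{eq:retrieval_lbda}, $|\hat\lambda_i-\lambda_i|\lesssim \mathcal{E}_{\mathrm{proj}}\lambda_i$ with $\mathcal{E}_{\mathrm{proj}}\ll 1$. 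Hence $\hat\lambda_i^{1/3}\asymp\lambda_i^{1/3}$, and by the mean value theorem $|\hat\lambda_i^{1/3}-\lambda_i^{1/3}|\lesssim \mathcal{E}_{\mathrm{proj}}\lambda_i^{1/3}$. This gives $\|\hat x_i^0-x_i\|_2\lesssim(\|\hat u_i-\bar x_i\|+\mathcal{E}_{\mathrm{proj}})\|x_i\|$. Summing over $i$ and using $\kappa_{\mathrm{CP}}=O(1)$ (so all $\|x_i\|$ are comparable and $\|X\|_F\asymp\sqrt r\,\|x_i\|$), we get $\|\hat X^0-X\|_F\lesssim \mathcal{E}_{\mathrm{proj}}\|X\|_F$, where \eqref{eq:retrieval_l2} controls $\|\hat u_i-\bar x_i\|$.

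For the $\ell_{2,\infty}$ bound, fix a row $s$. Then
\[
\|(\hat X^0-X)_{s,\cdot}\|\le\sqrt r\max_i|(\hat x_i^0-x_i)_s| \lesssim \sqrt r\,\lambda_{\max}^{1/3}\bigl(\|\hat u_i-\bar x_i\|_\infty+\mathcal{E}_{\mathrm{proj}}\|\bar x_i\|_\infty\bigr).
\]
Apply \eqref{eq:retrieval_linf} and $\|\bar x_i\|_\infty\le\sqrt{\mu/n}$. For the lower bound on $\|X\|_{2,\infty}$, the Gram matrix $X^\top X$ is well conditioned by Lemma~\ref{lem:unfold_incoherence}, which gives $\|X\|_F^2=\sum_s\|X_{s,\cdot}\|^2$ and so $\|X\|_{2,\infty}\ge\|X\|_F/\sqrt n\asymp\sqrt{r/n}\,\lambda^{1/3}$. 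The resulting relative $\ell_{2,\infty}$ error is therefore at most a constant times
\[
\sqrt{\mu}\,\Bigl(\sqrt{\tfrac{\mu^4r^4\log^2 n}{n^3p}}+\sqrt{\tfrac{\mu^2r^3\log^3 n}{n^2q}}+\sqrt{\tfrac{\mu r^2\log^2 n}{n}}+\mathcal{E}_{\mathrm{proj}}\Bigr).
\]

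The remaining work is bookkeeping: check that each term is at most $c/(\mu^2 r)$ under the hypotheses, which is where the strengthened powers of $\mu$ and $r$ enter.
\begin{itemize}
\item The $p$-term needs $\mu^5r^4\log^2n/(n^3p)\le c^2/(\mu^4r^2)$, that is $p\gtrsim\mu^9r^6\log^2n/n^3$. This is slightly stronger than the stated $c_0\mu^8r^5$, so I would refine the row bound, using $\max_i$ over rows without the crude $\sqrt r$ where the incoherence of $\bar x_i$ allows it, so that the exponents match. This exponent matching is the main point to verify.
\item The $q$-term needs $q\gtrsim\mu^7r^5\log^3n/n^2$. The $q$-condition $\mu^6r^4\log^5n$ supplies the extra log factors, and the mismatch in $\mu$ and $r$ must be traded against those log factors.
\item The deterministic term $\mu^{3/2}r\log n/\sqrt n\cdot\mu^2r$ needs $r^{4}\mu^7\log^2n\lesssim n$ in spirit. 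The rank condition $r\le c_2(n/(\mu^4\log^2n))^{1/3}$ gives $r^3\mu^4\log^2 n\le c_2^3 n$, and the remaining factors must again be matched carefully.
\end{itemize}
If direct matching fails, I would fall back to the sharper per-coordinate leave-one-out bounds \eqref{eq:retrieval_loo_l2}–\eqref{eq:retrieval_loo_1coord}.

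The probability $1-\delta$ comes from intersecting the events of the two propositions and shrinking $\delta$ accordingly. The main obstacle is purely this exponent bookkeeping, in particular making the $\ell_{2,\infty}$ relative error reach the target $c/(\mu^2r)$ rather than the weaker $c/(\mu r)$.
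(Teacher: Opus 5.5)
Your route is the paper's own. Its proof is a single sentence deriving the corollary from the error bounds of Proposition~\ref{prop:app:refinement:main}, with the bookkeeping deferred to \citep[Appendix B.10]{cai2022nonconvex}. Your decomposition of $\hat x_i^0-x_i$ and the Frobenius half are fine: $\mathcal{E}_{\mathrm{proj}}\le c/(\mu^2 r)$ follows from the $q$-condition and, for $\mu$ at most a small multiple of $\log n$, from the rank condition.

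The gap is the $\ell_{2,\infty}$ half, which is exactly the step you leave open, and the mismatches you found there are polynomial, not logarithmic. Write $\mathcal{E}_1,\mathcal{E}_2,\mathcal{E}_3$ for the three terms in the bracket of \eqref{eq:retrieval_linf}. Your estimate gives a relative $\ell_{2,\infty}$ error of order $\sqrt{\mu}\,(\mathcal{E}_1+\mathcal{E}_2+\mathcal{E}_3+\mathcal{E}_{\mathrm{proj}})$. Its deterministic piece is $\sqrt{\mu}\,\mathcal{E}_3=\mu r\log n/\sqrt{n}$, not $\mu^{3/2}r\log n/\sqrt n$ as you wrote. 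Making this at most $c/(\mu^2 r)$ requires $\mu^6 r^4\log^2 n\lesssim n$, whereas the hypothesis only gives $\mu^4 r^3\log^2 n\le c_2^3 n$. At the top of the allowed range, $r\asymp(n/(\mu^4\log^2 n))^{1/3}$, the shortfall is a factor $\mu^2 r$, which is a power of $n$. The $p$- and $q$-terms are likewise off by factors $\mu r$ and $\mu r/\log^2 n$. No trading against logarithms can absorb these.

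The repairs you propose do not help.
\begin{itemize}
\item The $\sqrt r$ in $\|(\hat X^0-X)_{s,\cdot}\|\le\sqrt r\max_i|(\hat x_i^0-x_i)_s|$ is sharp when only columnwise sup-norm bounds are available. It is also already cancelled by $\|X\|_{2,\infty}\ge\sqrt{r/n}\,\lambda_{\min}^{1/3}$, which is just $\|X\|_F^2=\sum_s\|X_{s,\cdot}\|^2\le n\|X\|_{2,\infty}^2$; no Gram-matrix conditioning is involved.
\item The real loss is the factor $\sqrt{\mu}$ from comparing coordinate errors of size $\sqrt{\mu/n}$ with this lower bound. Even without that loss you would still need $\mu^5 r^4\log^2 n\lesssim n$.
\item The leave-one-out bounds are not sharper. \eqref{eq:retrieval_loo_l2} and \eqref{eq:retrieval_loo_1coord} are precisely the two pieces that Lemma~\ref{lem:app:refinement:linf+loo_l2} adds to obtain \eqref{eq:retrieval_linf}, and \eqref{eq:retrieval_loo_1coord} already carries the same term $\sqrt{\mu r^2\log^2 n/n}$.
\end{itemize}

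So with Proposition~\ref{prop:app:refinement:main} as the only input, the $\ell_{2,\infty}$ claim does not follow under the stated hypotheses. To finish, you need one of two things. Either you need an $\ell_\infty$ retrieval estimate genuinely sharper than \eqref{eq:retrieval_linf}; the paper does not supply one and only points to \citep{cai2022nonconvex}. Or you need stronger assumptions: your bookkeeping closes under $p\gtrsim\mu^9 r^6\log^2 n/n^3$, $q\gtrsim\mu^7 r^5\log^3 n/n^2$ and $\mu^6 r^4\log^2 n\lesssim n$, on top of the proposition's own hypotheses.
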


\begin{proof}
    This is a consequence of the error bounds in Proposition \ref{prop:app:refinement:main}; see \citep[Appendix B.10]{cai2022nonconvex} for details.
\end{proof}

The next step is to show that the function $F$ is strongly convex and smooth on a small neighbourhood of the ground truth $X$.
\begin{lemma}\label{lem:app:gd:convexity}
    \edit{Let \(\epsilon>0\).} Assume that \(\kappa_{\mathrm{CP}}=O(1)\), and that there exist a large enough $c_1 > 0$ and small enough $c_2, c_3 > 0$ such that
    \[ q \geq c_1 \frac{\mu^2 r^2 \log(n)}{n^2}, \quad r \leq c_2\sqrt{\frac{n}{\mu}}  \quad \text{and} \quad \epsilon \leq \frac{c_3}{\mu^{2}r}. \]
    Then with probability at least $1 - O(n^{-10})$,
    \begin{equation}
        \frac12 \lambda_{\min}^{4/3} \|V\|_F^2 \leq \operatorname{vec}(V)^\top \nabla^2F(\hat X)\operatorname{vec}(V)   \leq 4 \lambda_{\max}^{4/3} \|V\|_F^2.
    \end{equation}
    holds simultaneously for any $V \in \R^{n \times r}$ and any $\hat X \in \R^{n \times r}$ satisfying
    \begin{equation}\label{eq:app:gd:convexity_region}
        \|\hat X - X \|_F \leq \epsilon \|X\|_F  \quad \text{and} \quad \|\hat X - X \|_{2, \infty} \leq \epsilon \|X\|_{2, \infty}.
    \end{equation} 
\end{lemma}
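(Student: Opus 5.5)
The statement to prove is Lemma~\ref{lem:app:gd:convexity}: a restricted strong convexity and smoothness bound for the Hessian of $F$ on the $\ell_2$/$\ell_{2,\infty}$ neighbourhood \eqref{eq:app:gd:convexity_region}. The plan follows the structure of the corresponding local-geometry lemma in \citep{cai2022nonconvex}. The only change is that every step using spectral-norm concentration of $q^{-1}\mathcal P_\Omega(\cdot)-\cdot$ on a dense tensor is replaced by Theorem~\ref{thm:symmetric_tensor_concentration}, or by its delocalized variant used above.

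\textbf{Step 1: Hessian quadratic form.} Write $\hat T=\sum_i \hat x_i^{\otimes3}$, and let $\mathcal L(V)=\sum_i (v_i\otimes\hat x_i\otimes\hat x_i+\hat x_i\otimes v_i\otimes \hat x_i+\hat x_i\otimes\hat x_i\otimes v_i)$. Then
\[
\operatorname{vec}(V)^\top\nabla^2F(\hat X)\operatorname{vec}(V)=\frac1{3q}\|\mathcal P_\Omega\mathcal L(V)\|_F^2-\frac{2}{q}\Big\langle \mathcal P_\Omega(T-\hat T),\sum_i \operatorname{sym}(v_i\otimes v_i\otimes\hat x_i)\Big\rangle .
\]
I will compare this with the population version obtained by replacing $q^{-1}\mathcal P_\Omega$ by the identity. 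At $\hat X=X$, the population Hessian is bounded between $\lambda_{\min}^{4/3}\|V\|_F^2$ and $3\lambda_{\max}^{4/3}\|V\|_F^2$ up to $O(\mu r/\sqrt n)$ cross-term corrections. These corrections are controlled by the near-orthogonality $|\langle\bar x_i,\bar x_j\rangle|\le\sqrt{\mu/n}$ and by $r\le c_2\sqrt{n/\mu}$. Moving from $X$ to $\hat X$ with $\|\hat X-X\|_F\le\epsilon\|X\|_F$, $\epsilon\le c_3/(\mu^2r)$, changes the population quantities only by an $O(c_3)\lambda_{\max}^{4/3}\|V\|_F^2$ amount.

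\textbf{Step 2: Sampling error in the first term.} The difficulty is that $V$ is arbitrary and not delocalized, so we cannot simply apply $\|q^{-1}\mathcal P_\Omega-\mathrm{id}\|$ to $\mathcal L(V)$. However, each rank-one piece $v_i\otimes\hat x_i\otimes\hat x_i$ has two factors that are delocalized: from the $\ell_{2,\infty}$ condition, $\|\hat x_i\|_\infty\lesssim\sqrt{\mu/n}\,\|x_i\|$.

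I will expand $\|\mathcal P_\Omega\mathcal L(V)\|_F^2$ into a sum of $9r^2$ inner products of the form $\langle q^{-1}\mathcal P_\Omega(a\otimes b\otimes c),a'\otimes b'\otimes c'\rangle$. For each such term, the difference with its mean equals $\langle (q^{-1}\mathcal P_\Omega-\mathrm{id})(\mathbf 1),\,(a\circ a')\otimes(b\circ b')\otimes(c\circ c')\rangle$. In each Hadamard product at most one factor involves $v$'s, so the rank-one tensor has at most two non-delocalized modes, and it is bounded by $\|q^{-1}\mathcal P_\Omega(J)-J\|_\delta$ with $J$ the all-ones tensor. I will apply Theorem~\ref{thm:symmetric_tensor_concentration} with $k=3$, $\|J\|_\infty=1$, and the appropriate normalisation of the Hadamard factors; their $\ell_2$ norms are bounded by $\|v_i\|\,\|v_j\|\,\mu\lambda^{2/3}/n$. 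This gives a per-term error of order $\mathrm{polylog}(n)\bigl(\sqrt{n/q}+\sqrt{\mu}/(q\sqrt n)\bigr)\mu^2/n^2\cdot\lambda^{4/3}\|v_i\|\|v_j\|$. After summing with Cauchy--Schwarz (costing a factor $r$), this is $\ll \lambda_{\min}^{4/3}\|V\|_F^2$ provided $q\ge c_1\mu^2r^2\log n/n^2$. The $\log$ powers are to be absorbed, and I expect the bookkeeping of the polylog exponents to require care here.

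\textbf{Step 3: Residual term.} Here $T-\hat T$ is a sum of rank-one terms, each containing at least one factor $x_i-\hat x_i$, which is small in both $\ell_2$ and $\ell_\infty$. Pairing it with $v_i\otimes v_i\otimes\hat x_i$, the same Hadamard trick again leaves at most two non-delocalized modes (the $v$'s). Concentration plus the population bound $\|T-\hat T\|\lesssim\epsilon\lambda_{\max}$ then gives a total contribution of $O(\epsilon\mu r)\lambda_{\max}^{4/3}\|V\|_F^2$, which is small because $\epsilon\le c_3/(\mu^2 r)$.

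\textbf{Uniformity and the main obstacle.} Uniformity over all $V$ and $\hat X$ comes for free, since the bounds are deterministic once the single event of Theorem~\ref{thm:symmetric_tensor_concentration} holds. The main obstacle is Step 2: arranging every cross term so that no more than two modes are localized, especially the terms where $v_i$ and $v_j$ sit in the same mode and form $v_i\circ v_j$, and then verifying that the resulting rate closes at $q\asymp\mu^2r^2\log n/n^2$.
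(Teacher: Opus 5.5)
Your proposal has a genuine gap in Step~2, at exactly the pairings you flagged as the main obstacle. When the two $v$'s sit in the same mode, the Hadamard test tensor is $(v_i\circ v_j)\otimes(\hat x_i\circ\hat x_j)\otimes(\hat x_i\circ\hat x_j)$. This case includes every diagonal term $q^{-1}\|\mathcal P_\Omega(v_i\otimes\hat x_i\otimes\hat x_i)\|_F^2$.

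Only the two $\hat x$-modes gain from delocalization: $\|\hat x_i\circ\hat x_j\|_2\le\|\hat x_i\|_\infty\|\hat x_j\|_2\lesssim\sqrt{\mu/n}\,\lambda^{2/3}$. By contrast, $\|v_i\circ v_j\|_2$ can equal $\|v_i\|\|v_j\|$ (take $v_i=v_j=e_1$). So the product of the three norms is $\|v_i\|\|v_j\|\,\mu\lambda^{4/3}/n$, and your per-term error silently replaces $\mu/n$ by $\mu^2/n^2$.

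With the correct factor, Theorem~\ref{thm:symmetric_tensor_concentration} gives an error of order $\log^5(n)\,\mu\lambda^{4/3}\|v_i\|\|v_j\|/\sqrt{nq}$. At $q\asymp\mu^2r^2\log n/n^2$ this is about $\sqrt n\,\mathrm{polylog}(n)\,\lambda^{4/3}\|v_i\|\|v_j\|/r$, a factor $\sqrt n$ too large. Closing it would require $q\gtrsim\mu^2r^2\,\mathrm{polylog}(n)/n$.

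This is not slack in the theorem. Fixing the restricted mode to $\mathbf{1}/\sqrt n$ shows that $\|q^{-1}\mathcal P_\Omega(J)-J\|_\delta$ dominates the operator norm of an $n\times n$ centred matrix with independent entries of variance about $1/q$, so it is $\gtrsim\sqrt{n/q}$. The incoherent norm pays for the entropy of an arbitrary direction in the $v$-mode. The true fluctuation of these terms is only that of $n$ slice sums: the deviation equals $\sum_a v_i(a)v_j(a)(d_a-\mathbb{E}d_a)$, where $d_a=q^{-1}\sum_{b,c}\mathbf{1}\{(a,b,c)\in\Omega\}w_bw_c$ and $w=\hat x_i\circ\hat x_j$. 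When the two $v$'s are in different modes, your normalisation is $(\mu/n)^{3/2}\lambda^{4/3}$ and the power of $n$ does close, but only up to the $\log^5 n$ of the theorem.

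The paper avoids this by never concentrating at the variable point $\hat X$. Following \citep{cai2022nonconvex}, it splits the Hessian form into four pieces, and quotes the bounds on $\alpha_4$, $\alpha_2$ and $\alpha_1$ from Appendix~A.1 there:
\begin{itemize}
\item $\alpha_4$: the population form at $X$;
\item $\alpha_2$: the sampled-minus-population form at the \emph{fixed} truth $X$;
\item $\alpha_1$: the change of the sampled form from $X$ to $\hat X$, which is small by the basin constraints;
\item $\alpha_3$: the residual term.
\end{itemize}
At the fixed $X$, the slice sums $d_a$ can be controlled by scalar Bernstein with a union bound over $a\in[n]$, and the cross-mode pairings by matrix Bernstein. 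Both work at exactly $q\gtrsim\mu^2r^2\log n/n^2$.

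For the residual term the paper uses no tensor concentration at all. It bounds $|(\mathcal P_\Omega(\hat T-T)\times_3\hat x_i)_{ab}|\le\|\hat x_i\|_\infty\|\hat T-T\|_\infty P_{ab}$ with $P=\mathcal P_\Omega(\mathbf{1}^{\otimes3})\times_3\mathbf{1}$. Schur's test with binomial row sums gives $\|P\|\le2n^2q$, and the $\ell_{2,\infty}$ constraint gives $\|\hat T-T\|_\infty\lesssim\epsilon\mu^{3/2}r\lambda_{\max}/n^{3/2}$. Together these yield $|\alpha_3|\lesssim\epsilon\mu^2r\lambda_{\max}^{4/3}\|V\|_F^2$, which is where the radius $c_3/(\mu^2r)$ comes from.

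Your Step~3 has the right power of $n$, but it inherits the $\log^5 n$ of Theorem~\ref{thm:symmetric_tensor_concentration}. With $\epsilon=c_3/(\mu^2r)$ and $q=c_1\mu^2r^2\log n/n^2$, its sampling part is only bounded by $\mathrm{polylog}(n)\,c_3/\sqrt{c_1}$ times $\lambda^{4/3}\|V\|_F^2$. The single $\log n$ in the hypothesis on $q$ cannot absorb this.
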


\begin{proof}
    The Hessian of $F$ reads \citep[eq. (41)]{cai2022nonconvex}
    \begin{equation}
    \begin{split}
        \operatorname{vec}(V)^\top \nabla^2F(\hat X)\operatorname{vec}(V) &= \frac1{3q} \left\| \mathcal{P}_{\Omega} \left( \sum_{i=1}^r \hat x_i \otimes \hat x_i \otimes v_i + \hat x_i \otimes v_i \otimes \hat x_i  + v_i \otimes \hat x_i \otimes \hat x_i \right) \right\|_F^2 \\
        &+ \frac2q \left \langle \mathcal{P}_{\Omega} \left( \sum_{i=1}^r \hat x_i^{\otimes 3} - T \right), \sum_{i=1}^r v_i \otimes v_i \otimes \hat x_i \right \rangle.
    \end{split}
    \end{equation} 

    We decompose it as
    \begin{scriptsize}
    \begin{align*}
         &\operatorname{vec}(V)^\top \nabla^2F(\hat X)\operatorname{vec}(V) = \\
         &\underbrace{\frac1{3q} \left\| \mathcal{P}_{\Omega} \left( \sum_{i=1}^r \hat x_i^{\otimes 2} \otimes v_i + \hat x_i \otimes v_i \otimes \hat x_i  + v_i \otimes \hat x_i^{\otimes 2}\right) \right\|_F^2 -  \frac1{3q} \left\| \mathcal{P}_{\Omega} \left( \sum_{i=1}^r \edit{x_i^{\otimes 2} \otimes v_i + x_i \otimes v_i \otimes x_i  + v_i \otimes x_i^{\otimes 2}}\right) \right\|_F^2}_{:= \alpha_1} \\
         &+\underbrace{\frac1{3q} \left\| \mathcal{P}_{\Omega} \left( \sum_{i=1}^r \edit{x_i^{\otimes 2} \otimes v_i + x_i \otimes v_i \otimes x_i  + v_i \otimes x_i^{\otimes 2}}\right) \right\|_F^2 - \frac1{3} \left\| \sum_{i=1}^r \edit{x_i^{\otimes 2} \otimes v_i + x_i \otimes v_i \otimes x_i  + v_i \otimes x_i^{\otimes 2}} \right\|_F^2}_{:= \alpha_2} \\
         &+ \underbrace{\frac2q \left \langle \mathcal{P}_{\Omega} \left( \sum_{i=1}^r \hat x_i^{\otimes 3} - T \right), \sum_{i=1}^r v_i \otimes v_i \otimes \hat x_i \right \rangle}_{:= \alpha_3} + \underbrace{\frac1{3} \left\| \sum_{i=1}^r \edit{x_i^{\otimes 2} \otimes v_i + x_i \otimes v_i \otimes x_i  + v_i \otimes x_i^{\otimes 2}} \right\|_F^2}_{:= \alpha_4}.\\
    \end{align*}
\end{scriptsize}
    From \citep[Appendix A.1]{cai2022nonconvex}, we have with probability $1 - O(n^{-10})$, for any \(\hat X, V\) satisfying the lemma conditions,
    \begin{align*}
    |\alpha_1| &\leq \frac1{10} \lambda_{\min}^{4/3} \|V\|_F^2\ ; \\
    |\alpha_2| &\leq \frac1{10} \lambda_{\min}^{4/3} \|V\|_F^2\ ; \\
    \frac9{10}\lambda_{\min}^{4/3} \|V\|_F^2 &\leq|\alpha_4| \leq \frac7{2} \lambda_{\max}^{4/3} \|V\|_F^2. \\
    \end{align*}
    As a result, we only need to bound $\alpha_3$. We have
    \[ \left \langle \mathcal{P}_{\Omega} \left( \sum_{i=1}^r \hat x_i^{\otimes 3} - T \right), \sum_{i=1}^r v_i \otimes v_i \otimes \hat x_i \right \rangle \le \left( \sup_{i \in [r]} \left\| \mathcal{P}_{\Omega} \left( \sum_{i=1}^r \hat x_i^{\otimes 3} - T \right) \times_3 \hat x_i \right\| \right) \|V\|_F^2\]
    We now note that for any matrices $A, B$ such that $|A_{ij}| \leq |B_{ij}|$ for all $i, j \in [n]$, we have 
    \[ \|A\| \leq \|\,|A|\,\| \leq \|\,|B|\,\|,\]
    where $|A|$ is the matrix with entries $(|A_{ij}|)_{i, j \in [n]}$. As a result, since 
$\|\hat x_i\|_\infty \lesssim \sqrt{\frac\mu n} \lambda_{\max}^{1/3}$, we have for any $i \in [r]$
    \[ \left\| \mathcal{P}_{\Omega} \left( \sum_{i=1}^r \hat x_i^{\otimes 3} - T \right) \times_3 \hat x_i \right\| \lesssim \sqrt{\frac{\mu}n}  \lambda_{\max}^{1/3}\cdot \left\| \sum_{i=1}^r \hat x_i^{\otimes 3} - T \right\|_\infty \cdot \left\| \mathcal{P}_{\Omega} \left(\mathbf{1}^{\otimes 3} \right) \times_3 \mathbf{1} \right\| \]
    The matrix $P =  \mathcal{P}_{\Omega} \left(\mathbf{1}^{\otimes 3} \right) \times_3 \mathbf{1}$ is a matrix with i.i.d entries, satisfying
    \[ P_{ij} \sim \mathrm{Bin}(n, q). \]
    From Schur's test \citep{Schur1911},
    \[ \|P\| \leq \max\left(\sup_{i \in [n]}\sum_{j \in [n]} P_{ij}, \ \sup_{j \in [n]}\sum_{i \in [n]} P_{ij} \right).\]
    Let $S$ be any row or column sum of $P$; then $S \sim \mathrm{Bin}(n^2, q)$, and hence
    \[ \P(S \geq 2 n^2 q ) \leq \exp\left(-\frac{n^2q}3\right). \]
    {
    When \(q\ge C\log(n)n^{-2}\) for a sufficiently large universal
    constant \(C\), a union bound over the \(2n\) row and column sums gives
    \[
        \P(\|P\| \geq 2 n^2 q)
        \le 2n\exp(-n^2q/3)
        \le n^{-10}.
    \]
    This requirement is implied by the lower bound on \(q\) in
    Theorem~9.
    }
    On the other hand, \citep[Appendix A.1.4]{cai2022nonconvex} implies
    \[ \left\| \sum_{i=1}^r \hat x_i^{\otimes 3} - T \right\|_\infty \lesssim \frac{\epsilon \mu^{3/2}r \lambda_{\max}}{n^{3/2}} \]
    Putting everything together, with probability $1 - O(n^{-10})$,
    \[ \sup_{i \in [r]} \left\| \mathcal{P}_{\Omega} \left( \sum_{i=1}^r \hat x_i^{\otimes 3} - T \right) \times_3 \hat x_i \right\| \lesssim q\epsilon\mu^2 r \lambda_{\max}^{4/3}, \]
    which implies (given the condition on $\epsilon$) that
    \[|\alpha_3| \leq \frac1{10}\lambda_{\min}^{4/3} \|V\|_F^2.\]
    As a result, we have
    \[ |\alpha_4| - |\alpha_1| - |\alpha_2| - |\alpha_3| \leq \operatorname{vec}(V)^\top \nabla^2F(\hat X)\operatorname{vec}(V) \leq |\alpha_4|+|\alpha_1|+|\alpha_2|+|\alpha_3|, \]
    which implies the statement of the lemma.
\end{proof}
Typical results on convex optimization then immediately imply the following lemma:
\begin{lemma}
    Assume that $\eta \leq \frac14 \lambda_{\max}^{-4/3}$, and that $\hat X^{t}$ satisfies \eqref{eq:app:gd:convexity_region}. Then with probability at least $1 - O(n^{-10})$,
    \[
       \| \hat X^{t+1} - X \|_F
       \leq \left(1 - \frac{\eta\lambda_{\min}^{4/3}}2\right)
       \| \hat X^t - X \|_F
       \leq \left(1 - \frac{\eta\lambda_{\min}^{4/3}}4\right)
       \| \hat X^t - X \|_F.
    \]
\end{lemma}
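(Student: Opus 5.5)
Since $F$ is a quadratic-plus-higher-order polynomial in $\hat X$, the plan is to use the standard one-step contraction argument for gradient descent on a function that is strongly convex and smooth on the region \eqref{eq:app:gd:convexity_region}. Condition on the event of Lemma~\ref{lem:app:gd:convexity}, which has probability $1-O(n^{-10})$ and holds uniformly over all $\hat X$ in the region and all directions $V$. In the noiseless setting $\nabla F(X)=0$, because $\mathcal P_\Omega(T-\sum_i x_i^{\otimes 3})=0$. Therefore
\[
\hat X^{t+1}-X=\hat X^t-X-\eta\bigl(\nabla F(\hat X^t)-\nabla F(X)\bigr)
=\Bigl(I-\eta\int_0^1\nabla^2F\bigl(X+s(\hat X^t-X)\bigr)\,ds\Bigr)\operatorname{vec}(\hat X^t-X),
\]
by the fundamental theorem of calculus along the segment from $X$ to $\hat X^t$.

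The key observation is that the region \eqref{eq:app:gd:convexity_region} is convex and contains both $X$ and $\hat X^t$, since it is an intersection of two norm balls centered at $X$. Hence every point $X+s(\hat X^t-X)$ on the segment lies in it, and Lemma~\ref{lem:app:gd:convexity} applies uniformly in $s$. The averaged Hessian $\bar H:=\int_0^1\nabla^2F(\cdot)\,ds$ is symmetric and satisfies $\tfrac12\lambda_{\min}^{4/3}I\preceq \bar H\preceq 4\lambda_{\max}^{4/3}I$.

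With $\eta\le \tfrac14\lambda_{\max}^{-4/3}$, the eigenvalues of $I-\eta\bar H$ lie in $[0,\,1-\eta\lambda_{\min}^{4/3}/2]$. This gives $\|I-\eta\bar H\|\le 1-\eta\lambda_{\min}^{4/3}/2$, and hence the first inequality $\|\hat X^{t+1}-X\|_F\le(1-\eta\lambda_{\min}^{4/3}/2)\|\hat X^t-X\|_F$. The second inequality is trivial because $1-\eta\lambda_{\min}^{4/3}/2\le 1-\eta\lambda_{\min}^{4/3}/4$.

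There is no real obstacle here. The only point needing care is to verify that the segment stays inside the region where Lemma~\ref{lem:app:gd:convexity} is valid, which is the convexity observation above. The probability statement is inherited directly from that lemma's event, with no new union bound, since the lemma is uniform over $\hat X$.
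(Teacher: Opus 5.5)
Your proposal is correct and is essentially the paper's own argument: on the event of Lemma~\ref{lem:app:gd:convexity}, use $\nabla F(X)=0$, write $\nabla F(\hat X^t)-\nabla F(X)$ as the averaged Hessian along the segment (which stays in the convex region \eqref{eq:app:gd:convexity_region}) applied to $\hat X^t-X$, and use $\eta\le(4\lambda_{\max}^{4/3})^{-1}$ to get the contraction factor $1-\eta\lambda_{\min}^{4/3}/2$. Your explicit check that $I-\eta\bar H\succeq 0$, so that the operator norm is governed by the smallest eigenvalue of $\bar H$, is a detail the paper leaves implicit.
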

\begin{proof}
\edit{Work on the event of Lemma~\ref{lem:app:gd:convexity}.  Since \(F\) is noiseless, \(\nabla F(X)=0\).  The region \eqref{eq:app:gd:convexity_region} is convex, so for \(D^t=\hat X^t-X\),
\[
    \nabla F(\hat X^t)-\nabla F(X)
    =
    \left(\int_0^1 \nabla^2F(X+sD^t)\,ds\right)D^t .
\]
Lemma~\ref{lem:app:gd:convexity} implies that the averaged Hessian has eigenvalues between
\(\lambda_{\min}^{4/3}/2\) and \(4\lambda_{\max}^{4/3}\).  Since
\(\eta\le (4\lambda_{\max}^{4/3})^{-1}\), the gradient step contracts the Frobenius norm by at most
\(1-\eta\lambda_{\min}^{4/3}/2\), which is no larger than the second displayed factor.}
\end{proof}

The Frobenius norm condition is automatically satisfied due to the above convergence, but the $\ell_\infty$ bound (and its corresponding decrease) requires another leave-one-out analysis. 

\paragraph{Leave-one-out analysis}

Similarly to the previous sections, we define $\hat X^{(s), t}$ the leave-one-out version of $\hat X^t$: we define $\hat X^{(s), 0} = \hat U_{\mathrm{fac}}^{(s)}\edit{(\hat\Lambda^{(s)})^{1/3}}$, and we perform gradient descent on the objective
{
\[
 F^{(s)}(\hat X)
 =\frac1{6q}\left\|
 \mathcal P_{\Omega_{-s}}\left(T-\sum_{i=1}^r\hat x_i^{\otimes3}\right)
 \right\|_F^2
 +\frac16\left\|
 \mathcal P_s\left(T-\sum_{i=1}^r\hat x_i^{\otimes3}\right)
 \right\|_F^2,
\]
where \(\mathcal P_s\) keeps all entries having at least one index equal
to \(s\), and \(\Omega_{-s}=\{(i,j,k)\in\Omega:i,j,k\ne s\}\).
Thus the held-out slice is replaced by its population loss with weight one,
which is the expectation of the \(q^{-1}\)-weighted sampled loss.  In
particular, the entire sequence \(\{\hat X^{(s),t}\}_{t\ge0}\) is independent
of the sampling indicators on that slice.
}
The initialization guarantees are also a corollary of Proposition~\ref{prop:app:refinement:loo_main}:
\begin{corollary}\label{cor:app:gd:loo_init}
    Let $\delta, c > 0$ be arbitrarily small constants. Assume that there exists $c_0, c_1 > 0$ large enough and $c_2$ small enough such that
    \[ \edit{p \geq \frac{c_0 \mu^8 r^5 \log^2(n)}{n^3}}, \quad  q \geq \frac{c_1 \mu^6 r^4 \log^{5}(n)}{n^2} \quad \text{and} \quad r \leq c_2\left(\frac{n}{\mu^4\log^2(n)}\right)^{1/3}. \]
    Then with probability at least $1 - \delta$,
    \begin{align}
    \|\hat X^{(s), 0} - \hat X^0\|_F &\leq \frac{c}{\mu^{2}r} \|X\|_F \\
    \|\hat X^{(s), 0} - \hat X^0\|_{2, \infty} &\leq \frac{c}{\mu^{2}r} \|X\|_{2, \infty} \\
    \left\|\left(\hat X^{(s), 0} - \hat X^0 \right)_{s, :}\right\|_{2} &\leq \frac{c}{\mu^{2}r} \|X\|_{2, \infty} .
    \end{align}
\end{corollary}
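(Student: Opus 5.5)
The three bounds follow from Proposition~\ref{prop:app:refinement:loo_main}, in the same way Corollary~\ref{cor:app:gd:init} follows from Proposition~\ref{prop:app:refinement:main}. We work on the intersection of the events of Propositions~\ref{prop:app:refinement:main} and~\ref{prop:app:refinement:loo_main} and of Proposition~\ref{prop:app::subspace:loo}. Taking $\delta/2$ in each keeps the total probability at least $1-\delta$, and the leave-one-out statements already hold simultaneously for all $s$.

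On this event we may also assume that the pruning step selects matching pairs for the full and leave-one-out runs, so that $\pi^{(s)}=\pi$. This matching is obtained by the argument of \citep[Lemma 5.17]{cai2022nonconvex}. Since $\|\hat u_i-\hat u_i^{(s)}\|$ and $|\hat\lambda_i-\hat\lambda_i^{(s)}|$ are much smaller than the gaps separating distinct factors, the same $\tau$ and the same assignment to $\bar x_{\pi(i)}$ are chosen in both runs.

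Next, column by column, write
\[
\hat x_i^{(s),0}-\hat x_i^0
=(\hat\lambda_i^{(s)})^{1/3}\bigl(\hat u_i^{(s)}-\hat u_i\bigr)
+\bigl((\hat\lambda_i^{(s)})^{1/3}-\hat\lambda_i^{1/3}\bigr)\hat u_i .
\]
By \eqref{eq:retrieval_lbda} and \eqref{eq:retrieval_loo_lbda}, both $\hat\lambda_i$ and $\hat\lambda_i^{(s)}$ are within a constant factor of $\lambda_{\pi(i)}$. The mean value theorem for $x\mapsto x^{1/3}$ then gives
\[
\bigl|(\hat\lambda_i^{(s)})^{1/3}-\hat\lambda_i^{1/3}\bigr|\lesssim \lambda_{\max}^{-2/3}\,\bigl|\hat\lambda_i^{(s)}-\hat\lambda_i\bigr| .
\]
Set $\mathcal E:=\sqrt{\mu^4r^4\log^2 n/(n^3p)}+\sqrt{\mu^2r^3\log^3n/(n^2q)}+\sqrt{\mu r^2\log^2n/n}$. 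Using \eqref{eq:retrieval_loo_l2}, \eqref{eq:retrieval_loo_lbda} and $\|\hat u_i\|=1$, each column satisfies
\[
\|\hat x_i^{(s),0}-\hat x_i^0\|_2\lesssim \mathcal E\sqrt{\mu/n}\,\lambda_{\max}^{1/3}.
\]
Three consequences follow. First, summing over the $r$ columns gives
\[
\|\hat X^{(s),0}-\hat X^0\|_F\lesssim \mathcal E\sqrt{\mu r/n}\,\lambda_{\max}^{1/3}.
\]
Second, since the $\ell_2$ norm of the whole difference dominates each row norm, both the $\|\cdot\|_{2,\infty}$ norm and the $s$-th row are bounded by this same quantity. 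Third, we compare with the targets. Because $\kappa_{\mathrm{CP}}=O(1)$ and CP incoherence together with Lemma~\ref{lem:unfold_incoherence} make the columns nearly orthogonal, we have $\|X\|_F\asymp\sqrt r\,\lambda_{\max}^{1/3}$. For the row norm we need the lower bound $\|X\|_{2,\infty}\gtrsim\sqrt{r/n}\,\lambda_{\min}^{1/3}$, which holds because $\|X\|_F^2\le n\|X\|_{2,\infty}^2$.

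It therefore suffices to check that
\[
\mathcal E\sqrt{\mu}\lesssim \frac{c}{\mu^2 r}
\]
for every constant $c>0$. The first term of $\mathcal E$ is handled by $p\ge c_0\mu^8r^5\log^2n/n^3$: this gives $\sqrt{\mu^4r^4\log^2n/(n^3p)}\le (c_0\mu^4 r)^{-1/2}$, and after multiplying by $\mu^{5/2}r$ the result is small once $c_0$ is large. The second term is handled by $q\ge c_1\mu^6r^4\log^5n/n^2$ in the same way. The third term gives $\mu^{3}r^2\sqrt{\log^2 n/n}$, and this is small exactly because $r^3\le c_2^3\, n/(\mu^4\log^2n)$ up to powers of $\mu$; the factors of $\mu$ should be balanced as in \citep[Appendix B.10]{cai2022nonconvex}.

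The main obstacle is the last comparison. The $\ell_{2,\infty}$ target is only of order $\sqrt{r/n}$, while the per-column leave-one-out bounds are of order $\sqrt{\mu/n}$ before the $r$ columns are summed. Summing loses a factor $\sqrt r$, so the bookkeeping of powers of $r$ and $\mu$ must be done carefully. This is exactly where the strengthened exponents $\mu^8r^5$ and $\mu^6r^4$ in the hypotheses (compared with Proposition~\ref{prop:app:refinement:main}) are used, and we expect to spend most of the effort verifying that these exponents suffice. The $s$-th-row bound is not harder, since it is dominated by the full $\ell_2$ difference.
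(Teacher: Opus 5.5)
Your reduction is the natural one, and it is the route the paper has in mind; the paper offers only the remark that the statement is a corollary of Proposition~\ref{prop:app:refinement:loo_main}. Several steps are fine:
\begin{itemize}
\item the column decomposition and the mean-value bound for $x\mapsto x^{1/3}$;
\item the per-column estimate $\|\hat x_i^{(s),0}-\hat x_i^0\|_2\lesssim\mathcal E\sqrt{\mu/n}\,\lambda_{\max}^{1/3}$;
\item the Frobenius claim, which only needs $\mathcal E\sqrt{\mu/n}\le c/(\mu^2 r)$;
\item reducing the $\ell_{2,\infty}$ and row-$s$ claims to $\mathcal E\sqrt{\mu}\le c/(\mu^2 r)$.
\end{itemize}

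The gap is that the hypotheses do not imply this last inequality, and your verification of it contains arithmetic slips. For the first term, $(c_0\mu^4 r)^{-1/2}\cdot\mu^{5/2}r=\sqrt{\mu r/c_0}$, which is not small once $\mu r\gg c_0$. The second term gives $\sqrt{\mu r/c_1}/\log n$ in the same way. For the third term, $\mu^3 r^2\log n/\sqrt n$ is small only when $r\lesssim n^{1/4}\mu^{-3/2}(\log n)^{-1/2}$, while the rank condition allows $r$ of order $n^{1/3}$. For instance, $\mu=1$ and $r\asymp c_2 n^{1/3}(\log n)^{-2/3}$ give $c_2^2 n^{1/6}(\log n)^{-1/3}\to\infty$. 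As written, the argument proves the second and third displayed bounds only when $\mu r=O(1)$.

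More careful bookkeeping inside your framework cannot repair this, and the paper's one-line appeal to the proposition does not supply the missing ingredient either. Proposition~\ref{prop:app:refinement:loo_main} controls each column separately at scale $\mathcal E\sqrt{\mu/n}\,\lambda^{1/3}$. So do \eqref{eq:retrieval_loo_1coord} and \eqref{eq:retrieval_linf} for the entries of row $s$. Column-wise information of this kind bounds a row of $\hat X^{(s),0}-\hat X^0$ only by $\sqrt r\,\mathcal E\sqrt{\mu/n}\,\lambda_{\max}^{1/3}$. Meanwhile $\|X\|_{2,\infty}$ may be as small as $\sqrt{r/n}\,\lambda_{\min}^{1/3}$, so you genuinely need $\mathcal E\le c\mu^{-5/2}r^{-1}$. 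That amounts roughly to
\begin{itemize}
\item $p\gtrsim\mu^9r^6\log^2 n/n^3$,
\item $q\gtrsim\mu^7 r^5\log^3 n/n^2$,
\item $r\lesssim n^{1/4}\mu^{-3/2}(\log n)^{-1/2}$.
\end{itemize}
The stated hypotheses give only $\mathcal E\lesssim(c_0^{-1/2}+c_1^{-1/2})\mu^{-2}r^{-1/2}+c_2^{3/2}\mu^{-3/2}r^{-1/2}$. This falls short by factors of order $\sqrt{\mu r}$ and $\mu\sqrt r$.

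To finish, you must do one of three things:
\begin{itemize}
\item strengthen the hypotheses as indicated above;
\item restrict to the fixed-parameter regime $\mu,r=O(1)$, where your computation goes through;
\item prove a genuinely row-wise leave-one-out bound on $\hat U_{\mathrm{fac}}-\hat U^{(s)}_{\mathrm{fac}}$ that avoids the $\sqrt r$ loss.
\end{itemize}
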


For the recursion step, the proof lifts directly from \citep{cai2022nonconvex}; we reproduce the result for completeness:
\begin{proposition}\label{prop:app:gd:geometric_recursion}
    \begingroup
    Assume that for sufficiently large $c_1 > 0$ and sufficiently small
    $c,c_2 > 0$
    \[
        q \geq c_1 \frac{\mu^3 r^2 \log(n)}{n^2},
        \quad
        r \leq c_2\sqrt{\frac{n}{\mu}}.
    \]
    The constant \(c\) is chosen small enough so that
    \(\epsilon=c/(\mu^2r)\) satisfies Lemma~\ref{lem:app:gd:convexity}.
    There is an event $\mathcal E_{\mathrm{GD}}$, depending only on the
    sampling set $\Omega$, with probability at least $1-O(n^{-10})$, on which
    the following implication holds for every $t\ge0$: if the following
    conditions hold at time \(t\) for the full iterate and for all \(s\in[n]\),
    \endgroup
    \begin{subequations}\label{eq:app:gd:recursion_cond}
    \noeqref{subeq:app:gd:recursion_cond_1,subeq:app:gd:recursion_cond_2, subeq:app:gd:recursion_cond_3,subeq:app:gd:recursion_cond_4,subeq:app:gd:recursion_cond_5}
    \begin{align}
        \|\hat X^t - X\|_F &\leq \frac{c}{\mu^{2}r} \|X\|_F\label{subeq:app:gd:recursion_cond_1} \\
        \|\hat X^t - X\|_{2, \infty} &\leq \frac{c}{\mu^{2}r} \|X\|_{2, \infty}\label{subeq:app:gd:recursion_cond_2} \\
        \|\hat X^{(s), t} - \hat X^t\|_F &\leq \frac{c}{\mu^{2}r} \|X\|_F\label{subeq:app:gd:recursion_cond_3} \\
        \|\hat X^{(s), t} - \hat X^t\|_{2, \infty} &\leq \frac{c}{\mu^{2}r} \|X\|_{2, \infty} \label{subeq:app:gd:recursion_cond_4} \\
        \left\|\left(\hat X^{(s), t} - \hat X^t \right)_{s, :}\right\|_{2} &\leq \frac{c}{\mu^{2}r} \|X\|_{2, \infty}\label{subeq:app:gd:recursion_cond_5} \\
    \end{align}
    \end{subequations}
    \edit{then equations~\eqref{eq:app:gd:recursion_cond} hold at time $t+1$.  Moreover, writing
    $\rho=1-\eta\lambda_{\min}^{4/3}/4$, the constants $c,c_2$ can be
    chosen sufficiently small and $c_1$ sufficiently large so that the
    one-step leave-one-out estimates in the proof of
    Cai et al.~\cite[Appendix A]{cai2022nonconvex} absorb all higher-order terms into
    the gap between their leading contraction and $\rho$.  Consequently,}
    \begin{align}
        \| \hat X^{t+1} - X \|_F &\leq \rho \| \hat X^t - X \|_F, \\
        \| \hat X^{t+1} - X \|_{2, \infty} &\leq \rho \| \hat X^t - X \|_{2, \infty}.
    \end{align}
\end{proposition}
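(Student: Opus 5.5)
The plan is to follow the one-step leave-one-out induction of Cai et al.~\cite[Appendix A]{cai2022nonconvex}. Two substitutions make it work at $q\asymp \mathrm{polylog}(n)/n^2$. Wherever their argument uses spectral-norm concentration of $q^{-1}\mathcal P_\Omega(\cdot)-(\cdot)$ on delocalized inputs, I will use the incoherent-norm bound of Theorem~\ref{thm:symmetric_tensor_concentration} with $\delta_i=\sqrt{\mu/n}$. Wherever they use local strong convexity and smoothness, I will use Lemma~\ref{lem:app:gd:convexity}.

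I take $\mathcal E_{\mathrm{GD}}$ to be the intersection of the following events, all of which depend only on $\Omega$ and are uniform over delocalized arguments: the event of Lemma~\ref{lem:app:gd:convexity}, the Schur-test event $\|P\|\le 2n^2q$, the incoherent-norm concentration events for $\mathcal P_\Omega$ and for each slice-restricted $\mathcal P_{\Omega_s}$, and the Bernstein events used below. A union bound over $s\in[n]$ keeps the failure probability $O(n^{-10})$.

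\textbf{Step 1 (Frobenius contraction of the full iterate).} Assume \eqref{subeq:app:gd:recursion_cond_1}--\eqref{subeq:app:gd:recursion_cond_2} at time $t$. Then $\hat X^t$ lies in the region \eqref{eq:app:gd:convexity_region} with $\epsilon=c/(\mu^2 r)$. The preceding contraction lemma then gives $\|\hat X^{t+1}-X\|_F\le(1-\eta\lambda_{\min}^{4/3}/2)\|\hat X^t-X\|_F$, which yields \eqref{subeq:app:gd:recursion_cond_1} at $t+1$ with room to spare.

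The same argument applies to $F^{(s)}$. Its Hessian has the form in Lemma~\ref{lem:app:gd:convexity}, except that the slice-$s$ part is replaced by its population version, which can only improve the $\alpha_2$ and $\alpha_3$ bounds. Hence $\hat X^{(s),t+1}$ also contracts toward $X$.

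\textbf{Step 2 (leave-one-out proximity).} I write
\[
\hat X^{t+1}-\hat X^{(s),t+1}=\Big(I-\eta\!\int_0^1\!\nabla^2F\big(\hat X^{(s),t}+u(\hat X^t-\hat X^{(s),t})\big)du\Big)(\hat X^t-\hat X^{(s),t})+\eta\big(\nabla F^{(s)}-\nabla F\big)(\hat X^{(s),t}).
\]
The first term contracts by $1-\eta\lambda_{\min}^{4/3}/2$ by Lemma~\ref{lem:app:gd:convexity}; the segment between the two iterates stays in the convex region by \eqref{subeq:app:gd:recursion_cond_3}--\eqref{subeq:app:gd:recursion_cond_4}.

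The second term involves only entries on slice $s$, namely $(q^{-1}\mathcal P_{\Omega_s}-\mathcal P_s)(\sum_i(\hat x^{(s),t}_i)^{\otimes3}-T)$ contracted against two columns of $\hat X^{(s),t}$. Since $\hat X^{(s),t}$ is independent of the slice-$s$ indicators, a vector Bernstein bound conditional on $\hat X^{(s),t}$ controls it by $\sqrt{\mu^3r\log n/(n^2q)}\cdot\epsilon\|X\|_{2,\infty}\lambda_{\max}^{4/3}$, up to constants. This is a small fraction of $\eta^{-1}\cdot\eta\lambda_{\min}^{4/3}\|X\|_{2,\infty}/4$ once $q\ge c_1\mu^3r^2\log n/n^2$. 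That absorption gives \eqref{subeq:app:gd:recursion_cond_3}, \eqref{subeq:app:gd:recursion_cond_4} and \eqref{subeq:app:gd:recursion_cond_5} at time $t+1$; the bound on row $s$ is obtained by projecting the same identity onto row $s$.

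\textbf{Step 3 ($\ell_{2,\infty}$ contraction).} For each $s$, I bound
\[
\|(\hat X^{t+1}-X)_{s,:}\|\le\|(\hat X^{t+1}-\hat X^{(s),t+1})_{s,:}\|+\|(\hat X^{(s),t+1}-X)_{s,:}\|.
\]
Step 2 handles the first term. For the second term, row $s$ of $\nabla F^{(s)}$ is built from the population loss on slice $s$ and the sampled loss off that slice. Its deviation from the population gradient row is a sum over sampled entries not touching $s$, evaluated at the now-independent row-$s$ direction.

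I expect this deviation to be the main obstacle. The quantity to control is $(q^{-1}\mathcal P_\Omega-\mathcal I)(\hat T^{(s),t}-T)\times_2\hat x_j\times_3\hat x_j$, where $\hat T^{(s),t}=\sum_i(\hat x^{(s),t}_i)^{\otimes 3}$ and the error tensor is correlated with $\Omega$. At $q\lesssim n^{-3/2}$ the operator-norm route of \cite{cai2022nonconvex} fails. My first attempt is to expand $\hat T^{(s),t}-T$ into at most $7r$ rank-one terms, each with at least two delocalized factors, since \eqref{subeq:app:gd:recursion_cond_2} and \eqref{subeq:app:gd:recursion_cond_4} give $\ell_\infty$ bounds $\lesssim\sqrt{\mu/n}\lambda_{\max}^{1/3}$ on all columns. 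I then apply Theorem~\ref{thm:symmetric_tensor_concentration} to each rank-one term, using the dyadic-grid uniformization in $\delta$ from the proof of Lemma~\ref{lem:app:refinement:linf+loo_l2} to handle data-dependent $\ell_\infty$ levels.

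With $\|T\|_\infty\lesssim\mu^{3/2}r n^{-3/2}\lambda_{\max}$, the resulting bound is a $\mathrm{polylog}(n)\big(\sqrt{n/q}+1/(q\sqrt n)\big)n^{-3/2}$-fraction of $\epsilon\lambda_{\max}^{4/3}\|X\|_{2,\infty}$. This is $o(\lambda_{\min}^{4/3})\|\hat X^t-X\|_{2,\infty}$ under the stated lower bound on $q$. If the leading term does not close, the fallback is the entrywise-domination and Schur-test trick from the proof of Lemma~\ref{lem:app:gd:convexity}. Collecting the higher-order terms into the gap between $1-\eta\lambda_{\min}^{4/3}/2$ and $\rho$ then yields both displayed contractions and \eqref{subeq:app:gd:recursion_cond_2} at time $t+1$.
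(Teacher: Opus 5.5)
Your plan is the one the paper defers to, namely lifting the leave-one-out induction of Cai et al.~\cite{cai2022nonconvex}. However, Step~3 has a genuine gap, in two layers.

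First, the obstacle you identify is not there. Every entry of $\sum_i\hat x_i^{\otimes3}$ that depends on $\hat x_i(s)$ has an index equal to $s$. In $F^{(s)}$ all such entries sit in the population term $\frac16\|\mathcal P_s(\cdot)\|_F^2$, and $\mathcal P_{\Omega_{-s}}(\cdot)$ does not involve row $s$ at all. Hence row $s$ of $\nabla F^{(s)}(\hat X^{(s),t})$ is \emph{exactly} row $s$ of the population gradient, whose $i$-th column is $\sum_j\big(\langle\hat x_j,\hat x_i\rangle^2\hat x_j-\langle x_j,\hat x_i\rangle^2x_j\big)$. There is no sampled deviation; that is the point of the leave-one-out loss. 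The row-$s$ entries of $(q^{-1}\mathcal P_\Omega-\mathcal I)(\hat T^{(s),t}-T)\times_2\hat x_j\times_3\hat x_j$ are slice-$s$ terms, and those belong to Step~2, where independence handles them.

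This matters because Theorem~\ref{thm:symmetric_tensor_concentration} would not have closed that term anyway. Test $\|\cdot\|_\delta$ against $e_s\otimes\hat x_j\otimes\hat x_j$, and use $\|\hat T^{(s),t}-T\|_\infty\lesssim\epsilon\mu^{3/2}rn^{-3/2}\lambda_{\max}$ together with $\|X\|_{2,\infty}\ge\sqrt{r/n}\,\lambda_{\min}^{1/3}$. The result is a $\log^5n\,\big(\sqrt{n/q}+\sqrt\mu/(q\sqrt n)\big)\mu^{3/2}\sqrt r\,n^{-1}$-fraction of $\epsilon\lambda_{\min}^{4/3}\|X\|_{2,\infty}$. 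Your $n^{-3/2}$ drops the $\sqrt n$ coming from $\|X\|_{2,\infty}$. At $q\asymp\mu^3r^2\log n/n^2$ the correct fraction is of order $\sqrt n\log^{4.5}n$. Even your own expression is of order $\log^{4.5}n$ there, because of the $\log^5 n$ prefactor.

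Second, the part that actually needs work is the deterministic population update of row $s$, and it is not higher order. With $\Delta=\hat X^{(s),t}-X$, the linearized $i$-th gradient column is $\|x_i\|^4\Delta_i+2\|x_i\|^2\langle x_i,\Delta_i\rangle x_i$ plus $j\neq i$ cross terms. So row $s$ receives the first-order, non-local term $2\|x_i\|^2\langle x_i,\Delta_i\rangle x_i(s)$. This term can reach $2\lambda_i^{4/3}\sqrt{\mu/n}\,\|\Delta_i\|$, which is comparable to the contraction term $\lambda_{\min}^{4/3}\|\Delta\|_{2,\infty}$, and it can only be paid for with the Frobenius hypothesis. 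That is why Cai et al.\ (and the paper's own Lemma~\ref{lem:noisy-cai-gd}) run a coupled induction on envelopes $C_1\rho^t,\dots,C_8\rho^t$ with a hierarchy of constants, and conclude that the \emph{envelopes} decay.

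Neither that argument nor your triangle inequality yields the displayed per-step bound $\|\hat X^{t+1}-X\|_{2,\infty}\le\rho\|\hat X^t-X\|_{2,\infty}$. The term $\|(\hat X^{t+1}-\hat X^{(s),t+1})_{s,:}\|$ is controlled by the leave-one-out envelope, not by a multiple of $\|\hat X^t-X\|_{2,\infty}$. In fact a gradient step on the population loss can increase the $\ell_{2,\infty}$ error inside the basin. Take $r=1$, $x=\1/\sqrt n$ (so $\lambda=1$) and $\hat X^t-X=\epsilon(\1-2e_s)$ with $\epsilon\sqrt n$ small; then $|(\hat X^{t+1}-X)_s|\approx(1+\eta)\epsilon>\rho\,\epsilon$. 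The envelope form is the right target.

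Two smaller points concern Step~2. Contracting $\|\hat X^t-\hat X^{(s),t}\|_F$ propagates \eqref{subeq:app:gd:recursion_cond_4} only if that Frobenius distance is hypothesized in units of $\|X\|_{2,\infty}$, as in Cai et al.\ and Lemma~\ref{lem:noisy-cai-gd}, not in units of $\|X\|_F$ as in \eqref{subeq:app:gd:recursion_cond_3}. Also, your conditional Bernstein events are tied to the specific iterate $\hat X^{(s),t}$, so they are not uniform over delocalized arguments. They need a union bound over a finite range of $t$, as in the paper's noisy analysis.
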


\subsection{Proof of Theorem~9}

We are now able to prove Theorem~9.
\begin{proof}
    \edit{Apply each of Corollaries~\ref{cor:app:gd:init} and
    \ref{cor:app:gd:loo_init} with failure probability $\delta/4$.  Their
    intersection satisfies conditions~\eqref{eq:app:gd:recursion_cond} at
    $t=0$ with probability at least $1-\delta/2$.  For all sufficiently
    large $n$ (depending only on $\delta$), the event
    $\mathcal E_{\mathrm{GD}}$ in
    Proposition~\ref{prop:app:gd:geometric_recursion} has failure probability
    at most $\delta/2$.  Thus their common event has probability at least
    $1-\delta$.  On this event, the local smoothness, strong convexity, and
    leave-one-out concentration estimates are uniform over the whole basin
    \eqref{eq:app:gd:convexity_region}; after the sample is fixed, the full
    and leave-one-out gradient-descent iterates are deterministic.  Therefore
    no union bound over $t$ is needed.  A deterministic induction on
    Proposition~\ref{prop:app:gd:geometric_recursion} gives
    \eqref{eq:app:gd:recursion_cond} and, for every integer $t\ge0$,}
    \begin{align}
        \edit{\| \hat X^{t} - X \|_F} &\leq \left(1 - \frac{\eta\lambda_{\min}^{4/3}}4\right)^t \| \hat X^0 - X \|_F \lesssim \left(1 - \frac{\eta\lambda_{\min}^{4/3}}4\right)^t \| X \|_F\\
        \edit{\| \hat X^{t} - X \|_{2, \infty}} &\leq \left(1 - \frac{\eta\lambda_{\min}^{4/3}}4\right)^t \| \hat X^0 - X \|_{2, \infty} \lesssim \left(1 - \frac{\eta\lambda_{\min}^{4/3}}4\right)^t \| X \|_{2, \infty}.
    \end{align}
    We can thus let $\rho = 1 - \frac{\eta\lambda_{\min}^{4/3}}4$, and letting
    \[ \edit{\hat T^t = \sum_{i=1}^r (\hat x_i^t)^{\otimes 3}}, \]
    the same holds for \edit{$\|\hat T^t - T\|_F, \|\hat T^t - T\|_{\infty}$} due to \citep[Appendix C]{cai2022nonconvex}.
\end{proof}

\section{Proof of Theorem~10}\label{sec:app:proof_tensor_concentration}

\begingroup

For an order-\(k\) tensor of size \(n_1\times\cdots\times n_k\), write
\[
    N:=\prod_{j=1}^k n_j,\qquad
    n_{\max}:=\max_j n_j,\qquad
    \bar n:=\frac{1}{k}\sum_{j=1}^k n_j,\qquad
    \Delta:=\prod_{j=1}^k\delta_j,
\]
and set
\[
    \gamma_\delta
    :=\max_{j_1\neq j_2}\prod_{j\notin\{j_1,j_2\}}\delta_j
    =\max_{j_1\neq j_2}\frac{\Delta}{\delta_{j_1}\delta_{j_2}}.
\]
We first prove a rectangular version of the result.

\begin{theorem}\label{thm:delocalized_concentration}
Let \(T\) be an order-\(k\) tensor of size
\(n_1\times\cdots\times n_k\), where \(k\geq3\), and let
\(\delta_j\in[n_j^{-1/2},1]\).  Suppose that \(\Omega\) contains each
entry independently with probability \(q\in(0,1]\).  There is a universal
constant \(C>0\) such that, with probability at least
\(1-Ck^3n_{\max}^{-10}\),
\begin{align}
 \big\|q^{-1}\mathcal P_\Omega(T)-T\big\|_\delta
 \lesssim{}&
 2^k\sqrt{\frac{k^3\bar n\log^5(n_{\max})}{q}}\,\|T\|_\infty
 \nonumber\\
 &+\frac{2^k k^{k+5/2}\log^{k+2}(n_{\max})}{q}\,
 \gamma_\delta\,\|T\|_\infty .
 \label{eq:inco_norm}
\end{align}
\end{theorem}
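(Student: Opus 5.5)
Write $E:=q^{-1}\mathcal P_\Omega(T)-T=\sum_{\bm i}\xi_{\bm i}T_{\bm i}e_{i_1}\otimes\cdots\otimes e_{i_k}$, where $\xi_{\bm i}=q^{-1}(\1\{\bm i\in\Omega\}-q)$ are independent and centered. Since $\mathcal U(\delta)$ is the union of the $\binom k2$ sets $\mathcal U_{j_1,j_2}(\delta)$, we fix one pair $(j_1,j_2)$, say $(1,2)$, and bound
\[
\sup\Big\{\big\|E\times_3u_3\cdots\times_ku_k\big\|_{\mathrm{op}}:\ \|u_j\|\le1,\ \|u_j\|_\infty\le\delta_j,\ j\ge3\Big\},
\]
where the operator norm is that of the resulting $n_1\times n_2$ matrix. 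A union bound over the pairs produces the factors of $k^2$ (and one more factor $k$ from later unions) in the failure probability $Ck^3n_{\max}^{-10}$.

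\textbf{Step 1: discretization of the delocalized vectors.} Each $u_j$ with $j\ge3$ is decomposed dyadically by magnitude, $u_j=\sum_{s}u_j^{(s)}$, where $u_j^{(s)}$ collects the coordinates with $|u_{j,i}|\in(2^{-s-1},2^{-s}]\delta_j$. There are $O(\log n_{\max})$ levels down to size $n_{\max}^{-2}$; the tail below that level contributes negligibly, bounded trivially by $\|E\|_F$. Each level piece is then rounded to a vector of the form $2^{-s}\delta_j\cdot\mathrm{sign}\cdot\1_S$ with support size $|S|\le 4^{s}/\delta_j^2$. This gives $(\log n_{\max})^{k-2}$ combinations of levels, each lying in a finite net. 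For a fixed product $w=u_3\otimes\cdots\otimes u_k$ of such structured vectors, $M_w:=E\times_{3..k}w=\sum_{\bm i}\xi_{\bm i}T_{\bm i}\,w_{i_3\cdots i_k}\,e_{i_1}e_{i_2}^\top$ is a sum of independent rank-one matrices. The matrix Bernstein inequality (Lemma~\ref{thm:(Matrix-Bernstein-inequality)}) then applies with
\[
B\le q^{-1}\|T\|_\infty\|w\|_\infty\le q^{-1}\|T\|_\infty\gamma_\delta,\qquad
\sigma^2\le q^{-1}\|T\|_\infty^2\max(n_1,n_2)\|w\|_2^2 .
\]
This gives a bound of order $\sqrt{\bar n\,\log/q}\,\|T\|_\infty+q^{-1}\gamma_\delta\|T\|_\infty\log$ for a single $w$.

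\textbf{Step 2: union bound (the main obstacle).} The net of supports has cardinality about $\prod_j\binom{n_j}{4^s/\delta_j^2}$, whose logarithm is of order $\sum_j 4^{s}\delta_j^{-2}\log n$. This is far larger than $\log n$ when the support is large, so a naive union bound fails. The remedy is to use the Bernstein tail at deviation level $x$ chosen so that $x\cdot(\text{log-cardinality})$ is absorbed. For large supports the entries of $w$ are small, so the $B$-term scales like $\|w\|_\infty\cdot\log|\text{net}|\approx 2^{-s}\delta\cdot4^{s}\delta^{-2}\log n$. We will therefore organize the levels so that each vector contributes at most $\|u_j\|_2^2\le1$ to the log-cardinality weighted by its sup-norm. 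Concretely, for coordinates $j\ge3$ the cost is at most $|S_j|\,(2^{-s_j}\delta_j)^2\le1$ after normalizing by $\|u_j\|_2$. This is exactly where the heavy/light splitting of Yuan--Zhang enters: vectors with few large entries are handled by the net, and the light parts by Bernstein with small $B$. Combining both regimes yields the stated two terms, with the $\log^{k+2}$ and the $k^{k+5/2}$ factors coming from the $k-2$ dyadic sums, the net cardinalities, and the $2^k$ from rounding/sign patterns. I expect the delicate point to be verifying that the second (sparse) term only incurs $\gamma_\delta$ rather than $\gamma_\delta$ times a polynomial in $n$. For this I would first try bounding the variance per level using $\|u^{(s)}\|_2$ rather than $1$, and pay $\log$ for each level.

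\textbf{Step 3: assembly.} Summing over the $(\log n_{\max})^{k-2}$ level combinations and over rounding errors (controlled by a geometric series in the discretization factor) gives \eqref{eq:inco_norm} on an event of probability $1-Ck^3n_{\max}^{-10}$. Theorem~\ref{thm:symmetric_tensor_concentration} then follows by setting $n_j=n$ and $\delta_j=\sqrt{\mu/n}$, so that $\gamma_\delta=(\mu/n)^{(k-2)/2}$ and $\bar n=n$.
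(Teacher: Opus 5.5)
Your contraction to the $n_1\times n_2$ matrices $M_w$ and the single-$w$ matrix Bernstein bound are fine, but Step 2 does not close, and the proposed remedy does not repair it. Matrix Bernstein has variance proxy $\sigma^2\asymp q^{-1}\max(n_1,n_2)\|w\|_2^2\|T\|_\infty^2$, which is already the square of the target first term. Its tail beyond the mean decays only like $\exp(-t^2/\sigma^2)$, so a union bound over a family of $w$'s with log-cardinality $h$ costs $\sqrt{\sigma^2 h}+Bh$.

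Take $k=3$, $n_j=n$, $\delta_j=n^{-1/2}$, and the flat level $u_3=n^{-1/2}\epsilon$ with $\epsilon\in\{\pm1\}^n$, which is always admissible. There $\|w\|_2=1$, $\|w\|_\infty=n^{-1/2}$ and $h\asymp n$. You then get $\sqrt{\sigma^2h}\asymp n/\sqrt q$, a factor $\sqrt n$ above $\sqrt{n/q}$. You also get $Bh\asymp q^{-1}\sqrt n$, which exceeds $q^{-1}\gamma_\delta=q^{-1}n^{-1/2}$ by a factor $n$ and exceeds $\sqrt{n/q}$ by a factor $q^{-1/2}$.

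Weighting the variance by $\|u^{(s)}\|_2$ changes nothing at this level. Your ``cost'' $|S|(2^{-s}\delta)^2\le1$ is also not the relevant quantity: the Bernstein term times the entropy is $q^{-1}\|w\|_\infty|S|\log n$, linear rather than quadratic in $\|w\|_\infty$. At saturated levels this is of order $q^{-1}\sqrt{|S|}\log n$. Replacing matrix Bernstein by a Talagrand--Bousquet bound with weak variance $q^{-1}$ would remove the first loss but not the second.

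The missing idea is to count entropy only after intersecting with the realized sample. The paper proceeds as follows.
\begin{enumerate}
\item It symmetrizes, reducing to $\|q^{-1}R\odot\mathcal P_\Omega(T)\|_\delta$ with a Rademacher tensor $R$ independent of $\Omega$.
\item It discretizes \emph{all} $k$ modes, including the two free ones, into dyadic level blocks $V_{a,b}(U)$: pair level $a$, restricted level $b$ with $2^{-b/2}\le C^k\gamma_\delta$, peeled by the pair energy $2^{-\ell}$.
\item Conditionally on $\Omega$, it applies Hoeffding and a union bound over the \emph{observed traces} $\mathcal P_\Omega V_{a,b}(U)$, not over the test tensors themselves.
\end{enumerate}
On an event controlling the fiber degrees $\nu(\Omega)$ and localized occupancies, each block can be trimmed to rectangles of aspect ratio at most $\nu(\Omega)$. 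The log-number of observed traces then scales with the occupancy, i.e.\ $q$ times the volume, rather than with the volume. Concretely, $2^{-(a+b)/2}H_{\mathrm{trim}}\lesssim q(s_0+h_0)$, which matches the conditional variance $\|W\|_F^2\lesssim C^kq\{2^{-\ell}+2^{-(a+b)/2}x\}$.

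The coarse budget $Ck\bar n$ is used only in the high-energy regime. There it is affordable because a single scalar test tensor has variance $q^{-1}$, not the matrix variance $n/q$. Without this support-sensitive counting in every mode, the union bound cannot be closed at the scale $\sqrt{\bar n/q}+q^{-1}\gamma_\delta$, up to the stated polylogarithmic factors.
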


\begin{proof}
By homogeneity, assume \(\|T\|_\infty=1\), and put
\[
    Z:=q^{-1}\mathcal P_\Omega(T)-T.
\]

\paragraph{Step 1: Symmetrization.}
Let \(Z'\) be an independent copy of \(Z\).  For each realization of
\(Z\), choose \(U_Z\in\mathcal U(\delta)\) such that
\(\|Z\|_\delta=\langle Z,U_Z\rangle\).  Since \(Z'\) is independent of
\(U_Z\), for every \(t>0\),
\begin{align}
 \mathbb P\{\|Z\|_\delta\geq3t\}
 &\leq
 \sup_{U\in\mathcal U(\delta)}
 \mathbb P\{\langle Z,U\rangle\geq t\}
 +\mathbb P\{\|Z-Z'\|_\delta\geq2t\}.
 \label{eq:symmetry_argument}
\end{align}
The entries of \(Z-Z'\) are independent and symmetric.  If \(R\) has
independent Rademacher entries and is independent of
\((\Omega,\Omega')\), then
\[
 Z-Z'\ \overset{\mathrm d}{=}\
 q^{-1}R\odot\big\{\mathcal P_\Omega(T)-\mathcal P_{\Omega'}(T)\big\}.
\]
The triangle inequality therefore gives
\begin{align}
 \mathbb P\{\|Z\|_\delta\geq3t\}
 \leq{}&
 \sup_{U\in\mathcal U(\delta)}
 \mathbb P\{\langle Z,U\rangle\geq t\}
 +2\mathbb P\!\left\{
 \big\|q^{-1}R\odot\mathcal P_\Omega(T)\big\|_\delta\geq t
 \right\}.
 \label{eq:symmetrized_incoherent}
\end{align}
Notice that the norm is taken before the Rademacher randomization in
this argument; no distributional assertion is made about a
data-dependent maximizer.

\paragraph{Step 2: A fixed test tensor.}
Fix \(U=u_1\otimes\cdots\otimes u_k\in
\mathcal U_{j_1,j_2}(\delta)\), and write
\(\omega_{\mathbf i}=\mathbbm 1\{\mathbf i\in\Omega\}\).
The summands
\[
 X_{\mathbf i}:=(q^{-1}\omega_{\mathbf i}-1)T_{\mathbf i}U_{\mathbf i}
\]
are independent and centered, and satisfy
\[
 |X_{\mathbf i}|\leq q^{-1}\gamma_\delta,
 \qquad
 \sum_{\mathbf i}\mathbb E X_{\mathbf i}^2
 \leq q^{-1}\|U\|_F^2\leq q^{-1}.
\]
Bernstein's inequality consequently yields
\begin{align}
 \sup_{U\in\mathcal U(\delta)}
 \mathbb P\{\langle Z,U\rangle\geq t\}
 \leq
 \exp\!\left(-\frac{qt^2}{4}\right)
 +\exp\!\left(-\frac{3qt}{4\gamma_\delta}\right).
 \label{eq:fixed_test_incoherent}
\end{align}

\paragraph{Step 3: Dyadic reduction.}
We first record the discretization and support-counting facts used in
the Rademacher argument.  They are included to make clear both where
the logarithmic factors enter and why the random observed support is
not treated as a deterministic Cartesian product.

\begin{lemma}[Dyadic reduction]\label{lem:app:dyadic_reduction}
Fix distinct modes \(j_1,j_2\).  For \(j\in[k]\), set
\[
 m_j:=
 \begin{cases}
  0,&j\in\{j_1,j_2\},\\
  \lceil\log_2(\delta_j^{-2})\rceil,&j\notin\{j_1,j_2\},
 \end{cases}
 \qquad
 M_j:=\lceil\log_2(2n_j)\rceil .
\]
There are constants \(c_j\in[2^{-1/2},1]\) and finite sets
\[
 \mathcal G_j:=\left\{u\in\mathbb R^{n_j}:
 \|u\|_2\leq1,\quad
 u(i)\in\{0\}\cup
 \{\pm c_j2^{-a/2}:m_j\leq a\leq M_j\}\right\}
\]
such that, with
\(\overline{\mathcal U}_{j_1,j_2}(\delta)
 :=\{u_1\otimes\cdots\otimes u_k:u_j\in\mathcal G_j\}\),
for every tensor \(A\),
\begin{align}
 \sup_{U\in\mathcal U_{j_1,j_2}(\delta)}\langle A,U\rangle
 &\leq2^{k+1}
 \max_{U\in\overline{\mathcal U}_{j_1,j_2}(\delta)}
 \langle A,U\rangle,
 &
 \big|\overline{\mathcal U}_{j_1,j_2}(\delta)\big|
 &\leq\exp(Ck\bar n).
 \label{eq:app:dyadic_reduction}
\end{align}
Moreover, every restricted coordinate in this grid has magnitude at
most \(\delta_j\).
\end{lemma}

\begin{proof}
For one vector, round each nonzero coordinate down to the nearest
number \(c2^{-a/2}\), and set coordinates below
\(c2^{-M/2}\) to zero.  Choosing \(c\in[2^{-1/2},1]\) so that the
largest retained level is exactly the last admissible level gives,
for every \(z\in\mathbb R^d\),
\[
 \sup_{\|u\|_2\leq1,\ \|u\|_\infty\leq\delta}
 \langle z,u\rangle
 \leq2\max_{v\in\mathcal G}\langle z,v\rangle.
\]
Indeed, on each dyadic shell the rounded coordinate has the same sign
and at least \(2^{-1/2}\) times the original magnitude; the discarded
shell has Euclidean norm at most \(2^{-1/2}\) and is handled by the
same argument after rescaling.  Applying this one-vector inequality
successively in the \(k\) modes gives the first inequality in
\eqref{eq:app:dyadic_reduction}; the extra factor \(2\) covers the last
discarded shell.

For the cardinality, a vector with entries of magnitude
\(c2^{-a/2}\) has at most \(c^{-2}2^a\) such entries.  Hence
\[
 |\mathcal G_j|
 \leq\prod_{a=m_j}^{M_j}
 \sum_{s\leq c_j^{-2}2^a}{n_j\choose s}2^s
 \leq\exp(Cn_j),
\]
where the last inequality follows from
\({d\choose s}\leq(ed/s)^s\) and summing the resulting geometric
series over the dyadic shells.  Multiplication over \(j\) gives the
second inequality.  Finally,
\(c_j2^{-m_j/2}\leq\delta_j\) for every restricted mode.  This is the
coordinatewise discretization argument of
Yuan and Zhang~\cite[Lemma~1]{yuan2017incoherent}, written with the zero level and
both endpoints explicit.
\end{proof}

For a set \(S\subset[n_1]\times\cdots\times[n_d]\), define its maximum
fiber degree by
\[
 \nu(S):=\max_{j\in[d]}\max_{\mathbf i_{-j}}
 \big|\{i_j:(i_1,\ldots,i_d)\in S\}\big|.
\]
For a rectangle \(Q=A_1\times\cdots\times A_d\), write
\[
 h(Q):=\min\left\{\nu\geq1:
 |A_j|^2\leq\nu\prod_{s=1}^d|A_s|
 \text{ for every }j\in[d]\right\}.
\]

\begin{lemma}[Trimming and block entropy]
\label{lem:app:trimmed_block_entropy}
Let \(S\subset[n_1]\times\cdots\times[n_d]\), and let
\(Q=A_1\times\cdots\times A_d\).  There are sets
\(\widetilde A_j\subseteq A_j\) such that
\begin{align}
 S\cap Q
 &=S\cap(\widetilde A_1\times\cdots\times\widetilde A_d),
 &
 h(\widetilde A_1\times\cdots\times\widetilde A_d)
 &\leq\nu(S).
 \label{eq:app:trimming_identity}
\end{align}
Furthermore, let \(\mathcal C_{\nu,d,H,J}\) be the class of signed
rank-one patterns supported on a union of at most \(J\) rectangles,
each with aspect ratio at most \(\nu\), whose total rectangular volume
is at most \(2^H\).  With
\(\mathcal L(x,y):=\max\{1,\log(ey/x)\}\),
\begin{align}
 \log|\mathcal C_{\nu,d,H,J}|
 \leq{}& HJ\log2+Cd^2J
 +2dJ\sqrt{\nu2^H}\,
 \mathcal L\!\left(\sqrt{\nu2^H},2n_{\max}\sqrt J\right).
 \label{eq:app:block_entropy}
\end{align}
\end{lemma}

\begin{proof}
If \(S\cap Q=\varnothing\), take all \(\widetilde A_j=\varnothing\).
Otherwise let \(\widetilde A_j\) be the projection of \(S\cap Q\) onto
its \(j\)-th coordinate.  Then \(\widetilde A_j\subseteq A_j\), and
the intersection identity in \eqref{eq:app:trimming_identity} follows
in both directions.  For a fixed \(j\), every choice of the other
coordinates is incident to at most \(\nu(S)\) values of \(i_j\).
Consequently,
\[
 |\widetilde A_j|
 \leq\nu(S)\prod_{s\neq j}|\widetilde A_s|,
\]
which is equivalent to the asserted aspect-ratio bound.  Notice that
we only identify the two sets after intersection with \(S\); the
projection \(S\cap Q\) itself need not be a rectangle.

It remains to prove \eqref{eq:app:block_entropy}.  First consider one
rectangle of volume \(\ell\).  If its side lengths are
\(\ell_1,\ldots,\ell_d\), the aspect-ratio restriction implies
\(\max_j\ell_j\leq\sqrt{\nu\ell}\).  The number of choices of its
support and rank-one sign pattern is at most
\[
 \sum_{\substack{\ell_1\cdots\ell_d=\ell\\
                  \max_j\ell_j\leq\sqrt{\nu\ell}}}
 2^{\ell_1+\cdots+\ell_d}
 \prod_{j=1}^d{n_j\choose\ell_j}.
\]
By the standard binomial bound, for every \(j\),
\begin{align}
 \log\!\left\{2^{\ell_j}{n_j\choose\ell_j}\right\}
 &\leq \ell_j\mathcal L(\ell_j,2n_{\max})\leq\sqrt{\nu\ell}\,
 \mathcal L(\sqrt{\nu\ell},2n_{\max}).
 \label{eq:app:stirling_block_count}
\end{align}
To count the possible side-length tuples, factor
\(\ell=\prod_s p_s^{v_s}\).  Distributing the exponent \(v_s\) among
the \(d\) side lengths gives \({v_s+d-1\choose d-1}\) choices.
More simply, the number of ordered positive integer \(d\)-tuples
with product \(\ell\) is at most \(\ell^{d-1}\), whose logarithm is
at most \(Cd^2+d\sqrt{\ell}\).  Since \(\nu\geq1\) and
\(\mathcal L\geq1\), this term is absorbed by the right-hand side of
\eqref{eq:app:stirling_block_count}.
Summing over the side lengths therefore gives
\begin{align}
 \log|\mathcal C^{(\mathrm{block})}_{\nu,d,\ell}|
 \leq Cd^2+2d\sqrt{\nu\ell}\,
 \mathcal L\!\left(\sqrt{\nu\ell},2n_{\max}\right).
 \label{eq:app:single_block_entropy}
\end{align}
For a union of \(J\) blocks, sum
\eqref{eq:app:single_block_entropy} over the blocks and then sum over
their integer volumes \(\ell_1,\ldots,\ell_J\).  There are at most
\((2^H)^J\) volume lists.  Moreover, the definition of
\(\mathcal L\) and Cauchy--Schwarz give
\begin{align*}
 \sum_{s=1}^J\sqrt{\ell_s}\,
 \mathcal L(\sqrt{\nu\ell_s},2n_{\max})
 \leq
 \sqrt{J2^H}\,
 \mathcal L(\sqrt{\nu2^H},2n_{\max}\sqrt J).
\end{align*}
Using \(\sqrt J\leq J\) yields
\eqref{eq:app:block_entropy}.  
\end{proof}

\paragraph{Step 4: The support-sensitive Rademacher bound.}
We now prove the Bernoulli-sampling version of the entropy--variance
argument.  The next lemma is the point at which the two unrestricted
modes and the \(k-2\) incoherent modes play different roles.

\begin{lemma}\label{lem:app:incoherent_rademacher}
Under the assumptions of Theorem~\ref{thm:delocalized_concentration},
\begin{align}
 \mathbb P\Bigg\{
 \big\|q^{-1}R\odot\mathcal P_\Omega(T)\big\|_\delta
 >C2^k\bigg(
 \sqrt{\frac{k^3\bar n\log^5(n_{\max})}{q}}
 +\frac{k^{k+5/2}\log^{k+2}(n_{\max})}{q}\gamma_\delta
 \bigg)
 \Bigg\}
 \leq Ck^3n_{\max}^{-12}.
 \label{eq:rademacher_incoherent}
\end{align}
\end{lemma}

\begin{proof}
Fix a pair of unrestricted modes, say \((1,2)\); the union over pairs
is taken at the end.  Put
\[
 m_j:=
 \begin{cases}
  0,&j\in\{1,2\},\\
  \lceil\log_2(\delta_j^{-2})\rceil,&j\notin\{1,2\},
 \end{cases}
 \qquad
 M_j:=\lceil\log_2(2n_j)\rceil .
\]
Because \(\delta_j\geq n_j^{-1/2}\), one has \(m_j\leq M_j\).
Apply Lemma~\ref{lem:app:dyadic_reduction}, and denote its grid by
\(\overline{\mathcal U}_{1,2}(\delta)\).  In particular,
\begin{align}
 \sup_{U\in\mathcal U_{1,2}(\delta)}\langle A,U\rangle
 &\leq 2^{k+1}
 \max_{U\in\overline{\mathcal U}_{1,2}(\delta)}
 \langle A,U\rangle,
 &
 \big|\overline{\mathcal U}_{1,2}(\delta)\big|
 &\leq \exp(C_0k\bar n).
 \label{eq:correct_dyadic_net}
\end{align}

For \(U=u_1\otimes\cdots\otimes u_k\) in this grid, let \(A_a(U)\)
be the set of pairs \((i_1,i_2)\) for which
\[
 |u_1(i_1)u_2(i_2)|=c_1c_2 2^{-a/2}.
\]
Let \(C_b(U)\) be the analogous level set of
\((i_3,\ldots,i_k)\) for which
\[
 \prod_{j=3}^k|u_j(i_j)|
 =\left(\prod_{j=3}^kc_j\right)2^{-b/2},
\]
and define the deterministic level block
\[
 V_{a,b}(U):=
 \mathcal P_{A_a}(u_1\otimes u_2)
 \otimes\mathcal P_{C_b}(u_3\otimes\cdots\otimes u_k).
\]
The dyadic levels partition the support of \(U\), so
\begin{align}
 \left\langle q^{-1}R\odot\mathcal P_\Omega(T),U\right\rangle
 =\sum_{a,b}
 \left\langle q^{-1}R\odot\mathcal P_\Omega(T),
  V_{a,b}(U)
 \right\rangle .
 \label{eq:correct_level_decomposition}
\end{align}
There are at most \(Ck\log^2(n_{\max})\) nonempty pairs \((a,b)\),
and
\begin{equation}
 2^{-b/2}\leq C^k\gamma_\delta .
 \label{eq:restricted_level_bound}
\end{equation}

We split each pair level according to the integer \(\ell\geq0\) for which
\[
 2^{-\ell-1}<
 \|\mathcal P_{A_a}(u_1\otimes u_2)\|_F^2
 \leq2^{-\ell}.
\]
The following claim gives the precise peeling estimate needed for
these blocks.  We prove it before applying Bernstein's inequality.

\begin{claim}\label{claim:app:peeling_entropy}
There is an event \(\mathcal E_{\mathrm{ent}}\), with
\begin{equation}
 \mathbb P(\mathcal E_{\mathrm{ent}}^c)
 \leq Ck n_{\max}^{-13},
 \label{eq:entropy_regular_event}
\end{equation}
and the following holds for every realization
\(\Omega\in\mathcal E_{\mathrm{ent}}\).  For fixed \(a,b,\ell\), let
\begin{equation}
 \mathfrak D_{a,b,\ell}(\Omega)
 :=\left\{\mathcal P_\Omega V_{a,b}(U):
 U\in\overline{\mathcal U}_{1,2}(\delta),\ 
 2^{-\ell-1}<
 \|\mathcal P_{A_a}(u_1\otimes u_2)\|_F^2\leq2^{-\ell}
 \right\}.
 \label{eq:app:observed_trace_class}
\end{equation}
This class is deterministic after conditioning on \(\Omega\), which is
legitimate because \(R\) and \(\Omega\) are independent.  Every
\(W\in\mathfrak D_{a,b,\ell}(\Omega)\) satisfies
\begin{equation}
 \|W\|_\infty\leq C^k2^{-(a+b)/2}.
 \label{eq:entropy_block_max_norm}
\end{equation}
If
\[
 s_0:=\sqrt{\frac{k\bar n\log(n_{\max})}{q}},
 \qquad
 h_0:=\frac{k^{k+3/2}\log^k(n_{\max})}{q}\gamma_\delta,
\]
then there is a sufficiently large universal constant \(C_1\) such
that, for every \(x\geq C_1(s_0+h_0)\),
\begin{align}
 \max_{W\in\mathfrak D_{a,b,\ell}(\Omega)}\|W\|_F^2
 &\leq C^kq\left\{2^{-\ell}+2^{-(a+b)/2}x\right\},
 \label{eq:observed_block_energy}\\
 \min\!\left\{C_0k\bar n,
       \log|\mathfrak D_{a,b,\ell}(\Omega)|\right\}
       +14\log(n_{\max})
& \leq
 c\,\frac{qx^2}
 {2^{-\ell}+2^{-(a+b)/2}x}.
 \label{eq:combined_entropy_tradeoff}
\end{align}
\end{claim}

\begin{proof}[Proof of the claim]
Both right-hand sides in \eqref{eq:observed_block_energy} and
\eqref{eq:combined_entropy_tradeoff} are nondecreasing in \(x\).
It therefore suffices to establish the claim at
\(x=C_1(s_0+h_0)\).
For \(u_j\) in the dyadic grid and an integer level \(s\), write
\[
 D_{j,s}(u_j):=\{i:|u_j(i)|=c_j2^{-s/2}\}.
\]
The pair level \(A_a(U)\) is a union of at most \(a+1\) products
\(D_{1,s_1}\times D_{2,s_2}\) with \(s_1+s_2=a\), while
\(C_b(U)\) is a union over the at most
\(\binom{b+k-3}{k-3}\leq(b+k)^{k-3}\) compositions of \(b\).
Thus \(A_a(U)\times C_b(U)\) is a union of at most
\begin{equation}
 J_{a,b}\leq(a+1)(b+k)^{k-3}\leq(k\log n_{\max})^{k-2}
 \label{eq:app:number_level_rectangles}
\end{equation}
fine coordinate rectangles.

For each such rectangle \(Q\), apply
Lemma~\ref{lem:app:trimmed_block_entropy} with \(S=\Omega\).  This
gives a trimmed rectangle \(\widetilde Q\subseteq Q\) satisfying
\[
 \Omega\cap Q=\Omega\cap\widetilde Q,
 \qquad h(\widetilde Q)\leq\nu(\Omega).
\]
This is the only rectangularization in the proof.  In particular, no
projection of \(\Omega\cap Q\) is asserted to equal a Cartesian
product before intersection with \(\Omega\).

We next bound the occupancies used in this trimming step.  For a
fiber of length \(d\),
\begin{equation}
 \mathbb P\{\operatorname{Bin}(d,q)\geq qd+y\}
 \leq\exp\!\left(-\frac{y^2}{2(qd+y/3)}\right).
 \label{eq:app:binomial_fiber_tail}
\end{equation}
There are at most \(kn_{\max}^{k-1}\) coordinate fibers and fewer
than \(n_1n_2\leq n_{\max}^2\) fibers obtained by fixing the first two
coordinates.  Applying \eqref{eq:app:binomial_fiber_tail} to these
full fibers and taking a union bound gives, except on an event of
probability \(Ckn_{\max}^{-13}\),
\begin{align}
 \nu(\Omega)
 &\leq C\{qn_{\max}+k\log n_{\max}\},\nonumber\\
 \max_{i_1,i_2}
 \big|\{(i_3,\ldots,i_k):(i_1,\ldots,i_k)\in\Omega\}\big|
 &\leq C\left\{q\prod_{j=3}^kn_j+k\log n_{\max}\right\}.
 \label{eq:app:global_fiber_event}
\end{align}
  The localized occupancies are handled by the following
simultaneous version of the same estimate.  If \(Q\) is a fixed
rectangle and \(h\geq1\), scalar Bernstein gives
\begin{equation}
 \mathbb P\left\{
 |\Omega\cap Q|>q|Q|+2\sqrt{q|Q|h}+2h
 \right\}\leq e^{-h}.
 \label{eq:app:localized_occupancy}
\end{equation}
Assign to each possible trimmed rectangle the code length supplied by
\eqref{eq:app:single_block_entropy}, plus \(16\log n_{\max}\).
The sum of the resulting failure probabilities is at most
\(Cn_{\max}^{-14}\).  Thus \eqref{eq:app:localized_occupancy} holds
simultaneously for every rectangle that occurs in the level
decomposition.  We include this localized event, together with
\eqref{eq:app:global_fiber_event}, in
\(\mathcal E_{\mathrm{ent}}\).  Its complement still has probability
at most \(Ckn_{\max}^{-13}\).

We now count possible trimmed sign patterns.  The pair energy bin
implies
\begin{equation}
 |A_a(U)|\leq C2^{a-\ell},
 \label{eq:app:pair_support_energy}
\end{equation}
whereas \( |C_b(U)|\leq C^k2^b\).  Code each pattern by its level
composition, its trimmed rectangular supports, and the signs of its
one-dimensional factors.  Equations
\eqref{eq:app:number_level_rectangles}--
\eqref{eq:app:pair_support_energy} and
Lemma~\ref{lem:app:trimmed_block_entropy} give the following two
budgets.  The first is the untrimmed grid budget; here \(C_0\) is the
universal constant in the grid-cardinality bound.  The second uses
\eqref{eq:app:localized_occupancy} before the support volumes are
summed:
\begin{align}
 H_{\mathrm{coarse}}
 &:=C_0k\bar n,
 \label{eq:app:coarse_code_length}\\
 H_{\mathrm{trim}}(a,b,\ell)
 &\leq Cq2^{(a+b)/2}s_0
 +Ck^{k+3/2}\log^k(n_{\max})
 2^{(a+b)/2}\gamma_\delta.
 \label{eq:app:trimmed_code_length}
\end{align}
For each fixed realization in \(\mathcal E_{\mathrm{ent}}\), the
construction gives
\[
 \log|\mathfrak D_{a,b,\ell}(\Omega)|
 \leq\min\{H_{\mathrm{coarse}},
           H_{\mathrm{trim}}(a,b,\ell)\}.
\]
Here is the algebra behind the two terms in
\eqref{eq:app:trimmed_code_length}.  In the single-block estimate
\eqref{eq:app:single_block_entropy}, insert the three terms on the
right-hand side of \eqref{eq:app:localized_occupancy} and use
\(\sqrt{x+y}\leq\sqrt x+\sqrt y\).  The rectangles belonging to
different level compositions are disjoint.  Consequently their mean
occupancies can be summed before Cauchy--Schwarz is applied.  Using
\(|A_a|2^{-a}\leq C2^{-\ell}\),
\(|C_b|2^{-b}\leq C^k\), and
\eqref{eq:app:number_level_rectangles}, their total code length is at
most
\[
 Cq2^{(a+b)/2}s_0.
\]
For the square-root and code-length parts of
\eqref{eq:app:localized_occupancy}, first sum over the pair-side
levels and then over the at most \((b+k)^{k-3}\) restricted-side
compositions.  After multiplication by the block magnitude
\(2^{-(a+b)/2}\), every such term retains the restricted magnitude
\(2^{-b/2}\).  Hence \eqref{eq:restricted_level_bound} bounds their
total contribution by
\[
 Ck^{k+3/2}\log^k(n_{\max})
 \gamma_\delta.
\]
The choices of integer volumes contribute at most the same latter
quantity because \(H2^{-H/2}\) is bounded and
\eqref{eq:app:number_level_rectangles} already counts all level
compositions.  Equivalently, the complete calculation reads
\begin{equation}
 2^{-(a+b)/2}H_{\mathrm{trim}}(a,b,\ell)
 \leq C\sqrt{qk\bar n\log n_{\max}}
 +Ck^{k+3/2}\log^k(n_{\max})\gamma_\delta
 =Cq(s_0+h_0).
 \label{eq:app:normalized_trimmed_budget}
\end{equation}
This is the two-free-mode refinement of the block calculation in
Xia et al.~\cite[Lemma~1]{xia2021statistically}.  The displayed derivation also
shows why the mean occupancy joins the variance term, whereas the
occupancy fluctuation retains the incoherent factor.

It remains to verify the observed energy bound.  For a deterministic
level block \(V\) with
\(\|V\|_F^2\leq C^k2^{-\ell}\) and
\(\|V\|_\infty\leq C^k2^{-(a+b)/2}\), scalar Bernstein gives
\begin{align}
 &\mathbb P\left\{
 \|\mathcal P_\Omega V\|_F^2>
 C^kq\{2^{-\ell}+2^{-(a+b)/2}x\}
 \right\}\leq
 2\exp\left\{-
 \frac{cqx^2}{2^{-\ell}+2^{-(a+b)/2}x}
 \right\}.
 \label{eq:app:observed_energy_bernstein}
\end{align}
Indeed, the summands are bounded by
\(C^k2^{-(a+b)}\), their total mean is at most
\(C^kq2^{-\ell}\), and their total variance is at most
\(C^kq2^{-(a+b)}2^{-\ell}\).  Apply
\eqref{eq:app:observed_energy_bernstein} first to the full grid and
then to the trimmed code, and use the smaller code length.  The same
entropy comparison used below shows that the union of these failures
over \(a,b,\ell\) is at most \(Cn_{\max}^{-14}\).  Adding this event
to \(\mathcal E_{\mathrm{ent}}\) proves
\eqref{eq:observed_block_energy} without changing
\eqref{eq:entropy_regular_event}.

It remains to compare code length with the Bernstein exponent.  If
\(2^{-\ell}\geq2^{-(a+b)/2}x\), then
\[
 \frac{qx^2}{2^{-\ell}+2^{-(a+b)/2}x}
 \geq\frac12q2^\ell x^2
 \geq\frac12q2^\ell s_0^2
 \geq c_1k\bar n\log(n_{\max}),
\]
so the coarse budget suffices.  If
\(2^{-\ell}<2^{-(a+b)/2}x\), use the trimmed budget.  Its mean term is
absorbed by \(s_0\), and its fluctuation term is absorbed because
\[
 qh_0=k^{k+3/2}\log^k(n_{\max})\gamma_\delta.
\]
After enlarging \(C_1\), this gives
\eqref{eq:combined_entropy_tradeoff}, including the additional
\(14\log n_{\max}\) reserved for the final union bound.  This proves
the claim.
\end{proof}

Condition now on a realization
\(\Omega\in\mathcal E_{\mathrm{ent}}\).  For a fixed
\(W\in\mathfrak D_{a,b,\ell}(\Omega)\), Hoeffding's inequality for
the independent Rademacher entries gives
\begin{align}
 &\mathbb P_R\left\{
 q^{-1}\sum_{\mathbf i\in\Omega}
 R_{\mathbf i}T_{\mathbf i}W_{\mathbf i}>x
 \,\middle|\,\Omega\right\}\nonumber\\
 &\qquad\leq
 2\exp\left\{-\frac{q^2x^2}{2\|W\|_F^2}\right\}
 \leq
 2\exp\left\{-
 \frac{cq x^2}{2^{-\ell}+2^{-(a+b)/2}x}
 \right\},
 \label{eq:block_conditional_rademacher}
\end{align}
where the last step uses \eqref{eq:observed_block_energy} and
\(\|T\|_\infty=1\).  Set \(x=C(s_0+h_0)\).  A conditional union
bound over \(\mathfrak D_{a,b,\ell}(\Omega)\), followed by
\eqref{eq:combined_entropy_tradeoff}, implies
\begin{align}
 \mathbb P\left\{
 \max_{U\in\overline{\mathcal U}_{1,2}(\delta)}
 \left\langle q^{-1}R\odot\mathcal P_\Omega(T),
  V_{a,b}(U)
 \right\rangle>C(s_0+h_0),\ \mathcal E_{\mathrm{ent}}
 \right\}
 \leq Cn_{\max}^{-14}.
 \label{eq:one_block_union_bound}
\end{align}
The peeling index \(\ell\) has logarithmically many possible values;
this factor is covered by the reserved exponent.  The number of
\((a,b)\)-levels in \eqref{eq:correct_level_decomposition} is at most
\(Ck\log^2 n_{\max}\).  Sum \eqref{eq:one_block_union_bound} over
these levels, use \eqref{eq:correct_dyadic_net}, and take a union
bound over the fewer than \(k^2\) unrestricted pairs.  The total
number of bins and unrestricted pairs is at most
\[
 Ck^3\log^3 n_{\max}.
\]
Consequently, the union of the block failures has probability at
most \(Ck^3n_{\max}^{-12}\).  Finally,
\begin{align*}
 2^{k+1}Ck\log^2(n_{\max})(s_0+h_0)
 \lesssim{}&
 2^k\sqrt{\frac{k^3\bar n\log^5(n_{\max})}{q}}+\frac{2^kk^{k+5/2}\log^{k+2}(n_{\max})}{q}
 \gamma_\delta .
\end{align*}
Together with \eqref{eq:entropy_regular_event}, this proves
\eqref{eq:rademacher_incoherent}.
\end{proof}

\paragraph{Step 5: Conclusion.}
Take \(t\) to be a sufficiently large constant multiple of the
right-hand side of \eqref{eq:inco_norm} with
\(\|T\|_\infty=1\).  In particular,
\[
 qt^2\geq Ck^3\bar n\log^5(n_{\max}),
 \qquad
 \frac{qt}{\gamma_\delta}
 \geq Ck^{k+5/2}\log^{k+2}(n_{\max}).
\]
Thus the two terms in \eqref{eq:fixed_test_incoherent} are
\(O(n_{\max}^{-12})\), after increasing the universal constant, and
Lemma~\ref{lem:app:incoherent_rademacher} controls the second term in
\eqref{eq:symmetrized_incoherent}.  Hence
\[
 \mathbb P\{\|Z\|_\delta>3t\}
 \leq Ck^3n_{\max}^{-10}.
\]
Restoring \(\|T\|_\infty\) proves
Theorem~\ref{thm:delocalized_concentration}.
\end{proof}

To obtain Theorem~10, take
\(n_1=\cdots=n_k=n\) and
\(\delta_j=\sqrt{\mu/n}\).  Since \(1\leq\mu\leq n\), these values
belong to \([n^{-1/2},1]\), and
\[
 \gamma_\delta
 =\left(\frac{\mu}{n}\right)^{(k-2)/2}
 =\frac{\mu^{k/2-1}}{n^{k/2-1}}.
\]
Substitution in \eqref{eq:inco_norm}, followed by enlarging the
coefficient of the square-root term to
\[
        2^kk^{k+5/2}\log^{k+2}n,
\]
gives the displayed bound in
Theorem~10.
\endgroup

\section{Proof of Theorem~11}
\subsection{Concentration of $\tilde Z$}
Set
\[
        \alpha_{n,k}:=1+\sqrt{\frac{k\log n}{n}}.
\]
We first show $\widetilde{Z}$ is concentrated around  $AA^\top+m\sigma^2 I$.    When $\sigma=0$, this recovers Theorem~6 with $a=3$.
\begin{lemma}\label{lemma:noisy_concentration}
    \edit{For a sufficiently large universal constant \(C_0>0\),} with probability at least $1-O(n^{-3})$, when $p \geq \frac{C_0   \mu_1 \mu_2 r^2 \log(n)}{mn}$,
    \begin{align}
        \|\widetilde{Z}-AA^\top -m \sigma^2 I\| \lesssim \mathcal E \|A\|^2,
    \end{align}
    where $m=n^{k-1}$ and
    \begin{align}\label{eq:def_mathcalE}
        \mathcal E:=&
        \sqrt{\frac{r^2\mu_1\mu_2\log n}{pmn}}
        +\frac{\sigma}{\|A\|} \sqrt{n}
        +\frac{\sigma^2}{\|A\|^2}\sqrt{mn}\\
        &+\sqrt{\frac{\log n}{p} } \left(\frac{\sigma}{\|A\|}(\alpha_{n,k}\sqrt{\mu_1r}+\sqrt{\mu_2 r}) +\frac{\sigma^2}{\|A\|^2}\alpha_{n,k}\sqrt{mn}\right)
        +\frac{k\log^2 n}{p} \cdot\frac{\sigma^2}{\|A\|^2}.
    \end{align}
    In particular, when $\sigma=0$, this recovers the bound in Theorem~6.
\end{lemma}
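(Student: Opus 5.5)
We propose to split the error into a wedge-sampling fluctuation around the fixed noisy Gram matrix and a Gaussian fluctuation of that Gram matrix around its mean:
\[
\widetilde Z-AA^\top-m\sigma^2I=\bigl(\widetilde Z-\widetilde A\widetilde A^\top\bigr)+\bigl(\widetilde A\widetilde A^\top-AA^\top-m\sigma^2 I\bigr).
\]
The second bracket expands as $AG^\top+GA^\top+(GG^\top-m\sigma^2 I)$. We bound each piece by standard Gaussian matrix facts. The cross term has operator norm $\|AG^\top\|=\|\Sigma V^\top G^\top\|$, and $GV$ is an $n\times r$ Gaussian matrix, so this is at most $\|A\|\sigma(\sqrt n+\sqrt r+\sqrt{\log n})\lesssim \|A\|\sigma\sqrt n$; this gives the $\frac{\sigma}{\|A\|}\sqrt n$ term. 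For the Wishart deviation we use the standard bound $\|GG^\top-m\sigma^2I\|\lesssim \sigma^2(\sqrt{mn}+n)\lesssim\sigma^2\sqrt{mn}$ (since $m\ge n$), which gives the $\frac{\sigma^2}{\|A\|^2}\sqrt{mn}$ term. Both hold with probability $1-O(n^{-3})$.

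For the first bracket we condition on $G$ and rerun the matrix Bernstein argument from the proof of Theorem~6, with $A$ replaced by $\widetilde A$. The summands are $S_{i\ell j}=p^{-1}(\delta_{i\ell j}-p)\widetilde A_{i\ell}\widetilde A_{j\ell}(E_{ij}+E_{ji})$. Their almost-sure bound is $p^{-1}\|\widetilde A\|_{\max}^2$, and the variance proxy is $p^{-1}\|\widetilde A\|_{2,\infty}^2\|\widetilde A\|_{\infty,2}^2$. We then control these norms of $\widetilde A$ on a high-probability Gaussian event, writing each as the corresponding norm of $A$ plus a noise contribution:
\begin{itemize}
\item $\|\widetilde A\|_{\max}\le \|A\|_{\max}+C\sigma\sqrt{k\log n}$, using a union bound over the $n^k$ entries;
\item $\|\widetilde A\|_{2,\infty}\le \sqrt{\mu_1r/n}\,\|A\|+\sigma\sqrt m\,\alpha_{n,k}$, since row norms are $\chi$ variables with $m$ degrees of freedom and $\sqrt m+\sqrt{k\log n}=\sqrt m\,\alpha_{n,k}$ up to constants once $m\ge n$;
\item $\|\widetilde A\|_{\infty,2}\le\sqrt{\mu_2r/m}\,\|A\|+\sigma(\sqrt n+\sqrt{k\log n})$.
\end{itemize}
Substituting these into $\sqrt{\log n\cdot\sigma_{\rm var}^2}+B\log n$ and expanding the products recovers the listed terms. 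The $A$–$A$ product gives the leading noiseless term $\sqrt{r^2\mu_1\mu_2\log n/(pmn)}$. Mixed products give $\sqrt{\log n/p}\cdot\frac{\sigma}{\|A\|}(\alpha_{n,k}\sqrt{\mu_1 r}+\sqrt{\mu_2 r})$, using $\sqrt{\mu_1 r/n}\cdot\sigma\sqrt n$ and $\sigma\sqrt m\cdot\sqrt{\mu_2r/m}$. The pure-noise product gives $\sqrt{\log n/p}\,\frac{\sigma^2}{\|A\|^2}\alpha_{n,k}\sqrt{mn}$. The bounded-term piece $p^{-1}\log n\,\|\widetilde A\|_{\max}^2$ contributes $\frac{\mu_1\mu_2r^2\log n}{pmn}$, which the assumed lower bound on $p$ absorbs into the square-root term, plus $\frac{k\log^2 n}{p}\frac{\sigma^2}{\|A\|^2}$. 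Cross terms such as $\|A\|_{\max}\sigma\sqrt{k\log n}$ are absorbed by AM–GM into these two. Finally we integrate out the conditioning by intersecting the Gaussian event with the conditional Bernstein event.

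We expect the main obstacle to be the bookkeeping of the row and column norms of $\widetilde A$: the $\alpha_{n,k}$ factors and the $\sqrt{mn}$ scaling must land exactly in the stated form. In particular the $\sigma^2$ variance product should be bounded as $\sigma^2\sqrt m\,\alpha_{n,k}\cdot\sqrt n$ rather than by the crude $\sigma^2 m n$, which requires taking square roots in the correct order. When $\sigma=0$, every noise term vanishes and the bound reduces to Theorem~6 with $a=3$.
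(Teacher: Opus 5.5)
Your proposal is correct and follows the paper's proof essentially step by step: conditional matrix Bernstein for $\widetilde Z-\widetilde A\widetilde A^\top$, with $\|\widetilde A\|_{\max}$, $\|\widetilde A\|_{2,\infty}$, $\|\widetilde A\|_{\infty,2}$ controlled on a Gaussian event; the $GV$ argument for the cross term; and a Wishart bound for $GG^\top-m\sigma^2 I$. One small bookkeeping point: the factor $\alpha_{n,k}$ in your bound on $\|\widetilde A\|_{2,\infty}$ is unnecessary, since $\|G\|_{2,\infty}\lesssim\sigma\sqrt m$ already suffices, and dropping it is what produces exactly the stated $\sqrt{\mu_2 r}$ and $\alpha_{n,k}\sqrt{mn}$ terms (your expansion in fact uses $\sigma\sqrt m$).
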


\begin{proof}
   Similar to the proof of Theorem~6, from the matrix Bernstein's inequality, with probability at least $1-O(n^{-3})$,
    \begin{align}
        \|\widetilde{Z}-\widetilde{A}\widetilde{A}^\top \|\leq  C\left( \frac{\log n}{p} \| \widetilde A\|_{\max}^2 + \sqrt{\frac{\log n}{p}} \| \widetilde{A}\|_{2,\infty}\| \widetilde{ A}\|_{\infty,2}
        \right).
    \end{align}
    With standard Gaussian concentration inequalities, we have with probability at least $1-O(n^{-3})$,
\begin{align}\label{eq:Gaussian_norm}
    %\|G\|_{\max} \leq Ck\sigma \log n, \quad
    \edit{\|G\|_{\max} \leq C\sigma\sqrt{k\log n},}
    \qquad
    \|G\|_{2,\infty} &\leq C \sigma \sqrt{m},\\
    \|G\|_{\infty,2}
    &\leq C \sigma \left(\sqrt {n}+\sqrt{k\log n}\right)
    =C\sigma\sqrt n\,\alpha_{n,k}.
\end{align}
Therefore, by the triangle inequality,
\begin{align}
      \|\widetilde{Z}-\widetilde{A}\widetilde{A}^\top\|&\lesssim \frac{\log n}{p} \left(r\sqrt{\frac{\mu_1\mu_2}{mn}}\,\norm{A}
       +\sigma\sqrt{k\log n}                                          \right)^2\\
       &+\sqrt{\frac{\log n}{p} }(\sqrt{\frac{\mu_1 r}{n}}\,\norm{A}+\sigma \sqrt m)(\sqrt{\frac{\mu_2 r}{m}}\,\norm{A}
       +\sigma \sqrt n\,\alpha_{n,k}\bigr)\\
       &\lesssim \frac{\log n}{p} \left(\frac{r^2\mu_1\mu_2}{mn}\|A\|^2+\sigma^2k\log n\right)\\
       &+\sqrt{\frac{\log n}{p} } \left(\sqrt{\frac{r^2\mu_1\mu_2}{mn}}\|A\|^2+\sigma(\alpha_{n,k}\sqrt{\mu_1r}+\sqrt{\mu_2 r}) \|A\|+\sigma^2\alpha_{n,k}\sqrt{mn}\right)\\
       &:=\mathcal E_1\|A\|^2  \label{eq:eps_1}
\end{align}

It remains to compare $\wt A\wt A^\top$ with $AA^\top+m\sigma^2I$.  Since
%\am{Since here we use $\tilde{A} = A+G$, doesn't this imply a ``fixed" random noise model (If I draw wedge $(i, \ell, j)$ I get elements $A_{i\ell} + \eta_1$, $A_{j\ell} + \eta_2$ and for wedge $(i, \ell, k)$, I get elements $A_{i\ell} + \eta_1$, $A_{k\ell} + \eta_3$ -- same noise on element $A_{i\ell}$)} \yz{I need to use the fact noise from wedge sampling and uniform sampling are independent so the analysis apply}
\[
\wt A\wt A^\top-AA^\top-m\sigma^2I
        =AG^\top+GA^\top+(GG^\top-m\sigma^2I),
\]
we bound the two pieces separately.  For the cross term, write
$A=U\Sigma V^\top$.  Then
\[
        AG^\top=U\Sigma(GV)^\top,
\]
and $GV\in\R^{n\times r}$ has i.i.d. $N(0,\sigma^2)$ entries.  Hence, with
probability at least $1-O(n^{-3})$,
\begin{equation}
\label{eq:cross-term}
        \norm{AG^\top+GA^\top}
        \leq C\sigma\norm{A}\sqrt n.
\end{equation}
For the pure-noise covariance term, the standard Wishart matrix bound (see, e.g., \citep{vershynin2018high}) gives
\begin{equation}
\label{eq:wishart-term}
        \norm{GG^\top-m\sigma^2I}
        \leq C\sigma^2\sqrt{mn}
\end{equation}
with probability at least $1-O(n^{-3})$. Therefore
\begin{align}
    \|\wt A\wt A^\top-AA^\top-m\sigma^2I\| \lesssim \sigma \|A\|\sqrt{n} +\sigma^2 \sqrt{mn} :=\mathcal E_2\|A\|^2. \label{eq:eps2}
\end{align}
Taking $\mathcal E=\mathcal E_1+\mathcal E_2$ finishes the proof.
\end{proof}

\subsection{$\ell_2$ and $\ell_{2,\infty}$ bound}
In this section, we prove an analog of  Theorem~7 in the noisy case.
\begin{proposition}[Noisy perturbation bound under wedge sampling]\label{prop:noisy_op_loo}
    Assume  $A$ is a matrix of size $n \times m$ which has rank $r$ and is $(\mu_1, \mu_2)$-incoherent.
    \edit{Assume \(m=n^{k-1}\) for some \(k\ge3\).  Write
    \(A=U\Sigma V^\top\), let \(\kappa=\sigma_1(A)/\sigma_r(A)\),
    let \(\widehat U\) be the leading \(r\) eigenvectors of the noisy wedge matrix
    \(\widetilde Z\), and let \(R\) be the Procrustes alignment between
    \(\widehat U\) and \(U\).}
   Assume that there exists an absolute constant $C_0 > 0$ such that
    $p \geq \frac{C_0 \kappa^4  \mu_1 \mu_2 r^2 \log(n)}{mn}$ and a sufficiently small constant $c_0>0$ such that 
    \begin{align}\label{eq:noise_upperbound}
        \frac{\sigma}{\|A\|}\leq c_0\kappa^{-2} \min \left\{ \frac{1}{\alpha_{n,k}\sqrt n}, \frac{\sqrt p}{\sqrt{r\log n}(\alpha_{n,k}\sqrt{\mu_1}+\sqrt{\mu_2})}, \sqrt{\frac{p}{k\log^2 n}}, \alpha_{n,k}^{-1/2}\left(\frac{p}{n^k\log n} \right)^{1/4}\right\}.
    \end{align}
    Then with probability at least $1 - O(n^{-3})$, 
    \begin{align}
      \|\hat UR - U\| &\lesssim  \kappa^2 \mathcal E,\label{eq:noise_wedge_spectral_op} 
      \end{align}
 where $\mathcal E$ is defined in \eqref{eq:def_mathcalE}.
      Moreover, when $p\geq \frac{C_0\kappa^4 \mu_1^2\mu_2 r^3\log n}{mn}$,
    \begin{align}\label{eq:noise_wedge_spectral_loo}
        \|\widehat UR-U\|_{2,\infty}
        \le
        C\kappa^4
        \left[
              \sqrt{
        \edit{\frac{\mu_1^2\mu_2r^3\log n}{pmn}}
        }\|U\|_{2,\infty}
              +
              \mathcal R_\sigma
              +
              E_\sigma\|U\|_{2,\infty}
              +
              E_\sigma^2
        \right],
\end{align}
where  $\theta:=\frac{\sigma}{\|A\|}$,
\begin{footnotesize}
\[
\begin{aligned}
        \mathcal R_\sigma
        :=
        &\theta
        \Bigg[
              \sqrt{\frac{\log n}{p}}
              \left(
                    \sqrt{\frac{k\mu_1 r^2\log n}{n}}
                    +
                    \sqrt{\frac{\mu_1\mu_2 r^3}{n}}
              \right)                                                   \\
        &\qquad\qquad+
              \frac{\log n}{p}
              \sqrt{\frac{k\mu_1\mu_2 r^2\log n}{mn}}\,
              \|U\|_{2,\infty}                                          +
              \left(1+\|U\|_{2,\infty}\right)
              \left(\sqrt r+\sqrt{\log n}\right)
        \Bigg]                                                          \\
        &+
        \theta^2
        \Bigg[
              \sqrt{\frac{kmr(\log n)^2}{p}}
              +
              \frac{\edit{C}k(\log n)^2}{p}\|U\|_{2,\infty}                    +
              \sqrt m\left(\sqrt r+\sqrt{\log n}\right)
              +
              \sqrt{m\log n}\,\|U\|_{2,\infty}
        \Bigg].
\end{aligned}
\]
\end{footnotesize}
and
\[
\begin{aligned}
        E_\sigma
        :=
        &\theta\sqrt n  +
        \theta^2\sqrt{mn}
        +
        \sqrt{\frac{\log n}{p}}
        \left[
              \theta(\alpha_{n,k}\sqrt{\mu_1r}+\sqrt{\mu_2r})
              +
              \theta^2\alpha_{n,k}\sqrt{mn}
        \right]
        +
        \frac{\log n}{p}\theta^2 k\log n .
\end{aligned}
\]
In particular, when \(\sigma=0\), \eqref{eq:noise_wedge_spectral_loo} reduces to
the noiseless rate in (5).
\end{proposition}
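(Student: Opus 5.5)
The plan is to treat the noisy wedge matrix as a perturbation of a shifted population matrix and rerun the noiseless arguments of Theorem~7 with an enlarged error budget. Set $M:=AA^\top+m\sigma^2 I$ and $\widetilde\Delta:=\widetilde Z-M$. The identity shift leaves the eigenvectors unchanged, so $M$ has top-$r$ eigenspace $U$ and eigengap $\lambda_r=\sigma_r(A)^2\ge\kappa^{-2}\|A\|^2$. Lemma~\ref{lemma:noisy_concentration} gives $\|\widetilde\Delta\|\lesssim\mathcal E\|A\|^2$ with probability $1-O(n^{-3})$.

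\textbf{Operator-norm bound.} First check that the conditions on $p$ and on $\sigma/\|A\|$ in \eqref{eq:noise_upperbound} make every summand of $\mathcal E$ at most $c\kappa^{-2}$, with each term of the min matched to one summand. In particular, the last entry of the min controls $\theta^2\sqrt{mn}$ together with $\theta^2\alpha_{n,k}\sqrt{mn\log n/p}$. It follows that $\lambda_r\ge4\|\widetilde\Delta\|$. Lemmas~\ref{lem:DK} and~\ref{lem:sintheta_bounds} then give $\|\hat UR-U\|\le 4\|\widetilde\Delta\|/\lambda_r\lesssim\kappa^2\mathcal E$, which is \eqref{eq:noise_wedge_spectral_op}. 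The same gap also yields the analogues of Lemma~\ref{lemma:Davis_Kahan_loo}, namely $\|H^{-1}\|\le 2$ and $\|(H^{(s)})^{-1}\|\le 2$.

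\textbf{$\ell_{2,\infty}$ bound.} Reuse the three-term decomposition of Lemma~\ref{lem:three_decomposition}, with $AA^\top$ replaced by $M$ and $\Delta$ by $\widetilde\Delta$. Since $MU=U(\Sigma^2+m\sigma^2I)$, the algebra goes through with $\Sigma^2$ replaced by $\Sigma^2+m\sigma^2 I$, which only helps. Split $\widetilde\Delta=\Delta_{\rm samp}+\Delta_{\rm noise}$, where
\[
\Delta_{\rm samp}=\widetilde Z-\widetilde A\widetilde A^\top,\qquad \Delta_{\rm noise}=AG^\top+GA^\top+GG^\top-m\sigma^2I.
\]
Conditional on $G$, $\Delta_{\rm samp}$ is a sum of independent wedge terms, so Lemmas~\ref{lem:FixF}--\ref{lemma:LOO_concentration} apply verbatim with $A$ replaced by $\widetilde A$. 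For these, bound $\|\widetilde A\|_{\max}$, $\|\widetilde A\|_{2,\infty}$ and $\|\widetilde A\|_{\infty,2}$ by incoherence plus the Gaussian bounds \eqref{eq:Gaussian_norm}. Expanding the resulting products produces the $\sqrt{\log n/p}$ and $\log n/p$ terms in $\mathcal R_\sigma$.

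The noise part is deterministic with respect to the wedge indicators. For it, bound $\|\Delta_{\rm noise}U\|_{2,\infty}$ directly. Write $AG^\top U=U\Sigma (GV)^\top U$, whose rows are bounded by $\|U\|_{2,\infty}\|A\|\sigma(\sqrt r+\sqrt{\log n})$. Each row of $GA^\top U=GV\Sigma$ is Gaussian with norm $\lesssim\sigma\|A\|(\sqrt r+\sqrt{\log n})$. The rows of $(GG^\top-m\sigma^2I)U$ are handled by Hanson--Wright and Gaussian chaos bounds with a union over rows, giving $\sigma^2(\sqrt m(\sqrt r+\sqrt{\log n})+\sqrt{m\log n}\|U\|_{2,\infty})$. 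These are the non-$p$ terms of $\mathcal R_\sigma$. The terms $E_\sigma\|U\|_{2,\infty}$ and $E_\sigma^2$ come from $\gamma_2$ and from $\|H-\sgn(H)\|\le\|\widetilde\Delta\|^2/\lambda_r^2$, with $E_\sigma$ denoting the noise part of $\mathcal E$.

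\textbf{Main obstacle.} The hardest step is the leave-one-out step (Lemma~\ref{lemma:E6}). The noise part $\Delta_{\rm noise}$ couples row $s$ to all other rows through $G$, so zeroing row and column $s$ of $\widetilde\Delta$ no longer yields independence of $\widetilde\Delta_{s,\cdot}$ and $\hat U^{(s)}$. The plan is to apply leave-one-out only to the sampling part, with $Z^{(s)}=M+\Delta_{\rm noise}+\Delta_{\rm samp}^{(s)}$, conditioning on $G$ throughout; this restores independence from the wedge indicators in row $s$. The noise part is then handled with the crude bound $\|\Delta_{\rm noise}(\hat UH-U)\|_{2,\infty}\le\|\Delta_{\rm noise}\|\,\|\hat UH-U\|$. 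Combined with the $\ell_2$ bound, this contributes $E_\sigma\cdot\kappa^2\mathcal E$-type terms, which are absorbed into $E_\sigma^2+E_\sigma\|U\|_{2,\infty}$ times $\kappa^4$. The self-bounding recursion closes as before once $8C\kappa^2(\eta+E_\sigma)\le1/2$, which is guaranteed by \eqref{eq:noise_upperbound}. Setting $\sigma=0$ kills all noise terms and recovers (5).
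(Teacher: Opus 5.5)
Your proposal is correct in outline, and most of it coincides with the paper's argument. The shared parts are: the operator-norm step (each summand of $\mathcal E$ is at most $c\kappa^{-2}$, then Davis--Kahan around $AA^\top+m\sigma^2I$); the three-term decomposition; the split of $\widetilde\Delta$ into the wedge fluctuation $\widetilde Z-\widetilde A\widetilde A^\top$ and the Gaussian part; the Bernstein bounds with $\widetilde A$ in place of $A$; and the direct row bounds on $\Delta_{\mathrm{noise}}U$.

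The genuine difference is the leave-one-out matrix.
\begin{itemize}
\item The paper zeroes row and column $s$ of the \emph{whole} error $\Delta$, so $\widetilde Z-\widetilde Z^{(s)}$ contains a Gaussian piece. This requires the extra estimate $\Gamma_G$ in Lemma~\ref{lemma:LOO_concentration_noisy}, which uses independence of $\widehat U^{(s)}$ from the $s$-th Gaussian row. The paper then summarizes the proximity by the cruder bound $\|(\widetilde Z-\widetilde Z^{(s)})\widehat U^{(s)}\|\le C\mathcal E\|A\|^2$, and bounds the Gaussian row term in Lemma~\ref{lem:noisy-loo-row} by $\|\Delta_G\|\,\|F^{(s)}\|$.
\item You keep $\Delta_{\mathrm{noise}}$ inside $Z^{(s)}$. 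Given $G$, $\widetilde Z-Z^{(s)}=\Delta_{\mathrm{samp}}-\Delta_{\mathrm{samp}}^{(s)}$ is then a pure wedge fluctuation independent of $\widehat U^{(s)}$. Lemma~\ref{lemma:LOO_concentration} applies with $\widetilde A$, the proximity bound keeps the factor $\|\widehat U^{(s)}\|_{2,\infty}$, and the self-bounding step of Lemma~\ref{lemma:E6} carries over verbatim. The Gaussian part is paid once, through $\|\Delta_{\mathrm{noise}}\|\,\|\widehat UH-U\|$, which is the same price the paper pays in its row term.
\end{itemize}
Your route is shorter, since it needs no Gaussian leave-one-out estimate, and at $\sigma=0$ it is literally the noiseless proof. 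That is a real advantage. The paper's $C\kappa^2\mathcal E$ proximity bound leaves a term $\kappa^4E_0^2$ with $E_0^2=\mu_1\mu_2r^2\log n/(pmn)$. This exceeds the target $\kappa^4\sqrt{\mu_1^2\mu_2r^3\log n/(pmn)}\,\|U\|_{2,\infty}$ by a factor of order $\sqrt n$ when $pmn\asymp\log n$ and the other parameters are bounded.

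Two bookkeeping points need fixing.

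First, the shift does not ``only help.'' Suppose you run Lemma~\ref{lem:three_decomposition} with $\widetilde Z$ and $\Sigma^2+m\sigma^2I$. Then $\widetilde ZW$ contains $m\sigma^2W$, where $W=\widehat UH-U$. With the factor $2$ in $\gamma_1$, the coefficient of $\|W\|_{2,\infty}$ becomes $2m\sigma^2/(\lambda_r+m\sigma^2)$, which exceeds $1$ once $m\sigma^2>\lambda_r$. The hypotheses allow this: for $k=3$ and $p\asymp1$, $m\sigma^2/\lambda_r$ can be of order $c_0^2\kappa^{-2}\sqrt{n/\log n}$. Work instead with $\widetilde Z-m\sigma^2I=AA^\top+\widetilde\Delta$, which has the same eigenvectors, as the paper does.

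Second, since $\mathcal E=E_0+E_\sigma$, your crude noise term produces the cross term $\kappa^4(\theta\sqrt n+\theta^2\sqrt{mn})E_0$. This is not bounded by $\kappa^4(E_\sigma^2+E_\sigma\|U\|_{2,\infty})$, because $E_0$ can be much larger than $\|U\|_{2,\infty}$ at the stated sampling rate. It is still admissible, term by term:
\begin{itemize}
\item $\theta\sqrt n\,E_0\le\theta\sqrt{\log n/p}\,\sqrt{\mu_1\mu_2r^3/n}$, using $m\ge n$, and the right side is a term of $\mathcal R_\sigma$.
\item $\theta^2\sqrt{mn}\,E_0=\theta^2\sqrt{\mu_1\mu_2r^2\log n/p}\le E_\sigma\|U\|_{2,\infty}$ once $\mu_1,\mu_2$ are taken sharp, i.e.\ $\mu_1r=n\|U\|_{2,\infty}^2$ and $\mu_2r\le m$. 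This is without loss of generality, because the hypotheses and the conclusion are monotone in $\mu_1$ and $\mu_2$.
\end{itemize}
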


\paragraph{Proof of \textup{(\ref{eq:noise_wedge_spectral_op})}}

Recall $\kappa=\frac{\sigma_{1}(A)}{\sigma_{r}(A)}$ and   denote in decreasing order \[\lambda_i=\lambda_i(AA^\top)=\sigma_i^2(A), \quad 1\leq i\leq r.\] As a corollary of Lemma~\ref{lemma:noisy_concentration} by using the condition $p\geq \frac{C_0\kappa^4  \mu_1\mu_2 r^2\log n}{mn}$, we obtain the following:
\begin{lemma}\label{lemma:spectral_gap_noisy}
       \edit{There are a universal constant \(C_0>1\) and a sufficiently small
       universal constant \(c_0>0\) such that, if
       \(p\geq C_0\kappa^4\mu_1\mu_2 r^2\log n/(mn)\) and}
    \begin{align}
        \frac{\sigma}{\|A\|}\leq c_0\kappa^{-2} \min \left\{ \frac{1}{\alpha_{n,k}\sqrt n}, \frac{\sqrt p}{\sqrt{r\log n}(\alpha_{n,k}\sqrt{\mu_1}+\sqrt{\mu_2})}, \sqrt{\frac{p}{k\log^2 n}}, \alpha_{n,k}^{-1/2}\left(\frac{p}{n^k\log n} \right)^{1/4}\right\},
    \end{align}
    \edit{then} with probability $1-O(n^{-3})$,
    \begin{align}\label{eq:lambda_r_noisy}
        \lambda_r \geq 4 \|\widetilde Z-AA^\top-m\sigma^2 I \|.
    \end{align}
   
\end{lemma}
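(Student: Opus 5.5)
The plan is to mirror the proof of Lemma~\ref{lemma:spectral_gap}: first invoke Lemma~\ref{lemma:noisy_concentration}, then show that under the two hypotheses every term of $\mathcal E$ is at most a small constant multiple of $\kappa^{-2}$. On the event of Lemma~\ref{lemma:noisy_concentration}, which has probability $1-O(n^{-3})$ and whose sampling condition on $p$ is implied by ours since $\kappa\ge1$, we have $\|\widetilde Z-AA^\top-m\sigma^2I\|\le C\mathcal E\|A\|^2$. Since $\lambda_r=\sigma_r(A)^2=\kappa^{-2}\|A\|^2$, it suffices to prove $C\mathcal E\le \kappa^{-2}/4$. I would do this by bounding each of the terms in \eqref{eq:def_mathcalE} by $\kappa^{-2}/(8\cdot 7C)$, say.

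\textbf{Noiseless term.} The sampling-error term $\sqrt{r^2\mu_1\mu_2\log n/(pmn)}$ is handled exactly as in Lemma~\ref{lemma:spectral_gap}: $p\ge C_0\kappa^4\mu_1\mu_2r^2\log n/(mn)$ makes it at most $C_0^{-1/2}\kappa^{-2}$, which is small once $C_0$ is large.

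\textbf{Noise terms.} Write $\theta=\sigma/\|A\|$ and match each remaining term with one entry of the minimum.
\begin{itemize}
\item $\theta\sqrt n\le c_0\kappa^{-2}/\alpha_{n,k}\le c_0\kappa^{-2}$, using the first entry.
\item $\sqrt{\log n/p}\,\theta(\alpha_{n,k}\sqrt{\mu_1 r}+\sqrt{\mu_2 r})\le c_0\kappa^{-2}$, using the second entry directly.
\item $\theta^2 k\log^2 n/p\le c_0^2\kappa^{-4}\le c_0\kappa^{-2}$, using the third entry.
\item The two $\theta^2\sqrt{mn}$ terms, with $mn=n^k$. The fourth entry gives $\theta^2\le c_0^2\kappa^{-4}\alpha_{n,k}^{-1}\sqrt{p/(n^k\log n)}$. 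Hence $\sqrt{\log n/p}\,\theta^2\alpha_{n,k}\sqrt{n^k}\le c_0^2\kappa^{-4}$.
\item The plain term $\theta^2\sqrt{n^k}\le c_0^2\kappa^{-4}\sqrt{p/\log n}\le c_0^2\kappa^{-4}$, since $p\le1$.
\end{itemize}
In all cases $\kappa^{-4}\le\kappa^{-2}$.

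Summing the bounds and choosing $C_0$ large and $c_0$ small gives \eqref{eq:lambda_r_noisy}. The only step needing care is the bookkeeping for the quadratic-in-$\theta$ terms, which is why the fourth entry of the minimum is stated with the fourth-root scaling; everything else is direct substitution.
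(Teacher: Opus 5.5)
Your proposal is correct and is essentially the paper's argument: the paper just says the gap follows from Lemma~\ref{lemma:noisy_concentration} under the assumptions on $p$ and $\sigma$, and your term-by-term matching of the summands of $\mathcal E$ against the lower bound on $p$ and the four entries of the minimum (using $mn=n^k$, $\alpha_{n,k}\ge 1$, $\kappa\ge1$, $\lambda_r=\kappa^{-2}\|A\|^2$) is exactly the bookkeeping that sentence leaves implicit. One cosmetic point: your bound on the plain term $\theta^2\sqrt{mn}$ uses $p\le\log n$, which follows from $p\le 1$ when $n\ge 3$ and otherwise only costs a constant.
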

\begin{proof}
This follows from Lemma~\ref{lemma:noisy_concentration} with the assumptions on $p$ and $\sigma$.
\end{proof}

We can then apply Davis-Kahan Inequality (Lemma~\ref{lem:DK}) to the top-$r$ eigenvalues of $\widetilde Z$ and $(AA^\top+m\sigma^2 I)$. \edit{Let \(H=\widehat U^\top U\) denote the alignment matrix and \(R=\operatorname{sgn}(H)\).} Since $\delta = \lambda_r \geq 2 \|\wt Z - AA^\top-m\sigma^2 I\|$ from Lemma~\ref{lemma:spectral_gap_noisy}, the conditions of Lemma~\ref{lem:DK} apply and we find
\[ \|\hat U\sgn(H)-U\| \leq 2\|\sin(\Theta)\| \leq \frac{2\|\widetilde Z - AA^\top-m\sigma^2 I\|}{\lambda_r}, \]
having used Lemma~\ref{lem:sintheta_bounds} for the first inequality. Using the bound of Lemma~\ref{lemma:noisy_concentration},  we find
\[  \|\hat U\sgn(H)-U\| \lesssim \kappa^2\mathcal E. \]
This finishes the proof of ~\eqref{eq:noise_wedge_spectral_op}.

\paragraph{Proof of \textup{(\ref{eq:noise_wedge_spectral_loo})}}

We define the error \[\Delta\coloneqq \widetilde Z-AA^\top -m\sigma^2 I,\] and let \[\widetilde{Z}^{(s)}=AA^\top +m\sigma^2I+ \Delta^{(s)},\] where $\Delta^{(s)}$ is the matrix $\Delta$ with zeros in the $s$-th row and column (the leave-one-out version of $\Delta$). Let $\hat{\lambda}_i^{(s)}$ be the eigenvalues of $\widetilde Z^{(s)}$ in decreasing order of absolute value. Let $\hat U^{(s)}$ be the top $r$ eigenvectors of $\widetilde Z^{(s)}$ and $H^{(s)}={(\hat{U}^{(s)})}^\top U$.

\begin{lemma}\label{lemma:submatrix_noisy}
There is an absolute constant $C>0$ such that  with probability at least $1-O(n^{-3})$,
    \begin{align}
      \max_{s\in [n]}  \left\|\widetilde Z^{(s)}-AA^\top-m\sigma^2 I\right\| \lesssim \mathcal E \|A\|^2.
    \end{align}
\end{lemma}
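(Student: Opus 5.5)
The plan mirrors the noiseless Lemma~\ref{lemma:submatrix}. We reduce the claim to the full-matrix bound of Lemma~\ref{lemma:noisy_concentration} using a deterministic submatrix argument. By definition,
\[
\widetilde Z^{(s)}-AA^\top-m\sigma^2 I=\Delta^{(s)},
\]
and $\Delta^{(s)}$ is obtained from $\Delta=\widetilde Z-AA^\top-m\sigma^2 I$ by zeroing its $s$-th row and column. Let $P_s:=I-e_se_s^\top$, which is an orthogonal projection. Then $\Delta^{(s)}=P_s\Delta P_s$, so for every $s\in[n]$,
\[
\|\Delta^{(s)}\|\le \|P_s\|\,\|\Delta\|\,\|P_s\|\le\|\Delta\|.
\]
This inequality is deterministic. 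It therefore holds simultaneously for all $s$ on whatever event controls $\|\Delta\|$, and no union bound over $s$ is needed.

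Next we invoke Lemma~\ref{lemma:noisy_concentration}. Under its sampling condition on $p$, it gives $\|\Delta\|\lesssim\mathcal E\|A\|^2$ with probability at least $1-O(n^{-3})$. That lemma controls, on one event, the matrix Bernstein deviation $\widetilde Z-\widetilde A\widetilde A^\top$ together with the Gaussian events: the bounds on $\|G\|_{\max}$, $\|G\|_{2,\infty}$, $\|G\|_{\infty,2}$, the cross term, and the Wishart term. Combining it with the display above yields
\[
\max_{s}\|\widetilde Z^{(s)}-AA^\top-m\sigma^2 I\|\lesssim\mathcal E\|A\|^2
\]
on the same event, with the same probability.

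The only point needing care is bookkeeping. The leave-one-out centering must be against $AA^\top+m\sigma^2 I$, the conditional-plus-noise mean, and not $\widetilde A\widetilde A^\top$. The diagonal noise shift $m\sigma^2 I$ is included in $\widetilde Z^{(s)}$ by construction, so zeroing row and column $s$ of $\Delta$ leaves the shift intact. The argument does not rely on any independence between the $s$-th slice and the remaining data. That independence is needed only later, when $\widetilde Z-\widetilde Z^{(s)}$ is applied to $\hat U^{(s)}$. At that stage the fixed noise $G$ makes row $s$ of $\Delta$ depend on $G_{s,\cdot}$ and on the other rows of $G$. This is the genuine obstacle, but it lies in later lemmas, not in the present one.
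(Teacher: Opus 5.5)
Your proposal is correct and follows the paper's proof exactly: you write $\Delta^{(s)}=P_s\Delta P_s$ as an orthogonal compression of $\Delta$, so $\|\Delta^{(s)}\|\le\|\Delta\|$ deterministically and simultaneously for all $s$. You then invoke Lemma~\ref{lemma:noisy_concentration} on its single event, under its implicit lower bound on $p$, to conclude.
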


\begin{proof}
 Since \(\widetilde Z^{(s)}-AA^\top-m\sigma^2I\) is obtained from
 \(\Delta=\widetilde Z-AA^\top-m\sigma^2I\) by zeroing the \(s\)-th row and
 column, it is an orthogonal compression of \(\Delta\).  Hence its operator norm
 is at most \(\|\Delta\|\), and the claim follows from
 Lemma~\ref{lemma:noisy_concentration}.
\end{proof}

\begin{lemma}\label{lemma:Davis_Kahan_loo_noisy} 
When $p\geq \frac{C_0\kappa^4  \mu_1\mu_2 r^2\log n}{mn}$ and \eqref{eq:noise_upperbound} holds, with probability at least $1-O(n^{-3})$, we have for all $s\in [n]$,
    \begin{align}
        \|H^{-1}\|  &\leq 2, \label{eq:415claim2_noisy}\\
        \|(H^{(s)})^{-1}\| &\leq 2,\label{eq:415claim3_noisy}\\
        \left\|\hat{U} \hat{U}^\top -(\hat{U}^{(s)})(\hat{U}^{(s)})^\top\right\| &\leq \edit{\frac{C\left\|\left(\widetilde{Z}-\widetilde{Z}^{(s)}\right) \hat{U}^{(s)}\right\|}{\lambda_r}}\label{eq:415claim4_noisy},
    \end{align}
    where $H = \hat{U}^\top U$, and $\hat U\in \R^{n\times r}$ are the leading $r$ unit eigenvectors of $\widetilde{Z}$.
\end{lemma}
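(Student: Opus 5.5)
The plan is to copy the noiseless argument of Lemma~\ref{lemma:Davis_Kahan_loo}. The only change is that the population matrix is now $M:=AA^\top+m\sigma^2 I$ instead of $AA^\top$. Adding the multiple $m\sigma^2 I$ of the identity does not change the eigenvectors. It also leaves the gap between the $r$-th and $(r+1)$-th eigenvalues equal to $\lambda_r$, since $\lambda_r(M)-\lambda_{r+1}(M)=(\lambda_r+m\sigma^2)-m\sigma^2=\lambda_r$. So the top-$r$ eigenspace of $M$ is still spanned by $U$, and Lemma~\ref{lem:DK} applies to the pair $(M,\widetilde Z)$ with $\delta=\lambda_r$. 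We work on the intersection of the events of Lemma~\ref{lemma:spectral_gap_noisy} and Lemma~\ref{lemma:submatrix_noisy}, which has probability $1-O(n^{-3})$. On this event $\lambda_r\ge 4\|\Delta\|$ and $\max_s\|\Delta^{(s)}\|\le\|\Delta\|$.

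\textbf{Bounds on $H^{-1}$ and $(H^{(s)})^{-1}$.} By Lemma~\ref{lem:DK} and Lemma~\ref{lem:sintheta_bounds},
\[
\|\sin\Theta\|\le \frac{2\|\Delta\|}{\lambda_r}\le \frac12
\quad\text{and}\quad
\|H-\sgn(H)\|\le\|\sin\Theta\|^2\le \frac14 .
\]
Hence $\sigma_{\min}(H)\ge 3/4$, which gives \eqref{eq:415claim2_noisy}. The same argument applies to $\widetilde Z^{(s)}=M+\Delta^{(s)}$, because $\|\Delta^{(s)}\|\le\|\Delta\|\le\lambda_r/4$ for every $s$ simultaneously. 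This gives \eqref{eq:415claim3_noisy}.

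\textbf{Leave-one-out bound.} We view $\widetilde Z$ as a perturbation of $\widetilde Z^{(s)}$, since $\widetilde Z-\widetilde Z^{(s)}=\Delta-\Delta^{(s)}$. By Weyl's inequality,
\[
\lambda_r(\widetilde Z^{(s)})-\lambda_{r+1}(\widetilde Z^{(s)})\ge \lambda_r-2\|\Delta^{(s)}\|\ge \lambda_r/2 .
\]
Also $\|\Delta-\Delta^{(s)}\|\le 2\|\Delta\|\le\lambda_r/2$. This does not quite meet the hypothesis $\|\Delta-\Delta^{(s)}\|\le \delta/2=\lambda_r/4$ of Lemma~\ref{lem:DK}, and this is the main point needing care. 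The fix is to shrink $c_0$ (and enlarge $C_0$) in Lemma~\ref{lemma:spectral_gap_noisy} so that it gives $\lambda_r\ge 8\|\Delta\|$. That lemma only needs $\mathcal E\kappa^2$ to be small, so this costs nothing. With this choice the gap is at least $\tfrac34\lambda_r$ and the perturbation is at most $\tfrac14\lambda_r$. We then apply Lemma~\ref{lem:DK} with the roles of the two matrices swapped, using the first inequality $\|\sin\Theta\|\le 2\|(\widetilde Z-\widetilde Z^{(s)})\hat U^{(s)}\|/\delta$. Combined with the identity $\|\hat U\hat U^\top-\hat U^{(s)}\hat U^{(s)\top}\|=\|\sin\Theta\|$ from Lemma~\ref{lem:sintheta_bounds}, this gives \eqref{eq:415claim4_noisy} with a universal constant $C$.

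No new probabilistic input is needed beyond Lemma~\ref{lemma:noisy_concentration}. All three bounds are deterministic on the same event, so they hold simultaneously for all $s\in[n]$ without a further union bound.
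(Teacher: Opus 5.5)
Your proposal is correct and follows the paper's argument. The paper reruns the noiseless proof of Lemma~\ref{lemma:Davis_Kahan_loo} with $\widetilde Z,\widetilde Z^{(s)}$ in place of $Z,Z^{(s)}$ and the gap $\lambda_r$ from Lemma~\ref{lemma:spectral_gap_noisy}, implicitly using that $AA^\top+m\sigma^2 I$ has eigenvectors $U$ and gap $\lambda_r$; it bounds $\|H^{-1}\|,\|(H^{(s)})^{-1}\|$ via $\|\sin\Theta\|\le 1/2$ and applies Davis--Kahan to $\widetilde Z=\widetilde Z^{(s)}+(\widetilde Z-\widetilde Z^{(s)})$. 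Your strengthening to $\lambda_r\ge 8\|\Delta\|$ is a legitimate way to verify the $\delta/2$ hypothesis of Lemma~\ref{lem:DK}, which the paper's ``eigengap at least a constant multiple of $\lambda_r$'' leaves unchecked; alternatively, the one-sided gap $\lambda_r(\widetilde Z^{(s)})-\lambda_{r+1}(\widetilde Z)\ge\lambda_r-2\|\Delta\|\ge\lambda_r/2$ gives the claim with the original constant $4$.
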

\begin{proof}
\edit{The proof is the same as the proof of Lemma~\ref{lemma:Davis_Kahan_loo}, with \(Z, Z^{(s)}\) replaced by \(\widetilde Z,\widetilde Z^{(s)}\) and with the noisy gap from Lemma~\ref{lemma:spectral_gap_noisy}.}
\end{proof}

\begin{lemma} 
\label{lem:noisy-rowwise-DeltaF}
Define
\[
        \theta:=\frac{\sigma}{\|A\|},
        \qquad
        b_\infty
        :=
        \sqrt{\frac{\mu_1\mu_2r^2}{mn}}
        +
        \theta\sqrt{k\log n}, \qquad
        b_{2,\infty}
        :=
        \sqrt{\frac{\mu_1r}{n}}
        +
        \theta\sqrt m .
\]
There is a universal constant \(C>0\) such that, with probability 
\(1-O(n^{-3})\), for any fixed deterministic \(F\in\mathbb R^{n\times r}\),
\[
\begin{aligned}
        \|\Delta F\|_{2,\infty}
        \le
        C\|A\|^2
        \Bigg[
        &\sqrt{\frac{\log n}{p}}\,
        b_\infty b_{2,\infty}\|F\|_F
        +
        \frac{\log n}{p}\,
        b_\infty^2\|F\|_{2,\infty}                                      \\
        &+
        \theta
        \left\{
              \sqrt{\frac{\mu_1r}{n}}
              \bigl(\|F\|_F+\sqrt{\log n}\|F\|\bigr)
              +
              \|F\|_F+\sqrt{\log n}\|F\|
        \right\}                                                        \\
        &+
        \theta^2
        \left\{
              \sqrt m\bigl(\|F\|_F+\sqrt{\log n}\|F\|\bigr)
              +
              \sqrt{m\log n}\|F\|_{2,\infty}
        \right\}
        \Bigg].
\end{aligned}
\]
In particular, taking \(F=U\) gives
\[
\begin{aligned}
        \|\Delta U\|_{2,\infty}
        \le
        C\|A\|^2
        \Bigg[
        &\sqrt{\frac{\log n}{p}}\,
        b_\infty b_{2,\infty}\sqrt r
        +
        \frac{\log n}{p}\,
        b_\infty^2
        \sqrt{\frac{\mu_1r}{n}}                                      \\
        &+
        \theta
        \left(1+\sqrt{\frac{\mu_1r}{n}}\right)
        \bigl(\sqrt r+\sqrt{\log n}\bigr)                            +
        \theta^2
              \sqrt m\bigl(\sqrt r+\sqrt{\log n}\bigr)
        \Bigg].
\end{aligned}
\]
\end{lemma}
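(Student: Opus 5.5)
The plan is to split $\Delta=\widetilde Z-AA^\top-m\sigma^2I$ into a sampling part and a pure-noise part,
\[
\Delta=(\widetilde Z-\widetilde A\widetilde A^\top)+(AG^\top+GA^\top)+(GG^\top-m\sigma^2I),
\]
and to bound each row of $\Delta F$ separately. Throughout, we work on the Gaussian event \eqref{eq:Gaussian_norm}, which has probability $1-O(n^{-3})$. Combining it with Lemma~\ref{lem:matrix_norm} and the triangle inequality gives
\[
\|\widetilde A\|_{\max}\lesssim b_\infty\|A\|,\qquad \|\widetilde A\|_{2,\infty}\lesssim b_{2,\infty}\|A\|.
\]

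\textbf{Sampling part.} Conditionally on $G$, each row $e_s^\top(\widetilde Z-\widetilde A\widetilde A^\top)F$ is a sum over $(j,\ell)$ of independent $1\times r$ vectors $p^{-1}(\delta_{\{s,j\},\ell}-p)\widetilde A_{s\ell}\widetilde A_{j\ell}F_{j\cdot}$. We run exactly the argument of Lemma~\ref{lem:FixF}, with $\widetilde A$ in place of $A$. Each summand is bounded by $p^{-1}\|\widetilde A\|_{\max}^2\|F\|_{2,\infty}$. The variance proxy is at most
\[
p^{-1}\|\widetilde A\|_{\max}^2\sum_\ell\widetilde A_{s\ell}^2\|F\|_F^2\le p^{-1}\|\widetilde A\|_{\max}^2\|\widetilde A\|_{2,\infty}^2\|F\|_F^2.
\]
Matrix Bernstein (Lemma~\ref{thm:(Matrix-Bernstein-inequality)}) with $a=4$, followed by a union bound over $s$, then yields the first two terms $\sqrt{\log n/p}\,b_\infty b_{2,\infty}\|F\|_F$ and $(\log n/p)\,b_\infty^2\|F\|_{2,\infty}$. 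Independence of $\delta$ and $G$ is exactly what allows us to condition on $G$ here.

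\textbf{Cross term.} Take $(AG^\top+GA^\top)F$ row by row. The row $e_s^\top GA^\top F=g_s^\top(A^\top F)$ is Gaussian in $\mathbb R^r$ with covariance $\sigma^2F^\top AA^\top F$. Standard Gaussian norm concentration plus a union bound over $s$ gives $\sigma(\|A^\top F\|_F+\sqrt{\log n}\,\|A^\top F\|)\le\sigma\|A\|(\|F\|_F+\sqrt{\log n}\|F\|)$. For the other row, $e_s^\top AG^\top F=(e_s^\top U\Sigma)(GV)^\top F$. Its norm is at most $\|U\|_{2,\infty}\|A\|\,\|V^\top G^\top F\|$, and $V^\top G^\top F$ is an $r\times r$ Gaussian matrix whose operator norm is $\lesssim\sigma(\|F\|_F+\sqrt{\log n}\|F\|)$. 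Together these produce the $\theta$ line of the bound, with factors $\sqrt{\mu_1 r/n}$ and $1$.

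\textbf{Wishart term.} This is the main obstacle. Write
\[
e_s^\top(GG^\top-m\sigma^2I)F=\sum_{j\ne s}\langle g_s,g_j\rangle F_{j\cdot}+(\|g_s\|^2-m\sigma^2)F_{s\cdot}.
\]
The diagonal piece is at most $\sigma^2\sqrt{m\log n}\,\|F\|_{2,\infty}$ by chi-square concentration. For the off-diagonal piece, condition on $g_s$: the sum equals $g_s^\top G_{-s}^\top F_{-s}$, which is Gaussian in $\mathbb R^r$ with covariance $\sigma^2F_{-s}^\top F_{-s}\cdot\|g_s\|^2$ (after exchanging roles, the rows of $G_{-s}$ contract against $g_s$). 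Using $\|g_s\|\lesssim\sigma\sqrt m$ yields $\sigma^2\sqrt m(\|F\|_F+\sqrt{\log n}\|F\|)$. A union bound over $s$ holds with failure probability $O(n^{-3})$ once the constants are adjusted. Summing the three parts and dividing by $\|A\|^2$ gives the claim.

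For the specialization $F=U$, substitute $\|U\|_F=\sqrt r$, $\|U\|=1$ and $\|U\|_{2,\infty}\le\sqrt{\mu_1r/n}$. The $\theta^2\sqrt{m\log n}\,\|U\|_{2,\infty}$ term is then absorbed into $\theta^2\sqrt m(\sqrt r+\sqrt{\log n})$.
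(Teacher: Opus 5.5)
Your decomposition and most of your estimates coincide with the paper's proof. This covers the split $\Delta=(\widetilde Z-\widetilde A\widetilde A^\top)+(AG^\top+GA^\top)+(GG^\top-m\sigma^2I)$ and the Bernstein argument conditional on $G$ with $\|\widetilde A\|_{\max}\lesssim b_\infty\|A\|$ and $\|\widetilde A\|_{2,\infty}\lesssim b_{2,\infty}\|A\|$. It also covers the Gaussian-vector bound for $g_sA^\top F$ and the conditional-on-$g_s$ treatment of the Wishart row.

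The step that fails is the row $e_s^\top AG^\top F$. You bound it by $\|U\|_{2,\infty}\|A\|\,\|V^\top G^\top F\|$ and then assert $\|V^\top G^\top F\|\lesssim\sigma(\|F\|_F+\sqrt{\log n}\,\|F\|)$. That assertion is false in general. $(GV)^\top F$ is an $r\times r$ matrix, and applying it to a top right singular vector $y$ of $F$ gives $(GV)^\top(Fy)\sim N(0,\sigma^2\|F\|^2I_r)$, whose norm is about $\sigma\sqrt r\,\|F\|$. For example, if $F$ has a single nonzero entry equal to $1$, then $\|F\|_F=\|F\|=1$, while $\|(GV)^\top F\|\approx\sigma\sqrt r\gg\sigma(1+\sqrt{\log n})$ once $r\gg\log n$.

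The correct operator-norm bound is of order $\sigma(\sqrt r+\sqrt{\log n})\|F\|$. Your route therefore produces an extra term $\theta\sqrt{\mu_1r/n}\,\sqrt r\,\|F\|$. This term is not absorbed by the terms in the statement for a general fixed $F$; that would require something like $\mu_1r^2\lesssim n\log n$. The loss comes from passing to the operator norm of an $r\times r$ matrix when only the single row $e_s^\top U\Sigma$ acts on it.

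The repair is to handle this row like the other two Gaussian pieces, which is what the paper does. Write $e_s^\top AG^\top F=\sum_{j}\langle A_{s,:},g_j\rangle F_{j,:}$. The coefficients $\langle A_{s,:},g_j\rangle$ are i.i.d.\ $N(0,\sigma^2\|A_{s,:}\|_2^2)$ over $j$, so this row is a centered Gaussian vector with covariance $\sigma^2\|A_{s,:}\|_2^2F^\top F$. The Gaussian norm tail bound and a union bound over $s$ then give $C\sigma\|A\|_{2,\infty}(\|F\|_F+\sqrt{\log n}\,\|F\|)\le C\|A\|^2\theta\sqrt{\mu_1r/n}\,(\|F\|_F+\sqrt{\log n}\,\|F\|)$, which is the missing term.

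Your lossy version happens to be harmless for the specialization $F=U$, since $\sqrt r\,\|U\|=\sqrt r=\|U\|_F$, so the ``in particular'' bound survives. It is the general-$F$ inequality that your argument does not establish as written.
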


\begin{proof}
We decompose
\[
        \Delta=\Delta_\delta+\Delta_G,
        \qquad
        \Delta_\delta:=\widetilde Z-\widetilde A\widetilde A^\top,
        \qquad
        \Delta_G:=AG^\top+GA^\top+GG^\top-m\sigma^2I_n .
\]

We first control the wedge-sampling term \(\Delta_\delta F\), conditionally on
\(\widetilde A\) and \(F\). For a fixed row \(s\in[n]\), the row vector
\(e_s^\top\Delta_\delta F\) is a sum of independent centered \(1\times r\)
random row vectors indexed by wedges touching \(s\). For \(j\neq s\), the
summand is
        $\left(\frac{\delta_{s\ell j}}{p}-1\right)
        \widetilde A_{s\ell}\widetilde A_{j\ell}F_{j,:}$,
and for \(j=s\), the diagonal summand is
        $\left(\frac{\delta_{s\ell s}}{p}-1\right)
        \widetilde A_{s\ell}^2F_{s,:}$.

Each summand has Euclidean norm at most
        $2p^{-1}\|\widetilde A\|_{\max}^2\|F\|_{2,\infty}$.
Moreover, its variance proxy is bounded by
\[
\begin{aligned}
        \frac1p\sum_{j,\ell}
        \widetilde A_{s\ell}^2\widetilde A_{j\ell}^2
        \|F_{j,:}\|_2^2
        &\le
        \frac1p
        \|\widetilde A\|_{\max}^2
        \sum_j \|\widetilde A_{j,:}\|_2^2\|F_{j,:}\|_2^2        \\
        &\le
        \frac1p
        \|\widetilde A\|_{\max}^2
        \|\widetilde A\|_{2,\infty}^2
        \|F\|_{F}^2 .
\end{aligned}
\]
By matrix Bernstein for \(1\times r\) random matrices, followed by a union
bound over \(s\in[n]\),
\[
        \|\Delta_\delta F\|_{2,\infty}
        \le
        C
        \left[
              \sqrt{\frac{\log n}{p}}\,
              \|\widetilde A\|_{\max}\|\widetilde A\|_{2,\infty}\|F\|_F
              +
              \frac{\log n}{p}
              \|\widetilde A\|_{\max}^2\|F\|_{2,\infty}
        \right].
\]
On the standard Gaussian event
\[
        \|G\|_{\max}\le C\sigma\sqrt{k\log n},
        \qquad
        \|G\|_{2,\infty}\le C\sigma\sqrt m,
\]
which holds with probability \(1-O(n^{-3})\), incoherence of \(A\) gives
\[
        \|\widetilde A\|_{\max}
        \le
        C\|A\|
        \left(
              \sqrt{\frac{\mu_1\mu_2r^2}{mn}}
              +
              \theta\sqrt{k\log n}
        \right)
        =
        C\|A\|b_\infty,
\]
and
\[
        \|\widetilde A\|_{2,\infty}
        \le
        C\|A\|
        \left(
              \sqrt{\frac{\mu_1r}{n}}
              +
              \theta\sqrt m
        \right)
        =
        C\|A\|b_{2,\infty}.
\]
Therefore
\[
        \|\Delta_\delta F\|_{2,\infty}
        \le
        C\|A\|^2
        \left[
              \sqrt{\frac{\log n}{p}}\,
              b_\infty b_{2,\infty}\|F\|_F
              +
              \frac{\log n}{p}
              b_\infty^2\|F\|_{2,\infty}
        \right].
\]

It remains to control \(\Delta_G F\). Let \(g_s\) denote the \(s\)-th row of
\(G\). For each \(s\in[n]\),
\[
        e_s^\top\Delta_G F
        =
        A_{s,:}G^\top F
        +
        g_sA^\top F
        +
        \bigl(g_sG^\top F-m\sigma^2F_{s,:}\bigr).
\]

For the first term, conditionally on \(F\),
\(A_{s,:}G^\top F\) is a centered Gaussian vector in \(\mathbb R^r\) with
covariance $\sigma^2\|A_{s,:}\|_2^2F^\top F$.
Hence, by a Gaussian norm tail bound and a union bound over \(s\),
\begin{align}
        \max_s \|A_{s,:}G^\top F\|
        &\le
        C\sigma\|A\|_{2,\infty}
        \bigl(\|F\|_F+\sqrt{\log n}\|F\|\bigr)\\
        &\le
        C\|A\|^2\theta
        \sqrt{\frac{\mu_1r}{n}}
        \bigl(\|F\|_F+\sqrt{\log n}\|F\|\bigr).
\end{align}

For the second term, \(g_sA^\top F\) is a centered Gaussian vector with
covariance  $\sigma^2F^\top AA^\top F$. Thus
\begin{align}
        \max_s\|g_sA^\top F\|
        &\le
        C\sigma
        \left(
              \|A^\top F\|_F+\sqrt{\log n}\|A^\top F\|
        \right)\\
&
        \le
        C\|A\|^2\theta
        \bigl(\|F\|_F+\sqrt{\log n}\|F\|\bigr).
\end{align}
For the last term, write
\[
        g_sG^\top F-m\sigma^2F_{s,:}
        =
        \sum_{i\neq s}\langle g_s,g_i\rangle F_{i,:}
        +
        \bigl(\|g_s\|_2^2-m\sigma^2\bigr)F_{s,:}.
\]
On the event
\[
        \max_s\|g_s\|_2\le C\sigma\sqrt m,
        \qquad
        \max_s\bigl|\|g_s\|_2^2-m\sigma^2\bigr|
        \le C\sigma^2\sqrt{m\log n},
\]
the diagonal part is bounded by
\[
        \max_s
        \left\|
        \bigl(\|g_s\|_2^2-m\sigma^2\bigr)F_{s,:}
        \right\|
        \le
        C\sigma^2\sqrt{m\log n}\|F\|_{2,\infty}.
\]
Conditionally on \(g_s\), the off-diagonal sum
        $\sum_{i\neq s}\langle g_s,g_i\rangle F_{i,:}$
is Gaussian with covariance bounded by
        $\sigma^2\|g_s\|_2^2F^\top F$.
A Gaussian norm tail bound and a union bound over \(s\) give
\[
        \max_s
        \left\|
        \sum_{i\neq s}\langle g_s,g_i\rangle F_{i,:}
        \right\|
        \le
        C\sigma^2\sqrt m
        \bigl(\|F\|_F+\sqrt{\log n}\|F\|\bigr).
\]
Therefore
\[
        \max_s
        \|g_sG^\top F-m\sigma^2F_{s,:}\|
        \le
        C\sigma^2
        \left[
              \sqrt m\bigl(\|F\|_F+\sqrt{\log n}\|F\|\bigr)
              +
              \sqrt{m\log n}\|F\|_{2,\infty}
        \right].
\]
Equivalently,
\[
        \|\Delta_G F\|_{2,\infty}
        \le
        C\|A\|^2
        \Bigg[
        \theta
        \left\{
              \sqrt{\frac{\mu_1r}{n}}
              \bigl(\|F\|_F+\sqrt{\log n}\|F\|\bigr)
              +
              \|F\|_F+\sqrt{\log n}\|F\|
        \right\}
\]
\[
        \hspace{3cm}
        +
        \theta^2
        \left\{
              \sqrt m\bigl(\|F\|_F+\sqrt{\log n}\|F\|\bigr)
              +
              \sqrt{m\log n}\|F\|_{2,\infty}
        \right\}
        \Bigg].
\]
Combining the bounds for \(\Delta_\delta F\) and \(\Delta_G F\) proves the
first displayed claim.
Finally, since 
\[
        \|U\|=1,
        \qquad
        \|U\|_F=\sqrt r,
        \qquad
        \|U\|_{2,\infty}\le \min \left\{\sqrt{\frac{\mu_1r}{n}},1\right\},
\]
 the
stated estimate for \(\|\Delta U\|_{2,\infty}\) follows.
\end{proof}

\begin{lemma}
\label{lem:noisy-loo-row}
Under the assumptions of Proposition~\ref{prop:noisy_op_loo}, with probability \(1-O(n^{-3})\),
for all \(s\in[n]\),
\begin{align}
   \bigl\|
    \Delta_{s,\cdot}
    (\widehat U^{(s)}H^{(s)}-U)
    \bigr\| \le
    C\|A\|^2
    \left[
        \edit{\sqrt{\frac{\log n}{p}}}\,
        b_\infty b_{2,\infty}\sqrt r
        +
        \frac{\log n}{p}\,
        b_\infty^2
        \|\widehat U^{(s)}H^{(s)}-U\|_{2,\infty}
        +
        \kappa^2\mathcal E^2 
    \right].
\end{align}
\end{lemma}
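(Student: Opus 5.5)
The plan is to split $\Delta=\Delta_\delta+\Delta_G$ as in the proof of Lemma~\ref{lem:noisy-rowwise-DeltaF}, with $\Delta_\delta=\widetilde Z-\widetilde A\widetilde A^\top$ and $\Delta_G=AG^\top+GA^\top+GG^\top-m\sigma^2 I_n$. Set $F^{(s)}:=\widehat U^{(s)}H^{(s)}-U$. The wedge-sampling part will give the first two terms of the bound via a conditional Bernstein argument. The Gaussian part will be bounded crudely by operator norms, which produces the $\kappa^2\mathcal E^2$ term.

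\emph{Independence structure.} The first step is to check what $F^{(s)}$ depends on. The leave-one-out matrix $\widetilde Z^{(s)}=AA^\top+m\sigma^2I+\Delta^{(s)}$ zeros row and column $s$ of $\Delta$.
\begin{itemize}
\item Each remaining entry of $\Delta_\delta$ at $(i,j)$ with $i,j\neq s$ involves only the rows $i,j$ of $\widetilde A$ and the indicators $\delta_{\{i,j\},\ell}$.
\item Each remaining entry of $\Delta_G$ involves only rows $i,j$ of $G$.
\end{itemize}
Hence $F^{(s)}$ is a function of $\widetilde A$ and of the wedge indicators not touching $s$. It is therefore independent of $\{\delta_{\{s,j\},\ell}\}_{j,\ell}$, although it is \emph{not} independent of the Gaussian rows $g_j$, $j\neq s$. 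This asymmetry is why the two pieces are treated differently.

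\emph{Wedge part.} Condition on $\widetilde A$ (hence on $F^{(s)}$). Then $e_s^\top\Delta_\delta F^{(s)}$ is a sum of independent centered $1\times r$ vectors, each of the form $(\delta_{\{s,j\},\ell}/p-1)\widetilde A_{s\ell}\widetilde A_{j\ell}F^{(s)}_{j,:}$. I will repeat verbatim the Bernstein computation of Lemma~\ref{lem:noisy-rowwise-DeltaF}, using the Gaussian event there that gives $\|\widetilde A\|_{\max}\lesssim\|A\|b_\infty$ and $\|\widetilde A\|_{2,\infty}\lesssim\|A\|b_{2,\infty}$. Then $\|F^{(s)}\|_F\le\|\widehat U^{(s)}H^{(s)}\|_F+\|U\|_F\le2\sqrt r$ is inserted. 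A union bound over $s\in[n]$ gives, with probability $1-O(n^{-3})$,
\[
\|e_s^\top\Delta_\delta F^{(s)}\|\lesssim\|A\|^2\Big[\sqrt{\tfrac{\log n}{p}}\,b_\infty b_{2,\infty}\sqrt r+\tfrac{\log n}{p}\,b_\infty^2\|F^{(s)}\|_{2,\infty}\Big].
\]

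\emph{Gaussian part (main obstacle).} The dependence of $F^{(s)}$ on $g_j$, $j\neq s$, prevents a conditional Gaussian tail bound for the term $A_{s,:}G^\top F^{(s)}$. I will not attempt a decoupling. Instead I will use
\[
\|e_s^\top\Delta_GF^{(s)}\|\le\|\Delta_G\|\,\|F^{(s)}\|.
\]
\begin{itemize}
\item By \eqref{eq:cross-term} and \eqref{eq:wishart-term}, $\|\Delta_G\|\lesssim\mathcal E_2\|A\|^2\le\mathcal E\|A\|^2$.
\item For $\|F^{(s)}\|$, apply Davis--Kahan (Lemma~\ref{lem:DK}) to $\widetilde Z^{(s)}$ versus $AA^\top+m\sigma^2I$, using Lemma~\ref{lemma:submatrix_noisy} and the gap from Lemma~\ref{lemma:spectral_gap_noisy}. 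This gives $\|\sin\Theta^{(s)}\|\lesssim\kappa^2\mathcal E$.
\item Then $\|F^{(s)}\|\le\|\widehat U^{(s)}\widehat U^{(s)\top}U-U\|\le\|\sin\Theta^{(s)}\|\lesssim\kappa^2\mathcal E$, uniformly in $s$.
\end{itemize}
Together these give the $\kappa^2\mathcal E^2\|A\|^2$ term. Summing the wedge and Gaussian parts and intersecting the $O(n^{-3})$-failure events proves the lemma.

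The only delicate point is verifying that conditioning on $\widetilde A$ keeps the $s$-touching indicators independent of $F^{(s)}$. The cost of avoiding a finer Gaussian analysis is the loss in the $\kappa^2\mathcal E^2$ term, which the statement already permits.
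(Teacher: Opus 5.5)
Your proposal is correct and matches the paper's proof. The paper also splits $\Delta=\Delta_\delta+\Delta_G$ and handles $e_s^\top\Delta_\delta F^{(s)}$ by repeating the conditional Bernstein computation of Lemma~\ref{lem:noisy-rowwise-DeltaF} with $\|F^{(s)}\|_F\lesssim\sqrt r$; it then bounds the Gaussian part crudely by $C\|A\|^2(\theta\sqrt n+\theta^2\sqrt{mn})\|F^{(s)}\|$ and uses the leave-one-out Davis--Kahan bound $\|F^{(s)}\|\lesssim\kappa^2\mathcal E$. One small wording fix: to freeze $F^{(s)}$ you must condition on $\widetilde A$ together with the wedge indicators not touching $s$, not on $\widetilde A$ alone, which is exactly the independence you already identified.
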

\begin{proof}
Fix \(s\in[n]\) and write
\[
        F^{(s)}:=\widehat U^{(s)}H^{(s)}-U .
\]
Following the same proof as in Lemma~\ref{lem:noisy-rowwise-DeltaF}, we obtain 
\[
\begin{aligned}
        \|e_s^\top\Delta_\delta F^{(s)}\|
        \le
        C\left[
        \sqrt{\frac{\log n}{p}}\,
        \|\widetilde A\|_{\max}
        \|\widetilde A\|_{2,\infty}
        \|F^{(s)}\|_F
        +
        \frac{\log n}{p}
        \|\widetilde A\|_{\max}^2
        \|F^{(s)}\|_{2,\infty}
        \right],
\end{aligned}
\]
which simplifies to
\[
        \|e_s^\top\Delta_\delta F^{(s)}\|
        \le
        C\|A\|^2
        \left[
        \sqrt{\frac{\log n}{p}}\,
        b_\infty b_{2,\infty}\sqrt r
        +
        \frac{\log n}{p}
        b_\infty^2
        \|F^{(s)}\|_{2,\infty}
        \right]
\]
simultaneously for all \(s\) with probability $1-O(n^{-3})$. 
Similarly,
\[
        \|e_s^\top\Delta_GF^{(s)}\|
        \le
        C\|A\|^2
        \left(
            \theta\sqrt n+\theta^2\sqrt{mn}
        \right)
        \|F^{(s)}\|.
\]
By Davis--Kahan applied to
\[
        \widetilde Z^{(s)}
        =
        AA^\top+m\sigma^2I_n+\Delta^{(s)}
\]
together with Lemma~\ref{lemma:submatrix_noisy},
\[
        \|F^{(s)}\|
        \le
        C\kappa^2\mathcal E .
\]
Since \(\mathcal E\) dominates \(\theta\sqrt n+\theta^2\sqrt{mn}\), we obtain
\[
        \|e_s^\top\Delta_GF^{(s)}\|
        \le
        C\|A\|^2\kappa^2\mathcal E^2 .
\]
Combining the bounds for \(\Delta_\delta\) and \(\Delta_G\) gives
the desired bound.
\end{proof}

\begin{lemma}\label{lemma:LOO_concentration_noisy}
Let
\[
        b_{\infty,2}
        :=
        \sqrt{\frac{\mu_2r}{m}}
        +
        \theta\sqrt n\,\alpha_{n,k},
\qquad
        \eta_{\mathrm{loo}}
        :=
        \sqrt{\frac{\log n}{p}}\,
        b_{2,\infty}b_{\infty,2}
        +
        \frac{\log n}{p}b_\infty^2 .
\]
For any fixed matrix \(F\in \R^{n\times r}\), define
\[
\begin{aligned}
        \Gamma_G(F)
        :=
        &\theta
        \left[
              \sqrt{\mu_1r}\,\|F\|
              +
              \|F\|_F
              +
              \sqrt{\log n}\|F\|
        \right]                                                        \\
        &+
        \theta^2\sqrt m
        \left[
              \|F\|_F+\sqrt{\log n}\|F\|
        \right]                                                        \\
        &+
        \left[
              \theta(\sqrt{\mu_1r}+\sqrt{\log n})
              +
              \theta^2\sqrt{mn}
        \right]\|F\|_{2,\infty}.
\end{aligned}
\]
Then, with probability \(1-O(n^{-3})\), for all \(s\in[n]\),
\begin{align}
        \|(\widetilde Z-\widetilde Z^{(s)})F\|
        \le
        C\|A\|^2
        \left(
              \eta_{\mathrm{loo}}\|F\|_{2,\infty}
              +
              \Gamma_G(F)
        \right).
\end{align}
In particular, with probability \(1-O(n^{-3})\), for all \(s\in[n]\),
\begin{align}
        \|(\widetilde Z-\widetilde Z^{(s)})\widehat U^{(s)}\|
        \le
        C\|A\|^2
        \Bigg[
        &\eta_{\mathrm{loo}}\|\widehat U^{(s)}\|_{2,\infty}        
        +
        \theta
        \left(
              \sqrt{\mu_1r}+\sqrt{\log n}
        \right)                                               \nonumber\\
        &+
        \theta^2\sqrt m
        \left(
              \sqrt r+\sqrt{\log n}
        \right)                                               \nonumber\\
        &+
        (\theta(\sqrt{\mu_1r}+\sqrt{\log n})
              +
              \theta^2\sqrt{mn})
        \|\widehat U^{(s)}\|_{2,\infty}
        \Bigg].
\end{align}
\end{lemma}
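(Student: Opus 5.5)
We follow the proof of Lemma~\ref{lemma:LOO_concentration}, after splitting $\Delta=\Delta_\delta+\Delta_G$ exactly as in Lemma~\ref{lem:noisy-rowwise-DeltaF}. Since $\widetilde Z-\widetilde Z^{(s)}=\Delta-\Delta^{(s)}$ is supported on the $s$-th row and column, we write
\[
(\Delta-\Delta^{(s)})F=e_s\,\Delta_{s,\cdot}F+\sum_{i\neq s}\Delta_{is}e_iF_{s,\cdot}.
\]
Its operator norm is then at most $\|\Delta_{s,\cdot}F\|+\|\Delta_{\cdot,s}\|\,\|F_{s,\cdot}\|$, up to the duplicated diagonal entry. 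The wedge part and the Gaussian part are bounded separately.

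\textbf{Wedge part.} We condition on $\widetilde A$, which is independent of the indicators $\delta$. Then $(\Delta_\delta-\Delta_\delta^{(s)})F$ is a sum over $(i,\ell)$ of independent centered rank-one matrices $(\delta_{\{i,s\},\ell}/p-1)\widetilde A_{i\ell}\widetilde A_{s\ell}(e_iF_{s,\cdot}+e_sF_{i,\cdot})$. We apply matrix Bernstein (Lemma~\ref{thm:(Matrix-Bernstein-inequality)}) exactly as in Lemma~\ref{lemma:LOO_concentration}. The uniform bound is $p^{-1}\|\widetilde A\|_{\max}^2\|F\|_{2,\infty}$. For the variance, we bound
\[
p^{-1}\sum_{i,\ell}\widetilde A_{i\ell}^2\widetilde A_{s\ell}^2\|F\|_{2,\infty}^2\le p^{-1}\|\widetilde A\|_{2,\infty}^2\|\widetilde A\|_{\infty,2}^2\|F\|_{2,\infty}^2.
\]
Here we use $\sum_{i,\ell}\widetilde A_{i\ell}^2\widetilde A_{s\ell}^2\le \max_\ell\|\widetilde a_\ell\|^2\sum_\ell\widetilde A_{s\ell}^2$. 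On the Gaussian event \eqref{eq:Gaussian_norm} we have $\|\widetilde A\|_{\max}\lesssim\|A\|b_\infty$, $\|\widetilde A\|_{2,\infty}\lesssim\|A\|b_{2,\infty}$ and $\|\widetilde A\|_{\infty,2}\lesssim\|A\|b_{\infty,2}$, by incoherence and the triangle inequality. A union bound over $s$ then gives the $\eta_{\mathrm{loo}}\|F\|_{2,\infty}$ term.

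\textbf{Gaussian part.} Row $s$ of $\Delta_G F$ is handled by the three-term decomposition in the proof of Lemma~\ref{lem:noisy-rowwise-DeltaF}, which yields
\[
\theta\bigl[\sqrt{\mu_1r/n}+1\bigr](\|F\|_F+\sqrt{\log n}\|F\|)+\theta^2\bigl[\sqrt m(\|F\|_F+\sqrt{\log n}\|F\|)+\sqrt{m\log n}\|F\|_{2,\infty}\bigr],
\]
which is dominated by $\Gamma_G(F)$. For the column part we need $\|(\Delta_G)_{\cdot,s}\|$, and we would deliberately not use Gaussian conditioning here, since $F$ will later be replaced by $\widehat U^{(s)}$, which depends on $G$. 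Instead we bound the column deterministically on a high-probability event. Since $(AG^\top)_{\cdot,s}=Ag_s^\top$, we get $\|Ag_s^\top\|\lesssim\sigma(\|A\|_F+\sqrt{\log n}\|A\|)\lesssim\theta\|A\|^2(\sqrt r+\sqrt{\log n})$. Next, $\|(GA^\top)_{\cdot,s}\|=\|GA_{s,\cdot}^\top\|\lesssim\sigma\|A\|_{2,\infty}(\sqrt n+\sqrt{\log n})\lesssim\theta\|A\|^2\sqrt{\mu_1r}$ (using $\sqrt{\log n}\le\sqrt n$). Finally, $\|(GG^\top-m\sigma^2I)_{\cdot,s}\|\le\|GG^\top-m\sigma^2I\|\lesssim\sigma^2\sqrt{mn}$ by \eqref{eq:wishart-term}. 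Multiplying by $\|F_{s,\cdot}\|\le\|F\|_{2,\infty}$ gives precisely the last bracket of $\Gamma_G(F)$. The $\sqrt r$ is absorbed into $\sqrt{\mu_1r}$.

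\textbf{The specialization $F=\widehat U^{(s)}$.} This is the main obstacle. For the wedge part it is legitimate, because $\widehat U^{(s)}$ is independent of the indicators $\delta$ in row and column $s$, which is the same argument as in Lemma~\ref{lemma:LOO_concentration}. For the Gaussian part, $\widehat U^{(s)}$ depends on $g_s$ through $\Delta_G^{(s)}$, since $(GG^\top)_{ij}$ involves all rows of $G$ and only row and column $s$ are zeroed. The column contribution is already deterministic, so only the row $s$ term needs attention. There we would bound each piece by operator norms rather than conditioning. The piece $A_{s,\cdot}G^\top F$ is at most $\|A\|_{2,\infty}\|G\|\,\|F\|\lesssim\sigma\sqrt{\mu_1r/n}(\sqrt m+\sqrt n)\|A\|$. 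Similarly, $\|g_sA^\top F\|\le\|g_s\|\,\|A\|$, and $\|g_sG^\top F-m\sigma^2F_{s,\cdot}\|\le\|(GG^\top-m\sigma^2I)_{s,\cdot}\|$. With $\|F\|=1$ and $\|F\|_F=\sqrt r$, this already produces terms of the form $\theta(\sqrt{\mu_1r}+\sqrt{\log n})$ and $\theta^2\sqrt m(\sqrt r+\sqrt{\log n})$. If the crude $\|g_s\|\lesssim\sigma\sqrt m$ is too lossy to match the stated $\theta$ term, the fallback is an auxiliary leave-one-out construction in which $g_s$ is replaced by an independent copy. The displayed rate would then be recovered up to constants via a perturbation argument absorbed by the gap.
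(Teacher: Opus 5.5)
\paragraph{Where the proposal breaks.} For a fixed deterministic $F$ your argument is the paper's. You use the same split $\Delta=\Delta_\delta+\Delta_G$, the same row/column decomposition of $(\Delta-\Delta^{(s)})F$, the same conditional matrix Bernstein bound giving $\eta_{\mathrm{loo}}\|F\|_{2,\infty}$, and the same $F$-free bounds on $Ag_s^\top$, $GA_{s,\cdot}^\top$ and the Wishart column.

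The gap is the ``in particular'' claim for $F=\widehat U^{(s)}$, and it starts from a false premise. For $i,j\neq s$ one has $(GG^\top)_{ij}=\langle g_i,g_j\rangle$ and $(AG^\top)_{ij}=\langle A_{i,\cdot},g_j\rangle$. Also, $(\Delta_\delta)_{ij}$ involves only $\widetilde A_{i,\cdot}$, $\widetilde A_{j,\cdot}$ and $\delta_{\{i,j\},\ell}$. Hence $\Delta^{(s)}$, and therefore $\widehat U^{(s)}$, is a function of $G_{-s}$ and of the wedge indicators not touching $s$. It is independent of $g_s$ and of $\{\delta_{\{i,s\},\ell}\}_{i,\ell}$. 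It is the entries of $G^\top G$, not $GG^\top$, that mix all rows.

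Because of this misdiagnosis you replace the Gaussian estimates by operator-norm bounds, and these do not give the displayed rate.
\begin{itemize}
\item $\|A\|_{2,\infty}\|G\|$ yields $\theta\|A\|^2\sqrt{\mu_1 r}\,(1+\sqrt{m/n})$, not $\theta\|A\|^2\sqrt{\mu_1 r}$.
\item $\|g_s\|\,\|A\|$ yields $\theta\|A\|^2\sqrt m$, not $\theta\|A\|^2(\sqrt r+\sqrt{\log n})$.
\item The $s$-th row of $GG^\top-m\sigma^2I$ genuinely has norm of order $\sigma^2\sqrt{mn}$. This exceeds $\theta^2\|A\|^2[\sqrt m(\sqrt r+\sqrt{\log n})+\sqrt{mn}\,\|\widehat U^{(s)}\|_{2,\infty}]$ by a factor of order $\sqrt n/(\sqrt{\mu_1 r}+\sqrt{\log n})$ when $\|\widehat U^{(s)}\|_{2,\infty}\asymp\sqrt{\mu_1 r/n}$. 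That is exactly the regime in which the lemma is used.
\end{itemize}
So your sentence that these bounds ``already produce'' the stated terms is wrong. The fallback (an independent copy of $g_s$ plus an unspecified perturbation argument) is neither carried out nor needed.

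\paragraph{How to repair it.} The repair is essentially the paper's route.
\begin{itemize}
\item The first two losses disappear if you reuse the $F$-free vectors you already bounded in the column step: $\|A_{s,\cdot}G^\top F\|\le\|GA_{s,\cdot}^\top\|\,\|F\|\lesssim\theta\|A\|^2\sqrt{\mu_1 r}$, and $\|g_sA^\top F\|\le\|Ag_s^\top\|\,\|F\|\lesssim\theta\|A\|^2(\sqrt r+\sqrt{\log n})$.
\item The pure-noise term is where independence is really used. Write it as $g_sG_{-s}^\top F_{-s,\cdot}+(\|g_s\|_2^2-m\sigma^2)F_{s,\cdot}$. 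Condition on $G_{-s}$ together with the wedge indicators not touching $s$, so that $F=\widehat U^{(s)}$ and $G_{-s}$ are frozen while $g_s\sim N(0,\sigma^2I_m)$. The Gaussian norm bound then gives $\sigma^2(\sqrt m+\sqrt n)(\sqrt r+\sqrt{\log n})+\sigma^2\sqrt{m\log n}\,\|F\|_{2,\infty}$.
\item This must be the version that conditions on $G_{-s}$. The version in Lemma~\ref{lem:noisy-rowwise-DeltaF} conditions on $g_s$ and treats $G_{-s}$ as random, which is invalid here because $\widehat U^{(s)}$ depends on $G_{-s}$.
\item The wedge part is likewise legitimate conditionally on $G$ and the indicators not touching $s$.
\end{itemize}
Integrating these conditional bounds and taking a union bound over $s$ gives the stated estimate with no auxiliary construction.
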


\begin{proof}
Recall that
\[
        \widetilde Z
        =
        AA^\top+m\sigma^2I_n+\Delta,
        \qquad
        \widetilde Z^{(s)}
        =
        AA^\top+m\sigma^2I_n+\Delta^{(s)},
\]
where \(\Delta^{(s)}\) is obtained from \(\Delta\) by zeroing out its \(s\)-th
row and column. Hence
\[
        \widetilde Z-\widetilde Z^{(s)}
        =
        \Delta-\Delta^{(s)}.
\]
Accordingly,
\[
        (\widetilde Z-\widetilde Z^{(s)})F
        =
        (\Delta_\delta-\Delta_\delta^{(s)})F
        +
        (\Delta_G-\Delta_G^{(s)})F .
\]

First consider the wedge-sampling part. Let
\[
        w_{i\ell j}:=\frac{\delta_{i\ell j}}{p}-1 .
\]
For fixed \(s\), the matrix
\[
        (\Delta_\delta-\Delta_\delta^{(s)})F
\]
is a sum of independent centered random matrices indexed by wedges touching
row \(s\). For \(i\neq s\), the summand is
\[
        w_{\min\{i,s\},\ell,\max\{i,s\}}\,
        \widetilde A_{s\ell}\widetilde A_{i\ell}
        (e_se_i^\top+e_ie_s^\top)F,
\]
and the diagonal summand is
        $w_{s\ell s}\,\widetilde A_{s\ell}^2e_se_s^\top F$.
Therefore, by matrix Bernstein and a union bound over \(s\in[n]\),
\[
\begin{aligned}
        \|(\Delta_\delta-\Delta_\delta^{(s)})F\|
        \le
        C
        \left[
              \sqrt{\frac{\log n}{p}}\,
              \|\widetilde A\|_{2,\infty}
              \|\widetilde A\|_{\infty,2}
              +
              \frac{\log n}{p}
              \|\widetilde A\|_{\max}^2
        \right]\|F\|_{2,\infty}.
\end{aligned}
\]
On the standard Gaussian event,
\[
        \|\widetilde A\|_{\max}\le C\|A\|b_\infty,
        \qquad
        \|\widetilde A\|_{2,\infty}\le C\|A\|b_{2,\infty},
        \qquad
        \|\widetilde A\|_{\infty,2}\le C\|A\|b_{\infty,2}.
\]
Thus
\[
        \|(\Delta_\delta-\Delta_\delta^{(s)})F\|
        \le
        C\|A\|^2\eta_{\mathrm{loo}}\|F\|_{2,\infty}
\]
simultaneously for all \(s\in[n]\). It remains to control the  Gaussian part. Set
\[
        D_s:=\Delta_G-\Delta_G^{(s)} .
\]
Since \(\Delta_G^{(s)}\) is obtained by zeroing out row and column \(s\),
\[
        D_s
        =
        e_se_s^\top\Delta_G
        +
        \Delta_Ge_se_s^\top
        -
        e_se_s^\top\Delta_Ge_se_s^\top .
\]
Hence
\[
        D_sF
        =
        e_s(e_s^\top\Delta_GF)
        +
        (\Delta_Ge_s-\Delta_{G,ss}e_s)F_{s,:},
\]
and therefore
\[
        \|D_sF\|
        \le
        \|e_s^\top\Delta_GF\|
        +
        \|\Delta_Ge_s-\Delta_{G,ss}e_s\|_2\|F\|_{2,\infty}.
\]
Since
\[
        \Delta_G=AG^\top+GA^\top+GG^\top-m\sigma^2I_n,
\]
we have
\[
        e_s^\top\Delta_GF
        =
        A_{s,:}G^\top F
        +
        g_sA^\top F
        +
        \bigl(g_sG^\top F-m\sigma^2F_{s,:}\bigr),
\]
where \(g_s\) denotes the \(s\)-th row of \(G\). With probability
\(1-O(n^{-3})\), uniformly over \(s\in[n]\),
\[
        \|A_{s,:}G^\top F\|
        \le
        \|A_{s,:}G^\top\|_2\|F\|
        \le
        C\sigma\|A\|\sqrt{\mu_1r}\,\|F\|.
\]
Also, by a Gaussian norm tail bound,
\[
        \max_s\|g_sA^\top F\|
        \le
        C\sigma
        \left(
              \|A^\top F\|_F+\sqrt{\log n}\|A^\top F\|
        \right)
        \le
        C\sigma\|A\|
        \left(
              \|F\|_F+\sqrt{\log n}\|F\|
        \right).
\]
For the pure-noise term, write
\[
        g_sG^\top F-m\sigma^2F_{s,:}
        =
        g_sG_{-s}^\top F_{-s,:}
        +
        \bigl(\|g_s\|_2^2-m\sigma^2\bigr)F_{s,:}.
\]
Conditioning on \(G_{-s}\) and using \(g_s\sim N(0,\sigma^2I_m)\), a Gaussian
norm bound gives
\[
        \max_s
        \|g_sG_{-s}^\top F_{-s,:}\|
        \le
        C\sigma
        \left(
              \|G_{-s}^\top F_{-s,:}\|_F
              +
              \sqrt{\log n}\|G_{-s}^\top F_{-s,:}\|
        \right).
\]
Using \(\|G\|\le C\sigma(\sqrt m+\sqrt n)\), we obtain
\[
        \max_s
        \|g_sG_{-s}^\top F_{-s,:}\|
        \le
        C\sigma^2(\sqrt m+\sqrt n)
        \left(
              \|F\|_F+\sqrt{\log n}\|F\|
        \right).
\]
Moreover,
\[
        \max_s
        \left\|
        \bigl(\|g_s\|_2^2-m\sigma^2\bigr)F_{s,:}
        \right\|
        \le
        C\sigma^2\sqrt{m\log n}\|F\|_{2,\infty}.
\]
Combining the preceding estimates yields
\[
\begin{aligned}
        \|e_s^\top\Delta_GF\|
        \le
        C\|A\|^2
        \Bigg[
        &\theta
        \left(
              \sqrt{\mu_1r}\|F\|
              +
              \|F\|_F+\sqrt{\log n}\|F\|
        \right)                                                        \\
        &+
        \theta^2(\sqrt m+\sqrt n)
        \left(
              \|F\|_F+\sqrt{\log n}\|F\|
        \right)                                                        \\
        &+
        \theta^2\sqrt{m\log n}\|F\|_{2,\infty}
        \Bigg].
\end{aligned}
\]

Next we control the column term. Since
\[
        \Delta_Ge_s-\Delta_{G,ss}e_s
        =
        (I-e_se_s^\top)
        \left(
              Ag_s^\top+GA_{s,:}^\top+Gg_s^\top
        \right),
\]
we have, with probability \(1-O(n^{-3})\),
\[
        \|Ag_s^\top\|
        \le
        C\sigma\|A\|(\sqrt r+\sqrt{\log n}),
\]
\[
        \|GA_{s,:}^\top\|
        \le
        C\sigma\sqrt n\,\|A_{s,:}\|_2
        \le
        C\sigma\|A\|\sqrt{\mu_1r},
\]
and
\[
        \|(I-e_se_s^\top)Gg_s^\top\|
        \le
        C\sigma^2\sqrt{mn}.
\]
Therefore
\[
        \|\Delta_Ge_s-\Delta_{G,ss}e_s\|_2
        \le
        C\|A\|^2
        \left[
              \theta(\sqrt{\mu_1r}+\sqrt r+\sqrt{\log n})
              +
              \theta^2\sqrt{mn}
        \right].
\]
Substituting the row-action and column bounds into
\[
        \|D_sF\|
        \le
        \|e_s^\top\Delta_GF\|
        +
        \|\Delta_Ge_s-\Delta_{G,ss}e_s\|_2\|F\|_{2,\infty}
\]
gives $\|D_sF\|
        \le
        C\|A\|^2\Gamma_G(F)$.
Combining the bounds for
\((\Delta_\delta-\Delta_\delta^{(s)})F\) and \(D_sF\) gives
\[
        \|(\widetilde Z-\widetilde Z^{(s)})F\|
        \le
        C\|A\|^2
        \left(
              \eta_{\mathrm{loo}}\|F\|_{2,\infty}
              +
              \Gamma_G(F)
        \right),
\]
as claimed.
Finally, take \(F=\widehat U^{(s)}\). Since
\[
        \|\widehat U^{(s)}\|=1,
        \qquad
        \|\widehat U^{(s)}\|_F=\sqrt r,
\]
substituting \(F=\widehat U^{(s)}\) into the preceding bound gives the stated
``in particular'' estimate.
\end{proof}

With all the previous lemmas in this subsection, we conclude with the following $\ell_{2,\infty}$-bound.

\begin{lemma}\label{lem:noisy-lemma56}
Let
        $\widetilde Z_0:=\widetilde Z-m\sigma^2I_n=AA^\top+\Delta$.
Assume the hypotheses of Proposition~\ref{prop:noisy_op_loo}. Then, with probability
\(1-O(n^{-3})\),
\begin{align}
        \|\widetilde Z_0(\widehat U H-U)\|_{2,\infty}
        \le
        C\|A\|^2
        \Bigg[
        &\sqrt{\frac{\log n}{p}}\,
        b_\infty b_{2,\infty}\sqrt r
        +
        \frac{\log n}{p}\,
        b_\infty^2
        \|\widehat U H-U\|_{2,\infty}       \nonumber\\
        &+
        \kappa^2\mathcal E\|U\|_{2,\infty}
        +
        \kappa^2\mathcal E^2
        \Bigg].
\end{align}
\end{lemma}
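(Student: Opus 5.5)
We follow the structure of the noiseless Lemma~\ref{lemma:E6}. By the triangle inequality,
\[
\|\widetilde Z_0(\widehat UH-U)\|_{2,\infty}\le \|\Delta(\widehat UH-U)\|_{2,\infty}+\|AA^\top(\widehat UH-U)\|_{2,\infty}.
\]
We bound the two terms separately, working throughout on the intersection of the events from Lemmas~\ref{lemma:noisy_concentration}, \ref{lemma:spectral_gap_noisy}, \ref{lemma:Davis_Kahan_loo_noisy}, \ref{lem:noisy-loo-row} and \ref{lemma:LOO_concentration_noisy}. This intersection has probability $1-O(n^{-3})$.

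\textbf{The $AA^\top$ term.} Write $AA^\top=U\Sigma^2U^\top$. Then
\[
\|AA^\top(\widehat UH-U)\|_{2,\infty}\le \|U\|_{2,\infty}\|A\|^2\|U^\top\widehat U\widehat U^\top U-I\|=\|U\|_{2,\infty}\|A\|^2\|\sin\Theta\|^2 .
\]
By Davis--Kahan applied with the gap of Lemma~\ref{lemma:spectral_gap_noisy} (the shift $m\sigma^2I$ does not change eigenvectors), together with Lemma~\ref{lemma:noisy_concentration}, $\|\sin\Theta\|\lesssim\kappa^2\mathcal E$. Since $\kappa^2\mathcal E\lesssim 1$, this term is at most $C\kappa^2\mathcal E\|A\|^2\|U\|_{2,\infty}$.

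\textbf{The $\Delta$ term.} For each row $s$, split
\[
\Delta_{s,\cdot}(\widehat UH-U)=\Delta_{s,\cdot}(\widehat UH-\widehat U^{(s)}H^{(s)})+\Delta_{s,\cdot}(\widehat U^{(s)}H^{(s)}-U).
\]
The second piece is handled directly by Lemma~\ref{lem:noisy-loo-row}. It contributes $\sqrt{\log n/p}\,b_\infty b_{2,\infty}\sqrt r$, plus $\kappa^2\mathcal E^2$, plus $(\log n/p)b_\infty^2\|\widehat U^{(s)}H^{(s)}-U\|_{2,\infty}$. In the last contribution we replace $\|\widehat U^{(s)}H^{(s)}-U\|_{2,\infty}$ by $\|\widehat UH-U\|_{2,\infty}+\|\widehat UH-\widehat U^{(s)}H^{(s)}\|$.

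The first piece is at most $\|\Delta\|\,\|\widehat U\widehat U^\top-\widehat U^{(s)}\widehat U^{(s)\top}\|$. For the projector difference we use Lemma~\ref{lemma:Davis_Kahan_loo_noisy} to get $C\lambda_r^{-1}\|(\widetilde Z-\widetilde Z^{(s)})\widehat U^{(s)}\|$, and then bound this by Lemma~\ref{lemma:LOO_concentration_noisy}.

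\textbf{Main obstacle: the leave-one-out projector difference.} In the noiseless case this difference was proportional to $\|\widehat U^{(s)}\|_{2,\infty}$, and that allowed a self-bounding argument. Here Lemma~\ref{lemma:LOO_concentration_noisy} also produces additive Gaussian terms $\theta(\sqrt{\mu_1r}+\sqrt{\log n})+\theta^2\sqrt m(\sqrt r+\sqrt{\log n})$ that do not scale with $\|\widehat U^{(s)}\|_{2,\infty}$.

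For the self-bounding part we argue as in \eqref{eq:self_bounding}. We use $\|\widehat U^{(s)}\|_{2,\infty}\le 2\|\widehat U^{(s)}H^{(s)}\|_{2,\infty}$ and, under \eqref{eq:noise_upperbound}, $\kappa^2(\eta_{\mathrm{loo}}+\theta(\sqrt{\mu_1r}+\sqrt{\log n})+\theta^2\sqrt{mn})\le 1/2$. This yields
\[
\|\widehat UH-\widehat U^{(s)}H^{(s)}\|\lesssim \kappa^2\bigl(\eta_{\mathrm{loo}}+\ldots\bigr)\bigl(\|\widehat UH-U\|_{2,\infty}+\|U\|_{2,\infty}\bigr)+\kappa^2\,\mathrm{Gauss},
\]
where $\mathrm{Gauss}$ denotes the additive Gaussian terms above.

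Two facts then close the argument. First, $\eta_{\mathrm{loo}}$ and these Gaussian quantities are dominated by $\mathcal E$, which we check term by term against \eqref{eq:def_mathcalE}. Second, $\|\Delta\|\lesssim\mathcal E\|A\|^2$. Multiplying the projector bound by $\|\Delta\|$ therefore gives contributions of order $\kappa^2\mathcal E\|A\|^2\|U\|_{2,\infty}$, plus $\kappa^2\mathcal E^2\|A\|^2$, plus a small multiple of $\|A\|^2\|\widehat UH-U\|_{2,\infty}$. That last multiple is absorbed into the $(\log n/p)b_\infty^2\|\widehat UH-U\|_{2,\infty}$ term, or else kept with a constant less than one, for use later in the final $\gamma_1$ recursion.

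Collecting all the pieces gives the displayed bound. The delicate bookkeeping is verifying that every Gaussian leave-one-out term is indeed $\lesssim\mathcal E$. If some term, for example $\theta\sqrt{\log n}$, is not dominated, we would fall back on the bound $\theta\sqrt n\le\mathcal E$ together with $\sqrt{\log n}\le\sqrt n$.
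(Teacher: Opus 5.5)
Your decomposition is the paper's. It consists of the triangle inequality, the $AA^\top$ term handled through $\|U^\top\widehat U\widehat U^\top U-I\|\le\|\widehat U\widehat U^\top-UU^\top\|^2$, and the row split with $W:=\widehat UH-U$ and $F^{(s)}:=\widehat U^{(s)}H^{(s)}-U$. Lemma~\ref{lem:noisy-loo-row} handles $\Delta_{s,\cdot}F^{(s)}$, and $\|\Delta\|\,\|\widehat U\widehat U^\top-\widehat U^{(s)}\widehat U^{(s)\top}\|$ bounds the other piece.

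You depart from the paper on the leave-one-out projector difference, where the paper does not self-bound. It uses $\|\widehat U^{(s)}\|_{2,\infty}\le 1$ in Lemma~\ref{lemma:LOO_concentration_noisy} and makes the same term-by-term comparison you describe ($\eta_{\mathrm{loo}}\lesssim\mathcal E$, $\mu_1 r\le n$, $r,\log n\le n$). That gives $\max_s\|W-F^{(s)}\|\lesssim\kappa^2\mathcal E$ directly, so the first piece is at most $C\kappa^2\mathcal E^2\|A\|^2$. In the second piece, $\|F^{(s)}\|_{2,\infty}\le\|W\|_{2,\infty}+C\kappa^2\mathcal E$ combined with $(\log n/p)\,b_\infty^2\lesssim\mathcal E$ again costs only $\kappa^2\mathcal E^2$. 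The crude route suffices because, unlike in Lemma~\ref{lemma:E6}, the target already contains a $\kappa^2\mathcal E^2$ term, so nothing has to be proportional to $\|U\|_{2,\infty}$.

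Your self-bounding step is itself correct, but it produces an extra term whose coefficient your argument bounds only by $C\kappa^2\mathcal E\,\eta'$, where $\eta'$ is your combined small quantity; the term is $C\kappa^2\mathcal E\,\eta'\,\|A\|^2\|W\|_{2,\infty}$. Neither of your two ways of disposing of it works as written.
\begin{itemize}
\item It cannot in general be absorbed into $(\log n/p)\,b_\infty^2\|W\|_{2,\infty}$. Already at $\sigma=0$ the coefficient is $C\kappa^2E_0^2$ against $E_0^2$, where $E_0=\sqrt{\mu_1\mu_2r^2\log n/(pmn)}$, and $\kappa$ is not a universal constant. With noise, $\mathcal E\,\eta'$ also contains cross terms such as $\theta\sqrt n\,E_0$, which $(\log n/p)\,b_\infty^2$ does not dominate when $p$ is of order one.
\item Keeping it with a constant below one proves a different inequality from the displayed one.
\end{itemize}

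The repair is one line. By the same Davis--Kahan bound you already use for the $AA^\top$ term, $\|W\|_{2,\infty}\le\|W\|=\|(\widehat U\widehat U^\top-UU^\top)U\|\lesssim\kappa^2\mathcal E$. Together with $\kappa^2\eta'\le 1/2$, this puts the offending term into $\kappa^2\mathcal E^2\|A\|^2$. Alternatively, drop the self-bounding altogether, as the paper does.
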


\begin{proof}
Write
\[
        W:=\widehat U H-U,
        \qquad
        F^{(s)}:=\widehat U^{(s)}H^{(s)}-U .
\]
Since
\[
        \widetilde Z_0
        =
        \widetilde Z-m\sigma^2I_n
        =
        AA^\top+\Delta,
\]
we have
\[
        \|\widetilde Z_0W\|_{2,\infty}
        \le
        \|\Delta W\|_{2,\infty}
        +
        \|AA^\top W\|_{2,\infty}.
\]

We first bound \(\|\Delta W\|_{2,\infty}\). For each \(s\in[n]\),
\[
        e_s^\top\Delta W
        =
        \Delta_{s,\cdot}(W-F^{(s)})
        +
        \Delta_{s,\cdot}F^{(s)}.
\]
Therefore
\[
        \|\Delta W\|_{2,\infty}
        \le
        \|\Delta\|\max_s\|W-F^{(s)}\|
        +
        \max_s\|\Delta_{s,\cdot}F^{(s)}\|.
\]

We control the two terms separately. By Lemma~\ref{lemma:noisy_concentration},
\[
        \|\Delta\|
        \le
        C\mathcal E\|A\|^2.
\]
Next,
\[
        W-F^{(s)}
        =
        \widehat U\widehat U^\top U
        -
        \widehat U^{(s)}(\widehat U^{(s)})^\top U,
\]
so
\[
        \|W-F^{(s)}\|
        \le
        \|\widehat U\widehat U^\top
        -
        \widehat U^{(s)}(\widehat U^{(s)})^\top\|.
\]
By Lemma~\ref{lemma:Davis_Kahan_loo_noisy},
\[
        \|W-F^{(s)}\|
        \le
        \edit{\frac{C\|(\widetilde Z-\widetilde Z^{(s)})\widehat U^{(s)}\|}{\lambda_r}}.
\]
Using Lemma~\ref{lemma:LOO_concentration_noisy} with \(F=\widehat U^{(s)}\), together with
\[
        \|\widehat U^{(s)}\|=1,
        \qquad
        \|\widehat U^{(s)}\|_F=\sqrt r,
\]
and the noise smallness condition in Proposition~\ref{prop:noisy_op_loo}, gives
\[
        \|(\widetilde Z-\widetilde Z^{(s)})\widehat U^{(s)}\|
        \le
        C\mathcal E\|A\|^2.
\]
Since
        $\lambda_r=\sigma_r^2(A)\ge \kappa^{-2}\|A\|^2$,
we obtain
\[
        \max_s\|W-F^{(s)}\|
        \le
        C\kappa^2\mathcal E.
\]
Hence
\[
        \|\Delta\|\max_s\|W-F^{(s)}\|
        \le
        C\kappa^2\mathcal E^2\|A\|^2.
\]

It remains to bound
        $\max_s\|\Delta_{s,\cdot}F^{(s)}\|$.
By Lemma~\ref{lem:noisy-loo-row},
\[
        \|\Delta_{s,\cdot}F^{(s)}\|
        \le
        C\|A\|^2
        \left[
        \sqrt{\frac{\log n}{p}}\,
        b_\infty b_{2,\infty}\sqrt r
        +
        \frac{\log n}{p}
        b_\infty^2\|F^{(s)}\|_{2,\infty}
        +
        \kappa^2\mathcal E^2
        \right].
\]
Moreover,
\[
        \|F^{(s)}\|_{2,\infty}
        \le
        \|W\|_{2,\infty}
        +
        \|W-F^{(s)}\|
        \le
        \|W\|_{2,\infty}
        +
        C\kappa^2\mathcal E.
\]
Since
\[
        \frac{\log n}{p}b_\infty^2\lesssim \mathcal E
\]
under the definitions of \(b_\infty\) and \(\mathcal E\), 
\[
        \max_s\|\Delta_{s,\cdot}F^{(s)}\|
        \le
        C\|A\|^2
        \left[
        \sqrt{\frac{\log n}{p}}\,
        b_\infty b_{2,\infty}\sqrt r
        +
        \frac{\log n}{p}
        b_\infty^2\|W\|_{2,\infty}
        +
        \kappa^2\mathcal E^2
        \right].
\]
Combining the preceding estimates gives
\[
        \|\Delta W\|_{2,\infty}
        \le
        C\|A\|^2
        \left[
        \sqrt{\frac{\log n}{p}}\,
        b_\infty b_{2,\infty}\sqrt r
        +
        \frac{\log n}{p}
        b_\infty^2\|W\|_{2,\infty}
        +
        \kappa^2\mathcal E^2
        \right].
\]

We now bound \(\|AA^\top W\|_{2,\infty}\). Since
\(AA^\top=U\Sigma^2U^\top\) and
\[
        W
        =
        \widehat U H-U
        =
        \widehat U\widehat U^\top U-U,
\]
we have
\[
\begin{aligned}
        AA^\top W
        &=
        U\Sigma^2
        \left(
              U^\top\widehat U\widehat U^\top U-I_r
        \right).
\end{aligned}
\]
Therefore
\[
        \|AA^\top W\|_{2,\infty}
        \le
        \|U\|_{2,\infty}\|A\|^2
        \left\|
              U^\top\widehat U\widehat U^\top U-I_r
        \right\|.
\]
The last factor is quadratic in the subspace error:
\[
        \left\|
              U^\top\widehat U\widehat U^\top U-I_r
        \right\|
        \le
        \|\widehat U\widehat U^\top-UU^\top\|^2.
\]
By Davis--Kahan applied to
        $\widetilde Z
        =
        AA^\top+m\sigma^2I_n+\Delta$
and
        $AA^\top+m\sigma^2I_n$,
we obtain
\begin{align}\label{eq:DK_tildeZ}
        \|\widehat U\widehat U^\top-UU^\top\|
        \le
        C\frac{\|\Delta\|}{\lambda_r}
        \le
        C\kappa^2\mathcal E.
\end{align}
Thus
\[
        \|AA^\top W\|_{2,\infty}
        \le
        C\kappa^4\mathcal E^2\|A\|^2\|U\|_{2,\infty}.
\]
Since the assumptions imply \(C\kappa^2\mathcal E\le c\) for a sufficiently small
absolute constant \(c>0\), this becomes
\[
        \|AA^\top W\|_{2,\infty}
        \le
        C\kappa^2\mathcal E\|A\|^2\|U\|_{2,\infty}.
\]
Putting the bounds for \(\Delta W\) and \(AA^\top W\) together, and substituting
\(W=\widehat U H-U\), yields
\[
\begin{aligned}
       & \|\widetilde Z_0(\widehat U H-U)\|_{2,\infty}
        \le\\
        &C\|A\|^2
        \Bigg[
        \sqrt{\frac{\log n}{p}}\,
        b_\infty b_{2,\infty}\sqrt r
        +
        \frac{\log n}{p}\,
        b_\infty^2
        \|\widehat U H-U\|_{2,\infty}     +
        \kappa^2\mathcal E\|U\|_{2,\infty}
        +
        \kappa^2\mathcal E^2
        \Bigg].
\end{aligned}
\]
\end{proof}

We decompose the error in \eqref{eq:noise_wedge_spectral_loo} into three parts as follows.

\begin{lemma}\label{lem:three_decomposition_noisy}
Under the assumptions of Proposition~\ref{prop:noisy_op_loo}, with probability \(1-O(n^{-3})\),
we have
\begin{align}
    \|\widehat U H-U\|_{2,\infty}
    \le
    \gamma_1+\gamma_2+\gamma_3,
\end{align}
where
\begin{align}
    \gamma_1
    &:=
    \frac{2\|\widetilde Z_0(\widehat U H-U)\|_{2,\infty}}{\lambda_r},
    \qquad
    \gamma_2
    :=
    \frac{4\|\widetilde Z_0U\|_{2,\infty}\|\Delta\|}{\lambda_r^2},
    \qquad
    \gamma_3
    :=
    \frac{\|\Delta U\|_{2,\infty}}{\lambda_r}.
\end{align}
\end{lemma}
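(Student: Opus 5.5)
The argument follows the proof of Lemma~\ref{lem:three_decomposition} in the noiseless case. The one change is that the role of $Z$ is played by $\widetilde Z_0=\widetilde Z-m\sigma^2I_n=AA^\top+\Delta$. The key observation is that shifting by a multiple of the identity changes neither the eigenvectors nor their ordering, since $m\sigma^2\ge0$ shifts all eigenvalues equally. Hence $\widehat U$ is also the top-$r$ eigenvector matrix of $\widetilde Z_0$; write $\widetilde Z_0\widehat U=\widehat U\widehat\Lambda_0$ with $\widehat\Lambda_0=\widehat\Lambda-m\sigma^2I_r$. By Weyl's inequality and Lemma~\ref{lemma:spectral_gap_noisy}, $\lambda_r(\widehat\Lambda_0)\ge\lambda_r-\|\Delta\|\ge\lambda_r/2$. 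So $\widehat\Lambda_0$ is invertible with $\|\widehat\Lambda_0^{-1}\|\le2/\lambda_r$. By Lemma~\ref{lemma:Davis_Kahan_loo_noisy} we also have $\|H^{-1}\|\le2$. We work on the intersection of these events, which has probability $1-O(n^{-3})$.

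\textbf{Step 1.} Since $AA^\top U\Sigma^{-2}=U$,
\[
\|\widehat UH-U\|_{2,\infty}\le\|\widehat UH-\widetilde Z_0U\Sigma^{-2}\|_{2,\infty}+\|\Delta U\Sigma^{-2}\|_{2,\infty}.
\]
By Lemma~\ref{lem:norm_bound} the second term is at most $\gamma_3$. The first term is at most $\lambda_r^{-1}\|\widehat UH\Sigma^2-\widetilde Z_0U\|_{2,\infty}$.

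\textbf{Step 2.} We rewrite $\widehat UH\Sigma^2$ using the eigen-relation for $\widetilde Z_0$:
\[
\widehat UH\Sigma^2=\widetilde Z_0\widehat U\widehat\Lambda_0^{-1}\widehat U^\top AA^\top U
=\widetilde Z_0\widehat U\widehat\Lambda_0^{-1}(\widehat U^\top\widetilde Z_0U-\widehat U^\top\Delta U)
=\widetilde Z_0\widehat UH-\widetilde Z_0\widehat U\widehat\Lambda_0^{-1}\widehat U^\top\Delta U .
\]
Subtracting $\widetilde Z_0U$ leaves $\widetilde Z_0(\widehat UH-U)$ minus the correction term.

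\textbf{Step 3.} We bound the correction term by
\[
\|\widetilde Z_0\widehat U\|_{2,\infty}\cdot\frac{2\|\Delta\|}{\lambda_r}
\le\|\widetilde Z_0\widehat UH\|_{2,\infty}\cdot\frac{4\|\Delta\|}{\lambda_r}.
\]
We then split $\widetilde Z_0\widehat UH=\widetilde Z_0(\widehat UH-U)+\widetilde Z_0U$ and use $4\|\Delta\|/\lambda_r\le1$ from Lemma~\ref{lemma:spectral_gap_noisy}. This gives
\[
\|\widehat UH\Sigma^2-\widetilde Z_0U\|_{2,\infty}\le2\|\widetilde Z_0(\widehat UH-U)\|_{2,\infty}+\frac{4\|\widetilde Z_0U\|_{2,\infty}\|\Delta\|}{\lambda_r}.
\]
Dividing by $\lambda_r$ yields $\gamma_1+\gamma_2$, which completes the bound.

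\textbf{Main obstacle.} The only delicate point is justifying that $\widehat U$ diagonalizes $\widetilde Z_0$ with its top $r$ eigenvalues. It also needs that the Davis--Kahan/Weyl gap is measured against $AA^\top$ and not against $AA^\top+m\sigma^2I$. Both statements hold because the shift is an exact multiple of the identity. Everything after that is the noiseless algebra verbatim.
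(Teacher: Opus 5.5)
Your proposal is correct and follows the paper's proof essentially line by line. It uses the same substitution $U=AA^\top U\Sigma^{-2}$ to isolate $\gamma_3$, the same identity $\widehat UH\Sigma^2-\widetilde Z_0U=\widetilde Z_0(\widehat UH-U)-\widetilde Z_0\widehat U\widehat\Lambda_0^{-1}\widehat U^\top\Delta U$, and the same absorption via $\|H^{-1}\|\le 2$ and $4\|\Delta\|\le\lambda_r$. Your explicit remark that the identity shift $m\sigma^2 I_n$ leaves $\widehat U$ and its eigenvalue ordering unchanged justifies a step the paper takes for granted when it writes $\widetilde Z_0\widehat U=\widehat U\widehat\Lambda$.
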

\begin{proof}
Let $W:=\widehat U H-U$.
Since
\[
        \widetilde Z_0
        =
        AA^\top+\Delta,
        \qquad
        A=U\Sigma V^\top,
        \qquad
        AA^\top=U\Sigma^2U^\top ,
\]
we have
\[
        AA^\top U=U\Sigma^2 .
\]
Equivalently,
\[
        U=AA^\top U\Sigma^{-2}.
\]
Using \(AA^\top=\widetilde Z_0-\Delta\), we get
\[
\begin{aligned}
        W
        &=
        \widehat U H-U                                                    \\
        &=
        \widehat U H-AA^\top U\Sigma^{-2}                                  \\
        &=
        \widehat U H-\widetilde Z_0U\Sigma^{-2}
        +
        \Delta U\Sigma^{-2}.
\end{aligned}
\]
Therefore
\[
        \|W\|_{2,\infty}
        \le
        \|\widehat U H-\widetilde Z_0U\Sigma^{-2}\|_{2,\infty}
        +
        \|\Delta U\Sigma^{-2}\|_{2,\infty}.
\]
Since \(\|\Sigma^{-2}\|=1/\lambda_r\), the second term satisfies
\[
        \|\Delta U\Sigma^{-2}\|_{2,\infty}
        \le
        \frac{\|\Delta U\|_{2,\infty}}{\lambda_r}
        =
        \gamma_3.
\]

It remains to control $\|\widehat U H-\widetilde Z_0U\Sigma^{-2}\|_{2,\infty}$. 
Let
\[
        \widetilde Z_0\widehat U=\widehat U\widehat\Lambda
\]
be the leading \(r\)-dimensional eigendecomposition of the centered noisy
matrix \(\widetilde Z_0\). On the event from Proposition~\ref{prop:noisy_op_loo},
        $\|\Delta\|\le \frac{\lambda_r}{4}$.
Hence Weyl's inequality gives
\[
        \lambda_{\min}(\widehat\Lambda)
        \ge
        \lambda_r-\|\Delta\|
        \ge
        \frac34\lambda_r,
\]
and therefore
\(
        \|\widehat\Lambda^{-1}\|\le \frac{2}{\lambda_r}.
\)
Moreover, by the same perturbation event,
        $\|H^{-1}\|\le 2$.
Now observe that
\[
\begin{aligned}
        H\Sigma^2
        &=
        \widehat U^\top U\Sigma^2                                      =
        \widehat U^\top AA^\top U                                      =
        \widehat U^\top(\widetilde Z_0-\Delta)U                         \\
        &=
        \widehat U^\top \widetilde Z_0U-\widehat U^\top\Delta U          \\
        &=
        \widehat\Lambda H-\widehat U^\top\Delta U .
\end{aligned}
\]
Using \(\widetilde Z_0\widehat U=\widehat U\widehat\Lambda\), we obtain
\[
\begin{aligned}
        \widehat U H\Sigma^2
        &=
        \widetilde Z_0\widehat U\widehat\Lambda^{-1}H\Sigma^2            \\
        &=
        \widetilde Z_0\widehat U\widehat\Lambda^{-1}
        \left(
              \widehat\Lambda H-\widehat U^\top\Delta U
        \right)                                                         \\
        &=
        \widetilde Z_0\widehat U H
        -
        \widetilde Z_0\widehat U\widehat\Lambda^{-1}
        \widehat U^\top\Delta U .
\end{aligned}
\]
Since
\[
        \widehat U H=U+W,
\]
this becomes
\[
        \widehat U H\Sigma^2-\widetilde Z_0U
        =
        \widetilde Z_0W
        -
        \widetilde Z_0\widehat U\widehat\Lambda^{-1}
        \widehat U^\top\Delta U .
\]
Consequently,
\[
\begin{aligned}
        \|\widehat U H-\widetilde Z_0U\Sigma^{-2}\|_{2,\infty}
        &=
        \|(\widehat U H\Sigma^2-\widetilde Z_0U)\Sigma^{-2}\|_{2,\infty} \\
        &\le
        \frac{\|\widetilde Z_0W\|_{2,\infty}}{\lambda_r}
        +
        \frac{
        \|\widetilde Z_0\widehat U\|_{2,\infty}
        \|\widehat\Lambda^{-1}\|
        \|\Delta\|
        }{\lambda_r}.
\end{aligned}
\]
Here we used $\|\widehat U^\top\Delta U\|
        \le
        \|\Delta\|$.
Next,
\[
\begin{aligned}
        \|\widetilde Z_0\widehat U\|_{2,\infty}
        &\le
        \|\widetilde Z_0\widehat U H\|_{2,\infty}\|H^{-1}\|              \\
        &\le
        2\|\widetilde Z_0(U+W)\|_{2,\infty}                              \\
        &\le
        2\|\widetilde Z_0U\|_{2,\infty}
        +
        2\|\widetilde Z_0W\|_{2,\infty}.
\end{aligned}
\]
Since \(\|\widehat\Lambda^{-1}\|\le 2/\lambda_r\), we obtain
\[
\begin{aligned}
        \|\widehat U H-\widetilde Z_0U\Sigma^{-2}\|_{2,\infty}
        \le
        &\frac{\|\widetilde Z_0W\|_{2,\infty}}{\lambda_r}                +
        \frac{
        4\|\Delta\|
        \left(
              \|\widetilde Z_0U\|_{2,\infty}
              +
              \|\widetilde Z_0W\|_{2,\infty}
        \right)
        }{\lambda_r^2}.
\end{aligned}
\]
Using \(\|\Delta\|\le \lambda_r/4\), the terms involving
\(\|\widetilde Z_0W\|_{2,\infty}\) satisfy
\[
        \frac{\|\widetilde Z_0W\|_{2,\infty}}{\lambda_r}
        +
        \frac{4\|\Delta\|\|\widetilde Z_0W\|_{2,\infty}}{\lambda_r^2}
        \le
        \frac{2\|\widetilde Z_0W\|_{2,\infty}}{\lambda_r}.
\]
Therefore
\[
        \|\widehat U H-\widetilde Z_0U\Sigma^{-2}\|_{2,\infty}
        \le
        \frac{2\|\widetilde Z_0W\|_{2,\infty}}{\lambda_r}
        +
        \frac{
        4\|\widetilde Z_0U\|_{2,\infty}\|\Delta\|
        }{\lambda_r^2}.
\]
Recalling \(W=\widehat U H-U\), this gives
\[
        \|\widehat U H-\widetilde Z_0U\Sigma^{-2}\|_{2,\infty}
        \le
        \gamma_1+\gamma_2.
\]
Combining this with the earlier bound
        $\|\Delta U\Sigma^{-2}\|_{2,\infty}
        \le
        \gamma_3$
yields
\[
        \|\widehat U H-U\|_{2,\infty}
        \le
        \gamma_1+\gamma_2+\gamma_3.
\]
This proves the lemma.
\end{proof}

Finally, we finish the proof of \eqref{eq:noise_wedge_spectral_loo} in Proposition~\ref{prop:noisy_op_loo}.
\begin{proof}[Proof of \eqref{eq:noise_wedge_spectral_loo}]
Define the noiseless leave-one-out scale
        $\eta_0
        :=
        \sqrt{
        \frac{\mu_1^2\mu_2r^3\log n}{pmn}
        }$.
Define
\[
        b_{\infty}^{(0)}
        :=
        \sqrt{\frac{\mu_1\mu_2r^2}{mn}},
        \qquad
        b_{2,\infty}^{(0)}
        :=
        \sqrt{\frac{\mu_1r}{n}},
\]
Let
\[
        \widetilde Z_0:=\widetilde Z-m\sigma^2I_n
        =
        AA^\top+\Delta,
        \qquad
        W:=\widehat U H-U .
\]
By Lemma~\ref{lem:three_decomposition_noisy},
        $\|W\|_{2,\infty}
        \le
        \gamma_1+\gamma_2+\gamma_3$.
We bound these three terms separately.

First, by Lemma~\ref{lem:noisy-rowwise-DeltaF} with \(F=U\),
\begin{align}\label{eq:DeltaU}
        \|\Delta U\|_{2,\infty}
        \le
        C\|A\|^2
        \left[
              \eta_0\|U\|_{2,\infty}
              +
              \mathcal R_\sigma
        \right].
\end{align}
Indeed, the noiseless part of that bound is
\[
        \sqrt{\frac{\log n}{p}}\,
        b_\infty^{(0)}b_{2,\infty}^{(0)}\sqrt r
        +
        \edit{\frac{\log n}{p}}(b_\infty^{(0)})^2\|U\|_{2,\infty}.
\]
The first term satisfies
\[
        \edit{\sqrt{\frac{\log n}{p}}}\,
        b_\infty^{(0)}b_{2,\infty}^{(0)}\sqrt r
        =
        \sqrt{
        \edit{\frac{\mu_1^2\mu_2r^4\log n}{pmn^2}}
        }
        =
        \eta_0\sqrt{\frac rn}
        \le
        \eta_0\|U\|_{2,\infty},
\]
where we used the universal lower bound
        $\|U\|_{2,\infty}\ge \sqrt{\frac rn}$.
The second noiseless term is bounded by
\[
        \edit{\frac{\log n}{p}}(b_\infty^{(0)})^2\|U\|_{2,\infty}
        =
        \edit{\frac{\mu_1\mu_2r^2\log n}{pmn}}\|U\|_{2,\infty}
        \le
        C\eta_0\|U\|_{2,\infty},
\]
because the lower bound on $p$ implies \(\eta_0\le c\), and
\(\mu_1r\ge 1\). The remaining terms are precisely collected in
\(\mathcal R_\sigma\). Hence,
\[
        \gamma_3
        \le
        C\kappa^2
        \left[
              \eta_0\|U\|_{2,\infty}
              +
              \mathcal R_\sigma
        \right].
\]

Next, we bound \(\gamma_2\). Since
\[
        \widetilde Z_0U
        =
        AA^\top U+\Delta U
        =
        U\Sigma^2+\Delta U,
\]
we have
\[
        \|\widetilde Z_0U\|_{2,\infty}
        \le
        \|A\|^2\|U\|_{2,\infty}
        +
        \|\Delta U\|_{2,\infty}.
\]
Using \eqref{eq:DeltaU},
\[
        \|\widetilde Z_0U\|_{2,\infty}
        \le
        C\|A\|^2
        \left[
              \|U\|_{2,\infty}
              +
              \eta_0\|U\|_{2,\infty}
              +
              \mathcal R_\sigma
        \right].
\]
Moreover, by Lemma~\ref{lemma:noisy_concentration},
\[
        \|\Delta\|
        \le
        C\|A\|^2(E_0+E_\sigma),
\]
where
\[
        E_0
        :=
        \sqrt{
        \frac{\mu_1\mu_2r^2\log n}{pmn}
        }
\]
is the noiseless operator scale. Since
        $E_0\le C\eta_0$
under  the lower bound on $p$, and since
\[
        \lambda_r=\sigma_r^2(A)\ge \kappa^{-2}\|A\|^2,
\]
we obtain
\[
\begin{aligned}
        \gamma_2
        &\le
        C\kappa^4
        (E_0+E_\sigma)
        \left[
              \|U\|_{2,\infty}
              +
              \eta_0\|U\|_{2,\infty}
              +
              \mathcal R_\sigma
        \right]                                                    \\
        &\le
        C\kappa^4
        \left[
              \eta_0\|U\|_{2,\infty}
              +
              E_\sigma\|U\|_{2,\infty}
              +
              \mathcal R_\sigma
        \right],
\end{aligned}
\]
where the products involving \(E_0\mathcal R_\sigma\), \(E_\sigma\mathcal R_\sigma\),
and \(\eta_0^2\|U\|_{2,\infty}\) are absorbed using the condition \eqref{eq:noise_upperbound} in Proposition~\ref{prop:noisy_op_loo}.

It remains to control \(\gamma_1\). By Lemma~\ref{lem:noisy-lemma56},
\[
\begin{aligned}
        \|\widetilde Z_0W\|_{2,\infty}
        \le
        C\|A\|^2
        \Big[
        &(\eta_0+E_\sigma)\|W\|_{2,\infty}
        +
        \kappa^2\eta_0\|U\|_{2,\infty}                         \\
        &+
        \mathcal R_\sigma
        +
        E_\sigma\|U\|_{2,\infty}
        +
        E_\sigma^2
        \Big].
\end{aligned}
\]
Dividing by \(\lambda_r\ge \kappa^{-2}\|A\|^2\), we get
\[
\begin{aligned}
        \gamma_1
        \le
        C\kappa^2(\eta_0+E_\sigma)\|W\|_{2,\infty}
        +
        C\kappa^4
        \left[
              \eta_0\|U\|_{2,\infty}
              +
              \mathcal R_\sigma
              +
              E_\sigma\|U\|_{2,\infty}
              +
              E_\sigma^2
        \right].
\end{aligned}
\]
By the  lower bound on $p$ and \eqref{eq:noise_upperbound}, we may choose constants so that
\begin{align}\label{eq:eta0E}
        C\kappa^2(\eta_0+E_\sigma)\le \frac12.
\end{align}
Combining the bounds for \(\gamma_1,\gamma_2,\gamma_3\) gives
\[
\begin{aligned}
        \|W\|_{2,\infty}
        &\le
        \gamma_1+\gamma_2+\gamma_3                                      \\
        &\le
        \frac12\|W\|_{2,\infty}
        +
        C\kappa^4
        \left[
              \eta_0\|U\|_{2,\infty}
              +
              \mathcal R_\sigma
              +
              E_\sigma\|U\|_{2,\infty}
              +
              E_\sigma^2
        \right].
\end{aligned}
\]
Absorbing the first term into the left-hand side yields
\begin{align}\label{eq:hatUH}
        \|\widehat U H-U\|_{2,\infty}
        =
        \|W\|_{2,\infty}
        \le
        C\kappa^4
        \left[
              \eta_0\|U\|_{2,\infty}
              +
              \mathcal R_\sigma
              +
              E_\sigma\|U\|_{2,\infty}
              +
              E_\sigma^2
        \right].
\end{align}
 We have
\[
        \|\widehat UR-U\|_{2,\infty}
        \le
        \|\widehat U H-U\|_{2,\infty}
        +
        \|\widehat U(R-H)\|_{2,\infty}.
\]
By the perturbation identities for \(H\) and \(\operatorname{sgn}(H)\),
\[
        \|R-H\|
        \le
        C\|\sin\Theta\|^2
        \le
        C\left(\frac{\|\Delta\|}{\lambda_r}\right)^2
        \le
        C\kappa^4(E_0+E_\sigma)^2 .
\]
Also, since \(\|H^{-1}\|\le 2\),
\[
        \|\widehat U\|_{2,\infty}
        =
        \|\widehat U HH^{-1}\|_{2,\infty}
        \le
        2\|\widehat U H\|_{2,\infty}
        \le
        2\left(
              \|\widehat U H-U\|_{2,\infty}
              +
              \|U\|_{2,\infty}
        \right).
\]
Therefore
\[
\begin{aligned}
        \|\widehat U(R-H)\|_{2,\infty}
        &\le
        C\kappa^4(E_0+E_\sigma)^2
        \left(
              \|\widehat U H-U\|_{2,\infty}
              +
              \|U\|_{2,\infty}
        \right)                                                        \\
        &\le
        C\kappa^4
        \left[
              \eta_0\|U\|_{2,\infty}
              +
              E_\sigma\|U\|_{2,\infty}
              +
              E_\sigma^2
        \right],
\end{aligned}
\]
where we again used \(E_0\le C\eta_0\), \eqref{eq:eta0E} and \eqref{eq:hatUH}.
Combining the two estimates gives
\[
        \|\widehat UR-U\|_{2,\infty}
        \le
        C\kappa^4
        \left[
              \eta_0\|U\|_{2,\infty}
              +
              \mathcal R_\sigma
              +
              E_\sigma\|U\|_{2,\infty}
              +
              E_\sigma^2
        \right],
\]
which proves \eqref{eq:noise_wedge_spectral_loo}.
\end{proof}

\subsection{Concentration of \(\widetilde Y\) after a delocalized projection}

Recall \(Q=\widehat U\widehat U^\top\) from Algorithm~2.
From the noisy \(\ell_{2,\infty}\)-subspace recovery bound, we first show that
\(Q\) has bounded \(\ell_{2,\infty}\)-norm.

\begin{lemma} \label{lem:delocalization_Q_noisy}
{Assume the hypotheses of Proposition~\ref{prop:noisy_op_loo}.}
Assume
\[
        p\ge
        \frac{C_0 \kappa^8\mu_1^2\mu_2r^3\log n}{n^k}
\]
for a sufficiently large universal constant \(C_0>0\). Assume
\begin{align}\label{eq:stronger_noise_condition}
        {E_\sigma\le c_0\kappa^{-4}.}
        \end{align}
Then, with probability at least \(1-O(n^{-3})\),
\begin{align}
        \|Q\|_{2,\infty}
        \le
        2\sqrt{\frac{\mu_1r}{n}}
        +
        C\kappa^4
        \left(
              \mathcal R_\sigma
              +
              E_\sigma^2
        \right).
\end{align}
\end{lemma}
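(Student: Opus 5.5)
The plan mirrors the proof of Lemma~\ref{lem:delocalization_Q} in the noiseless case, with Proposition~\ref{prop:noisy_op_loo} replacing Theorem~7. Three steps suffice: pass from $Q$ to $\widehat U$, bound $\|\widehat U\|_{2,\infty}$ by the aligned error, and absorb the noiseless and $E_\sigma\|U\|_{2,\infty}$ parts of that error.

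\emph{Step 1 (reduce to $\widehat U$).} Since $R=\operatorname{sgn}(H)$ is orthogonal, $\|\widehat U\|_{2,\infty}=\|\widehat UR\|_{2,\infty}$. Moreover $\|\widehat U\|=1$, so Lemma~\ref{lem:norm_bound} gives
\[
\|Q\|_{2,\infty}=\|\widehat U\widehat U^\top\|_{2,\infty}\le\|\widehat U\|_{2,\infty}.
\]
It therefore suffices to bound $\|\widehat UR\|_{2,\infty}$. By the triangle inequality this is at most $\|U\|_{2,\infty}+\|\widehat UR-U\|_{2,\infty}$, and incoherence gives $\|U\|_{2,\infty}\le\sqrt{\mu_1r/n}$.

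\emph{Step 2 (invoke the noisy $\ell_{2,\infty}$ bound).} The lower bound $p\ge C_0\kappa^8\mu_1^2\mu_2r^3\log n/n^k$ with $m=n^{k-1}$ is at least as strong as the hypothesis needed for \eqref{eq:noise_wedge_spectral_loo}. Hence, on the probability $1-O(n^{-3})$ event of Proposition~\ref{prop:noisy_op_loo},
\[
\|\widehat UR-U\|_{2,\infty}\le C\kappa^4\Big[\eta_0\|U\|_{2,\infty}+\mathcal R_\sigma+E_\sigma\|U\|_{2,\infty}+E_\sigma^2\Big],\qquad \eta_0=\sqrt{\tfrac{\mu_1^2\mu_2r^3\log n}{pn^k}}.
\]

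\emph{Step 3 (absorb terms).} The $\kappa^8$ factor in the $p$ condition is what makes $C\kappa^4\eta_0\le 1/2$, and this is the only place it is used. Assumption \eqref{eq:stronger_noise_condition} gives $C\kappa^4E_\sigma\le 1/2$ after shrinking $c_0$. Together these show that $C\kappa^4(\eta_0+E_\sigma)\|U\|_{2,\infty}$ is at most $\|U\|_{2,\infty}$. The remaining terms $C\kappa^4(\mathcal R_\sigma+E_\sigma^2)$ are kept explicitly. Summing,
\[
\|Q\|_{2,\infty}\le 2\sqrt{\mu_1 r/n}+C\kappa^4(\mathcal R_\sigma+E_\sigma^2).
\]

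The main obstacle is mild: checking that the constants in Proposition~\ref{prop:noisy_op_loo} allow the $E_\sigma\|U\|_{2,\infty}$ term to be absorbed, since it cannot be placed in the additive remainder without losing the leading constant $2$. Assumption \eqref{eq:stronger_noise_condition} is imposed exactly for this purpose.
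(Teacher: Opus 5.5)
Your proposal is correct and follows the paper's proof essentially step for step. You reduce $\|Q\|_{2,\infty}$ to $\|\widehat UR\|_{2,\infty}$, apply the triangle inequality together with the noisy $\ell_{2,\infty}$ bound \eqref{eq:noise_wedge_spectral_loo}, and absorb $C\kappa^4(\eta_0+E_\sigma)\|U\|_{2,\infty}$ into $\|U\|_{2,\infty}$ using $\eta_0\le C_0^{-1/2}\kappa^{-4}$ and \eqref{eq:stronger_noise_condition}; the only cosmetic difference is that you use the inequality $\|\widehat U\widehat U^\top\|_{2,\infty}\le\|\widehat U\|_{2,\infty}\|\widehat U\|$, whereas the paper notes this is actually an equality.
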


\begin{proof}
Recall
\[
        H:=\widehat U^\top U,
        \qquad
       R:=\operatorname{sgn}(H).
\]
Since \(R\) is orthogonal,
\[
        \|\widehat U\|_{2,\infty}
        =
        \|\widehat UR\|_{2,\infty}.
\]
Therefore,
\[
        \|Q\|_{2,\infty}
        =
        \|\widehat U\widehat U^\top\|_{2,\infty}
        =
        \|\widehat U\|_{2,\infty}
        =
        \|\widehat UR\|_{2,\infty}.
\]
By the triangle inequality,
\begin{align}\label{eq:triangleQ}
        \|Q\|_{2,\infty}
        \le
        \|U\|_{2,\infty}
        +
        \|\widehat UR-U\|_{2,\infty}.
\end{align}

From \eqref{eq:noise_wedge_spectral_loo}, with
probability \(1-O(n^{-3})\),
\[
        \|\widehat UR-U\|_{2,\infty}
        \le
        C\kappa^4
        \left[
              \eta_0\|U\|_{2,\infty}
              +
              \mathcal R_\sigma
              +
              E_\sigma\|U\|_{2,\infty}
              +
              E_\sigma^2
        \right],
\]
where$\eta_0
        :=
        \sqrt{
        \frac{\mu_1^2\mu_2r^3\log n}{pn^k}
        }$.
The lower bound on $p$ gives
        $\eta_0
        \le
        C_0^{-1/2}\kappa^{-4}$.
Thus, by choosing \(C_0\) sufficiently large,
        $C\kappa^4\eta_0\le \frac12$.

Next, the condition \eqref{eq:stronger_noise_condition} implies
for \(c_0>0\) sufficiently small,
        $C\kappa^4E_\sigma\le \frac12$.
Consequently,
\[
\begin{aligned}
        \|\widehat UR-U\|_{2,\infty}                                 
        &\le
        \|U\|_{2,\infty}
        +
        C\kappa^4
        \left(
              \mathcal R_\sigma+E_\sigma^2
        \right).
\end{aligned}
\]
The claim then follows from \eqref{eq:triangleQ}.
\end{proof}

Recall in Algorithm~2, $\wt Y=\unfold_1(\wt{\mathcal Y})$ in (10).   Denote 
\begin{align}
    \mathcal Q(\wt Y)=Q \wt Y (Q \otimes \cdots \otimes Q) \in \R^{n\times n^{k-1}}, \quad \mathcal Q(A)=Q A (Q \otimes \cdots \otimes Q).
\end{align}

We will  show a concentration inequality of $\mathcal Q(\wt Y-A)$, where the error bound depends on $\|Q\|_{2,\infty}$:
\begin{lemma} \label{lemma:Projection_Y_noisy}
With probability at least
\(1-O(n^{-3})\),
\begin{align}
        \|\mathcal Q(\widetilde Y-A)\|
        \le
        &\;
        Cr\|A\|\|Q\|_{2,\infty}^{k}\sqrt{\mu_1\mu_2}
        \left(
              \sqrt{\frac{k\log n}{q}}
              +
              \frac{k\log n}{q\,n^{k/2}}
        \right)                                      \nonumber\\
        &\quad+
        C\sigma
        \left(
              \sqrt{\frac{kr^{k-1}\log n}{q}}
              +
              \frac{(k\log n)^{1.5}}{q}
              \|Q\|_{2,\infty}^{k}
        \right).
\end{align}
\end{lemma}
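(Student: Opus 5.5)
The plan is to split the centered observation into a sampling part and a noise part,
\[
\widetilde Y-A=(Y-A)+N,\qquad Y:=\unfold_1\bigl(q^{-1}\omega\odot T\bigr),\qquad N:=\unfold_1\bigl(q^{-1}\omega\odot H\bigr),
\]
and to bound $\mathcal Q(Y-A)$ and $\mathcal Q(N)$ separately. Throughout, condition on the wedge stage, so that $Q$ is a fixed projection of rank $r$ whose $\ell_{2,\infty}$ norm is bounded by Lemma~\ref{lem:delocalization_Q_noisy}. This conditioning is legitimate because $\omega$ and $H$ are independent of $G$ and $\delta$.

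\textbf{Step 1 (sampling part).} The term $\mathcal Q(Y-A)$ is exactly the object of Lemma~\ref{lemma:Projection_Y}, applied with the conditional $Q$ kept general rather than replaced by $2\sqrt{\mu_1 r/n}$. Write $\mathcal Q(Y-A)=\sum_{i,j}(Y_{ij}-A_{ij})(Qe_i)((Q^{\otimes(k-1)})e_j)^\top$. Each summand has norm at most $2q^{-1}\|A\|_{\max}\|Q\|_{2,\infty}^k$, and the variance proxy is at most $q^{-1}\mu_1\mu_2r^2\|A\|^2\|Q\|_{2,\infty}^{2k}$. Matrix Bernstein with dimension factor $\log(n+n^{k-1})\lesssim k\log n$ then gives the first line of the bound, with failure probability $O(n^{-3})$.

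\textbf{Step 2 (noise part).} The matrix $\mathcal Q(N)$ is a sum of independent mean-zero rank-one terms $q^{-1}\omega_{ij}H_{ij}(Qe_i)((Q^{\otimes(k-1)})e_j)^\top$. These summands are unbounded, so I truncate. On an event of probability $1-O(n^{-3})$ we have $\max|H_{ij}|\le C\sigma\sqrt{k\log n}$. Apply matrix Bernstein to the truncated, recentered variables; the recentering bias is negligible because Gaussian tails are superpolynomially small. The uniform bound on summands is then $q^{-1}\sigma\sqrt{k\log n}\,\|Q\|_{2,\infty}^k$, and after multiplying by the Bernstein factor $k\log n$ this produces the $(k\log n)^{1.5}q^{-1}\sigma\|Q\|_{2,\infty}^k$ term.

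For the variance I will not use entrywise $\ell_{2,\infty}$ bounds, since those lose dimension. Instead I use the projection structure directly:
\[
\sum_{i,j}\E\bigl[q^{-2}\omega_{ij}H_{ij}^2\bigr]\,(Qe_i)(Qe_i)^\top\,\bigl\|Q^{\otimes(k-1)}e_j\bigr\|^2=q^{-1}\sigma^2\,Q\,\operatorname{tr}\bigl(Q^{\otimes(k-1)}\bigr)=q^{-1}\sigma^2 r^{k-1}Q,
\]
whose norm is $q^{-1}\sigma^2r^{k-1}$. The transposed proxy is symmetric in the same way: $q^{-1}\sigma^2\operatorname{tr}(Q)\,Q^{\otimes(k-1)}$, with norm $q^{-1}\sigma^2 r$. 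The variance term is therefore $\sigma\sqrt{k r^{k-1}\log n/q}$, matching the statement.

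\textbf{Step 3 (combine).} Add the two bounds by the triangle inequality and take a union bound over the Gaussian truncation event and the two Bernstein events; the total failure probability is $O(n^{-3})$.

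The main obstacle is the truncation in Step 2, namely making sure the truncated Bernstein statement gives failure probability $n^{-3}$ with the stated powers of $\log n$. The fallback is to condition on $\omega$ and use a Gaussian matrix-series bound (noncommutative Khintchine), which yields the same variance term with no truncation at all. In that route the $(k\log n)^{1.5}$ term only absorbs the fluctuation of the realized variance $\sum_{ij}q^{-2}\omega_{ij}(\cdots)$ around its mean, which can be controlled by scalar Bernstein using the $\|Q\|_{2,\infty}^{2k}$ bound on each weight.
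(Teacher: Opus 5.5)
Your proposal is correct and matches the paper's proof step for step: after conditioning on the wedge stage, you split into the same sampled-signal and sampled-noise terms, and you use matrix Bernstein for the first and Gaussian truncation at level $C\sigma\sqrt{k\log n}$ plus matrix Bernstein for the second, with the same projection-based variance proxies $q^{-1}\sigma^2 r^{k-1}$ and $q^{-1}\sigma^2 r$. The one cosmetic difference is that no recentering is needed: truncating a centered Gaussian on a symmetric set leaves it exactly mean zero, which is how the paper handles this.
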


\begin{proof}
\edit{Condition on the wedge-stage sigma-field generated by \(G\) and \(\delta\).  Then \(Q\) is fixed, while the uniform sampling indicators \(\omega\) and the refinement noise tensor \(\mathcal H\) are independent of \(Q\).  The following concentration bounds are therefore valid conditionally on the wedge stage; integrating over that stage gives the stated unconditional probability.}
Write
\[
        \widetilde{\mathcal Y}
        =
        q^{-1}\omega\odot(\edit{T}+\mathcal H),
\]
where
\[
        \omega_{\bm i}\sim\mathrm{Bernoulli}(q),
        \qquad
        \mathcal H_{\bm i}\stackrel{\mathrm{i.i.d.}}{\sim}N(0,\sigma^2),
        \qquad
        \bm i=(i_1,\ldots,i_k)\in[n]^k.
\]
After unfolding along the first mode,
        $\widetilde Y-A
        =
        S_T+S_H$,
where
\[
        S_T
        :=
        \unfold_1(q^{-1}\omega\odot\edit{T}-\edit{T}),
        \qquad
        S_H
        :=
        \unfold_1(q^{-1}\omega\odot\mathcal H).
\]
Hence
\[
        \mathcal Q(\widetilde Y-A)
        =
        QS_T(Q\otimes\cdots\otimes Q)
        +
        QS_H(Q\otimes\cdots\otimes Q).
\]
We bound the two terms separately.

\medskip
\noindent
\textbf{Step 1: the sampled-signal term.}
For \(\bm i=(i_1,\ldots,i_k)\), define
\[
        u_{i_1}:=Qe_{i_1},
        \qquad
        v_{i_2,\ldots,i_k}
        :=
        (Qe_{i_2})\otimes\cdots\otimes(Qe_{i_k}).
\]
Then
\[
        QS_T(Q\otimes\cdots\otimes Q)
        =
        \sum_{\bm i\in[n]^k}
        (q^{-1}\omega_{\bm i}-1)\edit{T}_{\bm i}\,
        u_{i_1}v_{i_2,\ldots,i_k}^{\top}.
\]
The summands are independent and centered. Moreover,
\[
        \|u_{i_1}\|_2\le \|Q\|_{2,\infty},
        \qquad
        \|v_{i_2,\ldots,i_k}\|_2
        \le
        \|Q\|_{2,\infty}^{k-1}, \qquad
        \|\edit{T}\|_\infty
        =
        \|A\|_{\max}
        \le
        r\|A\|\sqrt{\frac{\mu_1\mu_2}{n^k}}.
\]
Thus each summand has operator norm at most
\[
        R_T
        \le
        Cq^{-1}r\|A\|
        \sqrt{\frac{\mu_1\mu_2}{n^k}}\,
        \|Q\|_{2,\infty}^{k}.
\]
The matrix Bernstein variance proxy satisfies
\[
        \nu_T^2
        \le
        Cq^{-1}\mu_1\mu_2r^2\|A\|^2\|Q\|_{2,\infty}^{2k}.
\]
Applying matrix Bernstein 
we obtain, with probability at least \(1-O(n^{-3})\),
\[
\begin{aligned}
        \|QS_T(Q\otimes\cdots\otimes Q)\|
        \le
        Cr\|A\|\|Q\|_{2,\infty}^{k}\sqrt{\mu_1\mu_2}
        \left(
              \sqrt{\frac{k\log n}{q}}
              +
              \frac{k\log n}{q\,n^{k/2}}
        \right).
\end{aligned}
\]

\medskip
\noindent
\textbf{Step 2: the Gaussian-noise term.}
We now control
\[
        QS_H(Q\otimes\cdots\otimes Q)
        =
        \sum_{\bm i\in[n]^k}
        q^{-1}\omega_{\bm i}\mathcal H_{\bm i}\,
        u_{i_1}v_{i_2,\ldots,i_k}^{\top}.
\]
The entries \(\mathcal H_{\bm i}\) are Gaussian and hence unbounded, so we use
a truncation argument before applying matrix Bernstein.
Let
\[
        L:=C_0\sigma\sqrt{k\log n}
\]
for a sufficiently large universal constant \(C_0>0\), and define the truncated
Gaussian variables
\[
        \overline{\mathcal H}_{\bm i}
        :=
        \mathcal H_{\bm i}\mathbf 1\{|\mathcal H_{\bm i}|\le L\}.
\]
Since the truncation set is symmetric and
\(\mathcal H_{\bm i}\sim N(0,\sigma^2)\), we have
\[
        \mathbb E\overline{\mathcal H}_{\bm i}=0,
        \qquad
        \mathbb E\overline{\mathcal H}_{\bm i}^{\,2}\le \sigma^2 .
\]
Define
\[
        \overline S_H
        :=
        \sum_{\bm i\in[n]^k}
        q^{-1}\omega_{\bm i}\overline{\mathcal H}_{\bm i}\,
        u_{i_1}v_{i_2,\ldots,i_k}^{\top}.
\]
The summands of \(\overline S_H\) are independent and centered.
We first apply matrix Bernstein to \(\overline S_H\). Each summand satisfies
\[
\begin{aligned}
        \left\|
        q^{-1}\omega_{\bm i}\overline{\mathcal H}_{\bm i}
        u_{i_1}v_{i_2,\ldots,i_k}^{\top}
        \right\|
        &\le
        q^{-1}L
        \|u_{i_1}\|_2
        \|v_{i_2,\ldots,i_k}\|_2        \\
        &\le
        q^{-1}L\|Q\|_{2,\infty}^k 
        \le
        Cq^{-1}\sigma\sqrt{k\log n}\,
        \|Q\|_{2,\infty}^k .
        \end{aligned}
\]
Next we compute the variance proxy. For the left variance term,
\[
\begin{aligned}
        &\left\|
        \sum_{\bm i}
        \mathbb E
        \left[
        q^{-2}\omega_{\bm i}\overline{\mathcal H}_{\bm i}^{\,2}
        \|v_{i_2,\ldots,i_k}\|_2^2
        u_{i_1}u_{i_1}^{\top}
        \right]
        \right\|                                                   \\
        &\qquad\le
        q^{-1}\sigma^2
        \left(
              \sum_{i=1}^n\|Qe_i\|_2^2
        \right)^{k-1}
        \left\|
              \sum_{i=1}^n Qe_i e_i^\top Q
        \right\|                                                    \\
        &\qquad=
        q^{-1}\sigma^2 r^{k-1}\|Q\|
        \le
        q^{-1}\sigma^2 r^{k-1}.
\end{aligned}
\]
For the right variance term,
\[
\begin{aligned}
        &\left\|
        \sum_{\bm i}
        \mathbb E
        \left[
        q^{-2}\omega_{\bm i}\overline{\mathcal H}_{\bm i}^{\,2}
        \|u_{i_1}\|_2^2
        v_{i_2,\ldots,i_k}v_{i_2,\ldots,i_k}^{\top}
        \right]
        \right\|                                                   \\
        &\qquad\le
        q^{-1}\sigma^2
        \left(
              \sum_{i=1}^n\|Qe_i\|_2^2
        \right)
        \left\|
              \sum_{i_2,\ldots,i_k}
              v_{i_2,\ldots,i_k}v_{i_2,\ldots,i_k}^{\top}
        \right\|                                                    \\
        &\qquad=
        q^{-1}\sigma^2 r
        \left\|
              Q\otimes\cdots\otimes Q
        \right\|
        \le
        q^{-1}\sigma^2 r .
\end{aligned}
\]
Since \(k\ge 2\) and \(r\ge 1\), the variance proxy satisfies
        $\nu_H^2
        \le
        Cq^{-1}\sigma^2 r^{k-1}$.
Applying matrix Bernstein to \(\overline S_H\)
gives, with probability at least \(1-O(n^{-3})\),
\[
        \|\overline S_H\|
        \le
        C\sigma
        \left(
              \sqrt{\frac{kr^{k-1}\log n}{q}}
              +
              \frac{(k\log n)^{3/2}}{q}
              \|Q\|_{2,\infty}^{k}
        \right).
\]

It remains to remove the truncation. By the Gaussian tail bound 
\[
\begin{aligned}
        \mathbb P\left(
        \max_{\bm i\in[n]^k}|\mathcal H_{\bm i}|>L
        \right)
        &\le
        2n^k\exp\left(-\frac{L^2}{2\sigma^2}\right) =
        O(n^{-3}).
\end{aligned}
\]
 On the complementary event,
\[
        \overline{\mathcal H}_{\bm i}=\mathcal H_{\bm i}
        \qquad
        \text{for all } \bm i\in[n]^k,
\]
and hence $\overline S_H
        =
        QS_H(Q\otimes\cdots\otimes Q)$.
Therefore, with probability at least \(1-O(n^{-3})\),
\[
        \|QS_H(Q\otimes\cdots\otimes Q)\|
        \le
        C\sigma
        \left(
              \sqrt{\frac{kr^{k-1}\log n}{q}}
              +
              \frac{(k\log n)^{3/2}}{q}
              \|Q\|_{2,\infty}^{k}
        \right).
\]
Combining two steps gives
\[
\begin{aligned}
        \|\mathcal Q(\widetilde Y-A)\|
        \le
        &\;
        Cr\|A\|\|Q\|_{2,\infty}^{k}\sqrt{\mu_1\mu_2}
        \left(
              \sqrt{\frac{k\log n}{q}}
              +
              \frac{k\log n}{q\,n^{k/2}}
        \right)                                      \\
        &\quad+
        C\sigma
        \left(
              \sqrt{\frac{kr^{k-1}\log n}{q}}
              +
              \frac{(k\log n)^{3/2}}{q}
              \|Q\|_{2,\infty}^{k}
        \right).
\end{aligned}
\]
This proves the lemma.
\end{proof}

\subsection{\edit{Completion of the proof of Theorem~11}}
{We now prove Theorem~11 using the simplified assumptions in its statement.}
\begin{proof}
{
Set
\[
        E_0:=\sqrt{\frac{\mu_1\mu_2r^2\log n}{p n^k}} .
\]
Set also
\[
        B_1:=
        \frac{(\mu_1r)^{1/4}p^{1/2}}
             {\sqrt k\,n^{k/2+1/4}(\log n)^{1/2}},
        \qquad
        B_2:=
        \frac{(\mu_1\mu_2)^{1/4}r^{1/2}p^{1/4}}
             {\sqrt k\,n^{3k/4}(\log n)^{1/4}} .
\]
The displayed assumptions imply the following two estimates:
\[
        B_1\lesssim
        \min\left\{
        \frac{1}{\alpha_{n,k}\sqrt n},
        \frac{\sqrt p}{(\alpha_{n,k}\sqrt{\mu_1}+\sqrt{\mu_2})\sqrt{r\log n}},
        \sqrt{\frac{p}{k(\log n)^2}},
        \frac{\sqrt{\mu_1r}}{n^{k/2}\sqrt{k\log n}}
        \right\}
\]
and
\[
        B_2\lesssim
        \alpha_{n,k}^{-1/2}
        \left(\frac{p}{n^k\log n}\right)^{1/4}.
\]
Moreover,
\[
        \theta n^{k/2}
        +
        \theta^2\frac{k n^k\log n}{p}
        \lesssim
        c\kappa^{-4}E_0,
        \qquad
        E_0\le C_0^{-1/2}\kappa^{-4}.
\]
The first two displays verify the noise smallness condition
\eqref{eq:noise_upperbound} in Proposition~\ref{prop:noisy_op_loo}.  The last
display uses the upper bound on \(p n^k\) for the linear term and the definition of
\(B_2\) for the quadratic term.  Since the remaining noise terms in
\eqref{eq:def_mathcalE} are smaller than these two leading terms under the
lower bound on \(p\), \(p\le1\), \(k\ge3\), and
\(\mu_1r\le c n,\mu_2r\le c n^{k-1}\) (in particular,
\(\alpha_{n,k}\sqrt n\lesssim n^{k/2}\) and
\(\alpha_{n,k}\sqrt{mn}\lesssim k n^k\log n/p\)), we obtain
\[
        \mathcal E
        \lesssim
        E_0,
        \qquad
        E_\sigma\lesssim E_0.
\]
Thus \eqref{eq:stronger_noise_condition} holds after choosing \(C_0\) large.
Finally, substituting \(\|U\|_{2,\infty}\le\sqrt{\mu_1r/n}\) and \(m=n^{k-1}\)
into Proposition~\ref{prop:noisy_op_loo} gives
\[
        \mathcal R_\sigma+E_\sigma^2
        \lesssim
        \theta n^{(k-1)/2}\sqrt{k\log n}
        +
        \theta^2\frac{k n^k\log n}{p}
        +
        E_\sigma^2
        \lesssim
        c\kappa^{-4}\sqrt{\frac{\mu_1r}{n}} .
\]
Lemma~\ref{lem:delocalization_Q_noisy} therefore gives, for a sufficiently
large universal constant $C>1$,
\[
        \|Q\|_{2,\infty}
        \leq C
        \sqrt{\frac{\mu_1r}{n}} .
\]
}
Let
\[
        \widehat A:=\unfold_1(\hat T),
        \qquad
        A:=\unfold_1(T).
\]
By Algorithm~2,
\[
        \widehat A
        =
        Q\widetilde Y(Q\otimes\cdots\otimes Q).
\]
Define the projected true unfolding
\[
        A^\star
        :=
        QA(Q\otimes\cdots\otimes Q).
\]
Then
\[
        \|T-\hat T\|_F
        =
        \|A-\widehat A\|_F
        \le
        \|A-A^\star\|_F
        +
        \|A^\star-\widehat A\|_F .
\]

\medskip
\noindent
\textbf{Step 1: projection error.}
A telescoping expansion over the \(k\) tensor modes gives
\[
        \|A-A^\star\|_F
        \le
        k\sqrt r\,\|(I-Q)A\|.
\]
Now \(Q=\widehat U\widehat U^\top\) and \(A=U\Sigma V^\top\). Therefore
\[
        \|(I-Q)A\|
        =
        \|(I-\widehat U\widehat U^\top)U\Sigma V^\top\|
        \le
        \|A\|\,
        \|\widehat U\widehat U^\top-UU^\top\|.
\]
Recall from \eqref{eq:DK_tildeZ},
\[
        \|\widehat U\widehat U^\top-UU^\top\|
        \le
        C\kappa^2\mathcal E.
\]
{
Consequently, using \(\mathcal E\lesssim E_0\),
\[
        \|A-A^\star\|_F
        \lesssim
        k\kappa^2
        \sqrt{\frac{\mu_1\mu_2r^3\log n}{p n^k}}
        \|T\|_F .
\]
}

\medskip
\noindent
\textbf{Step 2: denoising error.}
Next,
        $A^\star-\widehat A
        =
        Q(A-\widetilde Y)(Q\otimes\cdots\otimes Q)$.
Since the left projection \(Q\) has rank at most \(r\),
$\|A^\star-\widehat A\|_F
        \le
        \sqrt r\,
        \|\mathcal Q(\widetilde Y-A)\|$.
By Lemma~\ref{lemma:Projection_Y_noisy}, with probability at least \(1-O(n^{-3})\),
\[
\begin{aligned}
        \|\mathcal Q(\widetilde Y-A)\|
        \le
        &\;
        Cr\|A\|\|Q\|_{2,\infty}^{k}\sqrt{\mu_1\mu_2}
        \left(
              \sqrt{\frac{k\log n}{q}}
              +
              \frac{k\log n}{q\,n^{k/2}}
        \right)                                      \\
        &\quad+
        C\sigma
        \left(
              \sqrt{\frac{kr^{k-1}\log n}{q}}
              +
              \frac{(k\log n)^{3/2}}{q}
              \|Q\|_{2,\infty}^{k}
        \right).
\end{aligned}
\]
{
Using \(\|Q\|_{2,\infty}\leq C\sqrt{\mu_1r/n}\), \(\|A\|\le \|T\|_F\), and the lower bound on \(q\), this becomes
\[
\begin{aligned}
        \|A^\star-\widehat A\|_F
        \lesssim&
        C^k\mu_1^{(k+1)/2}\mu_2^{1/2}r^{(k+3)/2}
        \sqrt{\frac{k\log n}{q n^k}}\,
        \|T\|_F                                     +
        \sigma
        \sqrt{\frac{k r^k\log n}{q}} .
\end{aligned}
\]
Indeed, the deterministic term containing \(k\log n/(q n^{k/2})\) is absorbed
by the leading deterministic refinement term since \(q n^k\gtrsim k\log n\).
The projected Gaussian remainder is absorbed by the leading Gaussian term since
\[
        C^k\sqrt r\,\frac{(k\log n)^{3/2}}{q}
        \left(\frac{\mu_1r}{n}\right)^{k/2}
        \le
        \sqrt{\frac{k r^k\log n}{q}}
        \left(
        \frac{C^{2k}\mu_1^k r k^2(\log n)^2}{q n^k}
        \right)^{1/2}
        \lesssim
        \sqrt{\frac{k r^k\log n}{q}} .
\]
}

\medskip
\noindent
\textbf{Step 3: conclusion.}
{
Combining the projection error from Step 1 and the refinement error from Step 2,
and then dividing by \(\|T\|_F\), gives
\[
\begin{aligned}
        \frac{\|T-\hat T\|_F}{\|T\|_F}
        \lesssim&
        k\kappa^2
        \sqrt{\frac{\mu_1\mu_2 r^3\log n}{p n^k}}
        +
        C^k\mu_1^{(k+1)/2}\mu_2^{1/2}r^{(k+3)/2}
        \sqrt{\frac{k\log n}{q n^k}}+
        \frac{\sigma}{\|T\|_F}
        \sqrt{\frac{k r^k\log n}{q}} .
\end{aligned}
\]
All auxiliary estimates used above hold simultaneously with probability at least
\(1-O(n^{-3})\) by a union bound over the finitely many events.
}

\end{proof}

\section{Proof of Theorem~12}\label{sec:app:noisy_cai}
\begingroup

We use the notation and stagewise fixed-entry noise model of
Theorem~12.  In particular, \(G\) and \(H\) are
independent, while the same realization of \(H\) is reused throughout the
refinement and gradient-descent stages.  Put
\[
        \mathcal S(X):=\sum_{i=1}^r x_i^{\otimes3},
        \qquad
        s_\sigma:=\tau\sqrt{\frac{n\log n}{q}},
\]
and introduce the auxiliary small quantity
\[
        h_\sigma
        :=
        \tau\left[
        n^{3/2}
        +\log^{11/2}n
        \left(
        \sqrt{\frac nq}
        +\frac{\sqrt\mu}{q\sqrt n}
        \right)
        \right].
\]
The assumptions of the theorem imply, after increasing \(c_1\) and decreasing
\(c\) if necessary,
\begin{equation}\label{eq:app:noisy-small-parameters}
        s_\sigma
        +
        \tau\sqrt{\frac{rn\log^2n}{q}}
        +
        h_\sigma
        \le \frac{c}{\mu^2r}.
\end{equation}
For example, using the lower bound on \(q\),
\[
 s_\sigma
 \lesssim
 \frac{c}{\mu^7r^5\sqrt n\,\log^2n},
 \qquad
 \tau\frac{\log^{3/2}n}{q}
 \lesssim
 \frac{c}{\mu^{10}r^7\log^{7/2}n},
\]
and direct substitution gives the same upper bound for \(h_\sigma\).

For \(s\in[n]\), let \(\mathcal P_s\) retain all ordered tensor entries
having at least one index equal to \(s\), and set
\[
 \Omega_{-s}:=\{(a,b,c)\in\Omega:a,b,c\ne s\}.
\]
The noisy leave-one-out loss is
\begin{equation}\label{eq:app:noisy-loo-loss}
\begin{aligned}
 F_\sigma^{(s)}(X)
 :=&
 \frac1{6q}
 \left\|
 \mathcal P_{\Omega_{-s}}\bigl(T+H-\mathcal S(X)\bigr)
 \right\|_F^2+
 \frac16
 \left\|
 \mathcal P_s\bigl(T-\mathcal S(X)\bigr)
 \right\|_F^2 .
\end{aligned}
\end{equation}
Thus the sampling and the noise on the \(s\)-th slice are both replaced by
their population counterparts.  If \(X^{(s),0}\) is constructed by the
corresponding leave-one-out refinement, then the entire trajectory
\(\{X^{(s),t}\}_{t\ge0}\) is independent of
\(\{(\omega_{abc},H_{abc}):s\in\{a,b,c\}\}\).  This is the independence used
below; the full trajectory is not independent of \(H\).

 For \(V=[v_1,\ldots,v_r]\), define
\[
 D\mathcal S(X)[V]
 =
 \sum_{i=1}^r
 \left(
 v_i\otimes x_i\otimes x_i
 +x_i\otimes v_i\otimes x_i
 +x_i\otimes x_i\otimes v_i
 \right)
\]
and define \(\mathcal N_H(X)\) by
\begin{equation}\label{eq:app:noisy-gradient-split}
 \langle\mathcal N_H(X),V\rangle
 :=
 \frac1{3q}
 \left\langle
 \mathcal P_\Omega(H),D\mathcal S(X)[V]
 \right\rangle .
\end{equation}
Direct differentiation gives the exact identity
\[
        \nabla F_\sigma(X)=\nabla F_0(X)-\mathcal N_H(X).
\]
Define \(\mathcal N_H^{(s)}\) in the same way with
\(\Omega\) replaced by \(\Omega_{-s}\).  The \(s\)-th row of
\(\mathcal N_H^{(s)}(X)\) is identically zero.

\begin{lemma}[Noisy wedge initialization]\label{lem:noisy-cai-subspace}
Under the assumptions of Theorem~12, with probability
at least \(1-O(n^{-3})\), the noisy wedge eigenspace and all its
leave-one-out versions satisfy, after their Procrustes alignments,
\[
 \|\widehat UR-U\|
 \le\frac{c}{\mu^2r},
 \qquad
 \|\widehat UR-U\|_{2,\infty}
 \le\frac{c}{\mu^2r}\|U\|_{2,\infty}.
\]
\end{lemma}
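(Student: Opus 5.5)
The plan is to derive Lemma~\ref{lem:noisy-cai-subspace} from Proposition~\ref{prop:noisy_op_loo}, specialized to $k=3$, $m=n^2$, $A=\unfold_1(T)$. By the proof of Lemma~4, $A$ has rank $r$, is $(2\mu,2\mu^2)$-incoherent, and has $\kappa\lesssim\kappa_{\mathrm{CP}}=O(1)$. Proposition~\ref{prop:noisy_op_loo} gives
\[
\|\widehat UR-U\|\lesssim\kappa^2\mathcal E
\]
together with the $\ell_{2,\infty}$ bound \eqref{eq:noise_wedge_spectral_loo}. The target is $c/(\mu^2 r)$ in operator norm and $(c/(\mu^2 r))\|U\|_{2,\infty}$ in $\ell_{2,\infty}$. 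It therefore suffices to show
\[
\mathcal E\le c'/(\mu^2 r),
\qquad
\eta_0+E_\sigma\le c'/(\mu^2 r),
\qquad
\mathcal R_\sigma+E_\sigma^2\le (c'/(\mu^2 r))\sqrt{r/n}.
\]
The last condition uses $\|U\|_{2,\infty}\ge\sqrt{r/n}$.

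\textbf{Noiseless terms.} With $p\ge c_0\mu^8r^5\log^2 n/n^3$ and $\mu_1\mu_2\asymp\mu^3$, we get
\[
\eta_0=\sqrt{\mu_1^2\mu_2r^3\log n/(pn^3)}\lesssim c_0^{-1/2}\mu^{-5/2}r^{-1}.
\]
This is at most $c'/(\mu^2 r)$ once $c_0$ is large. The same holds for $E_0\le\eta_0$. The sampling hypotheses of Proposition~\ref{prop:noisy_op_loo} follow directly.

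\textbf{Noise terms.} Write $\theta=\sigma/\|A\|$. Since $\|A\|\ge\sigma_r(A)\gtrsim\lambda_{\min}$ (the CP factors are nearly orthogonal and $\kappa_{\mathrm{CP}}=O(1)$), we have $\theta\lesssim\tau\le c/(\mu^4r^3n^2)$. I then check each term of $E_\sigma$, $\mathcal R_\sigma$ and condition \eqref{eq:noise_upperbound} against this. The dominant pieces are:
\[
\theta\sqrt n\lesssim c\,n^{-3/2};\qquad
\theta^2\sqrt{mn}=\theta^2n^{3/2}\lesssim n^{-5/2};\qquad
\sqrt{\log n/p}\,\theta^2\alpha_{n,3}n^{3/2}\lesssim \theta^2 n^3\lesssim n^{-1};\qquad
(\log^2 n/p)\,\theta^2\lesssim n^{-1}.
\]
In the third and fourth bounds I use $1/p\lesssim n^3$. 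The worst term in $\mathcal R_\sigma$ is $\theta^2\sqrt{kmr\log^2n/p}\lesssim\theta^2 n^{5/2}\sqrt r\log n$. This is $O(n^{-3/2}\log n)$, comfortably below $\sqrt{r/n}/(\mu^2r)$. The $\theta$-terms in $\mathcal R_\sigma$, such as $\theta\sqrt{\log n/p}\sqrt{\mu_1\mu_2r^3/n}\lesssim\theta n\,\mathrm{polylog}$ and $\theta(\sqrt r+\sqrt{\log n})$, are $O(n^{-1}\mathrm{polylog})$. They are likewise below $\sqrt{r/n}/(\mu^2 r)$, using $r\lesssim n^{1/3}$. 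The four entries of \eqref{eq:noise_upperbound} hold similarly, since $\theta\lesssim n^{-2}$ beats $\sqrt{p}\gtrsim n^{-3/2}$ and $(p/n^3)^{1/4}\gtrsim n^{-3/2}$.

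\textbf{Leave-one-out versions and probability.} For each $s$, the leave-one-out matrix $\widetilde Z^{(s)}$ satisfies the same operator-norm bound by Lemma~\ref{lemma:submatrix_noisy}. The Davis--Kahan argument then gives $\|\widehat U^{(s)}R^{(s)}-U\|\lesssim\kappa^2\mathcal E$. The $\ell_{2,\infty}$ bound for $\widehat U^{(s)}$ follows from the full-sample bound together with Lemma~\ref{lemma:Davis_Kahan_loo_noisy} and Lemma~\ref{lemma:LOO_concentration_noisy}, which give $\|\widehat U\widehat U^\top-\widehat U^{(s)}\widehat U^{(s)\top}\|\lesssim\kappa^2(\eta_0+E_\sigma)\|U\|_{2,\infty}$ plus noise terms already shown small. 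All events have probability $1-O(n^{-3})$ and hold uniformly in $s$, so a union bound finishes the proof.

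\textbf{Main obstacle.} The main difficulty is bookkeeping: verifying that every term of $\mathcal R_\sigma$ and $E_\sigma^2$ is at most $(c/\mu^2r)\sqrt{r/n}$ under $\tau\le c/(\mu^4r^3n^2)$. The term $\theta^2\sqrt{m r\log^2n/p}$ is the tightest. If a power of $\mu$ or $r$ fails there, I would first try spending the slack from $r\le c_2(n/\mu^4\log^2n)^{1/3}$ and the extra factors of $n$ coming from the lower bound on $q$.
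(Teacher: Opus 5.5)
Your proposal is correct and follows the paper's proof. Both specialize Proposition~\ref{prop:noisy_op_loo} to $k=3$, $m=n^2$, $\mu_1\lesssim\mu$, $\mu_2\lesssim\mu^2$, $\kappa=O(1)$ and $\theta\lesssim\tau$, check term by term (including \eqref{eq:noise_upperbound}) that $\kappa^2\mathcal E\le c/(\mu^2r)$ and $\kappa^4[\eta_0\|U\|_{2,\infty}+\mathcal R_\sigma+E_\sigma\|U\|_{2,\infty}+E_\sigma^2]\le (c/(\mu^2r))\|U\|_{2,\infty}$, and obtain the leave-one-out versions from the same leave-one-out machinery. Your leave-one-out step is more explicit than the paper's one-line appeal, but closing it needs the self-bounding step $\|\widehat U^{(s)}\|_{2,\infty}\le 2(\|\widehat UH\|_{2,\infty}+\|\widehat U\widehat U^\top-\widehat U^{(s)}\widehat U^{(s)\top}\|)$ from the noiseless proof, which you should state. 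Also, $\eta_0$ is only $\lesssim c_0^{-1/2}\mu^{-2}r^{-1}(\log n)^{-1/2}$, not $\mu^{-5/2}$, which is still sufficient.
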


\begin{proof}
For the order-three unfolding, \(m=n^2\),
\(\mu_1\lesssim\mu\), \(\mu_2\lesssim\mu^2\), and
\(\kappa\lesssim\kappa_{\rm CP}=O(1)\); moreover,
\(\|A\|\asymp\lambda_{\min}\asymp\lambda_{\max}\), so
\(\sigma/\|A\|\lesssim\tau\).  The deterministic wedge term obeys
\[
 \sqrt{\frac{\mu^4r^3\log n}{n^3p}}
 \lesssim
 \frac{1}{\mu^2r\sqrt{\log n}}.
\]
Substitution of \(m=n^2\), \(\mu_1\lesssim\mu\),
\(\mu_2\lesssim\mu^2\), and
\(\tau\le c/(\mu^4r^3n^2)\) into
\eqref{eq:def_mathcalE}, \(\mathcal R_\sigma\), and \(E_\sigma\) in
Proposition~\ref{prop:noisy_op_loo} shows, term by term, that
\[
 \kappa^2\mathcal E\le\frac{c}{\mu^2r},
 \qquad
 \kappa^4
 \left[
 \mathcal R_\sigma+
 E_\sigma\|U\|_{2,\infty}+E_\sigma^2
 \right]
 \le
 \frac{c}{\mu^2r}\|U\|_{2,\infty}.
\]
The same substitution verifies \eqref{eq:noise_upperbound}.  Proposition
\ref{prop:noisy_op_loo} and its simultaneous leave-one-out conclusion now give
the claim.
\end{proof}

\begin{lemma}[Fixed-noise slice bounds]\label{lem:noisy-cai-gaussian}
There is an event \(\mathcal E_H\), independent of the iteration index and
having probability at least \(1-Cn^{-10}\), on which
\begin{align}
 \|\mathcal N_H(X_\star)\|_F
 &\le
 C\lambda_{\min}^{4/3}
 s_\sigma
 \|X_\star\|_F,
 \label{eq:app:noisy-full-gradient}\\
 \sup_{X\in\mathcal B}
 \|D\mathcal N_H(X)\|
 &\le
 C\lambda_{\min}^{4/3}h_\sigma
 \le \frac18\lambda_{\min}^{4/3},
 \label{eq:app:noisy-uniform-hessian}
\end{align}
where \(\mathcal B\) is the local basin in
Lemma~\ref{lem:app:gd:convexity}.  In addition, fix an iteration \(t\) and
suppose the leave-one-out iterates lie in \(\mathcal B\).  Outside an event
of probability at most \(Cn^{-10}\),
\begin{align}
 \max_{s\in[n]}
 \|\mathcal N_H(X^{(s),t})
       -\mathcal N_H^{(s)}(X^{(s),t})\|_F
 &\le
 C\lambda_{\min}^{4/3}
 s_\sigma
 \|X_\star\|_{2,\infty}.
 \label{eq:app:noisy-loo-gradient}
\end{align}
Only the last probability statement is iteration-specific; no independence
between different iterations is asserted.
\end{lemma}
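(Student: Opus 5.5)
The plan is to treat the three displays separately. Throughout, condition on the sampling set \(\Omega\) and on the wedge stage, and use that \(H\) has i.i.d.\ \(N(0,\sigma^2)\) entries independent of both. The common starting point is the explicit form of \(\mathcal N_H\) read off from \eqref{eq:app:noisy-gradient-split}: its \(i\)-th column is
\[
\frac1{3q}\Bigl(\mathcal P_\Omega(H)\times_2 x_i\times_3 x_i+\mathcal P_\Omega(H)\times_1 x_i\times_3 x_i+\mathcal P_\Omega(H)\times_1 x_i\times_2 x_i\Bigr).
\]
This is linear in \(H\) and quadratic in \(X\).

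\textbf{Bound \eqref{eq:app:noisy-full-gradient}.} Here \(X=X_\star\) is deterministic. Conditionally on \(\Omega\), each coordinate \(a\) of a column is a centered Gaussian with variance
\[
\sigma^2q^{-2}\sum_{b,c}\omega_{abc}(x_i)_b^2(x_i)_c^2 .
\]
I would control this fiber sum by a scalar Bernstein inequality, using \(\|x_i\|_\infty^2\le\mu\|x_i\|^2/n\), and take a union bound over \(a\in[n]\), \(i\in[r]\) and the three modes. This should give \(\sum_{b,c}\omega_{abc}(x_i)_b^2(x_i)_c^2\lesssim q\|x_i\|^4\), where the deviation term is absorbed using the lower bound on \(q\). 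Gaussian tails then give each coordinate of size \(\lesssim\sigma\|x_i\|^2\sqrt{\log n/q}\), so each column has \(\ell_2\) norm \(\lesssim\sigma\lambda_{\max}^{2/3}\sqrt{n\log n/q}\). Summing over \(i\) and using \(\kappa_{\rm CP}=O(1)\), this equals \(\lambda_{\min}^{4/3}\tau\sqrt{n\log n/q}\,\|X_\star\|_F\) up to constants, which is exactly \eqref{eq:app:noisy-full-gradient}.

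\textbf{Bound \eqref{eq:app:noisy-uniform-hessian}.} Since \(\mathcal N_H\) is quadratic in \(X\), the derivative \(D\mathcal N_H(X)[W]\) is bilinear in \((X,W)\). Testing against \(V\) reduces it to sums of \(q^{-1}\langle\mathcal P_\Omega(H),x_i\otimes w_i\otimes v_i\rangle\) over modes. On the basin \(\mathcal B\) we have \(\|x_i\|_\infty\lesssim\sqrt{\mu/n}\,\lambda_{\max}^{1/3}\), so one mode is delocalized while two are free. The quantity is therefore bounded, uniformly over \(\mathcal B\), by \(\lambda_{\max}^{1/3}\|q^{-1}\mathcal P_\Omega(H)\|_\delta\) with \(\delta=\sqrt{\mu/n}\) in every mode. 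Uniformity comes for free because the incoherent norm is a supremum.

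To control \(\|q^{-1}\mathcal P_\Omega(H)\|_\delta\), I would truncate \(H\) at \(L=C\sigma\sqrt{\log n}\); on an event of probability \(1-n^{-10}\) the truncation is inactive. The truncated entries are symmetric and independent, so the entropy–Rademacher argument of Theorem~\ref{thm:delocalized_concentration} (Step 4, Lemma~\ref{lem:app:incoherent_rademacher}) applies verbatim with \(\|T\|_\infty\) replaced by \(L\). This yields \(L\log^{5}n\bigl(\sqrt{n/q}+\sqrt{\mu/n}/q\bigr)\), which produces the \(\log^{11/2}n\) part of \(h_\sigma\). The crude \(\tau n^{3/2}\) term is there to cover the population/mean part and any contribution where no mode can be taken delocalized; for this I would use \(\|\mathcal P_\Omega(H)\|\le\|H\|_F\)-type or unfolding operator-norm bounds. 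Finally, \eqref{eq:app:noisy-small-parameters} turns \(C\lambda_{\min}^{4/3}h_\sigma\) into at most \(\lambda_{\min}^{4/3}/8\).

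\textbf{Bound \eqref{eq:app:noisy-loo-gradient}.} The difference \(\mathcal N_H(X^{(s),t})-\mathcal N_H^{(s)}(X^{(s),t})\) involves only entries \((a,b,c)\in\Omega\) touching \(s\). By construction of \eqref{eq:app:noisy-loo-loss}, \(X^{(s),t}\) is independent of \(\{(\omega_{abc},H_{abc}):s\in\{a,b,c\}\}\). Conditionally on \(X^{(s),t}\) the difference is therefore Gaussian, and I would split it into row \(s\) and the remaining rows. For row \(s\), the variance is \(\sigma^2q^{-2}\sum_{b,c}\omega_{sbc}x_b^2x_c^2\lesssim\sigma^2\lambda^{4/3}/q\). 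For the other rows, the relevant fibers have at most one free index, and each contributes a factor \(|x_s|\le\|X^{(s),t}\|_{2,\infty}\lesssim\|X_\star\|_{2,\infty}\), since the iterate lies in the basin. Summing the variances gives a Frobenius norm of order \(\sigma\lambda^{2/3}\sqrt{n\log n/q}\,\|X_\star\|_{2,\infty}\lambda^{-1/3}\). Gaussian norm concentration and a union bound over \(s\) then give \eqref{eq:app:noisy-loo-gradient} with failure probability \(Cn^{-10}\).

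\textbf{Main obstacle.} I expect the hard step to be the uniform Hessian bound \eqref{eq:app:noisy-uniform-hessian}. It requires checking that the truncated-Gaussian version of the incoherent-norm concentration really inherits Theorem~\ref{thm:delocalized_concentration}'s proof, including the symmetrization step and the peeling entropy counts, with only the magnitude bound changed. It also requires pinning down exactly which configurations force the non-delocalized \(\tau n^{3/2}\) term.
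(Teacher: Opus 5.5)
Your proposal is correct. For the first and third displays it matches the paper's proof.
\begin{itemize}
\item For the first display, both use coordinatewise fiber concentration plus a union bound at $X_\star$.
\item For the slice bound, both condition on the off-slice variables so that $X^{(s),t}$ is fixed, then treat row $s$ separately from the rows that carry a factor $(x_i)_s$.
\item You use exact conditional Gaussianity given $\Omega$ where the paper uses truncation plus scalar Bernstein; this is an inessential variation.
\item You keep $\|X_\star\|_{2,\infty}$ explicit and absorb the row-$s$ term via $\|X_\star\|_{2,\infty}\ge\|X_\star\|_F/\sqrt n$. If anything this is cleaner than the paper's substitution of the incoherence upper bound.
\end{itemize}

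The real difference is the Hessian display. The paper conditions on $H$ and applies Theorem~\ref{thm:symmetric_tensor_concentration} to the $\Omega$-centered tensor $q^{-1}\mathcal P_\Omega(H)-H$. It then adds back the mean through the crude bound $\|H\|_\delta\le\|H\|_F\lesssim\sigma n^{3/2}$. That bound is the sole source of the $\tau n^{3/2}$ term in $h_\sigma$.

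You instead use the sign symmetry of the truncated noise: $q^{-1}\mathcal P_\Omega(\bar H)\overset{d}{=}q^{-1}R\odot\mathcal P_\Omega(|\bar H|)$, with $R$ independent of $(\Omega,|\bar H|)$. You then apply Lemma~\ref{lem:app:incoherent_rademacher} conditionally on $|\bar H|$, taking $\delta$ to be a constant multiple of $\sqrt{\mu/n}$. This skips the symmetrization step and creates no mean term. You therefore get a bound below $C\lambda_{\min}^{4/3}h_\sigma$ with no $n^{3/2}$ contribution.

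So the worry in your last paragraph is moot. No test tensor lacks a delocalized mode, because the factor $x_i$ is always present, and you never need the $\tau n^{3/2}$ term. The paper's route buys a black-box citation of the concentration theorem at the price of a loose term. Yours is sharper at no extra cost.
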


\begin{proof}
At the truth, every coordinate of \(\mathcal N_H(X_\star)\) is a sum of
independent centered variables.  Its variance is at most
\(C\sigma^2\lambda_{\max}^{4/3}/q\), and the incoherence of
\(x_i^\star\) controls the maximal summand.  Gaussian truncation and scalar
Bernstein, followed by a union bound over the \(nr\) coordinates and the
three modes, yield
\[
 \|\mathcal N_H(X_\star)\|_F
 \le
 C\sigma\lambda_{\max}^{2/3}
 \sqrt{\frac{nr\log n}{q}}
 \lesssim
 \lambda_{\min}^{4/3}s_\sigma\|X_\star\|_F.
\]

We next prove the uniform Hessian estimate.  On the event
\[
 \|H\|_\infty\le C\sigma\sqrt{\log n},
 \qquad
 \|H\|_F\le C\sigma n^{3/2},
\]
apply Theorem~10 conditionally on
\(H\), with the incoherence level enlarged by a universal constant.  Since
\(\Omega\) is independent of \(H\), the theorem controls the centered
term \(q^{-1}\mathcal P_\Omega(H)-H\).  Using
\[
 q^{-1}\mathcal P_\Omega(H)
 =\bigl(q^{-1}\mathcal P_\Omega(H)-H\bigr)+H
\]
and \(\|H\|_\delta\leq\|H\|_F\), we obtain
\begin{align}
 \|q^{-1}\mathcal P_\Omega(H)\|_\delta
 \le
 C\sigma\left[
 n^{3/2}
 +\log^{11/2}n
 \left(
 \sqrt{\frac nq}
 +\frac{\sqrt\mu}{q\sqrt n}
 \right)
 \right]
 \label{eq:app:noisy-incoherent-H}
\end{align}
with failure probability \(O(n^{-10})\).
For any \(V=[v_1,\ldots,v_r]\), differentiation of
\eqref{eq:app:noisy-gradient-split} gives
\[
 \left|
 \langle D\mathcal N_H(X)[V],V\rangle
 \right|
 \le
 C\|q^{-1}\mathcal P_\Omega(H)\|_\delta
 \max_i\|x_i\|_2\,\|V\|_F^2.
\]
Every column of \(X\in\mathcal B\), after normalization, has infinity norm
at most \(C\sqrt{\mu/n}\).  Thus
\eqref{eq:app:noisy-incoherent-H}, bounded
\(\kappa_{\rm CP}\), and \eqref{eq:app:noisy-small-parameters} prove
\eqref{eq:app:noisy-uniform-hessian}.  These truth-gradient and Hessian
events form \(\mathcal E_H\) and do not depend on \(t\).

It remains to prove the iteration-specific slice estimate.  Fix \(s\) and
condition on all variables off the \(s\)-th slice.  The vector
\(X^{(s),t}\) is measurable with respect to this sigma-field and is therefore
independent of the remaining slice variables.  If
\(\|z\|_2\le C\lambda_{\max}^{1/3}\) and
\(\|z\|_\infty\le C\lambda_{\max}^{1/3}\sqrt{\mu/n}\), scalar Bernstein
after Gaussian truncation yields
\begin{equation}\label{eq:app:noisy-independent-contraction}
 \left|
 q^{-1}\sum_{b,c}\omega_{sbc}H_{sbc}z_bz_c
 \right|
 \le
 C\sigma\lambda_{\max}^{2/3}
 \sqrt{\frac{\log n}{q}}
\end{equation}
with conditional failure probability \(O(n^{-14}r^{-1})\).  The term
involving \(\|z\|_\infty^2/q\) in Bernstein's inequality is dominated by the
right-hand side because
\(q\ge c_1\mu^6r^4\log^5n/n^2\).  Union bounds over \(s\), the three modes,
and the \(r\) components preserve failure probability \(O(n^{-12})\).

The difference
\(\mathcal N_H(X^{(s),t})-\mathcal N_H^{(s)}(X^{(s),t})\) is supported on
the held-out slice.  Its row \(s\) consists of the quadratic contractions in
\eqref{eq:app:noisy-independent-contraction}.  Every other row contains one
factor from row \(s\) of \(X^{(s),t}\); vector Bernstein and
\(\|X^{(s),t}\|_{2,\infty}\lesssim\|X_\star\|_{2,\infty}\) give
\[
 \|\mathcal N_H(X^{(s),t})
       -\mathcal N_H^{(s)}(X^{(s),t})\|_F
 \le
 C\sigma\lambda_{\max}^{2/3}
 \sqrt{\frac{\mu r\log n}{q}}.
\]
The right-hand side is at most
\(C\lambda_{\min}^{4/3}s_\sigma\|X_\star\|_{2,\infty}\), which proves
\eqref{eq:app:noisy-loo-gradient}.  This argument conditions only on
off-slice variables and then integrates the conditional probability; it
never conditions on an event involving the held-out slice.
\end{proof}

\begin{lemma}[Noisy refinement initializer]\label{lem:noisy-cai-refinement}
Let \(X^0\) be the output of the refinement/extraction step applied to the
noisy wedge eigenspace and to \(q^{-1}\mathcal P_\Omega(T+H)\), and let
\(X^{(s),0}\) be the corresponding leave-one-out initializers
obtained by replacing the \(s\)-th noisy sampled slice by the population
slice.  Under the assumptions of Theorem~12, with
probability at least \(1-\delta-O(n^{-3})\), after one common permutation of
the columns,
\begin{align}
 \|X^0-X_\star\|_F
 &\le\frac{c}{\mu^2r}\|X_\star\|_F,
 &
 \|X^0-X_\star\|_{2,\infty}
 &\le\frac{c}{\mu^2r}\|X_\star\|_{2,\infty},
 \label{eq:app:noisy-init-full}\\
 \max_s\|X^0-X^{(s),0}\|_F
 &\le\frac{c}{\mu^2r}\|X_\star\|_{2,\infty},
 &
 \max_s\|(X^{(s),0}-X_\star)_{s,:}\|_2
 &\le\frac{c}{\mu^2r}\|X_\star\|_{2,\infty}.
 \label{eq:app:noisy-init-loo}
\end{align}
\end{lemma}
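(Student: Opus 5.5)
The plan is to follow the noiseless chain used for Propositions~\ref{prop:app:refinement:main} and~\ref{prop:app:refinement:loo_main} and Corollaries~\ref{cor:app:gd:init} and~\ref{cor:app:gd:loo_init}, treating the refinement noise as one more additive perturbation of $\tilde M^\tau$. We write
\[
q^{-1}\mathcal P_\Omega(T+H)-T=\bigl(q^{-1}\mathcal P_\Omega(T)-T\bigr)+q^{-1}\mathcal P_\Omega(H).
\]
By Lemma~\ref{lem:noisy-cai-subspace}, the noisy wedge eigenspace $\widehat U$ and all its leave-one-out versions satisfy the same $\ell_2$ and $\ell_{2,\infty}$ guarantees with error $c/(\mu^2r)$. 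These are exactly the properties of $\hat U$ used in Lemmas~\ref{lem:app:refinement:l2}--\ref{lem:app:refinement:lbda}. In particular they give $\|\theta^\tau\|_\infty\lesssim\sqrt{\mu r\log n/n}$ and $\|\theta^\tau-\theta^{(s),\tau}\|\lesssim (c/\mu^2 r)\sqrt{\mu r\log n/n}$. We fix the good index $\tau$ from \citep[Lemma 5.13]{cai2022nonconvex}, which does not involve noise, and then redo each of the three lemmas with one extra noise term.

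For the $\ell_2$ step, Wedin's theorem requires bounding $\|q^{-1}\mathcal P_\Omega(H)\times_3\theta^\tau\|$. Since $\theta^\tau$ depends only on the wedge stage and $g^\tau$, it is independent of $(\omega,H)$. Conditionally on $\theta^\tau$, this matrix is a sum of independent centered rank-one terms. After truncating $H$ at $C\sigma\sqrt{\log n}$ as in Lemma~\ref{lemma:Projection_Y_noisy}, matrix Bernstein gives the bound
\[
C\sigma\sqrt{n\log n/q}\,\|\theta^\tau\|_2+C\sigma\log^{3/2}n\,\|\theta^\tau\|_\infty/q \;\lesssim\; \lambda_{\min}\bigl(s_\sigma+h_\sigma\bigr).
\]
By \eqref{eq:app:noisy-small-parameters}, this is at most $c\lambda_{\min}/(\mu^2 r)$. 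The same estimate, applied to $\langle q^{-1}\mathcal P_\Omega(H),(\hat u^\tau)^{\otimes3}\rangle$, controls the noise in $\hat\lambda^\tau$. Here $\hat u^\tau$ is data dependent, so instead of Bernstein we use the incoherent-norm bound \eqref{eq:app:noisy-incoherent-H}, which is uniform over delocalized vectors and gives $\lesssim\lambda_{\min}h_\sigma$.

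For the $\ell_\infty$ and leave-one-out steps, the leave-one-out objects are built from $T+H$ with the $s$-th slice replaced by its population value $T$. Then $\tilde M^{(s),\tau}$ is independent of the slice variables $\{(\omega_{abc},H_{abc}):s\in\{a,b,c\}\}$. The proof of Lemma~\ref{lem:app:refinement:loo_1coord} only needs $\|\tilde M^{(s),\tau}-M^\tau\|$, so it gains the operator-norm noise term bounded above. Lemma~\ref{lem:app:refinement:linf+loo_l2} gains two terms:
\begin{itemize}
\item $\|q^{-1}\mathcal P_\Omega(H)\times_2\hat u^\tau\times_3(\theta^\tau-\theta^{(s),\tau})\|$, which we bound by the grid-uniform version of \eqref{eq:app:noisy-incoherent-H} at $\delta=\|\hat u^\tau\|_\infty$, exactly as in the noiseless argument;
\item the held-out-slice difference $(\tilde M^\tau-\check M^{(s),\tau})\hat u^{(s),\tau}$. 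We bound it by conditioning on the off-slice variables and applying the vector Bernstein contraction \eqref{eq:app:noisy-independent-contraction}. This contributes $\lesssim s_\sigma\sqrt{\mu/n}$ times the current $\ell_\infty$ scale.
\end{itemize}
Each noise contribution is multiplied by $\max(\sqrt{\mu/n},\|\hat u^\tau\|_\infty)$ and has a coefficient of order $s_\sigma+h_\sigma\le c/(\mu^2r)$. The self-bounding recursion therefore closes as before and gives $\|\hat u^\tau\|_\infty\lesssim\sqrt{\mu/n}$. The $\hat\lambda$ leave-one-out difference is handled the same way, with one more slice-contraction term.

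Finally, we convert to factor errors as in \citep[Appendix B.10]{cai2022nonconvex}, with $\hat x_i=\hat\lambda_i^{1/3}\hat u_i$ and pruning via \citep[Lemma 5.17]{cai2022nonconvex}. The noiseless error scales are already at most $c/(\mu^2r)$ under the assumed bounds on $p,q,r$, and the added noise scales $s_\sigma,h_\sigma$ are also at most $c/(\mu^2r)$ by \eqref{eq:app:noisy-small-parameters}. This yields \eqref{eq:app:noisy-init-full} and \eqref{eq:app:noisy-init-loo} under one common permutation, since the gap argument identifies the same $\pi$ for the full and leave-one-out estimates. A union bound combines the wedge-stage probability $O(n^{-3})$, the $\delta$ from the choice of $L$, and the $O(n^{-10})$ noise events. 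The main obstacle is the leave-one-out $\ell_2$ step: $\hat u^\tau$ depends on the full $H$, including the held-out slice. We handle this by always pairing the held-out slice with the independent vector $\hat u^{(s),\tau}$ or $\check u^{(s),\tau}$ and never with $\hat u^\tau$. Any remaining cross term is routed through the uniform incoherent-norm bound, not through a fixed-vector bound.
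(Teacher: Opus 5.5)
Your proposal is correct and follows the paper's own argument. You treat $q^{-1}\mathcal P_\Omega(H)$ as one more additive perturbation in the noiseless refinement chain, bound $\|q^{-1}\mathcal P_\Omega(H)\times_3\theta^\tau\|$ by truncated matrix Bernstein using that $\theta^\tau$ is independent of the refinement sample, handle the held-out slice by off-slice conditioning together with \eqref{eq:app:noisy-independent-contraction}, take the subspace inputs from Lemma~\ref{lem:noisy-cai-subspace}, and absorb every new term via \eqref{eq:app:noisy-small-parameters}.

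Two bookkeeping corrections. First, with the paper's definition $\check M^{(s),\tau}=q^{-1}\tilde T\times_3\theta^{(s),\tau}$, the vector $\check u^{(s),\tau}$ does depend on the held-out slice, so the slice term should be $(\check M^{(s),\tau}-\tilde M^{(s),\tau})\hat u^{(s),\tau}$, paired only with $\hat u^{(s),\tau}$. Second, the estimate $\|\theta^\tau-\theta^{(s),\tau}\|\lesssim\frac{c}{\mu^2r}\sqrt{\mu r\log n/n}$ does not follow from the $\ell_2$ and $\ell_{2,\infty}$ guarantees alone. It comes from the noisy leave-one-out Davis--Kahan bounds (Lemmas~\ref{lemma:Davis_Kahan_loo_noisy} and~\ref{lemma:LOO_concentration_noisy}), whose extra noise terms are negligible when $\tau\le c/(\mu^4r^3n^2)$.
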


\begin{proof}
The sampling-only terms are those in Propositions
\ref{prop:app:refinement:main} and
\ref{prop:app:refinement:loo_main}.  We only need to bound the additional
Gaussian contractions.  Conditional on the wedge eigenspace and on a restart
vector \(\theta^\ell=\widehat U\widehat U^\top g^\ell\), the tensor \(H\) is
independent and
\[
 \|\theta^\ell\|_2\le C\sqrt r,
 \qquad
 \|\theta^\ell\|_\infty
 \le C\sqrt{\frac{\mu r\log n}{n}}
\]
simultaneously over the required restarts.  Matrix Bernstein after Gaussian
truncation gives
\begin{equation}\label{eq:app:noisy-refinement-contraction}
\begin{aligned}
 \left\|
 q^{-1}\mathcal P_\Omega(H)\times_3\theta^\ell
 \right\|
 \le C\sigma\left(
 \|\theta^\ell\|_2\sqrt{\frac{n\log n}{q}}
 +\|\theta^\ell\|_\infty\frac{\log^{3/2}n}{q}
 \right).
\end{aligned}
\end{equation}
The second term is dominated by the first under the lower bound on \(q\).
A union bound over the polynomially many restart, component, and
leave-one-out indices adds at most one logarithmic factor.  Dividing
\eqref{eq:app:noisy-refinement-contraction} by the population eigengap
\(\lambda_{\min}\) therefore produces the direction error
\[
 C\tau\sqrt{\frac{rn\log^2n}{q}}.
\]
The scalar contraction used to estimate each \(\lambda_i\) has no larger
normalized error.  For a leave-one-out initializer, the same calculation is
made conditionally on the off-slice variables; the held-out slice term is
handled by \eqref{eq:app:noisy-independent-contraction}.  Lemma
\ref{lem:noisy-cai-subspace} supplies the noisy subspace and leave-one-out
subspace bounds.  Finally, \eqref{eq:app:noisy-small-parameters} absorbs all
new terms into the margins in the noiseless refinement proof, which yields
\eqref{eq:app:noisy-init-full}--\eqref{eq:app:noisy-init-loo}.
\end{proof}

\begin{lemma}[Noisy leave-one-out GD induction]\label{lem:noisy-cai-gd}
Under the assumptions of Theorem~12, with probability
at least \(1-\delta-O(n^{-5})\), simultaneously for every
\(0\le t\le n^5\),
\[
 \|X^t-X_\star\|_F
 \lesssim
 \left(\rho^t+s_\sigma\right)\|X_\star\|_F
\]
and
\[
 \|X^t-X_\star\|_{2,\infty}
 \lesssim
 \left(\frac{\rho^t}{\mu^2r}+s_\sigma\right)
 \|X_\star\|_{2,\infty},
 \qquad
 \rho=1-\frac{\eta\lambda_{\min}^{4/3}}4.
\]
\end{lemma}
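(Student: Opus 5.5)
We will argue by a deterministic induction on $t$, carried out on one good event. That event is the intersection of the initialization event of Lemma~\ref{lem:noisy-cai-refinement}, the $t$-independent event $\mathcal E_H$ of Lemma~\ref{lem:noisy-cai-gaussian}, the noiseless basin event of Lemma~\ref{lem:app:gd:convexity}, and the event $\mathcal E_{\mathrm{GD}}$ of Proposition~\ref{prop:app:gd:geometric_recursion}. To it we add, for each $0\le t\le n^5$, the iteration-specific slice event \eqref{eq:app:noisy-loo-gradient}. These $n^5+1$ slice events each fail with probability $O(n^{-10})$, so the union bound costs $O(n^{-5})$; this is exactly why the horizon is $t\le n^5$. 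The induction hypothesis at time $t$ is the noisy analogue of \eqref{eq:app:gd:recursion_cond}:
\[
\|X^t-X_\star\|_F\le C_1(\rho^t/(\mu^2r)+s_\sigma)\|X_\star\|_F,\qquad \|X^t-X_\star\|_{2,\infty}\le C_1(\rho^t/(\mu^2 r)+s_\sigma)\|X_\star\|_{2,\infty},
\]
together with the corresponding leave-one-out proximity bounds for $X^{(s),t}$. By \eqref{eq:app:noisy-small-parameters}, $s_\sigma\le c/(\mu^2r)$, so all iterates stay in the basin $\mathcal B$. The base case $t=0$ is Lemma~\ref{lem:noisy-cai-refinement}.

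For the Frobenius step we write $X^{t+1}-X_\star=(X^t-X_\star)-\eta\bigl(\nabla F_0(X^t)-\nabla F_0(X_\star)\bigr)+\eta\mathcal N_H(X^t)$, using $\nabla F_0(X_\star)=0$. We then split $\mathcal N_H(X^t)=\mathcal N_H(X_\star)+\bigl(\mathcal N_H(X^t)-\mathcal N_H(X_\star)\bigr)$. The noiseless part contracts by $1-\eta\lambda_{\min}^{4/3}/2$, by Lemma~\ref{lem:app:gd:convexity} and the integral form of the mean-value theorem on the convex basin. The noise Lipschitz part is at most $\eta\lambda_{\min}^{4/3}/8\cdot\|X^t-X_\star\|_F$ by \eqref{eq:app:noisy-uniform-hessian}. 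The constant part is at most $C\eta\lambda_{\min}^{4/3}s_\sigma\|X_\star\|_F$ by \eqref{eq:app:noisy-full-gradient}. Together these give $e_{t+1}\le(1-3\eta\lambda_{\min}^{4/3}/8)e_t+C\eta\lambda_{\min}^{4/3}s_\sigma\|X_\star\|_F$. Unrolling yields $e_t\le\rho^te_0+C's_\sigma\|X_\star\|_F$, since the geometric sum is $\lesssim s_\sigma/(\eta\lambda_{\min}^{4/3})\cdot\eta\lambda_{\min}^{4/3}$. The slack between $3/8$ and $1/4$ absorbs constants.

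For the $\ell_{2,\infty}$ step we run the leave-one-out scheme of Proposition~\ref{prop:app:gd:geometric_recursion} with the noisy losses $F_\sigma$ and $F^{(s)}_\sigma$ from \eqref{eq:app:noisy-loo-loss}, in three parts. (i) $\|X^{(s),t+1}-X^{t+1}\|_F$ contracts by the same basin argument applied to both losses. The only new forcing term is $\mathcal N_H(X^{(s),t})-\mathcal N_H^{(s)}(X^{(s),t})$, which is bounded by $s_\sigma\|X_\star\|_{2,\infty}$ via \eqref{eq:app:noisy-loo-gradient}. (ii) The $s$-th row of $X^{(s),t+1}-X_\star$ is controlled from the population slice loss, on which the $s$-th row of $\mathcal N_H^{(s)}$ vanishes identically. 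Hence the row recursion is the noiseless one from Cai et al.\ plus the loo-to-full perturbation. (iii) We combine the two through $\|(X^{t+1}-X_\star)_{s,:}\|\le\|X^{t+1}-X^{(s),t+1}\|_F+\|(X^{(s),t+1}-X_\star)_{s,:}\|$ and take the maximum over $s$. This yields the rate $\rho^t/(\mu^2r)+s_\sigma$ in $\ell_{2,\infty}$, with additive noise terms absorbed into the $s_\sigma$ floor by the same geometric-sum argument.

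The main obstacle is (ii)–(iii). We must verify that every higher-order term in Cai et al.'s row-wise one-step estimates, now evaluated at noisy iterates, is still absorbed into the gap between their leading contraction and $\rho$. We must also check that the noisy forcing enters only additively at scale $s_\sigma$ and never multiplies $\|X^t-X_\star\|_{2,\infty}$ by a non-small factor. Our first approach is to bound the noise contribution to the row recursion by the incoherent-norm estimate \eqref{eq:app:noisy-incoherent-H}, which is already $\lesssim h_\sigma\lambda_{\min}^{4/3}$ and therefore small by \eqref{eq:app:noisy-small-parameters}. With that bound in place, the noisy terms act as a small Lipschitz perturbation plus the independent slice forcing handled in (i).
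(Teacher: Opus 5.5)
Your proposal is correct and follows the paper's proof essentially step for step. It uses the same four coupled leave-one-out hypotheses on one fixed good event, the Frobenius contraction from the noisy Hessian bounds with forcing term $\eta\,\mathcal N_H(X_\star)$, the slice estimate for the full/leave-one-out gap, the vanishing of row $s$ of $\mathcal N_H^{(s)}$, the triangle inequality for the $\ell_{2,\infty}$ bound, and a union bound over the $n^5$ iteration-specific slice events. The incoherent-norm fallback in your last paragraph is not needed: row $s$ of the leave-one-out update contains no noise term, so noise enters it only through the Frobenius and leave-one-out inputs, which already carry the $s_\sigma$ floor, and this is exactly how the paper closes that recursion.
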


\begin{proof}
We spell out the four coupled induction quantities.  Choose fixed constants
\(C_1,\ldots,C_8\) in the same hierarchical order as in the leave-one-out
proof of Proposition~\ref{prop:app:gd:geometric_recursion}.  At time \(t\),
the induction hypotheses are
\begin{subequations}\label{eq:app:noisy-four-induction}
\begin{align}
 \|X^t-X_\star\|_F
 &\le
 \left(\frac{C_1\rho^t}{\mu^2r}+C_2s_\sigma\right)
 \|X_\star\|_F,
 \label{eq:app:noisy-four-induction-a}\\
 \|X^t-X_\star\|_{2,\infty}
 &\le
 \left(\frac{C_3\rho^t}{\mu^2r}+C_4s_\sigma\right)
 \|X_\star\|_{2,\infty},
 \label{eq:app:noisy-four-induction-b}\\
 \max_s\|X^t-X^{(s),t}\|_F
 &\le
 \left(\frac{C_5\rho^t}{\mu^2r}+C_6s_\sigma\right)
 \|X_\star\|_{2,\infty},
 \label{eq:app:noisy-four-induction-c}\\
 \max_s\|(X^{(s),t}-X_\star)_{s,:}\|_2
 &\le
 \left(\frac{C_7\rho^t}{\mu^2r}+C_8s_\sigma\right)
 \|X_\star\|_{2,\infty}.
 \label{eq:app:noisy-four-induction-d}
\end{align}
\end{subequations}
Lemma~\ref{lem:noisy-cai-refinement} gives the base case.

We next verify one step.  From \eqref{eq:app:noisy-gradient-split},
\begin{equation}\label{eq:app:noisy-frob-step}
\begin{aligned}
 X^{t+1}-X_\star
 ={}&
 X^t-X_\star
 -\eta\bigl[\nabla F_\sigma(X^t)-\nabla F_\sigma(X_\star)\bigr]
 +\eta\mathcal N_H(X_\star).
\end{aligned}
\end{equation}
On the intersection of the sampling event in
Lemma~\ref{lem:app:gd:convexity} and \(\mathcal E_H\), the first two terms on
the right contract by at most \(\rho\).  Indeed, the integrated noisy
Hessian along the segment from \(X_\star\) to \(X^t\) has spectrum in
\[
 \left[
 \frac38\lambda_{\min}^{4/3},
 \ 4\lambda_{\max}^{4/3}
 +\frac18\lambda_{\min}^{4/3}
 \right]
\]
by Lemma~\ref{lem:app:gd:convexity} and
\eqref{eq:app:noisy-uniform-hessian}.  For
\(\eta\le(4\lambda_{\max}^{4/3})^{-1}\), the spectral norm of the resulting
gradient map is at most
\(1-\eta\lambda_{\min}^{4/3}/4=\rho\).  Equation
\eqref{eq:app:noisy-full-gradient} bounds the last term.  Enlarging \(C_2\)
relative to \(C_1\) gives
\eqref{eq:app:noisy-four-induction-a} at time \(t+1\).

For the full/leave-one-out difference, subtract the two updates:
\begin{equation}\label{eq:app:noisy-loo-step}
\begin{aligned}
 X^{t+1}-X^{(s),t+1}
 ={}&
 X^t-X^{(s),t}\\
 &-\eta\bigl[
 \nabla F_\sigma(X^t)-\nabla F_\sigma(X^{(s),t})
 \bigr]\\
 &-\eta\bigl[
 \nabla F_\sigma(X^{(s),t})
 -\nabla F_\sigma^{(s)}(X^{(s),t})
 \bigr].
\end{aligned}
\end{equation}
The induction hypotheses and the triangle inequality put both endpoints,
and hence their segment, in \(\mathcal B\).  The first two lines therefore
contract by the same noisy-Hessian argument as above.
In the last line,
\[
\begin{aligned}
 \nabla F_\sigma(X^{(s),t})
 -\nabla F_\sigma^{(s)}(X^{(s),t})
 ={}&
 \nabla F_0(X^{(s),t})
 -\nabla F_0^{(s)}(X^{(s),t})\\
 &-
 \bigl[
 \mathcal N_H(X^{(s),t})
 -\mathcal N_H^{(s)}(X^{(s),t})
 \bigr].
\end{aligned}
\]
The first line is exactly the held-out sampling term in the noiseless
leave-one-out recursion, and \eqref{eq:app:noisy-loo-gradient} controls the
second.  Taking \(C_6\) sufficiently large closes
\eqref{eq:app:noisy-four-induction-c}.

For the \(s\)-th row of the leave-one-out sequence, the noise gradient
\(\mathcal N_H^{(s)}\) vanishes identically on that row.  Hence the row update
is the population-slice update from the noiseless proof; its inputs are
controlled by
\eqref{eq:app:noisy-four-induction-a} and
\eqref{eq:app:noisy-four-induction-c}.  Choosing \(C_7,C_8\) after
\(C_1,C_2,C_5,C_6\) closes
\eqref{eq:app:noisy-four-induction-d}.  Finally,
\[
 \|(X^{t+1}-X_\star)_{s,:}\|_2
 \le
 \|X^{t+1}-X^{(s),t+1}\|_F
 +\|(X^{(s),t+1}-X_\star)_{s,:}\|_2
\]
and the constants \(C_3,C_4\) are chosen to dominate the corresponding sums.
This proves \eqref{eq:app:noisy-four-induction-b}.  These are precisely the
four update decompositions in the fixed-noise leave-one-out argument of
Cai et al.~\cite[Section EC.3]{cai2022nonconvex}; the displays above identify every new
term created by using \(H\).

For completeness, the four estimates can be recorded in a form that makes
closure of the noise floors explicit.  There is a universal constant
$K_H>0$, depending only on the fixed hierarchy among $C_1,\ldots,C_8$ and
on the bounded CP condition number, such that
\begin{align*}
 \|X^{t+1}-X_\star\|_F
 &\le
 \left[
 \frac{C_1\rho^{t+1}}{\mu^2r}
 +\{\rho C_2+K_H(1-\rho)\}s_\sigma
 \right]\|X_\star\|_F,\\
 \|X^{t+1}-X_\star\|_{2,\infty}
 &\le
 \left[
 \frac{C_3\rho^{t+1}}{\mu^2r}
 +\{\rho C_4+K_H(1-\rho)\}s_\sigma
 \right]\|X_\star\|_{2,\infty},\\
 \max_s\|X^{t+1}-X^{(s),t+1}\|_F
 &\le
 \left[
 \frac{C_5\rho^{t+1}}{\mu^2r}
 +\{\rho C_6+K_H(1-\rho)\}s_\sigma
 \right]\|X_\star\|_{2,\infty},\\
 \max_s\|(X^{(s),t+1}-X_\star)_{s,:}\|_2
 &\le
 \left[
 \frac{C_7\rho^{t+1}}{\mu^2r}
 +\{\rho C_8+K_H(1-\rho)\}s_\sigma
 \right]\|X_\star\|_{2,\infty}.
\end{align*}
Indeed, every new fixed-noise term is bounded by a universal multiple of
$\eta\lambda_{\min}^{4/3}s_\sigma=4(1-\rho)s_\sigma$, while the remaining
terms are exactly those absorbed by the noiseless constant hierarchy.
Choosing $C_2,C_4,C_6,C_8\geq K_H$ makes each coefficient of $s_\sigma$ no
larger than its induction constant and closes all four recursions.

It remains to justify the time-uniform probability.  Let \(\mathcal B_t\) be
the event that all four inequalities
\eqref{eq:app:noisy-four-induction} hold at time \(t\), but at least one fails
at time \(t+1\).  The preceding proof does not condition on the event that the
four hypotheses hold.  Instead, for each \(s\), it conditions only on the
off-slice sigma-field, applies the iteration-specific bound
\eqref{eq:app:noisy-loo-gradient}, and then integrates that conditional
probability.  The event \(\mathcal E_H\) and the noiseless sampling event are
fixed once and for all.  Consequently, for each fixed \(t\),
\[
        \mathbb P(\mathcal B_t)\le Cn^{-10}.
\]
No independence across \(t\) is needed.  Therefore
\[
 \mathbb P\left(\bigcup_{t=0}^{n^5-1}\mathcal B_t\right)
 \le
 \sum_{t=0}^{n^5-1}\mathbb P(\mathcal B_t)
 \le Cn^{-5}.
\]
Together with the base case, this proves the four inequalities through time
\(n^5\), and the first two imply the lemma.  Notice also why the argument
does not redraw \(H\): each leave-one-out trajectory is independent only of
its held-out slice of the one fixed tensor \(H\).
\end{proof}

\begin{lemma}[From factor error to tensor error]\label{lem:noisy-cai-factor-tensor}
Suppose \(X\) lies in the local basin and
\[
 \|X-X_\star\|_F\le\varepsilon_F\|X_\star\|_F,
 \qquad
 \|X-X_\star\|_{2,\infty}
 \le\varepsilon_\infty\|X_\star\|_{2,\infty}.
\]
Then
\[
 \|\mathcal S(X)-T\|_F
 \lesssim\varepsilon_F\|T\|_F,
 \qquad
 \|\mathcal S(X)-T\|_\infty
 \lesssim
 \sqrt{\mu^3r}\,\varepsilon_\infty\|T\|_\infty.
\]
\end{lemma}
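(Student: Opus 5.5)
We prove the two bounds by a telescoping multilinear expansion of $\mathcal S(X)-T$ in the factor perturbation $D:=X-X_\star$, with columns $d_i=x_i-x_i^\star$. For each $i$,
\[
x_i^{\otimes3}-(x_i^\star)^{\otimes3}
= d_i\otimes x_i^\star\otimes x_i^\star+x_i^\star\otimes d_i\otimes x_i^\star+x_i^\star\otimes x_i^\star\otimes d_i+(\text{terms with two or three }d_i\text{'s}).
\]
Summing over $i$ gives $\mathcal S(X)-T=D\mathcal S(X_\star)[D]+R_2+R_3$. Here $R_2$ collects the terms quadratic in $D$ and $R_3=\sum_i d_i^{\otimes3}$.

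\emph{Frobenius bound.} For the linear term, write $\sum_i d_i\otimes x_i^\star\otimes x_i^\star$ as an unfolding $D\,(X_\star\odot X_\star)^\top$, where $\odot$ is the Khatri--Rao product. Its Frobenius norm is at most $\|D\|_F\,\|X_\star\odot X_\star\|$. By the Gram computation in the proof of Lemma~4 (Gershgorin with $\mu/n$ off-diagonal correlations), $(X_\star\odot X_\star)^\top(X_\star\odot X_\star)$ is within a factor $2$ of $\diag(\|x_i^\star\|^4)$. Hence $\|X_\star\odot X_\star\|\lesssim\lambda_{\max}^{2/3}$.

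The quadratic and cubic terms are handled the same way, using $\|D\|_F\le\varepsilon_F\|X_\star\|_F$ together with $\varepsilon_F\lesssim1$ in the basin. For example, the unfolding of the $d_i\otimes d_i\otimes x_i^\star$ sum has Frobenius norm at most $\|D\|_F\max_i\|d_i\|\,\|x_i^\star\|$.

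To close the bound, compare with $\|T\|_F$. The same near-orthogonality gives $\|T\|_F^2\asymp\sum_i\lambda_i^2\asymp r\lambda_{\min}^2$, using $\kappa_{\mathrm{CP}}=O(1)$. Meanwhile $\|X_\star\|_F\lambda_{\max}^{2/3}\asymp\sqrt r\lambda_{\max}$. This yields $\|\mathcal S(X)-T\|_F\lesssim\varepsilon_F\|T\|_F$.

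\emph{Entrywise bound.} Consider an entry $(a,b,c)$ of the linear term, e.g. $\sum_i d_i(a)x_i^\star(b)x_i^\star(c)$. By Cauchy--Schwarz it is at most
\[
\|D_{a,:}\|_2\Bigl(\sum_i x_i^\star(b)^2x_i^\star(c)^2\Bigr)^{1/2}\le\varepsilon_\infty\|X_\star\|_{2,\infty}\sqrt r\,\frac{\mu}{n}\lambda_{\max}^{2/3}.
\]
CP incoherence gives $\|X_\star\|_{2,\infty}\le\sqrt{\mu r/n}\,\lambda_{\max}^{1/3}$. So the linear term is bounded by $\varepsilon_\infty\,\mu^{3/2}r\,n^{-3/2}\lambda_{\max}$. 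The higher-order terms are smaller, since $\|D\|_{2,\infty}\le\varepsilon_\infty\|X_\star\|_{2,\infty}$ and $\varepsilon_\infty\lesssim1$.

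The main obstacle is the matching lower bound on $\|T\|_\infty$, needed to express the result relative to $\|T\|_\infty$. We first try the generic lower bound $\|T\|_\infty\ge\|T\|_F/n^{3/2}\gtrsim\sqrt r\,\lambda_{\min}n^{-3/2}$. Combined with the upper bound above, this gives the ratio $\mu^{3/2}\sqrt r\,\kappa_{\mathrm{CP}}$, which is exactly the $\sqrt{\mu^3 r}$ factor in the statement. If the constants do not line up, we fall back on the deterministic conversion of \citep[Appendix C]{cai2022nonconvex}.
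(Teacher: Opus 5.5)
Your proposal is correct and follows the paper's proof essentially step for step. The shared steps are the multilinear expansion, the Khatri--Rao unfolding with the Gershgorin bound $\|X_\star\odot X_\star\|\lesssim\lambda_{\max}^{2/3}$, the lower bound $\|T\|_F^2\gtrsim r\lambda_{\min}^2$, the Cauchy--Schwarz entrywise estimate $\varepsilon_\infty\mu^{3/2}rn^{-3/2}\lambda_{\max}$, and the conversion via $\|T\|_\infty\ge n^{-3/2}\|T\|_F$; the constants line up, so no fallback to Cai et al.\ is needed. One small precision: the quadratic Frobenius terms, bounded like $\|D\|_F^2\lambda_{\max}^{1/3}$, need $\varepsilon_F\sqrt r\lesssim1$ rather than merely $\varepsilon_F\lesssim1$, and this is supplied by the basin radius $c/(\mu^2r)$.
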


\begin{proof}
Write \(\Delta_i=x_i-x_i^\star\).  The tensor difference is the sum of
three linear, three quadratic, and one cubic term in each \(\Delta_i\).
For a representative linear term,
\[
 \left\|
 \sum_i\Delta_i\otimes x_i^\star\otimes x_i^\star
 \right\|_F
 =
 \left\|
 (X-X_\star)(X_\star\odot X_\star)^\top
 \right\|_F.
\]
The Gram matrix of \(X_\star\odot X_\star\) has diagonal entries
\(\|x_i^\star\|^4\) and off-diagonal entries
\(\langle x_i^\star,x_j^\star\rangle^2\).  CP incoherence, the rank
condition, and Gershgorin's theorem therefore give
\[
        \|X_\star\odot X_\star\|
        \le C\lambda_{\max}^{2/3}.
\]
Thus the sum of the linear terms is at most
\[
 C\lambda_{\max}^{2/3}\|X-X_\star\|_F
 \lesssim
 \varepsilon_F\sqrt r\,\lambda_{\max}.
\]
The quadratic and cubic terms contain an additional local-basin factor and
are smaller.  Also,
\[
 \|T\|_F^2
 =
 \sum_i\lambda_i^2
 +\sum_{i\ne j}\langle x_i^\star,x_j^\star\rangle^3
 \ge c r\lambda_{\min}^2,
\]
so bounded \(\kappa_{\rm CP}\) proves the Frobenius claim.

For an entry of a representative linear term, Cauchy--Schwarz gives
\[
 \left|
 \sum_i\Delta_i(a)x_i^\star(b)x_i^\star(c)
 \right|
 \le
 \|\Delta(a,:)\|_2
 \max_i|x_i^\star(c)|\,\|X_\star(b,:)\|_2
 \lesssim
 \varepsilon_\infty
 \frac{\mu^{3/2}r\lambda_{\max}}{n^{3/2}}.
\]
Again the higher-order terms are smaller in the local basin.  Since
\[
 \|T\|_\infty
 \ge n^{-3/2}\|T\|_F
 \gtrsim
 \frac{\sqrt r\,\lambda_{\min}}{n^{3/2}},
\]
bounded \(\kappa_{\rm CP}\) yields the stated
\(\sqrt{\mu^3r}\,\varepsilon_\infty\) factor.
\end{proof}

\begin{proof}[Proof of Theorem~12]
Apply Lemmas~\ref{lem:noisy-cai-subspace} and
\ref{lem:noisy-cai-refinement}, allocating failure probability
\(\delta/2\) to the random restarts.  For all sufficiently large \(n\), their
remaining failure probabilities together with the \(O(n^{-5})\) term in
Lemma~\ref{lem:noisy-cai-gd} are at most \(\delta/2\).  On the resulting
event,
\[
 \|X^t-X_\star\|_F
 \lesssim(\rho^t+s_\sigma)\|X_\star\|_F
\]
and
\[
 \|X^t-X_\star\|_{2,\infty}
 \lesssim
 \left(\frac{\rho^t}{\mu^2r}+s_\sigma\right)
 \|X_\star\|_{2,\infty}
\]
for every \(0\le t\le n^5\).  Equation
\eqref{eq:app:noisy-small-parameters} keeps these iterates in the local basin.
Lemma~\ref{lem:noisy-cai-factor-tensor} now gives
\[
 \|\widehat T^t-T\|_F
 \lesssim(\rho^t+s_\sigma)\|T\|_F
\]
and
\[
 \|\widehat T^t-T\|_\infty
 \lesssim
 \left(\rho^t+\sqrt{\mu^3r}\,s_\sigma\right)\|T\|_\infty,
\]
which are the two asserted bounds.

\end{proof}
\endgroup

\end{document}